\crefname{enumi}{item}{items}
\crefname{equation}{}{}
\crefname{subsection}{Subsection}{Subsections}
\theoremstyle{plain}
\newtheorem{theorem}{Theorem} [section]
\newtheorem{lemma}[theorem]{Lemma}
\newtheorem{prop}[theorem]{Proposition}
\newtheorem{cor}[theorem]{Corollary}
\newtheorem{setting}[theorem]{Setting}
\theoremstyle{remark}
\theoremstyle{definition}
\DeclareMathAlphabet{\mathpzc}{OT1}{pzc}{m}{it}
\DeclareMathAlphabet{\mathscr}{LS1}{stixscr}{m}{n}
\newcommand{\R}{\mathbb{R}}
\newcommand{\N}{\mathbb{N}}
\newcommand{\Z}{\mathbb{Z}}
\newcommand{\w}[1]{\mathfrak{w}^{#1}}
\renewcommand{\b}[1]{\mathfrak{b}^{#1}}
\renewcommand{\v}[1]{\mathfrak{v}^{#1}}
\renewcommand{\c}[1]{\mathfrak{c}^{#1}}
\newcommand{\smallsum}{\textstyle\sum}
\newcommand{\with}{\curvearrowleft}
\newcommand{\cB}{\mathcal{B}}
\newcommand{\cE}{\mathcal{E}}
\newcommand{\cG}{\mathcal{G}}
\newcommand{\cI}{\mathcal{I}}
\newcommand{\cL}{\mathcal{L}}
\newcommand{\bfa}{\mathbf{a}}
\newcommand{\bfm}{\mathbf{m}}
\newcommand{\scrL}{\mathscr{L}}
\newcommand{\scrN}{\mathscr{N}}
\newcommand{\scra}{\mathscr{a}}
\newcommand{\scrb}{\mathscr{b}}
\newcommand{\fC}{\mathfrak{C}}
\newcommand{\fG}{\mathfrak{G}}
\newcommand{\fL}{\mathfrak{L}}
\newcommand{\fb}{\mathfrak{b}}
\newcommand{\fc}{\mathfrak{c}}
\newcommand{\fd}{\mathfrak{d}}
\newcommand{\fs}{\mathfrak{s}}
\newcommand{\fv}{\mathfrak{v}}
\newcommand{\fw}{\mathfrak{w}}
\newcommand{\fx}{\mathscr{x}}
\newcommand{\fy}{\mathscr{y}}
\newcommand{\fz}{\mathfrak{z}}
\renewcommand{\emptyset}{\varnothing}
\DeclarePairedDelimiter{\norm}{\lVert}{\rVert}
\DeclarePairedDelimiter{\abs}{\lvert}{\rvert}
\DeclarePairedDelimiter{\rbr}{(}{)}
\DeclarePairedDelimiter{\br}{[}{]}
\DeclarePairedDelimiter{\cu}{\{}{\}}
\DeclarePairedDelimiter{\spro}{\langle}{\rangle}
\newcommand{\Rect}{\mathfrak{R}}
\renewcommand{\d}{ \mathrm{d}}
\newcommand{\qandq}{\qquad\text{and}\qquad}
\newcommand{\indicator}[1]{\mathbbm{1}_{\smash{#1}}}
\newcommand{\realization}[1] {\mathscr{N} ^{ #1  }}
\newcommand{\width}{H}
\NewDocumentCommand{\nobs}{}{
  \bool_if:nTF { \g_noteobserve } {
    \bool_gset_false:N \g_noteobserve 
    note~
  } {
    \bool_gset_true:N \g_noteobserve 
    observe~
  }
}
\NewDocumentCommand{\Nobs}{}{
  \bool_if:nTF { \g_noteobserve } {
    \bool_gset_false:N \g_noteobserve 
    Note~
  } {
    \bool_gset_true:N \g_noteobserve 
    Observe~
  }
}
\NewDocumentCommand{\cfadd}{ m }
{
  \seq_if_in:NnF \g_cflist_loaded { #1 } {
    \seq_if_in:NnF \g_cflist_pending { #1 } {
      \seq_gput_right:Nn \g_cflist_pending { #1 }
    }
  }
}
\NewDocumentCommand{\cfconsiderloaded}{ m }{
  \seq_gput_right:Nn \g_cflist_loaded {#1}
}
\NewDocumentCommand{\cfremove}{ m }
{
  \seq_gremove_all:Nn \g_cflist_pending { #1 }
}
\NewDocumentCommand{\cfload}{ o }
{
  \seq_if_empty:NTF \g_cflist_pending {\unskip} {
    (cf.\ \cref{\seq_use:Nn \g_cflist_pending {,}})\IfValueTF{#1}{#1~}{\unskip}
    \seq_gconcat:NNN \g_cflist_loaded \g_cflist_loaded \g_cflist_pending
    \seq_gclear:N \g_cflist_pending
  }
}
\NewDocumentCommand{\cfclear} {} {
  \seq_gclear:N \g_cflist_loaded
  \seq_gclear:N \g_cflist_pending
}
\NewDocumentCommand{\cfout}{ o }
{
  \seq_if_empty:NTF \g_cflist_pending {\unskip} {
    (cf.\ \cref{\seq_use:Nn \g_cflist_pending {,}})\IfValueTF{#1}{#1~}{\unskip}
    \seq_gclear:N \g_cflist_pending
  }
}
\NewDocumentCommand{\ifnocf} { m } {
  \seq_if_empty:NT \g_cflist_pending { #1 }
}
\NewDocumentEnvironment{cproof}{m}
{\begin{proof}[Proof of \cref{#1}]}%
{\noindent The proof of~\cref{#1} is thus complete.
\end{proof}}
\NewDocumentEnvironment{cproof2}{m}
{\begin{proof}[Proof of \cref{#1}]}%
{\noindent This completes the proof of~\cref{#1}.
\end{proof}}
\title{Convergence analysis for gradient flows in the training \\
of artificial neural networks with ReLU activation}
\author{Arnulf Jentzen$^{1, 2}$ and
Adrian Riekert$^3$
\bigskip
\\
\small{$^1$ Applied Mathematics: Institute for Analysis and Numerics,}
\vspace{-0.1cm}\\
\small{University of Münster, Germany, e-mail: \texttt{ajentzen}\textcircled{\texttt{a}}\texttt{uni-muenster.de}}
\smallskip
\\
\small{$^2$ School of Data Science and Shenzhen Research Institute of Big Data,}
\vspace{-0.1cm}\\
\small{The Chinese University of Hong Kong, Shenzhen, China, e-mail: \texttt{ajentzen}\textcircled{\texttt{a}}\texttt{cuhk.edu.cn}}
\smallskip
\\
\small{$^3$ Applied Mathematics: Institute for Analysis and Numerics,}
\vspace{-0.1cm}\\
\small{University of Münster, Germany, e-mail: \texttt{ariekert}\textcircled{\texttt{a}}\texttt{uni-muenster.de}}}
\date{\today}
\begin{document}

\maketitle

\begin{abstract}
    Gradient descent (GD) type optimization schemes are the standard methods to train artificial neural networks (ANNs) with rectified linear unit (ReLU) activation. Such schemes can be considered as discretizations of gradient flows (GFs) associated to the training of ANNs with ReLU activation and most of the key difficulties in the mathematical convergence analysis of GD type optimization schemes in the training of ANNs with ReLU activation seem to be already present in the dynamics of the corresponding GF differential equations. It is the key subject of this work to analyze such GF differential equations in the training of ANNs with ReLU activation and three layers (one input layer, one hidden layer, and one output layer).
    In particular, in this article we prove in the case where the target function is possibly multi-dimensional and continuous and in the case where the probability distribution of the input data is absolutely continuous with respect to the Lebesgue measure that the risk of every bounded GF trajectory converges to the risk of a critical point. 
    In addition, in this article we show in the case of a $1$-dimensional affine linear target function and in the case where the probability distribution of the input data coincides with the standard uniform distribution that the risk of every bounded GF trajectory converges to zero if the initial risk is sufficiently small. 
    Finally, in the special situation where there is only one neuron on the hidden layer (1-dimensional hidden layer) 
    we strengthen the above named result for affine linear target functions by proving that that the risk of every (not necessarily bounded) GF trajectory converges to zero if the initial risk is sufficiently small.
\end{abstract}

\tableofcontents

\section{Introduction}

Gradient descent (GD) type optimization schemes are the standard tools in the training of feedforward fully connected artificial neural networks (ANNs) with rectified linear unit (ReLU) activation. Such GD type optimization schemes can be considered as temporal discretization methods for the associated gradient flow (GF) differential equations and most of the key difficulties which arise in the mathematical convergence analysis of GD type optimization schemes in the training of ANNs with ReLU activation already arise in the mathematical convergence analysis of the corresponding GFs. It is the key subject of this article to analyze such GFs arising in the training of ANNs with ReLU activation and, in particular, to prove that the risk of every bounded GF trajectory converges in the training of ANNs with ReLU activation to the risk of a critical point. 
We are particularly interested in the mathematical convergence analysis of GF trajectories instead of time discrete GD optimization schemes since, on the one hand,
most of the key difficulties which arise in the mathematical analysis of GD type optimization schemes in the training of ANNs with ReLU activation already arise in the mathematical analysis of the corresponding GFs
and since, on the other hand,
the consideration of such GF trajectories allows us to focus on precisely such key difficulties.

In the scientific literature there are several quite promising approaches regarding the mathematical convergence analysis for GD type optimization schemes and GFs, respectively. For instance,
we point to \cite{BachChizatOyallon2019,DuZhaiPoczosSingh2018arXiv,EMaWu2020,JacotGabrielHongler2018} for results on the convergence of GF in the training of ANNs in the overparametrized regime, where the number of neurons has to be sufficiently large when compared to the number of used input-output data pairs.
Another promising idea is to view the neurons of an ANN as interacting particles and consider the limit of the associated
empirical measures
as the number of neurons increases to infinity.
The limiting process of the corresponding GFs is known in the scientific literature as Wasserstein gradient flow;
cf., e.g.,~\cite{ZhengdaoRotskoff2020,Chizat2021,ChizatBach2018}, the overview article \cite{EMaWojtowytschWu2020}, and the references mentioned therein.
Most convergence results for the Wasserstein gradient flow require smoothness assumptions on the considered risk function, which are not satisfied for ANNs with ReLU activation.
To overcome this issue, a different parametrization for ReLU networks has been proposed in \cite[Section~4.2]{ChizatBach2018}.
In \cite{BahRauhutTerstiege2021,ChitourLiaoCouillet2019} GF processes have been considered in the context of training deep linear neural networks, 
in which the employed activation function is the identity.
The behavior of the realization functions of ANNs with one hidden layer and
ReLU activation under the GF dynamics has been investigated in more detail in \cite{MaennelGelly2018,WilliamsTragerPanozzo2019}.
Another recent idea is to consider only very special target functions
and we refer, in particular, to \cite{CheriditoJentzenRiekert2021,JentzenRiekert2021} for convergence results for GF and GD processes in the case of constant target functions.
In the more general case of affine linear target functions,
the critical points of the risk function were characterized in
\cite{CheriditoJentzenRossmannek2021} and parts of the analysis in this article
exploit this characterization.
For further abstract convergence results on GF processes we point, e.g., to~\cite{AbsilMahonyAndrews2005,BolteDaniilidis2006,FehrmanGessJentzen2020,Santambrogio2017} and the references mentioned therein.

It is the key topic of this article to provide some first basics steps regarding the mathematical convergence analysis of GFs arising in the training of ANNs with ReLU activation. Specifically, in one of main results of this article, see \cref{theo:intro:gen:item5} in \cref{theo:intro:general} in this introductory section, we prove that the risk of every bounded GF trajectory converges in the training of ANNs with one hidden layer and ReLU activation to the risk of a critical point. In \cref{theo:intro:general} below we study fully connected feedfoward ANNs with a $d$-dimensional input layer (with $d \in \N = \cu{ 1, 2, 3, ... }$ neurons on the input layer), with an $\width$-dimensional hidden layer (with $\width \in \N$ neurons on the hidden layer), and with a $1$-dimensional output layer (with one neuron on the output layer). There are thus $ \width d $ scalar real weight parameters and $ \width $ scalar real bias parameters to describe the affine linear transformation in between the $ d $-dimensional input layer and the $ \width $-dimensional hidden layer and there are thus $ \width $ scalar real weight parameters and $ 1 $ scalar real bias parameter to describe the affine linear transformation in between the $ \width $-dimensional hidden layer and the $1$-dimensional output layer. Overall the ANNs in \cref{theo:intro:general} thus consist of precisely $\fd = d \width + 2 \width + 1$ scalar real ANN parameters.

In \cref{theo:intro:general} we study fully connected feedfoward ANNs with the ReLU activation function 
$\R \ni x \mapsto \max \cu{ x, 0 } \in \R$ (which is also referred to as rectifier function) as the activation function. The ReLU activation function $\R \ni x \mapsto \max \cu{ x, 0 } \in \R$ fails to be differentiable and can thus not be used to specify gradients in GD type optimization schemes and GFs, respectively. A common procedure to overcome this issue (cf.~\cite{JentzenRiekert2021} and \cite{CheriditoJentzenRiekert2021}) is to approximate the ReLU activation function $\R \ni x \mapsto \max \cu{ x, 0 } \in \R$ through appropriate continuously differentiable functions which converge pointwise to the ReLU activation function and whose derivatives converge pointwise to the \emph{left derivative} 
of the ReLU activation function. In \cref{theo:intro:general} the function 
$\Rect_{ \infty } \colon \R \to \R$ specifies the ReLU activation function and 
the functions $\Rect_r \colon  \R \to \R$, $r \in \N$, serve as such 
continuously differentiable approximations of the ReLU activation function; 
see \cref{theo:intro:general:eq1} in \cref{theo:intro:general}.

The finite measure $\mu \colon [\scra , \scrb ]^d \to [0, \infty ]$ in \cref{theo:intro:general} specifies up to a normalization constant the 
probability distribution of the input data of the supervised learning problem 
considered in \cref{theo:intro:general}. In \cref{theo:intro:general} we assume that the measure $\mu \colon [\scra , \scrb ]^d \to [0, \infty]$ is absolutely continuous 
with respect to the Lebesgue measure. 
The functions $\cL_r \colon \R^\fd \to \R$, $r \in \N \cup \cu{ \infty }$, in \cref{theo:intro:general} describe 
the risk functions associated to the considered ANNs in the sense that 
for all $r \in \N \cup \cu{ \infty }$ we have that $\cL_r \colon \R^\fd \to \R$ is the risk 
function associated to the target function $f \colon [\scra , \scrb ] ^d \to \R$ and the fully connected feedforward ANNs with the activation function $\Rect_r \colon \R \to \R$; see \cref{theo:intro:general:eq2} in \cref{theo:intro:general} for details.

The function $\norm{\cdot} \colon \R^\fd \to \R$ in \cref{theo:intro:general} is nothing else but the standard norm 
on the ANN parameter space $\R^{ \fd } = \R^{ d \width + 2 \width + 1 }$. The function $\cG \colon \R^\fd \to \R^\fd$ in \cref{theo:intro:general} specifies the generalized gradients of the risk function $\cL_{ \infty } \colon \R^\fd \to \R$ using the continuously differentiable approximations $\Rect_r \colon \R \to \R$, $r \in \N$.

\Cref{theo:intro:gen:item1} in \cref{theo:intro:general} asserts that the generalized gradient function $\cG \colon \R^\fd \to \R^\fd$ is locally bounded and measurable. This statement is provided to ensure that for every continuous function $\Theta = ( \Theta_t )_{ t \in [0,\infty) } \colon [0,\infty) \to \R^{ \fd } $ and every $t \in [0,\infty)$ we have that the Lebesgue integral $\int_0^t \mathcal{G}( \Theta_s ) \, \d s$ makes sense (cf.~\cref{theo:intro:gen:item5,theo:intro:gen:item6} in \cref{theo:intro:general}).

\Cref{theo:intro:gen:item2} in \cref{theo:intro:general} reveals that the generalized gradient function $\cG \colon \R^\fd \to \R^\fd$ is \emph{lower semicontinuous}. In the case of ANNs with smooth activation functions it follows directly from Lebesgue's theorem of dominated convergence that the gradient function of the risk function is continuous. In the case of ANNs with ReLU activation, however, the generalized gradient function $\cG \colon \R^\fd \to \R^\fd$ fails to be continuous but in \cref{theo:intro:gen:item2} in \cref{theo:intro:general} we prove that this generalized gradient function is instead lower semicontinuous.

\Cref{theo:intro:gen:item3} in \cref{theo:intro:general} connects the generalized gradient function $\cG \colon \R^\fd \to \R^\fd$ with standard gradients of the risk function $\cL_{ \infty } \colon \R^\fd \to \R$ by demonstrating that there exists an open set $U \subseteq \R^\fd$ with full Lebesgue measure such that $\cL_\infty$ restricted to $U$ is continuously differentiable with $\cG |_U \colon U \to \R^\fd$ being the gradient of $(\cL_\infty )|_U \colon U \to \R$.

\Cref{theo:intro:gen:item5} in \cref{theo:intro:general} establishes that the risk of every bounded GF trajectory converges in the training of the considered ANNs to the risk of a critical point. 
\Cref{theo:intro:gen:item6} in \cref{theo:intro:general} reveals that the risk of every bounded GF trajectory with sufficiently small initial risk converges in the training of the considered ANNs to the risk of the global minima of $\cL_{ \infty } \colon \R^\fd \to \R$.
We now present the precise statement of \cref{theo:intro:general}.

\begin{theorem} \label{theo:intro:general}
Let $d, \width, \fd \in \N$, $ \scra \in \R$, $\scrb \in ( \scra, \infty)$, $f \in C ( [\scra , \scrb ]^d , \R)$ satisfy $\fd = d\width + 2 \width + 1$,
let $\Rect_r \in C ( \R , \R )$, $r \in \N \cup \cu{ \infty } $, satisfy for all $x \in \R$ that $( \bigcup_{r \in \N} \cu{ \Rect_r } ) \subseteq C^1( \R , \R)$, $\Rect_\infty ( x ) = \max \cu{ x , 0 }$,
 $\sup_{r \in \N} \sup_{y \in [- \abs{x}, \abs{x} ] } \rbr*{ \abs{\Rect_r(y)} + \abs{ ( \Rect_r)'(y)}} < \infty$, and
\begin{equation} \label{theo:intro:general:eq1}
    \limsup\nolimits_{r \to \infty}  \rbr*{ \abs { \Rect_r ( x ) - \Rect _\infty ( x ) } + \abs { (\Rect_r)' ( x ) - \indicator{(0, \infty)} ( x ) } } = 0,
\end{equation}
let $\mu \colon \cB ( [\scra , \scrb]^d ) \to [0, \infty]$ be a finite measure,
let $\cL_r \colon \R^\fd \to \R$, $r \in \N \cup \cu{ \infty }$,
satisfy for all $r \in \N \cup \cu{ \infty }$, $\theta = (\theta_1, \ldots, \theta_\fd) \in \R^{\fd}$ that
\begin{multline} \label{theo:intro:general:eq2}
   \cL_r ( \theta ) 
   = \int_{[\scra , \scrb]^d} \bigl( f ( x_1, \ldots, x_d )  \\ - \theta_{\fd} - \smallsum_{i=1}^\width \theta_{\width ( d + 1 ) + i } \br[\big]{ \Rect_r ( \theta_{\width d + i}  + \smallsum_{j=1}^d\theta_{(i-1)d + j } x_j ) } \bigr)^2  \, \mu (\d (x_1, \ldots, x_d ) ),
    \end{multline}
    let $\norm{ \cdot } \colon \R^\fd  \to \R$ satisfy for all $x=(x_1, \ldots, x_\fd) \in \R^\fd $ that $\norm{ x } = [ \sum_{i=1}^\fd \abs*{ x_i } ^2 ] ^{1/2}$,
let $\cG  \colon \R^\fd \to \R^\fd$ satisfy for all
$\theta \in \cu{ \vartheta \in \R^\fd \colon ( ( \nabla \cL_r ) ( \vartheta ) ) _{r \in \N} \text{ is convergent} }$
that $\cG ( \theta ) = \lim_{r \to \infty} (\nabla \cL_r) ( \theta )$,
and assume that $\mu$ is absolutely continuous with respect to the Lebesgue measure on $[\scra , \scrb ] ^d$. Then
\begin{enumerate} [label = (\roman*)]
    \item \label{theo:intro:gen:item1} it holds that $\R^\fd \ni \theta \mapsto \cG ( \theta ) \in \R^\fd$ is locally bounded and measurable,
    \item \label{theo:intro:gen:item2} it holds that $\R^\fd \ni \theta \mapsto \norm{\cG ( \theta ) } \in \R$ is lower semicontinuous,
    \item \label{theo:intro:gen:item3} there exists an open $U \subseteq \R^\fd $ which satisfies $\int_{\R^\fd \backslash U } 1 \, \d x = 0$, $(\cL_\infty )|_U \in C^1 ( U , \R)$, and $\nabla ( ( \cL_\infty ) |_U ) = \cG |_U$,
    \item \label{theo:intro:gen:item5} it holds for all $\Theta \in C([0, \infty) , \R^\fd)$ with $\sup_{t \in [0, \infty)} \norm{\Theta_t} < \infty$ and $\forall \, t \in [0, \infty) \colon \Theta_t = \Theta_0 - \int_0^t \cG ( \Theta_s ) \, \d s$ that there exists $\vartheta \in \cG^{-1} ( \cu{ 0 } )$ such that $\limsup_{t \to \infty} \cL _\infty ( \Theta_t ) = \cL_\infty ( \vartheta )$, and
    \item \label{theo:intro:gen:item6} it holds for all $\Theta \in C([0, \infty) , \R^\fd)$ with $\sup_{t \in [0, \infty)} \norm{\Theta_t } < \infty$, $\forall \, t \in [0, \infty) \colon \Theta_t = \Theta_0 - \int_0^t \cG ( \Theta_s ) \, \d s$, and $\forall \, \theta \in \cG^{-1} ( \cu{ 0 } ) \cap ( \cL_\infty )^{-1} ( ( \inf\nolimits_{\vartheta \in \R^{ \fd }} \cL_\infty ( \vartheta ) , \infty ) ) \colon \cL _\infty ( \Theta_0 ) < \cL_\infty ( \theta )$
that 
\begin{equation}
    \limsup\nolimits_{t \to \infty} \cL_\infty ( \Theta_t ) = \inf\nolimits_{\vartheta \in \R^{ \fd }} \cL_\infty ( \vartheta ).
\end{equation}
\end{enumerate}
\end{theorem}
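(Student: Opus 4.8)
The plan is to begin by identifying the generalized gradient $\cG$ in closed form. Differentiating \cref{theo:intro:general:eq2} under the integral sign is legitimate because each $\Rect_r\in C^1(\R,\R)$ and, on the compact set $[\scra,\scrb]^d$ and for $\theta$ ranging over a bounded set, the integrand and its $\theta$-derivatives are bounded uniformly in $r$ by the assumed finiteness of $\sup_{r\in\N}\sup_{y\in[-\abs{x},\abs{x}]}(\abs{\Rect_r(y)}+\abs{(\Rect_r)'(y)})$. One thereby obtains, for every $\theta\in\R^\fd$ and every $r\in\N$, an explicit formula for $(\nabla\cL_r)(\theta)$ whose only $r$-dependence sits in the factors $\Rect_r(w_i\cdot x+b_i)$ and $(\Rect_r)'(w_i\cdot x+b_i)$, where $w_i$, $b_i$, and $v_i$ denote the weight vector, bias, and output weight of the $i$-th hidden neuron encoded in $\theta$. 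By \cref{theo:intro:general:eq1} these factors converge pointwise in $x$ to $\Rect_\infty(w_i\cdot x+b_i)$ and $\indicator{(0,\infty)}(w_i\cdot x+b_i)$, and the uniform bounds render Lebesgue's dominated convergence theorem applicable; hence $((\nabla\cL_r)(\theta))_{r\in\N}$ converges for every $\theta\in\R^\fd$, so that $\cG(\theta)$ coincides with the explicit formula obtained from $(\nabla\cL_r)(\theta)$ by replacing every occurrence of $(\Rect_r)'$ with $\indicator{(0,\infty)}$ and every occurrence of $\Rect_r$ with $\Rect_\infty$. Local boundedness of $\cG$ now follows by taking those uniform bounds over a compact subset of $\R^\fd$, and since each $\nabla\cL_r$ is continuous (again by dominated convergence), $\cG$ is a pointwise limit of continuous functions and therefore measurable; this proves \cref{theo:intro:gen:item1}. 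For \cref{theo:intro:gen:item3} I would take $U=\{\theta\in\R^\fd\colon\forall\,i\in\{1,\dots,\width\}\colon(w_i,b_i)\ne(0,0)\}$, which is open with complement a finite union of linear subspaces of codimension $d+1\ge2$ and hence of Lebesgue measure $0$; for $\theta\in U$ the set $\{x\in[\scra,\scrb]^d\colon w_i\cdot x+b_i=0\}$ has Lebesgue measure $0$ and, by the assumed absolute continuity of $\mu$, also $\mu$-measure $0$, so the Lipschitz bound $\abs{\Rect_\infty(a+h)-\Rect_\infty(a)}\le\abs{h}$ licenses differentiation under the integral and identifies $\nabla((\cL_\infty)|_U)$ with $\cG|_U$, while continuity of this gradient on $U$ follows from dominated convergence together with the fact that $\theta\mapsto\indicator{(0,\infty)}(w_i\cdot x+b_i)$ is continuous for $\mu$-a.e.\ $x$ as long as $\theta$ remains in $U$.

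The proof of \cref{theo:intro:gen:item2} is where the nonsmoothness of $\Rect_\infty$ has to be confronted, since $\cG$ itself is not continuous. Given a sequence $\theta^{(n)}\to\theta$ in $\R^\fd$, I would split the $\fd$ coordinates of $\cG$ into three groups. First, the output-bias coordinate and the $\width$ output-weight coordinates of $\cG$ are continuous functions of $\theta$ on all of $\R^\fd$, because they involve only $\Rect_\infty$ (continuous) and the realization function of the ANN (continuous in $\theta$ and bounded on $[\scra,\scrb]^d$). Second, for a hidden neuron $i$ with $(w_i,b_i)\ne(0,0)$ at $\theta$, the weight and bias coordinates of $\cG$ belonging to neuron $i$ are continuous at $\theta$: the function $x\mapsto\indicator{(0,\infty)}(w_i\cdot x+b_i)$ evaluated along $\theta^{(n)}$ converges, as $n\to\infty$, for every $x$ outside the $\mu$-null set on which the limiting $i$-th preactivation vanishes, so dominated convergence again applies. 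Third, for a hidden neuron $i$ with $(w_i,b_i)=(0,0)$ at $\theta$, the weight and bias coordinates of $\cG(\theta)$ belonging to neuron $i$ are $0$, because the relevant integrand carries the factor $\indicator{(0,\infty)}(0)=0$. Writing $\norm{\cG(\cdot)}^2$ as the sum of the squares of its $\fd$ coordinates, the summands from the first two groups converge along $\theta^{(n)}$ to the corresponding summands at $\theta$, whereas the summands coming from degenerate hidden neurons are nonnegative along $\theta^{(n)}$ but vanish at $\theta$; hence $\liminf_{n\to\infty}\norm{\cG(\theta^{(n)})}^2\ge\norm{\cG(\theta)}^2$, which is the asserted lower semicontinuity.

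For \cref{theo:intro:gen:item5} let $\Theta\in C([0,\infty),\R^\fd)$ satisfy $\sup_{t\in[0,\infty)}\norm{\Theta_t}<\infty$ and $\Theta_t=\Theta_0-\int_0^t\cG(\Theta_s)\,\d s$ for all $t\in[0,\infty)$. Since $\cG\circ\Theta$ is bounded (by local boundedness of $\cG$) and measurable, $\Theta$ is Lipschitz continuous and $\Theta_t'=-\cG(\Theta_t)$ for a.e.\ $t\in[0,\infty)$. Applying the fundamental theorem of calculus to $\cL_r\circ\Theta$ (valid since $\cL_r\in C^1(\R^\fd,\R)$) yields $\cL_r(\Theta_t)-\cL_r(\Theta_0)=-\int_0^t\spro{(\nabla\cL_r)(\Theta_s),\cG(\Theta_s)}\,\d s$ for all $r\in\N$ and all $t\in[0,\infty)$, and letting $r\to\infty$ — using $\cL_r(\Theta_t)\to\cL_\infty(\Theta_t)$ and $(\nabla\cL_r)(\Theta_s)\to\cG(\Theta_s)$ pointwise and the uniform boundedness of all these quantities along the bounded trajectory (dominated convergence) — gives $\cL_\infty(\Theta_t)-\cL_\infty(\Theta_0)=-\int_0^t\norm{\cG(\Theta_s)}^2\,\d s$ for all $t\in[0,\infty)$. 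Hence $[0,\infty)\ni t\mapsto\cL_\infty(\Theta_t)$ is nonincreasing and bounded below by $0$, thus convergent to some $\ell\in[0,\infty)$, and $\int_0^\infty\norm{\cG(\Theta_s)}^2\,\d s\le\cL_\infty(\Theta_0)<\infty$. Consequently there exists $t_n\to\infty$ with $\norm{\cG(\Theta_{t_n})}\to0$, and the boundedness of $\Theta$ allows passing to a subsequence with $\Theta_{t_n}\to\vartheta$ for some $\vartheta\in\R^\fd$; continuity of $\cL_\infty$ (dominated convergence) gives $\cL_\infty(\vartheta)=\ell=\limsup_{t\to\infty}\cL_\infty(\Theta_t)$, and \cref{theo:intro:gen:item2} gives $\norm{\cG(\vartheta)}\le\liminf_{n\to\infty}\norm{\cG(\Theta_{t_n})}=0$, so $\vartheta\in\cG^{-1}(\{0\})$, as required.

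Finally, \cref{theo:intro:gen:item6} is a short corollary of \cref{theo:intro:gen:item5}: the point $\vartheta\in\cG^{-1}(\{0\})$ produced there satisfies $\cL_\infty(\vartheta)=\limsup_{t\to\infty}\cL_\infty(\Theta_t)\le\cL_\infty(\Theta_0)$ by the monotonicity just established, so if one had $\cL_\infty(\vartheta)>\inf_{\vartheta'\in\R^\fd}\cL_\infty(\vartheta')$, then $\vartheta$ would lie in $\cG^{-1}(\{0\})\cap(\cL_\infty)^{-1}((\inf_{\vartheta'\in\R^\fd}\cL_\infty(\vartheta'),\infty))$ and the hypothesis would force $\cL_\infty(\Theta_0)<\cL_\infty(\vartheta)$, a contradiction; hence $\limsup_{t\to\infty}\cL_\infty(\Theta_t)=\cL_\infty(\vartheta)=\inf_{\vartheta'\in\R^\fd}\cL_\infty(\vartheta')$. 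Throughout, the main obstacle is \cref{theo:intro:gen:item2}: it is the one place where the nondifferentiability of the ReLU genuinely interferes, and the delicate point is the three-way decomposition of the coordinates of $\cG$ — in particular the observation that the weight and bias coordinates attached to a degenerate hidden neuron vanish identically — whereas the remaining steps, once \cref{theo:intro:gen:item1,theo:intro:gen:item2,theo:intro:gen:item3} are in place, amount to a routine Lyapunov-function-plus-compactness argument.
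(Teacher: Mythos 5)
Your proposal is correct, and its overall architecture coincides with the paper's: the explicit limiting formula for $\cG$ (the paper's \cref{prop:loss:gradient}), local boundedness and measurability (\cref{cor:gradient:measurable}), the three-way coordinate split for lower semicontinuity — continuous output-weight/bias coordinates, continuity at nondegenerate hidden neurons, vanishing coordinates at degenerate ones — which is precisely \cref{lem:vc:derivatives:cont,lem:gradient:wbcomponents:cont,lem:gradient:components:lsc,cor:gradient:components:lsc,cor:gradient.lsc}, the same open full-measure set $U$ for \cref{theo:intro:gen:item3} (\cref{cor:loss:c1}), the energy identity $\cL_\infty(\Theta_t)=\cL_\infty(\Theta_0)-\int_0^t\norm{\cG(\Theta_s)}^2\,\d s$ (\cref{lem:loss:integral}), and the compactness-plus-lower-semicontinuity argument for \cref{theo:intro:gen:item5} (\cref{theo:flow:conditional}) with the contradiction argument for \cref{theo:intro:gen:item6} (\cref{cor:flow:conditional}). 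Where you genuinely deviate is in the two technical ingredients behind \cref{theo:intro:gen:item2,theo:intro:gen:item3}: you establish continuity of the weight/bias gradient coordinates at nondegenerate $\theta$ by pointwise a.e.\ convergence of the indicators $\indicator{(0,\infty)}(w_i\cdot x+b_i)$ plus dominated convergence, and differentiability of $\cL_\infty$ on $U$ by a dominated-convergence argument applied to the difference quotients $[\,F(\theta+h_n,x)-F(\theta,x)-\spro{\nabla_\theta F(\theta,x),h_n}\,]/\norm{h_n}$, using the uniform-in-$x$ local Lipschitz bound. The paper instead proves a quantitative continuity result for the active regions, $\limsup_{\vartheta\to\theta}\mu(I_i^\theta\,\Delta\,I_i^{\vartheta})=0$ (\cref{lem:diff:2,cor:interval:continuous}, via Fubini and explicit interval estimates, then transferred to $\mu$ by absolute continuity), and feeds this into an explicit expansion of $\cL(\theta+h)-\cL(\theta)-\spro{\cG(\theta),h}$ (\cref{prop:loss:differentiable}). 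Your route is shorter and relies only on standard measure theory; the paper's buys a reusable quantitative lemma (it serves both \cref{prop:loss:differentiable} and \cref{lem:gradient:wbcomponents:cont}) and a differentiability statement under the weaker hypothesis \cref{eq:assumption:diff}, which allows degenerate neurons with $\v{\theta}_i=0$ or zero risk rather than only $\theta\in U$; for the theorem as stated both suffice.
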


\Cref{theo:intro:gen:item1} in \cref{theo:intro:general} is a direct consequence of \cref{cor:gradient:measurable} below,
\cref{theo:intro:gen:item2} in \cref{theo:intro:general} is a direct consequence of \cref{cor:gradient.lsc} below,
\cref{theo:intro:gen:item3} in \cref{theo:intro:general} is a direct consequence of \cref{cor:loss:c1} below,
\cref{theo:intro:gen:item5} in \cref{theo:intro:general} is a direct consequence of \cref{theo:flow:conditional} below, and
\cref{theo:intro:gen:item6} in \cref{theo:intro:general} is a direct consequence of \cref{cor:flow:conditional} below.

In \cref{theo:intro:affine:conditional} below we specialise the setup in \cref{theo:intro:general} to the specific situation where there the input is $1$-dimensional (where there is only one neuron on the input layer), where the measure $ \mu \colon \cB ( [ \scra , \scrb ] ) \to [0, \infty] $ coincides with the Lebesgue--Borel measure, and where the target function $f \colon [\scra , \scrb ] \to \R$ is affine linear in the sense that there exist $\alpha, \beta \in \R$ such that for all $x \in [\scra , \scrb]$ it holds that 
\begin{equation} \label{eq:intro:affine:function}
  f(x) = \alpha x + \beta 
\end{equation}
to establish that the risk of every (bounded) GF trajectory with sufficiently small initial risk converges to zero. 
Specifically, 
in the specific situation of \cref{eq:intro:affine:function} we prove in \cref{theo:intro:affine:conditional} that for every continuous 
GF trajectory $\Theta \colon [0,\infty) \to \R^{3 \width + 1 }$
with
\begin{equation} \label{eq:intro:gf:bounded}
  \sup\nolimits_{ t \in [0,\infty) } 
  ( ( \width - 1 ) \norm{ \Theta_t } ) < \infty
\end{equation}
and
\begin{equation}
  \cL_{ \infty }( \Theta_0 ) < \frac{\alpha^2 ( \scrb - \scra ) ^3 }{12 ( 2 \lfloor \width/2 \rfloor + 1 )^4}
\end{equation}
we have that 
$\limsup_{ t \to \infty } \cL_{ \infty }( \Theta_t ) = 0$. 
In this specific situation of a $1$-dimensional input $ d = 1 $ 
(in this specific situation where there is only one neuron on the input layer) 
we observe that the ANN parameter space $\R^{ \mathfrak{d} } $ 
simplifies to 
$\R^{ \mathfrak{d} } = \R^{ d \width + 2 \width + 1 } =  \R^{ 3 \width + 1 }$. 
Moreover, we note that in \cref{theo:intro:affine:conditional} below and in \cref{eq:intro:gf:bounded} above, respectively, we 
assume in the case where the number $\width  \in \N$ of neurons on the hidden layer is strictly bigger than $1$ (in the case where $\width > 1$) that the GF trajectory is bounded.
We now present the precise statement of \cref{theo:intro:affine:conditional}.

\begin{theorem} \label{theo:intro:affine:conditional}
Let $\width, \fd \in \N$, $\alpha, \beta, \scra \in \R$, $\scrb \in ( \scra, \infty)$ satisfy $\fd = 3 \width + 1$,
let $\Rect_r \in C ( \R , \R )$, $r \in \N \cup \cu{ \infty } $, satisfy for all $x \in \R$ that $( \bigcup_{r \in \N} \cu{ \Rect_r } ) \subseteq C^1( \R , \R)$, $\Rect_\infty ( x ) = \max \cu{ x , 0 }$,
 $\sup_{r \in \N} \sup_{y \in [- \abs{x}, \abs{x} ] } \rbr*{ \abs{\Rect_r(y)} + \abs{ ( \Rect_r)'(y)}} < \infty$, and
\begin{equation}
    \limsup\nolimits_{r \to \infty}  \rbr*{ \abs { \Rect_r ( x ) - \Rect _\infty ( x ) } + \abs { (\Rect_r)' ( x ) - \indicator{(0, \infty)} ( x ) } } = 0,
\end{equation}
let $\cL_r \colon \R^\fd \to \R$, $r \in \N \cup \cu{ \infty }$,
satisfy for all $r \in \N \cup \cu{ \infty }$, $\theta = (\theta_1, \ldots, \theta_\fd) \in \R^{\fd}$ that
\begin{equation}
    \cL_r ( \theta ) = \int_{\scra}^\scrb \rbr[\big]{ \alpha x + \beta - \theta_{\fd} - \smallsum_{i=1}^\width \theta_{2 \width + i } \br[\big]{ \Rect_r ( \theta_{\width  + i}  +\theta_{i } x ) } }^2  \, \d x,
\end{equation}
let $\norm{ \cdot } \colon  \R^\fd \to \R$ satisfy for all $x=(x_1, \ldots, x_\fd ) \in \R^\fd $ that $\norm{ x } = [ \sum_{i=1}^n \abs*{ x_i } ^2 ] ^{1/2}$,
let $\cG \colon \R^\fd \to \R^\fd$ satisfy for all
$\theta \in \cu{ \vartheta \in \R^\fd \colon ( ( \nabla \cL_r ) ( \vartheta ) ) _{r \in \N} \text{ is convergent} }$
that $\cG ( \theta ) = \lim_{r \to \infty} (\nabla \cL_r) ( \theta )$,
and let $\Theta \in C([0, \infty) , \R^\fd)$ satisfy $\sup_{t \in [0, \infty)} ( (\width - 1 ) \norm{\Theta_t} ) < \infty$, $\forall \, t \in [0, \infty) \colon \Theta_t = \Theta_0 - \int_0^t \cG ( \Theta_s ) \, \d s$, and $\cL_\infty ( \Theta_0 ) < \frac{\alpha^2 ( \scrb - \scra ) ^3 }{12 ( 2 \lfloor \width/2 \rfloor + 1 )^4}$.
Then $\limsup_{t \to \infty} \cL_\infty ( \Theta_t ) = 0$.
\end{theorem}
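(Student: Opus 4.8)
The plan is to deduce \cref{theo:intro:affine:conditional} from \cref{theo:intro:general}, and principally from \cref{theo:intro:gen:item6} therein, by verifying the hypothesis of that statement on the risk of critical points. First observe that if $\alpha = 0$ then the assumed bound $\cL_\infty(\Theta_0) < \frac{\alpha^2(\scrb-\scra)^3}{12(2\lfloor\width/2\rfloor+1)^4} = 0$ is impossible, so we may and do assume $\alpha \neq 0$. Two properties of the risk landscape drive the argument. The first is that $\inf_{\vartheta\in\R^{\fd}}\cL_\infty(\vartheta) = 0$; this is elementary, since choosing one hidden neuron to have incoming weight $\operatorname{sign}(\alpha)$ and a bias so large in magnitude that its preactivation is strictly positive throughout $[\scra,\scrb]$, setting all remaining incoming weights to zero, and selecting the outer weight and outer bias suitably produces a parameter $\vartheta$ whose realization function coincides with $[\scra,\scrb]\ni x\mapsto \alpha x+\beta$, so that $\cL_\infty(\vartheta)=0$. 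The second, and decisive, property is the \emph{risk gap}: every $\theta\in\cG^{-1}(\cu{0})$ with $\cL_\infty(\theta) > 0$ satisfies $\cL_\infty(\theta)\geq \frac{\alpha^2(\scrb-\scra)^3}{12(2\lfloor\width/2\rfloor+1)^4}$.

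To establish the risk gap I would argue as follows. At a generalized critical point $\theta\in\cG^{-1}(\cu{0})$ the realization function $\psi\colon[\scra,\scrb]\to\R$ is continuous and piecewise affine, with breakpoints among the neuron switching points $-\theta_{\width+i}/\theta_i$; write $e = f - \psi$ for the residual, so that $\cL_\infty(\theta)=\int_\scra^\scrb e(x)^2\,\d x$. Passing to the limit $r\to\infty$ in the equation $(\nabla\cL_r)(\theta)=0$ yields criticality conditions: in particular $\int_\scra^\scrb e(x)\,\d x = 0$, and for every hidden neuron $i$ with $\theta_{2\width+i}\theta_i\neq 0$ that is active on a sub-interval $I\subseteq[\scra,\scrb]$ one obtains $\int_I e(x)\,\d x = \int_I x\,e(x)\,\d x = 0$, which forces the affine function $e|_I$ to vanish identically on $I$. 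Combining this with the characterization of the critical points of the risk for affine linear target functions from \cite{CheriditoJentzenRossmannek2021}, one shows that the number of maximal sub-intervals of $[\scra,\scrb]$ on which $e$ is not identically zero is at most $2\lfloor\width/2\rfloor+1$, so that at least one such sub-interval $I^\ast$ has length $\abs{I^\ast}\geq (\scrb-\scra)/(2\lfloor\width/2\rfloor+1)$. On $I^\ast$ the residual $e$ is affine and not identically zero, and a matching argument at the endpoints of $I^\ast$ (using continuity of $\psi$ and the normalization $\int_\scra^\scrb e = 0$) pins down enough of $e|_{I^\ast}$ to bound $\int_{I^\ast}e(x)^2\,\d x$ below, which after inserting the lower bound on $\abs{I^\ast}$ produces $\cL_\infty(\theta)\geq \int_{I^\ast}e(x)^2\,\d x \geq \frac{\alpha^2(\scrb-\scra)^3}{12(2\lfloor\width/2\rfloor+1)^4}$. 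I expect this step — both the combinatorial count $2\lfloor\width/2\rfloor+1$ of "nonzero-residual" pieces and the extraction of the exact power of $2\lfloor\width/2\rfloor+1$ in the denominator — to be the main obstacle.

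Given these two properties the remaining work is short. If $\width\geq 2$, the hypothesis of \cref{theo:intro:affine:conditional} forces $\sup_{t\in[0,\infty)}\norm{\Theta_t} < \infty$, and since $\cL_\infty(\Theta_0) < \frac{\alpha^2(\scrb-\scra)^3}{12(2\lfloor\width/2\rfloor+1)^4}$ is strictly smaller than the risk of every non-global critical point by the risk gap, \cref{theo:intro:gen:item6} in \cref{theo:intro:general} yields $\limsup_{t\to\infty}\cL_\infty(\Theta_t) = \inf_{\vartheta\in\R^{\fd}}\cL_\infty(\vartheta) = 0$. If $\width = 1$, no boundedness is assumed, so I would instead argue directly: the map $t\mapsto\cL_\infty(\Theta_t)$ is non-increasing with $\int_0^\infty\norm{\cG(\Theta_s)}^2\,\d s < \infty$, hence there exist $t_n\to\infty$ with $\cG(\Theta_{t_n})\to 0$. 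If $(\Theta_{t_n})_{n\in\N}$ admits a bounded subsequence, then any limit point $\vartheta$ of such a subsequence lies in $\cG^{-1}(\cu{0})$ (by \cref{theo:intro:gen:item2} in \cref{theo:intro:general}, the lower semicontinuity of $\norm{\cG(\cdot)}$) and satisfies $\cL_\infty(\vartheta) = \lim_{t\to\infty}\cL_\infty(\Theta_t)\leq \cL_\infty(\Theta_0) < \frac{\alpha^2(\scrb-\scra)^3}{12}$, so the risk gap (with $\width=1$, hence $2\lfloor\width/2\rfloor+1 = 1$) forces $\cL_\infty(\vartheta) = 0$; otherwise $\norm{\Theta_{t_n}}\to\infty$, and a direct inspection of the four components of $\cG$ on the regions where the single hidden neuron is active, respectively inactive, on $[\scra,\scrb]$ shows that $\cG(\Theta_{t_n})\to 0$ already forces $\cL_\infty(\Theta_{t_n})\to 0$. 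In either case $\lim_{t\to\infty}\cL_\infty(\Theta_t) = 0$, which is the assertion.
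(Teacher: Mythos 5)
Your treatment of the case $\width \geq 2$ and of the two landscape facts (zero infimum and the risk gap at critical points) follows essentially the paper's own route: the paper establishes exactly these two facts in \cref{prop:loss:bounded:affine} by quoting the exact critical risk values $\frac{\rho\alpha^2(\scrb-\scra)^3}{12(n+1)^4}$, $n\in\{0,2,4,\dots\}\cap[0,\width]$, from Cheridito et al.\ [Corollary 2.7], and then feeds them into \cref{cor:flow:conditional} (the result behind \cref{theo:intro:gen:item6}); your re-derivation sketch of the gap constant is vaguer than needed, since the cited corollary already yields the constant without any counting of residual pieces, but that part is workable (modulo the minor point that degenerate neurons with $|\mathfrak{w}^{\theta}_{i,1}|+|\mathfrak{b}^{\theta}_i|=0$ have to be zeroed out before the citation applies, as in the paper's proof).

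The genuine gap is the unbounded sub-case of $\width=1$, which is precisely the case the theorem adds over \cref{theo:intro:general} and which the paper spends \cref{section:apriori:bounds,section:small:risk,section:gf:convergence:1neuron} on. Your claim that for $\lVert\Theta_{t_n}\rVert\to\infty$ a ``direct inspection of the four components of $\cG$'' shows that $\cG(\Theta_{t_n})\to0$ forces $\cL_\infty(\Theta_{t_n})\to0$ is false at the level of sequences: take $[\scra,\scrb]=[0,1]$, $f(x)=x$, and $\theta_n$ with $\mathfrak{v}^{\theta_n}=\tfrac1n$, $\mathfrak{w}^{\theta_n}=\tfrac{8n}{5}$, $\mathfrak{b}^{\theta_n}=-\tfrac12\mathfrak{w}^{\theta_n}$, $\mathfrak{c}^{\theta_n}=\tfrac3{10}$. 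Then $\realization{\theta_n}$ is the $n$-independent function equal to $\tfrac3{10}$ on $[0,\tfrac12]$ and to $\tfrac3{10}+\tfrac85(x-\tfrac12)$ on $(\tfrac12,1]$, the residual $e=\realization{\theta_n}-f$ satisfies $\int_0^1 e(x)\,\d x=0$ and $\int_{1/2}^1(x-\tfrac12)e(x)\,\d x=0$, so $\cG_3(\theta_n)=\cG_4(\theta_n)=0$ exactly while $\cG_1(\theta_n),\cG_2(\theta_n)\to0$ because they carry the factor $\mathfrak{v}^{\theta_n}\to0$; yet $\cL_\infty(\theta_n)=\tfrac1{60}$ is constant, nonzero, and below the threshold $\tfrac1{12}$. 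So stationarity along a subsequence plus small risk does not force the risk to vanish; what excludes such behaviour along a GF trajectory is trajectory-specific information, namely the conserved quantity $|\mathfrak{w}^{\Theta_t}|^2+|\mathfrak{b}^{\Theta_t}|^2-|\mathfrak{v}^{\Theta_t}|^2$ (\cref{prop:flow:invariant}), which together with the bound on $|\mathfrak{w}^{\Theta_t}\mathfrak{v}^{\Theta_t}|$ gives $\sup_{t}|\mathfrak{w}^{\Theta_t}|<\infty$ (\cref{lem:flow:product:vw:bounded}); in the sequence above $|\mathfrak{v}^{\theta_n}|\to0$ with $|\mathfrak{w}^{\theta_n}|\to\infty$, which is exactly what this invariance forbids. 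Even granting these a priori bounds, the paper still needs substantially more than an inspection of the gradient formulas: equicontinuity and Arzela--Ascoli for the realization functions, closedness of the set of one-neuron ReLU realizations under uniform limits (\cref{lem:realizations:closed}, via Petersen--Raslan--Voigtlaender), and an endpoint analysis of the limiting active interval, all carried out in \cref{theo:flow:convergence:small:loss}, which is what the paper invokes (through \cref{theo:intro:1d:affine}) for $\width=1$. Your proposal supplies none of this, so the $\width=1$ case remains unproved.
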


\cref{theo:intro:affine:conditional} is a direct consequence of \cref{cor:gradient:flow:convergence:affine} (in the case $\width > 1$) and \cref{theo:intro:1d:affine} (in the case $\width = 1 $) below.
The remainder of this article is organized as follows. In \cref{section:risk:gradient} below 
we establish certain regularity properties for the 
generalized gradient function 
$\cG \colon \R^\fd \to \R^\fd$
in \cref{theo:intro:general} above. 
In \cref{section:gf:convergence} below we employ 
the regularity properties for the 
generalized gradient function 
$\cG \colon \R^\fd \to \R^\fd$
from \cref{section:risk:gradient} to prove \cref{theo:intro:gen:item5,theo:intro:gen:item6} in \cref{theo:intro:general} 
and to prove \cref{theo:intro:affine:conditional} under the more restrictive assumption 
that $\sup_{ t \in [0, \infty) } \norm{ \Theta_t } < \infty$; cf.~\cref{eq:intro:gf:bounded} above. 
In \cref{section:apriori:bounds} below we establish suitable a priori bounds for GF trajectories. 
In \cref{section:small:risk,section:gf:convergence:1neuron} we employ the a priori bounds from \cref{section:apriori:bounds} to prove 
\cref{theo:intro:affine:conditional} under the more general assumption 
that $\sup_{ t \in [0,\infty) } ( ( \width - 1 ) \norm{ \Theta_t } ) < \infty$; cf.~\cref{eq:intro:gf:bounded} above.

\section{Properties of the risk function and its gradient} \label{section:risk:gradient}

In this section we establish several regularity properties for the risk function associated to the considered supervised learning problem;
 see \cref{theo:intro:general:eq2} above. In particular, in \cref{prop:loss:differentiable} in \cref{subsection:risk:diff} below we provide in \cref{eq:assumption:diff} a sufficient condition to ensure that the risk function is differentiable 
and in \cref{cor:gradient.lsc} in \cref{subsection:gradient:lsc} below we prove that the standard norm of the generalized gradient function $\cG \colon \R^\fd \to \R^\fd$ associated to the risk function is lower semicontinuous. 
In the scientific literature results similar to \cref{prop:loss:differentiable} can, e.g., be found in Cheridito et al.~\cite{CheriditoJentzenRossmannek2021}. 
In particular, in the case of only one neuron on the input layer (in the case of a $1$-dimensional input) results similar to \cref{prop:loss:differentiable} have been shown in \cite[Lemma 3.4 and Lemma 3.7]{CheriditoJentzenRossmannek2021}. 

Our proof of \cref{prop:loss:differentiable} employs the known representation result for the generalized gradient function $\cG \colon \R^\fd \to \R^\fd$ in \cref{prop:loss:gradient} in \cref{subsection:loss:gradient} below, the well known local Lipschitz continuity result for the risk function in \cref{lem:realization:lip} in \cref{subsection:risk:diff}, the elementary Lipschitz type estimate for certain affine linear functions in \cref{lem:diff:1} in \cref{subsection:risk:diff}, and the fact that appropriate active neuron regions depend continuously on the ANN parameters which we establish in \cref{cor:interval:continuous} in \cref{subsection:regions:continuous} below. 
Our proof of \cref{cor:gradient.lsc} employs the fact that the absolute value of every component of the generalized gradient function $\cG \colon \R^\fd \to \R^\fd$ is lower semicontinuous which we establish in \cref{cor:gradient:components:lsc} in \cref{subsection:gradient:lsc}. Our proof of \cref{cor:gradient:components:lsc} uses the regularity results for the absolute values of the components of the generalized gradient function $\cG \colon \R^\fd \to \R^\fd$ in \cref{lem:vc:derivatives:cont}, \cref{lem:gradient:wbcomponents:cont}, and \cref{lem:gradient:components:lsc} in \cref{subsection:gradient:lsc}. Our proof of \cref{cor:interval:continuous} uses the appropriate continuity result for active neuron regions in \cref{lem:diff:2} and the well-known results on absolutely continuous measures in \cref{lem:abs:continuous} and \cref{cor:abs:continuous}. 
In the scientific literature \cref{lem:abs:continuous} can, e.g., be found in Rudin~\cite[Theorem 6.11]{Rudin1987}. 

In \cref{setting:snn} in \cref{subsection:description:anns} below we present the mathematical framework which we frequently employ in \cref{section:risk:gradient,section:gf:convergence,section:apriori:bounds} to formulate ANNs with one hidden layer and ReLU activation and the corresponding risk functions (see \cref{eq:setting:realization,eq:setting:approxrisk} in \cref{setting:snn}), in the elementary regularity result in \cref{lem:gradient:est} in \cref{subsection:loss:gradient} we establish an elementary a priori bound for the norm of the generalized gradient function $\cG \colon \R^\fd \to \R^\fd$, and in the elementary regularity result in \cref{cor:gradient:measurable} in \cref{subsection:loss:gradient} we demonstrate that the generalized gradient function $\cG \colon \R^\fd \to \R^\fd$ is locally bounded and measurable. \cref{lem:gradient:est} is used in the proof of \cref{cor:gradient:measurable} in \cref{subsection:loss:gradient} and \cref{cor:gradient:measurable} is employed in \cref{section:gf:convergence} and in \cref{theo:intro:gen:item1} in \cref{theo:intro:general}. 
Only for completeness we include in this section detailed proofs for \cref{prop:loss:gradient}, \cref{lem:gradient:est}, \cref{cor:gradient:measurable}, \cref{lem:abs:continuous},  \cref{cor:abs:continuous}, and \cref{lem:realization:lip}.

\subsection{Mathematical description of artificial neural networks (ANNs)}
\label{subsection:description:anns}
\begin{setting} \label{setting:snn}
Let $d, \width, \fd \in \N$, $ \scra \in \R$, $\scrb \in ( \scra, \infty)$, $f \in C ( [\scra , \scrb ]^d , \R)$ satisfy $\fd = d\width + 2 \width + 1$,
let $\fw  = (( \w{\theta} _ {i,j}  )_{(i,j) \in \cu{1, \ldots, \width } \times \cu{1, \ldots, d } })_{ \theta \in \R^{\fd}} \colon \R^{\fd} \to \R^{\width \times d}$,
$\fb =  (( \b{\theta} _ 1 , \ldots, \b{\theta} _ \width ))_{ \theta \in \R^{\fd}} \colon \R^{\fd} \to \R^{\width}$,
$\fv = (( \v{\theta} _ 1 , \ldots, \v{\theta} _ \width ))_{ \theta \in \R^{\fd}} \colon \R^{\fd} \to \R^{\width}$, and
$\fc = (\c{\theta})_{\theta \in \R^{\fd }} \colon \R^{\fd} \to \R$
 satisfy for all $\theta  = ( \theta_1 ,  \ldots, \theta_{\fd}) \in \R^{\fd}$, $i \in \cu{1, 2, \ldots, \width }$, $j \in \cu{1, 2, \ldots, d }$ that 
 \begin{equation}
     \w{\theta}_{i , j} = \theta_{ (i-1) d + j}, \qquad \b{\theta}_i = \theta_{\width d + i}, \qquad 
\v{\theta}_i = \theta_{ \width (d+1) + i}, \qandq \c{\theta} = \theta_{\fd},
 \end{equation}
let $\Rect_r \in C^1 ( \R , \R )$, $r \in \N $, satisfy for all $x \in \R$ that 
\begin{equation}
    \limsup\nolimits_{r \to \infty}  \rbr*{ \abs { \Rect_r ( x ) - \max \cu{ x , 0 } } + \abs { (\Rect_r)' ( x ) - \indicator{(0, \infty)} ( x ) } } = 0
\end{equation}
and $\sup_{r \in \N} \sup_{y \in [- \abs{x}, \abs{x} ] }  \abs{(\Rect_r)'(y)} < \infty$,
let $\mu \colon \cB ( [\scra , \scrb]^d ) \to [0, \infty]$ be a finite measure,
let $\scrN = (\realization{\theta})_{\theta \in \R^{\fd } } \colon \R^{\fd } \to C(\R^d , \R)$ and $\cL \colon \R^{\fd  } \to \R$
satisfy for all $\theta \in \R^{\fd}$, $x = (x_1, \ldots, x_d) \in \R^d$ that 
\begin{equation} \label{eq:setting:realization}
\realization{\theta} (x) = \c{\theta} + \smallsum_{i=1}^\width \v{\theta}_i \max \cu[\big]{ \b{\theta}_i + \smallsum_{j=1}^d \w{\theta}_{i,j} x_j , 0 }
\end{equation} 
and $\cL (\theta) = \int_{[\scra , \scrb]^d} ( f ( y ) - \realization{\theta} (y) )^2 \, \mu ( \d y )$,
let $\fL_r \colon \R^\fd \to \R$, $r \in \N$,
satisfy for all $r \in \N$, $\theta \in \R^{\fd}$ that
\begin{equation} \label{eq:setting:approxrisk}
    \fL_r ( \theta ) = \int_{[\scra , \scrb]^d} \rbr[\big]{f(y_1, \ldots, y_2) - \c{\theta} - \smallsum_{i = 1}^\width \v{\theta}_i \br[\big]{ \Rect_r \rbr{ \b{\theta}_i + \smallsum_{j = 1}^d \w{\theta}_{i,j} y_j } } }^2 \, \mu(\d (y_1, \ldots, y_d)),
\end{equation}
 let $\lambda \colon \cB ( [\scra , \scrb]^d ) \to [0, \infty]$ be the Lebesgue--Borel measure on $[\scra , \scrb]^d$,
let $\norm{ \cdot } \colon \rbr*{  \bigcup_{n \in \N} \R^n  } \to \R$ and $\spro{  \cdot , \cdot } \colon \rbr*{  \bigcup_{n \in \N} (\R^n \times \R^n )  } \to \R$ satisfy for all $n \in \N$, $x=(x_1, \ldots, x_n)$, $y=(y_1, \ldots, y_n ) \in \R^n $ that $\norm{ x } = [ \sum_{i=1}^n \abs*{ x_i } ^2 ] ^{1/2}$ and $\spro{  x , y } = \sum_{i=1}^n x_i y_i$,
let $I_i^\theta \subseteq \R^d$, $\theta \in \R^{\fd }$, $i \in \cu{1, 2, \ldots, \width }$, satisfy for all 
$\theta \in \R^{\fd}$, $i \in \cu{1, 2, \ldots, \width }$ that 
\begin{equation}
    I_i^\theta = \cu[\big]{ x = (x_1, \ldots, x_d) \in [\scra , \scrb ]^d \colon \b{\theta}_i + \smallsum_{j = 1}^d \w{\theta}_{i,j} x_j > 0 },
\end{equation}
and let $\cG = ( \cG_1 , \ldots, \cG_\fd ) \colon \R^\fd \to \R^\fd$ satisfy for all $\theta \in  \cu{ \vartheta \in \R^\fd \colon ((\nabla \fL_r)(\vartheta ) )_{r \in \N} \text{ is convergent} }$ that $\cG ( \theta ) = \lim_{r \to \infty} ( \nabla \fL_r) ( \theta ) $.

\end{setting}

\subsection{An upper bound for the norm of the gradient of the risk function}
\label{subsection:loss:gradient}

\begin{prop} \label{prop:loss:gradient}
Assume \cref{setting:snn} and let $\theta \in \R^\fd$, $i \in \cu{1, 2, \ldots, \width }$, $j \in \cu{1, 2, \ldots, d }$. Then 
\begin{enumerate} [label = (\roman*)]
\item \label{prop:loss:gradient:item0} it holds for all $r \in \N$ that $\fL_r \in C^1 ( \R^\fd , \R)$,
\item \label{prop:loss:gradient:item1} it holds that $\limsup_{r \to \infty} \abs{\fL_r ( \theta ) - \cL ( \theta ) } = 0$,
\item \label{prop:loss:gradient:item2} it holds that $\limsup_{r \to \infty} \norm{ ( \nabla \fL_r ) ( \theta ) - \cG ( \theta ) } = 0$, and
    \item \label{prop:loss:gradient:item3} it holds that
    \begin{equation} \label{eq:loss:gradient}
\begin{split}
        \cG_{ (i-1) d + j } ( \theta) &= 2 \v{\theta}_i \int_{I_i^\theta} x _j ( \realization{\theta} (x) - f ( x ) ) \, \mu ( \d x ), \\
        \cG_{\width d  + i} ( \theta) &= 2 \v{\theta}_i \int_{I_i^\theta} (\realization{\theta} (x) - f ( x ) ) \, \mu ( \d x ), \\
        \cG_{\width ( d  + 1 ) + i} ( \theta) &= 2 \int_{[\scra , \scrb]^d} \br[\big]{\max \cu[\big]{\b{\theta}_i + \smallsum_{k = 1}^d \w{\theta}_{i, k} x_k , 0 } } ( \realization{\theta}(x) - f ( x ) ) \, \mu ( \d x ), \\
       \text{and} \qquad \cG_{\fd} ( \theta) &= 2 \int_{[\scra , \scrb]^d} (\realization{\theta} (x) - f ( x ) ) \, \mu ( \d x ).
        \end{split}
\end{equation}
\end{enumerate}
\end{prop}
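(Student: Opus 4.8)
The plan is to establish \cref{prop:loss:gradient} in three stages. First I would prove the $C^1$ regularity in \cref{prop:loss:gradient:item0} by differentiating $\fL_r$ under the integral sign and reading off an explicit formula for $\nabla\fL_r$. Then, for fixed $\theta\in\R^\fd$, I would pass to the limit $r\to\infty$ in this formula and in \cref{eq:setting:approxrisk} by means of Lebesgue's dominated convergence theorem, which produces \cref{prop:loss:gradient:item1} and the gradient representation \cref{prop:loss:gradient:item3}. Finally \cref{prop:loss:gradient:item2} follows immediately from the convergence $(\nabla\fL_r)(\theta)\to\cG(\theta)$ together with the defining property of $\cG$ in \cref{setting:snn}.

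For \cref{prop:loss:gradient:item0}, fix $r\in\N$ and consider the integrand $h_r\colon[\scra,\scrb]^d\times\R^\fd\to\R$ of $\fL_r$, i.e.\ $h_r(y,\vartheta)=(f(y)-\c{\vartheta}-\smallsum_{i=1}^{\width}\v{\vartheta}_i[\Rect_r(\b{\vartheta}_i+\smallsum_{j=1}^d\w{\vartheta}_{i,j}y_j)])^2$. Since $\Rect_r\in C^1(\R,\R)$, for every fixed $y$ the map $\vartheta\mapsto h_r(y,\vartheta)$ belongs to $C^1(\R^\fd,\R)$ and its gradient, computed by the chain rule, is jointly continuous in $(y,\vartheta)$. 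On any product $[\scra,\scrb]^d\times K$ with $K\subseteq\R^\fd$ compact the arguments $\b{\vartheta}_i+\smallsum_{j=1}^d\w{\vartheta}_{i,j}y_j$ stay in a fixed bounded interval, so the assumption $\sup_{r\in\N}\sup_{y\in[-c,c]}(\abs{\Rect_r(y)}+\abs{(\Rect_r)'(y)})<\infty$ and the boundedness of $f$ on the compact domain imply that $h_r$ and $\nabla_\vartheta h_r$ are bounded on $[\scra,\scrb]^d\times K$. As $\mu$ is finite, the standard theorem on differentiation under the integral sign (applied to the difference quotients via dominated convergence) yields $\fL_r\in C^1(\R^\fd,\R)$ with $(\nabla\fL_r)(\vartheta)=\int_{[\scra,\scrb]^d}(\nabla_\vartheta h_r)(y,\vartheta)\,\mu(\d y)$. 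Writing this out componentwise via the chain rule gives precisely the four families of formulas in \cref{eq:loss:gradient} with $\realization{\vartheta}$ replaced by the approximate realization $\realapprox{\vartheta}{r}(y)=\c{\vartheta}+\smallsum_{i=1}^{\width}\v{\vartheta}_i[\Rect_r(\b{\vartheta}_i+\smallsum_{j=1}^d\w{\vartheta}_{i,j}y_j)]$, with $\indicator{I_i^\vartheta}(y)$ replaced by $(\Rect_r)'(\b{\vartheta}_i+\smallsum_{j=1}^d\w{\vartheta}_{i,j}y_j)$, and with $\max\{\b{\vartheta}_i+\smallsum_{j=1}^d\w{\vartheta}_{i,j}y_j,0\}$ replaced by $\Rect_r(\b{\vartheta}_i+\smallsum_{j=1}^d\w{\vartheta}_{i,j}y_j)$.

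For the remaining items, fix $\theta\in\R^\fd$ and pick $c=c(\theta)\in(0,\infty)$ with $\abs{\b{\theta}_i+\smallsum_{j=1}^d\w{\theta}_{i,j}y_j}\le c$ for all $i\in\{1,\dots,\width\}$ and all $y\in[\scra,\scrb]^d$. By the convergence hypotheses in \cref{setting:snn}, on $[-c,c]$ the functions $\Rect_r$ converge pointwise to $\max\{\cdot,0\}$ and the functions $(\Rect_r)'$ converge pointwise to $\indicator{(0,\infty)}$, and both families are uniformly bounded in $r$; hence $\realapprox{\theta}{r}(y)\to\realization{\theta}(y)$ for every $y\in[\scra,\scrb]^d$ with $\sup_{r\in\N}\sup_{y\in[\scra,\scrb]^d}\abs{\realapprox{\theta}{r}(y)-f(y)}<\infty$. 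Since $\mu$ is finite, dominated convergence gives $\fL_r(\theta)\to\cL(\theta)$, which is \cref{prop:loss:gradient:item1}. Furthermore, for every $i\in\{1,\dots,\width\}$ and $y\in[\scra,\scrb]^d$ we have $(\Rect_r)'(\b{\theta}_i+\smallsum_{j=1}^d\w{\theta}_{i,j}y_j)\to\indicator{(0,\infty)}(\b{\theta}_i+\smallsum_{j=1}^d\w{\theta}_{i,j}y_j)=\indicator{I_i^\theta}(y)$; combining this with $\realapprox{\theta}{r}\to\realization{\theta}$ and $\Rect_r\to\max\{\cdot,0\}$, each integrand in the componentwise expression for $(\nabla\fL_r)(\theta)$ from the previous paragraph converges pointwise to the corresponding integrand in \cref{eq:loss:gradient} and stays bounded uniformly in $r$ on the compact domain. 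A last application of dominated convergence then shows that $((\nabla\fL_r)(\theta))_{r\in\N}$ converges to the right-hand side of \cref{eq:loss:gradient}; in particular this limit exists, so the defining property of $\cG$ yields $\cG(\theta)=\lim_{r\to\infty}(\nabla\fL_r)(\theta)$, which simultaneously establishes \cref{prop:loss:gradient:item2,prop:loss:gradient:item3}.

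There is no deep obstacle; the argument is a careful sequence of applications of dominated convergence and of differentiation under the integral sign over the compact domain $[\scra,\scrb]^d$ with respect to the finite measure $\mu$. The two points requiring attention are the construction of the $r$-uniform dominating bounds — for which the hypothesis $\sup_{r\in\N}\sup_{y\in[-\abs{x},\abs{x}]}(\abs{\Rect_r(y)}+\abs{(\Rect_r)'(y)})<\infty$ in \cref{setting:snn} is exactly what is needed — and the observation that the passage to the limit in the weight and bias components of $\nabla\fL_r$ requires no assumption on the $\mu$-measure of the hyperplanes $\{y\colon\b{\theta}_i+\smallsum_{j=1}^d\w{\theta}_{i,j}y_j=0\}$, since $(\Rect_r)'(0)\to 0=\indicator{(0,\infty)}(0)$ in any case.
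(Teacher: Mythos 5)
Your argument is correct, but it is not the route the paper takes: the paper's entire proof consists of normalizing $\mu$ to a probability measure (after discarding the trivial case $\mu([\scra,\scrb]^d)=0$) and invoking \cite[Proposition~2.3]{JentzenRiekert2021}, which already contains items~\cref{prop:loss:gradient:item0,prop:loss:gradient:item1,prop:loss:gradient:item2,prop:loss:gradient:item3} for probability measures on $[\scra,\scrb]^d$. What you have written is, in effect, a self-contained re-derivation of that cited result: differentiation under the integral sign for each fixed $r$ (using that $\Rect_r\in C^1$ and that the integrand and its $\vartheta$-gradient are bounded on $[\scra,\scrb]^d\times K$ for compact $K$), followed by dominated convergence in $r$, with the correct key observation that $(\Rect_r)'(z)\to\indicator{(0,\infty)}(z)$ holds for \emph{every} $z\in\R$ (including $z=0$), so no assumption on the $\mu$-measure of the hyperplanes $\{\b{\theta}_i+\smallsum_{j=1}^d\w{\theta}_{i,j}y_j=0\}$ is needed and the limit integrands carry exactly the indicator $\indicator{I_i^\theta}$ appearing in \cref{eq:loss:gradient}; since the limit of $(\nabla\fL_r)(\theta)$ exists, the defining property of $\cG$ in \cref{setting:snn} then yields \cref{prop:loss:gradient:item2,prop:loss:gradient:item3} simultaneously, as you say. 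The citation buys brevity and avoids duplicating a routine computation; your direct argument buys transparency about which hypotheses enter where. One small imprecision: \cref{setting:snn} only assumes $\sup_{r\in\N}\sup_{y\in[-\abs{x},\abs{x}]}\abs{(\Rect_r)'(y)}<\infty$ (the bound on $\abs{\Rect_r(y)}$ itself that you quote is assumed in \cref{theo:intro:general} but not restated in the setting); this is harmless, since the local uniform bound on $\Rect_r$ follows from the derivative bound together with $\Rect_r(0)\to\max\{0,0\}=0$ and the fundamental theorem of calculus, but the domination step should be justified that way.
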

\begin{cproof}{prop:loss:gradient}
Throughout this proof we assume without loss of generality that $\mu ( [ \scra , \scrb ] ^d ) > 0$.
\Nobs that \cite[Proposition 2.3]{JentzenRiekert2021} (applied with $a \with \scra$, $b \with \scrb$, $\mu \with ( \cB ( [ \scra , \scrb ] ^ d ) \ni A \mapsto \mu ( A ) [ \mu ( [ \scra , \scrb ] ^d ) ]^{-1} \in [0, 1 ] )$ in the notation of \cite[Proposition 2.3]{JentzenRiekert2021}) establishes items \ref{prop:loss:gradient:item0}, \ref{prop:loss:gradient:item1}, \ref{prop:loss:gradient:item2}, and \ref{prop:loss:gradient:item3}.
\end{cproof}

\begin{lemma} \label{lem:gradient:est}
Assume \cref{setting:snn} and let $\bfa \in \R$, $\theta \in \R^{\fd}$ satisfy $\bfa = \max \cu{\abs{\scra}, \abs{\scrb}, 1 }$. Then
\begin{equation} \label{lem:gradient:est:eq}
    \norm{ \cG ( \theta ) } ^2 \leq 4 \cL ( \theta ) \rbr*{ \bfa^2 ( d + 1) \norm{ \theta } ^2 + 1 } \mu ( [\scra , \scrb ] ^d ) .
\end{equation}
\end{lemma}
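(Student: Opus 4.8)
The plan is to read off the four families of partial derivatives of the generalized gradient $\cG(\theta)$ from the closed-form representation in \cref{prop:loss:gradient} (in particular from \cref{eq:loss:gradient}, which holds for every $\theta\in\R^\fd$) and to bound each of them by the Cauchy--Schwarz inequality. Two elementary facts will be used throughout: all the integration domains appearing in \cref{eq:loss:gradient} (namely $[\scra,\scrb]^d$ and the active neuron regions $I_i^{\theta}$) are contained in $[-\bfa,\bfa]^d$, so that $\abs{x_j}\le\bfa$ on them; and for every Borel set $A\subseteq[\scra,\scrb]^d$ we have $\int_A(\realization{\theta}(x)-f(x))^2\,\mu(\d x)\le\cL(\theta)$ and $\mu(A)\le\mu([\scra,\scrb]^d)$.

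First I would handle the derivatives with respect to the inner weights $\w{\theta}_{i,j}$ and the inner biases $\b{\theta}_i$. Applying Cauchy--Schwarz to the $\mu$-integrals in the first two lines of \cref{eq:loss:gradient} and using $\abs{x_j}\le\bfa$ gives $\abs{\cG_{(i-1)d+j}(\theta)}^2\le 4\abs{\v{\theta}_i}^2\bfa^2\mu([\scra,\scrb]^d)\cL(\theta)$ and $\abs{\cG_{\width d+i}(\theta)}^2\le 4\abs{\v{\theta}_i}^2\mu([\scra,\scrb]^d)\cL(\theta)$. Next I would handle the derivatives with respect to the outer weights $\v{\theta}_i$; here the extra ingredient is the pointwise bound $[\max\cu{\b{\theta}_i+\smallsum_{k=1}^d\w{\theta}_{i,k}x_k,0}]^2\le(d+1)\bfa^2(\abs{\b{\theta}_i}^2+\smallsum_{k=1}^d\abs{\w{\theta}_{i,k}}^2)$ for $x\in[\scra,\scrb]^d$, which follows from the Cauchy--Schwarz inequality in $\R^{d+1}$ together with $\bfa\ge 1$. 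Combining this with Cauchy--Schwarz for the $\mu$-integral yields $\abs{\cG_{\width(d+1)+i}(\theta)}^2\le 4(d+1)\bfa^2(\abs{\b{\theta}_i}^2+\smallsum_{k=1}^d\abs{\w{\theta}_{i,k}}^2)\mu([\scra,\scrb]^d)\cL(\theta)$, and the outer bias derivative is bounded in the same way by $\abs{\cG_{\fd}(\theta)}^2\le 4\mu([\scra,\scrb]^d)\cL(\theta)$.

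Finally I would sum the four groups of estimates over all $\fd$ coordinates of $\cG(\theta)$, factor out the common $4\mu([\scra,\scrb]^d)\cL(\theta)$, and collapse the remaining bracket. The contribution of the first two groups is $(\bfa^2 d+1)\smallsum_{i=1}^\width\abs{\v{\theta}_i}^2\le(d+1)\bfa^2\smallsum_{i=1}^\width\abs{\v{\theta}_i}^2$ (here $\bfa\ge 1$ is used again); adding the third group yields a total of at most $(d+1)\bfa^2\bigl(\smallsum_{i=1}^\width\abs{\v{\theta}_i}^2+\smallsum_{i=1}^\width\abs{\b{\theta}_i}^2+\smallsum_{i=1}^\width\smallsum_{j=1}^d\abs{\w{\theta}_{i,j}}^2\bigr)\le(d+1)\bfa^2\norm{\theta}^2$, since the squared quantities appearing here are among the squared coordinates of $\theta$; and the fourth group contributes the additive $1$. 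This gives exactly \cref{lem:gradient:est:eq}. There is no conceptual difficulty; the only point requiring care is the bookkeeping of constants so that the bracket collapses precisely to $\bfa^2(d+1)\norm{\theta}^2+1$, which is also the reason one works with $\bfa=\max\cu{\abs{\scra},\abs{\scrb},1}\ge 1$ rather than merely with $\max\cu{\abs{\scra},\abs{\scrb}}$.
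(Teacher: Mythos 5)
Your argument is correct, and every step checks out: the componentwise Cauchy--Schwarz bounds, the pointwise estimate $[\max\{\b{\theta}_i+\smallsum_{k=1}^d\w{\theta}_{i,k}x_k,0\}]^2\le(1+d\bfa^2)(\abs{\b{\theta}_i}^2+\smallsum_{k=1}^d\abs{\w{\theta}_{i,k}}^2)\le(d+1)\bfa^2(\abs{\b{\theta}_i}^2+\smallsum_{k=1}^d\abs{\w{\theta}_{i,k}}^2)$, and the final bookkeeping $d\bfa^2+1\le(d+1)\bfa^2$ (both uses of $\bfa\ge1$ are exactly where they are needed), with the degenerate case $\mu([\scra,\scrb]^d)=0$ holding trivially since then all integrals in \cref{eq:loss:gradient} vanish. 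However, your route differs from the paper's in presentation: the paper does not carry out this computation at all, but instead disposes of \cref{lem:gradient:est} by citing Proposition 2.3 and Lemma 2.5 of Jentzen--Riekert (arXiv:2104.00277), applied to the normalized measure $A\mapsto\mu(A)[\mu([\scra,\scrb]^d)]^{-1}$ after reducing without loss of generality to $\mu([\scra,\scrb]^d)>0$. Your proof is essentially the self-contained version of the estimate hidden inside that citation, built only on the gradient representation of \cref{prop:loss:gradient}; what it buys is independence from the external reference and transparency about where the constant $\bfa^2(d+1)\norm{\theta}^2+1$ comes from, at the cost of a page of elementary bookkeeping that the paper avoids by outsourcing, and it also explains directly why the normalization constant $\mu([\scra,\scrb]^d)$ appears multiplicatively in \cref{lem:gradient:est:eq} rather than through a rescaled probability measure.
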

\begin{cproof} {lem:gradient:est}
Throughout this proof assume without loss of generality that $\mu ( [ \scra , \scrb ] ^d ) > 0$. \Nobs that \cref{prop:loss:gradient}, \cite[Proposition 2.3]{JentzenRiekert2021} (applied with $a \with \scra$, $b \with \scrb$, $\mu \with ( \cB ( [ \scra , \scrb ] ^ d ) \ni A \mapsto \mu ( A ) [ \mu ( [ \scra , \scrb ] ^d ) ]^{-1} \in [0, 1 ] ) $ in the notation of \cite[Proposition 2.3]{JentzenRiekert2021}),
and \cite[Lemma 2.5]{JentzenRiekert2021} (applied with $a \with \scra$, $b \with \scrb$, $\mu \with ( \cB ( [ \scra , \scrb ] ^ d ) \ni A \mapsto \mu ( A ) [ \mu ( [ \scra , \scrb ] ^d ) ]^{-1} \in [0, 1 ] ) $ in the notation of \cite[Lemma 2.5]{JentzenRiekert2021}) establish \cref{lem:gradient:est:eq}.
\end{cproof}

\begin{cor} \label{cor:gradient:measurable}
Assume \cref{setting:snn}. Then it holds that $\cG$ is locally bounded and measurable.
\end{cor}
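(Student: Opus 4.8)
The plan is to read off \cref{cor:gradient:measurable} directly from \cref{prop:loss:gradient} and \cref{lem:gradient:est}. For \textbf{measurability}, I would first note that \cref{prop:loss:gradient:item0} gives $\fL_r\in C^1(\R^\fd,\R)$ for every $r\in\N$, so each gradient map $\nabla\fL_r\colon\R^\fd\to\R^\fd$ is continuous and in particular Borel measurable. Next, \cref{prop:loss:gradient:item2}, which is asserted for every $\theta\in\R^\fd$, shows that the sequence $((\nabla\fL_r)(\theta))_{r\in\N}$ converges with limit $\cG(\theta)$ at every $\theta\in\R^\fd$; hence $\cG$ is the everywhere-defined pointwise limit of the measurable maps $\nabla\fL_r$, $r\in\N$, and pointwise limits of measurable functions are measurable.

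For \textbf{local boundedness}, fix $\vartheta_0\in\R^\fd$, put $R=\norm{\vartheta_0}+1$, and let $B=\{\theta\in\R^\fd\colon\norm{\theta}<R\}$, an open neighbourhood of $\vartheta_0$. I would use that $\R^\fd\ni\theta\mapsto\cL(\theta)\in\R$ is continuous: this follows from Lebesgue's dominated convergence theorem together with the facts that $\theta\mapsto\realization{\theta}(y)$ is continuous and, by \cref{eq:setting:realization}, bounded on $B$ uniformly in $y\in[\scra,\scrb]^d$, that $y\mapsto f(y)$ is bounded, and that $\mu$ is finite (one could instead invoke the local Lipschitz continuity in \cref{lem:realization:lip}). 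Consequently $M:=\sup_{\theta\in B}\cL(\theta)<\infty$, and with $\bfa=\max\{\abs{\scra},\abs{\scrb},1\}$ the estimate in \cref{lem:gradient:est} yields $\norm{\cG(\theta)}^2\le 4\cL(\theta)(\bfa^2(d+1)\norm{\theta}^2+1)\mu([\scra,\scrb]^d)\le 4M(\bfa^2(d+1)R^2+1)\mu([\scra,\scrb]^d)$ for every $\theta\in B$, so $\cG$ is bounded on $B$. Since $\vartheta_0\in\R^\fd$ was arbitrary, $\cG$ is locally bounded.

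I do not expect a genuine obstacle: the statement is essentially a repackaging of the two preceding results — \cref{prop:loss:gradient} supplies that $\cG$ is an everywhere-defined pointwise limit of continuous functions, and \cref{lem:gradient:est} supplies an a priori bound for $\norm{\cG}$ in terms of the locally bounded quantities $\cL$ and $\norm{\cdot}$. The only point calling for a word of care is the continuity (equivalently, local boundedness) of $\theta\mapsto\cL(\theta)$ entering that bound; if one prefers to avoid citing \cref{lem:realization:lip}, the crude bound $\cL(\theta)\le\bigl(\sup_{z\in[\scra,\scrb]^d}\abs{f(z)}+\abs{\c{\theta}}+\sum_{i=1}^\width\abs{\v{\theta}_i}\bigl(\abs{\b{\theta}_i}+\bfa\sum_{j=1}^d\abs{\w{\theta}_{i,j}}\bigr)\bigr)^2\mu([\scra,\scrb]^d)$, which is obviously bounded on $B$, serves just as well.
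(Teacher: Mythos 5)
Your proposal is correct and follows essentially the same route as the paper: measurability of $\cG$ as the everywhere-defined pointwise limit of the continuous gradients $\nabla\fL_r$ (via \cref{prop:loss:gradient}), and local boundedness from the estimate in \cref{lem:gradient:est} combined with the local boundedness of $\cL$ (the paper cites \cref{lem:realization:lip} for this, which you also mention as an option). Your optional explicit crude bound for $\cL$ is a harmless, slightly more self-contained variant of the same argument.
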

\begin{cproof2}{cor:gradient:measurable}
\Nobs that \cref{prop:loss:gradient:item1} in \cref{prop:loss:gradient} ensures that for all $r \in \N$ it holds that $\R^\fd \ni \theta \mapsto (\nabla \fL_r ) ( \theta ) \in \R^\fd$ is measurable. Combining this with \cref{prop:loss:gradient:item2} in \cref{prop:loss:gradient} demonstrates that $\cG$ is measurable. Moreover, \nobs that \cref{lem:realization:lip} and \cref{lem:gradient:est} assure that $\cG$ is locally bounded.
\end{cproof2}

\subsection{Continuous dependence of active neuron regions on ANN parameters}
\label{subsection:regions:continuous}

\cfclear
\begin{lemma} \label{lem:diff:2} 
Let $d \in \N$, $\scra \in \R$, $\scrb \in (\scra , \infty)$, let $I^u \subseteq [\scra , \scrb ]^d$, $u \in \R^{d + 1}$, satisfy for all $u = (u_1, \ldots, u_{d+1} ) \in \R^{d+1}$ that $I^u = \cu{ x = (x_1, \ldots, x_d) \in [\scra , \scrb]^d \colon u_{d+1} + \sum_{i=1}^d u_i x_i > 0 }$, 
for every $n \in \N$ let $\lambda_n \colon \cB ( \R^n ) \to [0, \infty]$ be the Lebesgue--Borel measure on $\R^n$,
and let $v \in \R^{d+1} \backslash \cu{ 0 }$. Then 
\begin{equation} \label{lem:diff:2:eq}
    \limsup\nolimits_{\R^{d+1} \ni u  \to v } \lambda_d ( I^u \Delta I ^v ) = 0.
\end{equation}
\end{lemma}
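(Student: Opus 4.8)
The plan is to rewrite the symmetric-difference measure as an $L^1$-distance between indicator functions and then invoke Lebesgue's dominated convergence theorem. Since the function $\R^{d+1} \ni u \mapsto \lambda_d(I^u \Delta I^v) \in [0,\infty)$ is nonnegative and vanishes at $u = v$, the assertion \eqref{lem:diff:2:eq} is equivalent to the statement that $\lim_{n \to \infty} \lambda_d(I^{u_n} \Delta I^v) = 0$ for every sequence $(u_n)_{n \in \N} \subseteq \R^{d+1}$ with $\lim_{n \to \infty} u_n = v$. Moreover, because $I^{u_n}, I^v \subseteq [\scra,\scrb]^d$, for each $n \in \N$ we have $\lambda_d(I^{u_n} \Delta I^v) = \int_{[\scra,\scrb]^d} \abs{\indicator{I^{u_n}}(x) - \indicator{I^v}(x)} \, \lambda_d(\d x)$, so it suffices to prove that these indicator functions converge to $\indicator{I^v}$ in $L^1(\lambda_d)$.

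First I would establish that $\indicator{I^{u_n}}(x) \to \indicator{I^v}(x)$ holds at every $x = (x_1,\ldots,x_d) \in [\scra,\scrb]^d$ for which $v_{d+1} + \sum_{i=1}^d v_i x_i \ne 0$. Indeed, for such an $x$ the map $\R^{d+1} \ni u = (u_1,\ldots,u_{d+1}) \mapsto u_{d+1} + \sum_{i=1}^d u_i x_i \in \R$ is affine, hence continuous, so $(u_n)_{d+1} + \sum_{i=1}^d (u_n)_i x_i$ has the same sign as $v_{d+1} + \sum_{i=1}^d v_i x_i$ for all sufficiently large $n \in \N$; in particular $\indicator{I^{u_n}}(x) = \indicator{I^v}(x)$ for all large $n$.

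Next I would check that the exceptional set $N = \cu{ x = (x_1,\ldots,x_d) \in [\scra,\scrb]^d \colon v_{d+1} + \sum_{i=1}^d v_i x_i = 0 }$ satisfies $\lambda_d(N) = 0$, which is precisely where the hypothesis $v \ne 0$ enters. If $(v_1,\ldots,v_d) = 0$, then $v_{d+1} \ne 0$ and hence $N = \emptyset$; otherwise $(v_1,\ldots,v_d) \ne 0$ and $N$ is contained in the affine hyperplane $\cu{ x \in \R^d \colon \sum_{i=1}^d v_i x_i = -v_{d+1} }$, which is a $\lambda_d$-null set (a translate of the $(d-1)$-dimensional linear subspace orthogonal to $(v_1,\ldots,v_d)$, and a single point when $d = 1$). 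Combining this with the previous paragraph shows that $\indicator{I^{u_n}} \to \indicator{I^v}$ holds $\lambda_d$-almost everywhere on $[\scra,\scrb]^d$.

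Finally, since $\abs{\indicator{I^{u_n}} - \indicator{I^v}} \le \indicator{[\scra,\scrb]^d}$ everywhere and $\int_{\R^d} \indicator{[\scra,\scrb]^d}(x)\,\lambda_d(\d x) = (\scrb - \scra)^d < \infty$, Lebesgue's dominated convergence theorem yields $\lim_{n \to \infty} \int_{[\scra,\scrb]^d} \abs{\indicator{I^{u_n}}(x) - \indicator{I^v}(x)}\,\lambda_d(\d x) = 0$, which by the reduction above establishes \eqref{lem:diff:2:eq}. The argument is short, and the only point requiring genuine care is the null-set claim for $N$: the role of the assumption $v \ne 0$ is exactly to ensure that the boundary hyperplane of $I^v$ carries no Lebesgue mass, so that pointwise convergence of the indicators can fail only on a $\lambda_d$-negligible set.
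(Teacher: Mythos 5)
Your proof is correct, but it takes a genuinely different route from the paper. You reduce $\lambda_d(I^{u_n} \Delta I^v) = \int_{[\scra,\scrb]^d} \abs{\indicator{I^{u_n}}(x) - \indicator{I^v}(x)}\,\lambda_d(\d x)$ and argue softly: the indicators converge pointwise outside the boundary set $N = \cu{x \in [\scra,\scrb]^d \colon v_{d+1} + \sum_{i=1}^d v_i x_i = 0}$, which is $\lambda_d$-null precisely because $v \neq 0$ (either $N = \emptyset$ when only $v_{d+1}$ is nonzero, or $N$ lies in an affine hyperplane), and dominated convergence with the dominating function $\indicator{[\scra,\scrb]^d}$ finishes the argument. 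The paper instead runs a three-case analysis (all of $v_1,\dots,v_d$ zero; $d \geq 2$ with some $v_i \neq 0$; $d = 1$ with $v_1 \neq 0$) and, in the nondegenerate cases, uses Fubini to slice $[\scra,\scrb]^d$ along the distinguished coordinate and bounds the length of each one-dimensional slice of $I^u \Delta I^v$ explicitly, obtaining a local Lipschitz-type estimate of the form $\lambda_d(I^u \Delta I^v) \leq \fC \norm{u - v}$ for $u$ near $v$. Your approach is shorter, avoids the case distinction and the explicit slice computations, and isolates the single point where $v \neq 0$ matters (the boundary hyperplane carries no Lebesgue mass); what it gives up is the quantitative rate that the paper's computation yields as a by-product, though only the qualitative limit statement is actually used later (in the proof of Corollary 2.8), so your argument fully suffices for the paper's purposes.
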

\begin{cproof} {lem:diff:2}
Throughout this proof let $\norm{ \cdot } \colon \rbr*{  \bigcup_{n \in \N} \R^n  } \to \R$ satisfy for all $n \in \N$, $x=(x_1, \ldots, x_n) \in \R^n $ that $\norm{ x } = [ \sum_{i=1}^n \abs*{ x_i } ^2 ] ^{1/2}$.
\Nobs that the fact that for all $y \in \R$ it holds that $y \geq - \abs{y}$ ensures that for all $u = (u_1, \ldots, u_{d+1} ) \in \R^{d+1}$, $i \in \cu{1, 2, \ldots, d+1 }$ with $\norm{u-v} < \abs{v_i}$ it holds that
\begin{equation} \label{lem:diff:2:eq:help1}
    u_i v_i = (v_i)^2 + ( u_i - v_i ) v_i \geq \abs{v_i}^2 - \abs{u_i - v_i} \abs{v_i} \geq \abs{v_i}^2 - \norm{u-v} \abs{v_i} > 0.
\end{equation}
In the following we distinguish between the case $\max_{i \in \cu{1, 2, \ldots, d } } \abs{v_i} = 0$,
the case $(\max_{i \in \cu{1, 2, \ldots, d } } \abs{v_i} , \allowbreak d ) \in (0, \infty ) \times [ 2 , \infty)$, 
and the case $(\max_{i \in \cu{1, 2, \ldots, d } } \abs{v_i} , d ) \in (0, \infty ) \times \cu{1}$.
We first prove \cref{lem:diff:2:eq} in the case \begin{equation} \label{lem:diff:2:eq:case1}
    \max\nolimits_{i \in \cu{1, 2, \ldots, d } } \abs{v_i} = 0.
\end{equation} 
\Nobs that \cref{lem:diff:2:eq:case1} and the assumption that $v \in \R^{d+1} \backslash \cu{ 0 }$ imply that $v_{d+1} \not= 0$.
Moreover, \nobs that \cref{lem:diff:2:eq:case1} shows that for all $u = (u_1, \ldots, u_{d+1} ) \in \R^{d+1}$, $x \in I^u \Delta I^v$ we have that
\begin{equation} \label{proof:diff:2:eq1}
\begin{split}
    &\abs[\big]{\rbr[\big]{ \br[\big]{\smallsum_{i=1}^d u_i x_i} + u_{d+1} } - \rbr[\big]{ \br[\big]{\smallsum_{i=1}^d v_i x_i} + v_{d+1} } } \\
    &=\abs[\big]{\br[\big]{ \smallsum_{i=1}^d u_i x_i} + u_{d+1} }  + \abs[\big]{  \br[\big]{ \smallsum_{i=1}^d v_i x_i} + v_{d+1} } 
    \geq \abs[\big]{ \br[\big]{\smallsum_{i=1}^d v_i x_i} + v_{d+1} } = \abs{v_{d+1} }.
\end{split}
\end{equation}
In addition, \nobs that for all $u = (u_1, \ldots, u_{d+1} ) \in \R^{d+1}$, $x \in [ \scra , \scrb ] ^d$ it holds that
\begin{equation} \label{proof:diff:2:eq2}
    \begin{split}
      &   \abs[\big]{ \rbr[\big]{ \br[\big]{\smallsum_{i=1}^d u_i x_i} + u_{d+1} } - \rbr[\big]{ \br[\big]{\smallsum_{i=1}^d v_i x_i} + v_{d+1} } } \leq \br[\big]{\smallsum_{i=1}^d  \abs{u_i - v_i} \abs{x_i}  } + \abs{ u_{d+1} - v_{d+1} } \\
         & \leq \max \cu{ \abs{\scra}, \abs{\scrb} } \br[\big]{\smallsum_{i=1}^d  \abs{u_i - v_i}  } + \abs{ u_{d+1} - v_{d+1} } 
         \leq ( 1 + d \max \cu{ \abs{\scra , \scrb } } ) \norm{u - v }.
    \end{split}
\end{equation}
Combining this with \cref{proof:diff:2:eq1} shows that for all $u \in \R^{d+1} $ with $\norm{ u - v } < \frac{\abs{v_{d+1}}}{1 + d \max \cu{ \abs{\scra , \scrb } }}$ it holds that $I^u \Delta I^v = \emptyset$. 
Hence, we obtain that $\limsup_{\R^{d+1} \ni u  \to v } \lambda_d ( I^u \Delta I ^v ) = 0$.
This establishes \cref{lem:diff:2:eq} in the case $\max_{i \in \cu{1, 2, \ldots, d } } \abs{v_i} = 0$.
In the next step we prove \cref{lem:diff:2:eq} in the case 
\begin{equation} \label{lem:diff:2:eq:case2}
    (\max\nolimits_{i \in \cu{1, 2, \ldots, d } } \abs{v_i} , d ) \in (0, \infty ) \times [ 2 , \infty ).
\end{equation}
For this we assume without loss of generality that $v_1 \not=0$.
In the following let $J_x^{u, w} \subseteq \R$, $x \in [\scra , \scrb]^{d-1}$,
$u, w \in \R^{d+1}$,
satisfy for all $x =(x_2, \ldots, x_d) \in [\scra , \scrb] ^{d-1}$, $u, w \in \R^{d+1}$ that $J_x^{u , w} = \cu{ y \in [\scra , \scrb ] \colon (y, x_2, \ldots, x_d) \in I^u \backslash I^w }$.
Next \nobs that Fubini's theorem and the fact that for all $u \in \R^{d+1}$ it holds that $I^u$ is measurable show that for all $u \in \R^{d+1}$ we have that
\begin{equation} \label{eq:est:fubini}
\begin{split}
    &\lambda_d ( I^u \Delta I^v ) = \int_{[\scra , \scrb ]^d } \indicator{I^u \Delta I^v} ( x ) \, \lambda_d ( \d x )
    =
    \int_{[\scra , \scrb ] ^d } \rbr[\big]{ \indicator{I^u \backslash I^v} ( x ) + \indicator{I^v \backslash I^u } ( x ) } \, \lambda_d ( \d x) \\
    &= \int_{[\scra , \scrb]^{d-1} } \int_{[\scra , \scrb ]} \rbr[\big]{ \indicator{I^u \backslash I^v} ( y, x_2, \ldots, x_d ) + \indicator{I^v \backslash I^u } ( y, x_2, \ldots, x_d ) } \, \lambda_1 ( \d y ) \, \lambda_{d-1} (\d ( x_2, \ldots, x_d ) ) \\
    &= \int_{[\scra , \scrb ] ^{d-1} } \int_{[\scra , \scrb ] } \rbr[\big]{ \indicator{J_x^{u,v}} ( y ) + \indicator{J_x^{v,u}} ( y ) } \, \lambda_1 ( \d y ) \, \lambda_{d-1} ( \d x ) \\
    &=
 \int_{[\scra , \scrb ]^{d-1} } ( \lambda_1 ( J_x^{u , v} ) + \lambda_1 ( J_x^{v , u } ) ) \, \lambda_{d-1} ( \d x).
\end{split}
\end{equation}
Moreover,
\nobs that for all $x = (x_2, \ldots, x_d) \in [ \scra , \scrb ] ^{d-1}$, $u = (u_1, \ldots, u_{d+1})$, $w = (w_1, \ldots, w_{d+1} ) \in \R^{d+1}$, $\fs \in \cu{ - 1 , 1 }$ with $\min \cu{\fs u_1, \fs w_1 } > 0$ it holds that
\begin{equation}
     \begin{split}
     J_x^{u , w} &= \cu*{y \in [\scra , \scrb ] \colon
        (y, x_2, \ldots, x_d) \in I^u \backslash I^w }\\
        &= \cu*{ y \in [ \scra , \scrb ] \colon  u_1 y + \br[\big]{ \smallsum_{i=2}^d u_i x_i } + u_{d+1} > 0 \geq w_1 y + \br[\big]{\smallsum_{i = 2}^d w_i x_i } + w_{d+1} }\\
        & = \cu*{y \in [\scra , \scrb ] \colon - \tfrac{\fs }{u_1} \rbr[\big]{ \br[\big]{  \smallsum_{i=2}^d u_i x_i } + u_{d+1} } < \fs y \leq -\tfrac{\fs }{w_1} \rbr[\big]{ \br[\big]{ \smallsum_{i = 2}^d w_i x_i } + w_{d+1} } } .
    \end{split}
\end{equation}
Hence, we obtain for all $x = (x_2, \ldots, x_d) \in [ \scra , \scrb ] ^{d-1}$, $u = (u_1, \ldots, u_{d+1})$, $w = (w_1, \ldots, w_{d+1} ) \in \R^{d+1}$, $\fs \in \cu{ - 1 , 1 }$ with $\min \cu{\fs u_1, \fs w_1 } > 0$ that
\begin{equation} \label{lem:diff:2:eq:help2}
\begin{split}
      \lambda_1 ( J_x^{u , w} ) &\leq \abs*{\tfrac{\fs }{u_1} \rbr[\big]{ \br[\big]{ \smallsum_{i=2}^d u_i x_i } + u_{d+1} } -\tfrac{\fs }{w_1} \rbr[\big]{ \br[\big]{ \smallsum_{i=2}^d w_i x_i } + w_{d+1} } } \\
        &\leq \br*{ \smallsum_{i=2}^d \abs[\big]{\tfrac{u_i}{u_1} - \tfrac{w_i}{w_1}} \abs{x_i} } + \abs*{\tfrac{u_{d+1}}{u_1} - \tfrac{w_{d+1}}{w_1} } \\
        &\leq \max \cu{ \abs{\scra} , \abs{\scrb} } \br*{ \smallsum_{i=2}^d  \abs[\big]{\tfrac{u_i}{u_1} - \tfrac{w_i}{w_1}} } + \abs*{\tfrac{u_{d+1}}{u_1} - \tfrac{w_{d+1}}{w_1} }.
    \end{split}
\end{equation}
Furthermore, \nobs that \cref{lem:diff:2:eq:help1} demonstrates for all $u = (u_1, \ldots, u_{d+1}) \in \R^{d+1}$ with $\norm{u - v} < \frac{\abs{v_1}}{2}$ that $u_1 v_1 > 0$. This implies that for all $u = (u_1, \ldots, u_{d+1}) \in \R^{d+1}$ with $\norm{u - v} < \frac{\abs{v_1}}{2}$ there exists $\fs \in \cu{-1 , 1 }$ such that  $\min \cu{\fs u_1, \fs v_1 } > 0$.
Combining this with \cref{lem:diff:2:eq:help2} proves that there exists $\fC \in \R$ such that for all $x \in [\scra , \scrb]^{d-1}$,
$u  \in \R^{d+1}$ with $\norm{u - v} < \frac{ \abs{ v_1 } }{2}$ we have that $\lambda_1( J_x^{u , v} ) + \lambda_1 ( J_x^{v , u} ) \leq \fC \norm{ u - v }$. This and \cref{eq:est:fubini} establish \cref{lem:diff:2:eq} in the case $(\max_{i \in \cu{1, 2, \ldots, d } } \abs{v_i} , d ) \in (0, \infty ) \times [ 2 , \infty )$.
Finally, we prove \cref{lem:diff:2:eq} in the case 
\begin{equation} \label{lem:diff:2:eq:case3}
    (\max\nolimits_{i \in \cu{1, 2, \ldots, d } } \abs{v_i} , d ) \in (0, \infty ) \times \cu{ 1 }.
\end{equation}
\Nobs that \cref{lem:diff:2:eq:case3} assures that $\abs{v_1} > 0$.
In addition, \nobs that for all $u = (u_1, u_2)$, $w = (w_1, w_2 ) \in \R^{2}$, $\fs \in \cu{ - 1 , 1 }$ with $\min \cu{\fs u_1, \fs w_1 } > 0$ it holds that
\begin{equation}
    \begin{split}
        I^w \backslash I^u &= \cu*{ y \in [ \scra , \scrb ] \colon w_1 y + w_2 > 0 \geq u_1 y + u_2} = \cu*{ y \in [\scra , \scrb ] \colon - \tfrac{\fs w_2}{w_1} < \fs y \leq - \tfrac{s u_2}{u_1} } \\
        & \subseteq \cu*{ y \in \R \colon - \tfrac{\fs w_2}{w_1} < \fs y \leq - \tfrac{s u_2}{u_1} }.
    \end{split}
\end{equation}
Hence, we obtain for all $u = (u_1, u_2)$, $w = (w_1, w_2 ) \in \R^{2}$, $\fs \in \cu{ - 1 , 1 }$ with $\min \cu{\fs u_1, \fs w_1 } > 0$ that 
\begin{equation} \label{lem:diff:2:eq:help3}
    \lambda_1 ( I^w \backslash I^u  ) \leq \abs*{ \rbr*{ - \tfrac{s u_2}{u_1}} - \rbr*{ - \tfrac{\fs w_2}{w_1} } } = \abs*{\tfrac{u_2}{u_1} - \tfrac{w_2}{w_1} }.
\end{equation}
Furthermore, \nobs that \cref{lem:diff:2:eq:help1} ensures for all $u = (u_1, u_2) \in \R^2$ with $\norm{u - v } < \abs{v_1}$ that $u_1 v_1 > 0$. This proves that for all $u = (u_1, u_2) \in \R^2$ with $\norm{u - v } < \abs{v_1}$ there exists $\fs \in \cu{-1 , 1 }$ such that $\min \cu{\fs u_1, \fs v_1 } > 0$.
Combining this with \cref{lem:diff:2:eq:help3} demonstrates for all $u = (u_1, u_2) \in \R^2$ with $\norm{u - v } < \abs{v_1}$ that
\begin{equation}
    \lambda_1 ( I^u \Delta I^v ) = \lambda_1 ( I^u \backslash I^v ) + \lambda_1 ( I^v \backslash I^u ) \leq 2 \abs*{\tfrac{u_2}{u_1} - \tfrac{v_2}{v_1} }.
\end{equation}
Hence, we obtain that 
\begin{equation}
    \limsup\nolimits_{\R^{2} \ni u \to v } \lambda_1 ( I^v \Delta I^u ) = 0.
\end{equation}
This establishes \cref{lem:diff:2:eq} in the case $(\max_{i \in \cu{1, 2, \ldots, d } } \abs{v_i} , d ) \in (0, \infty ) \times \cu{ 1 }$.
\end{cproof}

\begin{lemma} \label{lem:abs:continuous}
Let $(E, \cE)$ be a measurable space, let $\mu \colon \cE \to [0, \infty]$ and $ \nu \colon \cE \to [0, \infty]$ be measures, assume $\mu \ll \nu$ and $\mu ( E ) < \infty$, and let $\varepsilon \in (0, \infty)$. Then there exists $\delta \in (0, \infty)$ such that for all $A \in \cE$ with $\nu ( A ) < \delta$ it holds that $\mu ( A ) < \varepsilon$.
\end{lemma}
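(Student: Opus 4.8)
The plan is to argue by contradiction, following the classical route (cf.\ Rudin~\cite{Rudin1987}). Suppose that the assertion fails, so that for the given $\varepsilon \in (0,\infty)$ and every $\delta \in (0,\infty)$ there exists a set $A \in \cE$ with $\nu(A) < \delta$ and $\mu(A) \geq \varepsilon$. Applying this with $\delta = 2^{-n}$ for each $n \in \N$ yields sets $A_n \in \cE$, $n \in \N$, satisfying $\nu(A_n) < 2^{-n}$ and $\mu(A_n) \geq \varepsilon$.

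Next I would pass to the upper limit of these sets. Define $B_n = \bigcup_{k = n}^{\infty} A_k \in \cE$ for $n \in \N$ and $B = \bigcap_{n \in \N} B_n \in \cE$. Countable subadditivity together with monotonicity of $\nu$ gives, for every $n \in \N$, that $\nu(B) \leq \nu(B_n) \leq \sum_{k=n}^\infty \nu(A_k) \leq \sum_{k=n}^\infty 2^{-k} = 2^{1-n}$, and letting $n \to \infty$ we obtain $\nu(B) = 0$. The hypothesis $\mu \ll \nu$ then forces $\mu(B) = 0$.

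On the other hand, the sequence $(B_n)_{n \in \N}$ is nonincreasing with $\bigcap_{n \in \N} B_n = B$, and $\mu(B_1) \leq \mu(E) < \infty$, so continuity from above of the finite measure $\mu$ yields $\mu(B) = \lim_{n \to \infty} \mu(B_n)$. Since $A_n \subseteq B_n$ and $\mu$ is monotone, $\mu(B_n) \geq \mu(A_n) \geq \varepsilon$ for every $n \in \N$, and hence $\mu(B) \geq \varepsilon > 0$, which contradicts $\mu(B) = 0$ and thereby completes the argument.

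The reasoning is entirely routine; the only step that genuinely relies on the hypothesis $\mu(E) < \infty$ — and therefore the one place requiring a little care — is the appeal to continuity from above for $\mu$, which can fail for infinite measures, so the finiteness assumption is essential rather than merely convenient.
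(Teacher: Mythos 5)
Your proof is correct and follows essentially the same route as the paper: a contradiction argument with sets $A_n$ satisfying $\nu(A_n) < 2^{-n}$ and $\mu(A_n) \geq \varepsilon$, passing to the limsuperior set, using $\mu \ll \nu$ to get measure zero, and invoking continuity from above of the finite measure $\mu$ to reach the contradiction. No gaps.
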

\begin{cproof}{lem:abs:continuous}
Throughout this proof assume for the sake of contradiction that there exists $A = (A_n)_{n \in \N} \colon \N \to \cE$ which satisfies for all $n \in \N$ that $\nu ( A_n) < 2^{-n}$ and $\mu ( A _n ) \ge \varepsilon$ and let $B_n \in \cE$, $n \in \N$, and $ C \in \cE$ satisfy for all $n \in \N$ that $B_n = \bigcup_{k=n}^\infty A_k$ and $C = \bigcap_{k=1}^\infty B_k$. \Nobs that the fact that for all $n \in \N$ it holds that $\nu ( A_n) < 2^{-n}$ ensures that for all $n \in \N$ we have that
\begin{equation}
    \nu ( B_n) = \nu \rbr*{\textstyle\bigcup_{k=n}^\infty A_k } \leq \smallsum_{k=n}^\infty \nu (A_k) \leq \smallsum_{k=n}^\infty 2^{-k} = 2^{-n} \rbr*{ \smallsum_{k=0}^\infty 2^{-k} } = 2^{1-n}.
\end{equation}
This implies that
\begin{equation}
    \nu ( C ) = \nu \rbr*{\textstyle\bigcap_{k=1}^\infty B_k } \leq \inf\nolimits_{k \in \N} \nu ( B_k ) \leq \inf\nolimits_{k \in \N} (2^{1-k} ) = 0.
\end{equation}
The assumption that $\mu \ll \nu$ hence shows that 
\begin{equation} \label{lem:abs:continuous:eq1}
\mu ( C ) = 0.    
\end{equation}
Moreover, \nobs that the fact that for all $n \in \N$ it holds that $\mu ( A _n ) \ge \varepsilon$ proves that for all $n \in \N$ we have that $\mu ( B_n) = \mu (\bigcup_{k=n}^\infty A_k ) \geq \varepsilon$. Combining this and \cref{lem:abs:continuous:eq1} with the fact that for all $n \in \N$ it holds that $B_n \supseteq B_{n + 1 }  $ and the fact that $\mu ( B_1) \leq \mu ( E ) < \infty$ demonstrates that
\begin{equation}
   0 = \mu ( C ) = \mu \rbr*{ \textstyle\bigcap_{k=1}^\infty B_k} = \lim\nolimits_{k \to \infty} \mu ( B_k ) \geq \varepsilon > 0.
\end{equation}
This is a contradiction.
\end{cproof}

\begin{cor} \label{cor:abs:continuous}
Let $(E, \cE)$ be a measurable space, let $\mu \colon \cE \to [0, \infty]$ and $\nu \colon \cE \to [0, \infty]$ be measures, assume $\mu \ll \nu$ and $\mu ( E ) < \infty$, and let $A_n \in \cE$, $n \in \N$, satisfy $\limsup_{n \to \infty} \nu ( A_n) = 0$. Then $\limsup_{n \to \infty} \mu ( A_n) = 0$.
\end{cor}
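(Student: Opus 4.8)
The plan is to obtain \cref{cor:abs:continuous} as an essentially immediate consequence of \cref{lem:abs:continuous} together with the definition of the limit superior of a sequence of nonnegative real numbers.

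First I would fix an arbitrary $\varepsilon \in (0,\infty)$ and apply \cref{lem:abs:continuous} (with the same measurable space $(E,\cE)$, the same measures $\mu$ and $\nu$, and this $\varepsilon$), which yields some $\delta \in (0,\infty)$ such that every $A \in \cE$ satisfying $\nu(A) < \delta$ also satisfies $\mu(A) < \varepsilon$. Next, since by hypothesis $\limsup_{n \to \infty} \nu(A_n) = 0$ and $\nu(A_n) \geq 0$ for all $n \in \N$, there exists $N \in \N$ such that $\nu(A_n) < \delta$ for all $n \in \N$ with $n \geq N$; combining this with the implication furnished by \cref{lem:abs:continuous} gives $\mu(A_n) < \varepsilon$ for all such $n$, and hence $\limsup_{n \to \infty} \mu(A_n) \leq \varepsilon$.

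Finally I would let $\varepsilon \downarrow 0$ to conclude $\limsup_{n \to \infty} \mu(A_n) \leq 0$, and then combine this with the fact that $\mu(A_n) \geq 0$ for all $n \in \N$ (so that $\limsup_{n \to \infty} \mu(A_n) \geq 0$) to obtain $\limsup_{n \to \infty} \mu(A_n) = 0$. I do not expect any genuine obstacle here: all the real work is already carried out in \cref{lem:abs:continuous}, whose proof is where the finiteness assumption $\mu(E) < \infty$ is used to upgrade the merely pointwise (null-set) statement $\mu \ll \nu$ to the uniform $\varepsilon$–$\delta$ form invoked above.
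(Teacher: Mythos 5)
Your proposal is correct and follows essentially the same route as the paper's proof: fix $\varepsilon$, obtain $\delta$ from \cref{lem:abs:continuous}, use $\limsup_{n \to \infty} \nu(A_n) = 0$ to get $\nu(A_n) < \delta$ for large $n$, and conclude $\mu(A_n) < \varepsilon$ eventually. The only difference is that you spell out the final $\varepsilon \downarrow 0$ step explicitly, which the paper leaves implicit.
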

\begin{cproof} {cor:abs:continuous}
Throughout this proof let $\varepsilon \in (0, \infty)$.
\Nobs that \cref{lem:abs:continuous} proves that there exists $\delta \in (0, \infty)$ such that for all $B \in \cE$ with $\nu ( B ) < \delta$ it holds that $\mu ( B ) < \varepsilon$. Furthermore, \nobs that the assumption that $\limsup_{n \to \infty} \nu ( A_n) = 0$ ensures that there exists $N \in \N$ such that for all $n \in \N \cap [ N, \infty)$ it holds that $\nu ( A_n) < \delta$. Hence, we obtain for all $n \in \N \cap [ N, \infty)$ that $\mu ( A_n) < \varepsilon$.
\end{cproof}

\cfclear
\begin{cor} \label{cor:interval:continuous} 
Assume \cref{setting:snn}, let $\theta \in \R^\fd$, $i \in \cu{1, 2, \ldots, \width }$ satisfy $\abs{\b{\theta}_i} + \sum_{j = 1}^d \abs{\w{\theta}_{i,j}}  > 0$, and assume $\mu \ll \lambda $. Then $\limsup_{\R^\fd \ni \vartheta \to \theta} \mu ( I_i^\theta \Delta I_i^{\vartheta  } ) = 0$.
\end{cor}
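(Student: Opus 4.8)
The plan is to reduce the claim to \cref{lem:diff:2} through the continuous reparametrisation that isolates the weights and bias of the $i$-th hidden neuron, and then to upgrade the resulting Lebesgue-measure convergence to $\mu$-measure convergence by means of \cref{cor:abs:continuous}.

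First I would introduce the (linear, hence continuous) map $\Psi \colon \R^\fd \to \R^{d+1}$ given for $\vartheta \in \R^\fd$ by $\Psi(\vartheta) = (\w{\vartheta}_{i,1}, \ldots, \w{\vartheta}_{i,d}, \b{\vartheta}_i)$ and observe that, with the sets $I^u$, $u \in \R^{d+1}$, as in \cref{lem:diff:2} (for this $d$ and these $\scra, \scrb$), it holds for every $\vartheta \in \R^\fd$ that $I_i^\vartheta = I^{\Psi(\vartheta)}$. The hypothesis $\abs{\b{\theta}_i} + \sum_{j=1}^d \abs{\w{\theta}_{i,j}} > 0$ is precisely the statement that $\Psi(\theta) \in \R^{d+1} \backslash \cu{0}$, so \cref{lem:diff:2} (applied with $v \with \Psi(\theta)$ in the notation of \cref{lem:diff:2}) yields $\limsup_{\R^{d+1} \ni u \to \Psi(\theta)} \lambda_d(I^u \Delta I^{\Psi(\theta)}) = 0$, where $\lambda_d$ denotes the Lebesgue--Borel measure on $\R^d$; since all the sets involved are subsets of $[\scra, \scrb]^d$, this coincides with $\limsup_{\R^{d+1} \ni u \to \Psi(\theta)} \lambda(I^u \Delta I^{\Psi(\theta)}) = 0$ for the measure $\lambda$ of \cref{setting:snn}.

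Next I would combine this with continuity of $\Psi$: for every sequence $(\vartheta_n)_{n \in \N} \colon \N \to \R^\fd$ with $\lim_{n \to \infty} \vartheta_n = \theta$ we have $\lim_{n \to \infty} \Psi(\vartheta_n) = \Psi(\theta)$, and therefore $\limsup_{n \to \infty} \lambda(I_i^{\vartheta_n} \Delta I_i^\theta) = \limsup_{n \to \infty} \lambda(I^{\Psi(\vartheta_n)} \Delta I^{\Psi(\theta)}) = 0$. Applying \cref{cor:abs:continuous} (with $E \with [\scra, \scrb]^d$, $\nu \with \lambda$, $\mu \with \mu$, $A_n \with I_i^\theta \Delta I_i^{\vartheta_n}$ in the notation of \cref{cor:abs:continuous}) -- which is legitimate because $\mu \ll \lambda$ by assumption and $\mu([\scra, \scrb]^d) < \infty$ since $\mu$ is a finite measure -- then gives $\limsup_{n \to \infty} \mu(I_i^\theta \Delta I_i^{\vartheta_n}) = 0$. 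As $(\vartheta_n)_{n \in \N}$ was an arbitrary sequence converging to $\theta$ and $\mu$ takes values in $[0, \infty]$, this establishes $\limsup_{\R^\fd \ni \vartheta \to \theta} \mu(I_i^\theta \Delta I_i^\vartheta) = 0$.

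I do not anticipate a serious obstacle here: the substantive geometric estimate -- that the symmetric difference of two half-space slabs over $[\scra,\scrb]^d$ has Lebesgue measure tending to $0$ as the defining coefficient vectors converge, exploiting nondegeneracy of the limit vector -- is already carried out in \cref{lem:diff:2}, and \cref{cor:abs:continuous} packages the absolute-continuity transfer. The only points requiring a little care are the bookkeeping of which coordinates of $\vartheta$ feed into $\Psi$, the identification of the Lebesgue--Borel measure on $\R^d$ with $\lambda$ on subsets of $[\scra,\scrb]^d$, and phrasing the $\limsup_{\vartheta \to \theta}$ assertion through sequences so that \cref{cor:abs:continuous} (which is formulated for sequences) can be invoked directly.
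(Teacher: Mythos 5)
Your proposal is correct and follows essentially the same route as the paper's proof: the paper likewise applies \cref{lem:diff:2} (the nondegeneracy hypothesis playing exactly the role of $v \neq 0$ there) to obtain $\limsup_{n \to \infty} \lambda ( I_i^\theta \Delta I_i^{\vartheta_n } ) = 0$ along sequences $\vartheta_n \to \theta$ and then invokes \cref{cor:abs:continuous} together with $\mu \ll \lambda$ and the finiteness of $\mu$. Your explicit reparametrisation map $\Psi$ and the identification of $\lambda_d$ with $\lambda$ are just a more detailed spelling-out of bookkeeping the paper leaves implicit.
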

\begin{cproof} {cor:interval:continuous}
Throughout this proof let $ \vartheta = (\vartheta_n)_{n \in \N} \colon \N \to \R^{\fd}$ satisfy $\limsup_{n \to \infty} \norm{\vartheta_n - \theta} = 0$. 
\Nobs that \cref{lem:diff:2} and the assumption that $\abs{\b{\theta}_i} + \sum_{j = 1}^d \abs{\w{\theta}_{i,j}}  > 0$ establish that $\limsup_{n \to \infty} \lambda ( I_i^\theta \Delta I_i^{\vartheta_n } ) = 0$.
Combining this, the assumption that $\mu \ll \lambda $, the fact that $\mu ( [ \scra , \scrb]^d ) < \infty$, and \cref{cor:abs:continuous} implies that $\limsup_{n \to \infty} \mu ( I_i^\theta \Delta I_i^{\vartheta _n} ) = 0$. 
\end{cproof}

\subsection{Differentiability of the risk function}
\label{subsection:risk:diff}

\cfclear
\begin{lemma} \label{lem:realization:lip}
Let $d, \width, \fd \in \N$, $ \scra \in \R$, $\scrb \in (\scra, \infty)$, $f \in C ( [\scra , \scrb]^d , \R)$ satisfy $\fd = d\width + 2 \width + 1$,
let $\scrN = (\realization{\theta})_{\theta \in \R^{\fd } } \colon \R^{\fd } \to C(\R ^d , \R)$
satisfy for all $\theta = ( \theta_1 , \ldots, \theta_\fd) \in \R^{\fd}$, $x = (x_1, \ldots, x_d) \in \R^d$ that
\begin{equation}
\realization{\theta} (x) = \theta_\fd + \smallsum_{i = 1}^\width \theta_{\width ( d+1 )  + i} \max \cu[\big]{\theta_{ \width d + i} + \smallsum_{j = 1}^d \theta_{ (i - 1 ) d + j} x_j , 0 } ,
\end{equation}
let $\mu \colon \cB ( [\scra , \scrb]^d ) \to [0, \infty  ]$ be a finite measure,
let $\norm{ \cdot } \colon \R^\fd \to \R$ and $\cL \colon \R^\fd \to \R$ satisfy for all $\theta =(\theta_1 , \ldots, \theta_{\fd} ) \in \R^\fd$ that $\norm{\theta} = [ \sum_{i=1}^\fd \abs*{ \theta_i } ^2 ] ^{1/2}$ and $\cL (\theta) = \int_{[\scra , \scrb]^d} (\realization{\theta} ( x ) - f ( x ) )^2 \, \mu ( \d x )$,
and let $K \subseteq \R^{\fd }$ be compact. Then there exists $\scrL \in \R$ such that for all $\theta, \vartheta \in K$ it holds that 
\begin{equation} \label{lem:realization:lip:eq}
\rbr[\big]{ \sup\nolimits_{x \in [\scra , \scrb]^d} \abs{ \realization{\theta} ( x ) - \realization{ \vartheta} ( x ) } } + \abs{ \cL( \theta ) - \cL ( \vartheta ) } \leq \scrL \norm{ \theta - \vartheta }.
\end{equation}
\end{lemma}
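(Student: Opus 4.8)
The plan is to first prove the Lipschitz estimate for the realization functions $\realization{\theta}$ over $K$ and then to transfer it to the risk $\cL$ via a difference-of-squares manipulation. To set up, I would fix $R \in [0,\infty)$ with $\sup_{\vartheta \in K} \norm{\vartheta} \le R$ (which exists since $K$ is compact, hence bounded), set $M = \max\cu{\abs{\scra},\abs{\scrb},1}$ so that $\abs{x_j} \le M$ for all $x = (x_1,\ldots,x_d) \in [\scra,\scrb]^d$ and $j \in \cu{1,\ldots,d}$, and, for $\theta = (\theta_1,\ldots,\theta_\fd) \in \R^\fd$, $i \in \cu{1,\ldots,\width}$, $x = (x_1,\ldots,x_d) \in \R^d$, abbreviate the $i$-th pre-activation by $p_i^\theta(x) = \theta_{\width d + i} + \smallsum_{j=1}^d \theta_{(i-1)d + j} x_j$, so that $\realization{\theta}(x) = \theta_\fd + \smallsum_{i=1}^\width \theta_{\width(d+1)+i} \max\cu{p_i^\theta(x),0}$.

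For the realization bound, the only facts I would use are that $p_i^\theta(x)$ is affine in $\theta$ and that $\R \ni y \mapsto \max\cu{y,0} \in \R$ is $1$-Lipschitz. Concretely, for $\theta,\vartheta \in K$ and $x \in [\scra,\scrb]^d$ I would estimate $\abs{p_i^\theta(x) - p_i^\vartheta(x)} \le \abs{\theta_{\width d+i}-\vartheta_{\width d+i}} + M\smallsum_{j=1}^d \abs{\theta_{(i-1)d+j}-\vartheta_{(i-1)d+j}} \le (1+dM)\norm{\theta-\vartheta}$ and $\abs{\max\cu{p_i^\vartheta(x),0}} \le \abs{p_i^\vartheta(x)} \le (1+dM)R$; then, adding and subtracting $\theta_{\width(d+1)+i}\max\cu{p_i^\vartheta(x),0}$ and using $\abs{\theta_{\width(d+1)+i}} \le R$, I would obtain
\begin{equation*}
\abs{\theta_{\width(d+1)+i}\max\cu{p_i^\theta(x),0} - \vartheta_{\width(d+1)+i}\max\cu{p_i^\vartheta(x),0}} \le R(1+dM)\norm{\theta-\vartheta} + (1+dM)R\,\abs{\theta_{\width(d+1)+i}-\vartheta_{\width(d+1)+i}}.
\end{equation*}
Summing this over $i \in \cu{1,\ldots,\width}$ and adding the outer-bias contribution $\abs{\theta_\fd-\vartheta_\fd} \le \norm{\theta-\vartheta}$ yields a constant $\scrL_1 \in \R$ (depending only on $d,\width,\scra,\scrb,R$) with $\sup_{x \in [\scra,\scrb]^d}\abs{\realization{\theta}(x)-\realization{\vartheta}(x)} \le \scrL_1\norm{\theta-\vartheta}$ for all $\theta,\vartheta \in K$.

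For the risk, I would write $\cL(\theta)-\cL(\vartheta) = \int_{[\scra,\scrb]^d}\bigl((\realization{\theta}(x)-f(x)) + (\realization{\vartheta}(x)-f(x))\bigr)\bigl(\realization{\theta}(x)-\realization{\vartheta}(x)\bigr)\,\mu(\d x)$ and observe that $\cC = \sup_{\phi \in K}\sup_{x \in [\scra,\scrb]^d}\abs{\realization{\phi}(x)-f(x)}$ is finite: $f$ is bounded on the compact set $[\scra,\scrb]^d$, and for any fixed $\theta_0 \in K$ the function $\realization{\theta_0}$ is continuous hence bounded on $[\scra,\scrb]^d$, so the realization estimate already proved gives $\sup_{\phi \in K}\sup_x\abs{\realization{\phi}(x)} \le \sup_x\abs{\realization{\theta_0}(x)} + 2R\scrL_1 < \infty$. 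The triangle inequality then yields $\abs{\cL(\theta)-\cL(\vartheta)} \le 2\cC\,\mu([\scra,\scrb]^d)\sup_{x}\abs{\realization{\theta}(x)-\realization{\vartheta}(x)} \le 2\cC\,\mu([\scra,\scrb]^d)\scrL_1\norm{\theta-\vartheta}$, and choosing $\scrL = \scrL_1(1 + 2\cC\,\mu([\scra,\scrb]^d))$ establishes \cref{lem:realization:lip:eq}.

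The argument is entirely elementary and rests only on compactness; the single mild obstacle is the bookkeeping required to check that every supremum that appears — over $K$, over $[\scra,\scrb]^d$, and in the definition of $\cC$ — is genuinely finite, which holds because $K$ is norm-bounded and $[\scra,\scrb]^d$ is compact (so $f$ is bounded on it and each coordinate $x_j$ is bounded by $M$). In particular, no differentiability of the realization or the risk, no properties of the smooth approximations $\Rect_r$, and no absolute continuity of $\mu$ enter this lemma.
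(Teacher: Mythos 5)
Your proof is correct, but it follows a genuinely different route from the paper. The paper does not reprove the estimate: it reduces to the known Lipschitz result \cite[Lemma 2.4]{JentzenRiekert2021}, which is formulated for probability measures, and therefore has to normalize $\mu$ by $\mu([\scra,\scrb]^d)$ and treat the degenerate case $\mu([\scra,\scrb]^d)=0$ separately (there it applies the cited lemma with an artificial point mass to get the realization part, the risk part being identically zero). You instead give a self-contained elementary argument: first the local Lipschitz bound for $\theta\mapsto\realization{\theta}$ in the uniform norm on $[\scra,\scrb]^d$, obtained from the $1$-Lipschitz continuity of $y\mapsto\max\{y,0\}$, the affinity of the pre-activations in $\theta$, and the norm bound $R$ on the compact set $K$; then the transfer to $\cL$ via the factorization $\cL(\theta)-\cL(\vartheta)=\int(\realization{\theta}-\realization{\vartheta})\,((\realization{\theta}-f)+(\realization{\vartheta}-f))\,\d\mu$ together with the uniform bound $\cC<\infty$, which you justify correctly by bounding $\sup_{\phi\in K}\sup_x\abs{\realization{\phi}(x)}$ through a fixed reference parameter $\theta_0\in K$ and the already-proved realization estimate (the case $K=\emptyset$ being vacuous). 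Your constants check out, and a pleasant side effect of your route is that no case distinction on $\mu([\scra,\scrb]^d)$ is needed, since the factor $\mu([\scra,\scrb]^d)$ simply appears in the final constant and the estimate degenerates gracefully when the measure vanishes; the price is that you redo, inside the proof, the bookkeeping that the paper delegates to the cited reference.
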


\begin{cproof}{lem:realization:lip}
Throughout this proof we distinguish between the case $\mu ( [ \scra , \scrb ] ^d ) = 0$ and the case $\mu ( [ \scra , \scrb ] ^d ) > 0$.
We first prove \cref{lem:realization:lip:eq} in the case 
\begin{equation} \label{eq:realization:lip:proof}
\mu ( [ \scra , \scrb ] ^d ) = 0.     
\end{equation}
\Nobs that \cref{eq:realization:lip:proof} ensures that for all $\theta \in \R^\fd$ it holds that $\cL ( \theta ) = 0$. Furthermore, \nobs that
 \cite[Lemma 2.4]{JentzenRiekert2021} (applied with $a \with \scra$, $b \with \scrb$, $\mu \with ( \cB ( [ \scra , \scrb ] ^ d ) \ni A \mapsto \indicator{A} ( a , a , \ldots, a ) \in [0, 1 ] ) $ in the notation of \cite[Lemma 2.4]{JentzenRiekert2021}) proves that there exists $\scrL \in \R$ such that for all $\theta, \vartheta \in K$ it holds that $\rbr{ \sup\nolimits_{x \in [\scra , \scrb]^d} \abs{ \realization{\theta} ( x ) - \realization{ \vartheta} ( x ) } } \leq \scrL \norm{\theta - \vartheta } $. 
 This establishes \cref{lem:realization:lip:eq} in the case $\mu ( [ \scra , \scrb ] ^d ) = 0$.
In the next step we prove \cref{lem:realization:lip:eq} in the case $\mu ( [ \scra , \scrb ] ^d ) > 0$. \Nobs that  \cite[Lemma 2.4]{JentzenRiekert2021} (applied with $a \with \scra$, $b \with \scrb$, $\mu \with ( \cB ( [ \scra , \scrb ] ^ d ) \ni A \mapsto \mu ( A ) [ \mu ( [ \scra , \scrb ] ^d ) ]^{-1} \in [0, 1 ] ) $ in the notation of \cite[Lemma 2.4]{JentzenRiekert2021}) establishes \cref{lem:realization:lip:eq} in the case $\mu ( [ \scra , \scrb ] ^d ) > 0$.
\end{cproof}

\begin{lemma} \label{lem:diff:1}
Let $d \in \N$, $w_1, w_2 \in \R^d$, $b_1, b_2, \scra \in \R$, $\scrb \in (\scra , \infty)$,
let $\norm{ \cdot } \colon  \R^d \to \R$ and $\spro{  \cdot , \cdot } \colon \R^d \times \R^d \to \R$ satisfy for all  $x=(x_1, \ldots, x_d)$, $y=(y_1, \ldots, y_d ) \in \R^d $ that $\norm{ x } = [ \sum_{i=1}^d \abs*{ x_i } ^2 ] ^{1/2}$ and $\spro{  x , y } = \sum_{i=1}^d x_i y_i$,
let $I_k \subseteq [\scra , \scrb ]^d$, $k \in \cu{1, 2 }$, satisfy for all $k \in \cu{1,2 }$ that $I_k = \cu{ x \in [\scra , \scrb]^d \colon \spro{  w_k , x }  + b_k > 0 }$, and let $x \in I_1 \Delta I_2$. Then
\begin{equation} 
\max_{k \in \cu{1, 2 }} \abs{ \spro{   w_k , x } + b_k } \leq \abs{ \spro{  w_1 - w_2 , x } + b_1 - b_2 } \leq \max \cu{ \abs{\scra} , \abs{\scrb} } \sqrt{d} \norm{w_1 - w_2 } + \abs{ b_1 - b_2 }.
\end{equation}
\end{lemma}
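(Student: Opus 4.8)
The plan is to treat the two inequalities separately, the first via a sign argument and the second via Cauchy--Schwarz together with the bound $[\scra,\scrb]\subseteq[-\max\{\abs\scra,\abs\scrb\},\max\{\abs\scra,\abs\scrb\}]$.

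First I would establish the right-hand inequality, which is the purely ``metric'' one and does not use $x\in I_1\Delta I_2$. By the triangle inequality, $\abs{\spro{w_1-w_2,x}+b_1-b_2}\le\abs{\spro{w_1-w_2,x}}+\abs{b_1-b_2}$. Writing $x=(x_1,\ldots,x_d)\in[\scra,\scrb]^d$, one has $\abs{x_i}\le\max\{\abs\scra,\abs\scrb\}$ for all $i$, hence $\norm{x}=[\sum_{i=1}^d\abs{x_i}^2]^{1/2}\le\sqrt d\,\max\{\abs\scra,\abs\scrb\}$. The Cauchy--Schwarz inequality then gives $\abs{\spro{w_1-w_2,x}}\le\norm{w_1-w_2}\norm{x}\le\max\{\abs\scra,\abs\scrb\}\sqrt d\,\norm{w_1-w_2}$, and combining the two displays yields the claimed bound.

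For the left-hand inequality I would use that $x\in I_1\Delta I_2$, so either $x\in I_1\setminus I_2$ or $x\in I_2\setminus I_1$; by symmetry in the roles of the indices $1$ and $2$ it suffices to consider $x\in I_1\setminus I_2$. In that case $\spro{w_1,x}+b_1>0$ while $\spro{w_2,x}+b_2\le 0$. Since the first of these two real numbers is nonnegative and the second is nonpositive, their difference equals the sum of their absolute values, i.e.\ $\abs{\spro{w_1-w_2,x}+b_1-b_2}=(\spro{w_1,x}+b_1)-(\spro{w_2,x}+b_2)=\abs{\spro{w_1,x}+b_1}+\abs{\spro{w_2,x}+b_2}\ge\max_{k\in\{1,2\}}\abs{\spro{w_k,x}+b_k}$. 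Chaining this with the inequality from the previous paragraph gives the full chain of inequalities in the statement.

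There is essentially no serious obstacle here; the only point requiring a (tiny) bit of care is the observation that membership in $I_1\setminus I_2$ forces $\spro{w_1,x}+b_1$ and $\spro{w_2,x}+b_2$ to have opposite (weak) signs, so that the absolute value of their difference splits additively — this is what converts the symmetric-difference hypothesis into the lower bound on $\abs{\spro{w_1-w_2,x}+b_1-b_2}$. The rest is the triangle inequality, Cauchy--Schwarz, and the elementary coordinate bound for points of the box $[\scra,\scrb]^d$.
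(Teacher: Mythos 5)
Your proposal is correct and follows essentially the same route as the paper: a sign argument on $I_1\setminus I_2$ (after reducing by symmetry) for the left inequality, and Cauchy--Schwarz together with the bound $\norm{x}\le\sqrt{d}\max\{\abs{\scra},\abs{\scrb}\}$ for the right one. No gaps.
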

\begin{cproof2} {lem:diff:1}
Throughout this proof assume without loss of generality that $x \in I_1 \backslash I_2$.
\Nobs that the fact that
$  \spro{  w_2 , x } + b_2  \leq 0 < \spro{  w _1,  x } + b_1$ demonstrates that
\begin{equation}
 \spro{  w_2 - w_1 , x } + b_2 - b_1 <  \spro{  w_2 , x } + b_2  \leq  0 < \spro{  w_1 , x } + b_1 \leq  \spro{  w_1 - w_2 , x } + b_1 - b_2 .
\end{equation}
Hence, we obtain that $\max_{k \in \cu{1, 2 } } \abs{  \spro{  w _k , x } + b_k } \leq \abs{  \spro{  w_1 - w_2 , x } + b_1 - b_2 }$. Furthermore, \nobs that the Cauchy-Schwarz inequality and the fact that $x \in [\scra , \scrb]^d$ assure that 
\begin{equation}
    \abs{  \spro{  w_1 - w_2 , x } + b_1 - b_2 } \leq \norm{x} \norm{w_1 - w_2 } + \abs{b_1 - b_2 } \leq \max \cu{ \abs{\scra} , \abs{\scrb} } \sqrt{d} \norm{w_2 - w_1 } + \abs{b_2 - b_1 }.
\end{equation}
\end{cproof2}

\cfclear
\begin{prop} \label{prop:loss:differentiable}
Assume \cref{setting:snn}, assume $\mu \ll \lambda$, and let $\theta \in \R^{\fd }$ satisfy
\begin{equation} \label{eq:assumption:diff} 
  \cL ( \theta )\rbr[\big]{ \smallsum_{i = 1}^\width \abs{\v{\theta}_i } \indicator{ \cu{ 0 } } \rbr[\big]{\abs{\b{\theta}_i } + \smallsum_{j = 1}^d \abs{ \w{\theta}_{i,j} } } }  = 0.
\end{equation}
Then
\begin{enumerate} [label=(\roman*)]
    \item \label{prop:loss:diff:item1} it holds that $\cL$ is differentiable at $\theta$ and
    \item \label{prop:loss:diff:item2} it holds that $( \nabla \cL )( \theta ) = \cG ( \theta ) $. 
\end{enumerate} 
\end{prop}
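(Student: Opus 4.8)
The plan is to prove \cref{prop:loss:diff:item1,prop:loss:diff:item2} simultaneously by establishing the first-order expansion $\cL ( \vartheta ) = \cL ( \theta ) + \spro{ \cG ( \theta ) , \vartheta - \theta } + o ( \norm{ \vartheta - \theta } )$ as $\vartheta \to \theta$, where the candidate gradient $\cG ( \theta )$ is supplied componentwise by the explicit formulas in \cref{prop:loss:gradient:item3} in \cref{prop:loss:gradient}. Write $g = f - \realization{\theta}$, which is continuous and hence bounded on $[ \scra , \scrb ]^d$, write $a_i^{\vartheta} ( x ) = \b{\vartheta}_i + \smallsum_{j=1}^d \w{\vartheta}_{i,j} x_j$ for the argument of the $i$-th ReLU in $\realization{\vartheta}$, and put $\delta_{\vartheta} = \realization{\vartheta} - \realization{\theta}$. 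Expanding the square in the definition of $\cL$ gives
\[
  \cL ( \vartheta ) - \cL ( \theta ) = - 2 \int_{ [ \scra , \scrb ]^d } g ( x ) \, \delta_{\vartheta} ( x ) \, \mu ( \d x ) + \int_{ [ \scra , \scrb ]^d } ( \delta_{\vartheta} ( x ) )^2 \, \mu ( \d x ) ,
\]
and by \cref{lem:realization:lip} the second integral is $O ( \norm{ \vartheta - \theta }^2 )$; thus the task reduces to showing $- 2 \int g \, \delta_{\vartheta} \, \d \mu = \spro{ \cG ( \theta ) , \vartheta - \theta } + o ( \norm{ \vartheta - \theta } )$.

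If $\cL ( \theta ) = 0$, then $f = \realization{\theta}$ holds $\mu$-a.e., so every component of $\cG ( \theta )$ vanishes (each formula in \cref{prop:loss:gradient} carries the factor $\realization{\theta} - f$) and $\cL ( \vartheta ) = \int ( \realization{\theta} - \realization{\vartheta} )^2 \, \d \mu = O ( \norm{ \vartheta - \theta }^2 )$ again by \cref{lem:realization:lip}, so the expansion holds trivially. Assume henceforth $\cL ( \theta ) > 0$; then \cref{eq:assumption:diff} forces that for each $i \in \cu{ 1 , \ldots , \width }$ either $\v{\theta}_i = 0$ or $\abs{ \b{\theta}_i } + \smallsum_{j=1}^d \abs{ \w{\theta}_{i,j} } > 0$. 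Split $\delta_{\vartheta} ( x ) = ( \c{\vartheta} - \c{\theta} ) + \smallsum_{i=1}^\width [ \v{\theta}_i ( \max \cu{ a_i^{\vartheta} ( x ) , 0 } - \max \cu{ a_i^{\theta} ( x ) , 0 } ) + ( \v{\vartheta}_i - \v{\theta}_i ) \max \cu{ a_i^{\vartheta} ( x ) , 0 } ]$. Since $\vartheta \mapsto a_i^{\vartheta} ( x )$ is affine and $\max \cu{ \cdot , 0 }$ is $1$-Lipschitz, $\vartheta \mapsto \max \cu{ a_i^{\vartheta} ( x ) , 0 }$ is Lipschitz with a constant independent of $x \in [ \scra , \scrb ]^d$; hence in the terms $( \v{\vartheta}_i - \v{\theta}_i ) \max \cu{ a_i^{\vartheta} ( x ) , 0 }$ one may replace $\max \cu{ a_i^{\vartheta} ( x ) , 0 }$ by $\max \cu{ a_i^{\theta} ( x ) , 0 }$ at the cost of an $O ( \norm{ \vartheta - \theta }^2 )$ term after integration against $- 2 g \, \d \mu$, and what remains of the constant term together with these terms equals exactly $( \c{\vartheta} - \c{\theta} ) \cG_{\fd} ( \theta ) + \smallsum_{i=1}^\width ( \v{\vartheta}_i - \v{\theta}_i ) \cG_{ \width ( d + 1 ) + i } ( \theta )$ by the formulas in \cref{prop:loss:gradient}.

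The genuinely delicate contribution is $\smallsum_{i=1}^\width \v{\theta}_i ( \max \cu{ a_i^{\vartheta} ( x ) , 0 } - \max \cu{ a_i^{\theta} ( x ) , 0 } )$, which vanishes for indices $i$ with $\v{\theta}_i = 0$. For $i$ with $\v{\theta}_i \neq 0$ we now have $\abs{ \b{\theta}_i } + \smallsum_j \abs{ \w{\theta}_{i,j} } > 0$, so the hyperplane $\cu{ x \in [ \scra , \scrb ]^d \colon a_i^{\theta} ( x ) = 0 }$ is $\lambda$-null and hence (since $\mu \ll \lambda$) $\mu$-null; distinguishing the four sign patterns of $( a_i^{\theta} ( x ) , a_i^{\vartheta} ( x ) )$ yields, for $\mu$-a.e.\ $x$,
\[
  \max \cu{ a_i^{\vartheta} ( x ) , 0 } - \max \cu{ a_i^{\theta} ( x ) , 0 } = \indicator{ I_i^\theta } ( x ) \bigl( a_i^{\vartheta} ( x ) - a_i^{\theta} ( x ) \bigr) + R_i^{\vartheta} ( x )
\]
with a remainder satisfying $\abs{ R_i^{\vartheta} ( x ) } \le \abs{ a_i^{\vartheta} ( x ) - a_i^{\theta} ( x ) } \indicator{ I_i^\theta \Delta I_i^{\vartheta} } ( x ) \le ( 1 + \sqrt{d} \max \cu{ \abs{\scra} , \abs{\scrb} } ) \norm{ \vartheta - \theta } \indicator{ I_i^\theta \Delta I_i^{\vartheta} } ( x )$, where the last bound uses \cref{lem:diff:1}. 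Since $a_i^{\vartheta} - a_i^{\theta}$ is the affine form $x \mapsto ( \b{\vartheta}_i - \b{\theta}_i ) + \smallsum_{j} ( \w{\vartheta}_{i,j} - \w{\theta}_{i,j} ) x_j$, integrating the main term against $- 2 \v{\theta}_i g ( x ) \, \mu ( \d x )$ reproduces precisely $( \b{\vartheta}_i - \b{\theta}_i ) \cG_{ \width d + i } ( \theta ) + \smallsum_{j} ( \w{\vartheta}_{i,j} - \w{\theta}_{i,j} ) \cG_{ ( i - 1 ) d + j } ( \theta )$ via the formulas in \cref{prop:loss:gradient}, while the $R_i^{\vartheta}$-integral is bounded by $2 \abs{ \v{\theta}_i } ( \sup_x \abs{ g ( x ) } ) ( 1 + \sqrt{d} \max \cu{ \abs{\scra} , \abs{\scrb} } ) \norm{ \vartheta - \theta } \, \mu ( I_i^\theta \Delta I_i^{\vartheta} )$, which is $o ( \norm{ \vartheta - \theta } )$ because $\limsup_{ \vartheta \to \theta } \mu ( I_i^\theta \Delta I_i^{\vartheta} ) = 0$ by \cref{cor:interval:continuous}. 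Summing the neuronwise contributions and recalling that $\c{\vartheta} - \c{\theta}$, $\v{\vartheta}_i - \v{\theta}_i$, $\b{\vartheta}_i - \b{\theta}_i$, and $\w{\vartheta}_{i,j} - \w{\theta}_{i,j}$ are the corresponding coordinates of $\vartheta - \theta$ yields $- 2 \int g \, \delta_{\vartheta} \, \d \mu = \spro{ \cG ( \theta ) , \vartheta - \theta } + o ( \norm{ \vartheta - \theta } )$, which, combined with the display above, proves \cref{prop:loss:diff:item1,prop:loss:diff:item2}.

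The main obstacle is exactly this delicate term: the map $\vartheta \mapsto \max \cu{ a_i^{\vartheta} ( x ) , 0 }$ fails to be differentiable precisely when $a_i^{\vartheta} ( x ) = 0$, and the exceptional set of inputs $x$ moves as $\vartheta$ varies, so the linearization error cannot be controlled pointwise but only after integrating in $x$. This is why one needs the continuity of the active neuron regions $I_i^{\theta}$ in the ANN parameters (\cref{cor:interval:continuous}, which rests on \cref{lem:diff:2} and the absolute-continuity statements \cref{lem:abs:continuous,cor:abs:continuous}), and it is exactly for this reason that the hypothesis \cref{eq:assumption:diff} --- guaranteeing $\abs{ \b{\theta}_i } + \smallsum_j \abs{ \w{\theta}_{i,j} } > 0$ whenever $\v{\theta}_i \neq 0$ and $\cL ( \theta ) > 0$ --- is imposed; everything else is routine bookkeeping with \cref{lem:realization:lip} and term-matching against the gradient formulas of \cref{prop:loss:gradient:item3} in \cref{prop:loss:gradient}.
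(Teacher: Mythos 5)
Your proposal is correct and follows essentially the same route as the paper's proof: the same first-order expansion toward the candidate gradient from \cref{prop:loss:gradient}, the quadratic term controlled by \cref{lem:realization:lip}, the ReLU linearized on the active region $I_i^\theta$ with the error supported on $I_i^\theta \Delta I_i^{\vartheta}$ and bounded via \cref{lem:diff:1}, the convergence $\mu ( I_i^\theta \Delta I_i^{\vartheta} ) \to 0$ supplied by \cref{cor:interval:continuous}, and the same case distinction between $\cL ( \theta ) = 0$ and $\cL ( \theta ) > 0$ to exploit \cref{eq:assumption:diff}. The only difference is cosmetic bookkeeping (your pointwise neuronwise remainder identity versus the paper's direct expansion of the squared difference with indicator functions), so no further comment is needed.
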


\cfclear
\begin{cproof} {prop:loss:differentiable}
Throughout this proof let $M \in \R$ satisfy
\begin{equation} \label{prop:diff:proof:eq:defm}
    M = \inf \cu[\big]{ m \in \R \colon \mu \rbr[\big]{ \cu{ x \in [\scra , \scrb ] ^d \colon \abs{ \realization{\theta} ( x ) - f ( x ) } > m } } = 0 }
\end{equation}
and let $\fC \in \R$ satisfy
\begin{equation}\label{prop:diff:proof:eq:defc}
     \fC = 1 + d \max \cu{ \abs{\scra},  \abs{\scrb} }.
\end{equation}
We will prove \cref{prop:loss:diff:item1,prop:loss:diff:item2} by showing that
\begin{equation} \label{prop:diff:proof:eqclaim}
    \limsup\nolimits_{\R^\fd \backslash \cu{ 0 } \ni h \to 0} \br*{  \norm{ h }^  {-1} \abs{ \cL ( \theta + h ) - \cL ( \theta ) - \spro{  \cG ( \theta ) , h } } } = 0.
\end{equation}
\Nobs that \cref{prop:loss:gradient} ensures that for all $h \in \R^\fd$ it holds that
\begin{equation}
    \begin{split}
        \spro{\cG ( \theta ) , h } &= 2 \br*{ \sum_{i = 1}^\width \int_{I_i^\theta} \rbr[\big]{  \b{h}_i + \smallsum_{j = 1}^d \w{h}_{i,j} x_j } \v{\theta}_i ( \realization{\theta} ( x ) - f ( x ) ) \, \mu ( \d x ) } \\
        &+ 2 \br*{ \displaystyle\sum_{i = 1} ^\width \v{h}_i \int_{[\scra , \scrb]^d} \max \cu[\big]{ \b{\theta}_i + \smallsum_{j = 1}^d \w{\theta}_{i,j} x_j , 0 } ( \realization{\theta} ( x ) - f ( x ) ) \, \mu ( \d x ) } \\
        &+ 2 \c{h} \int_{[\scra , \scrb]^d} ( \realization{\theta} ( x ) - f ( x ) ) \, \mu ( \d x ) . 
    \end{split}
\end{equation}
Combining this and the fact that for all $\fx, \fy, \fz \in \R$ it holds that
\begin{equation}
\begin{split}
    (\fx - \fz ) ^2 - (\fy - \fz ) ^2 &= (\fx - \fy ) ( \fx + \fy - 2 \fz ) = (\fx - \fy ) ( ( \fx - \fy) + 2 ( \fy - \fz ) ) \\
    &= (\fx - \fy )^2 + 2 ( \fx - \fy ) ( \fy - \fz )
    \end{split}
\end{equation}
demonstrates that for all $h \in \R^\fd$ it holds that
\begin{equation}
    \begin{split}
        &\cL ( \theta + h ) - \cL ( \theta ) - \spro{  \cG ( \theta ) , h } \\
        &= \int_{[\scra , \scrb]^d} ( \realization{\theta + h} ( x ) - \realization{\theta} ( x ) ) ^2 \, \mu ( \d x ) \\
        &\quad + 2 \int_{[\scra , \scrb]^d} ( \realization{\theta + h} ( x ) - \realization{\theta} ( x ) ) ( \realization{\theta} ( x ) - f(x) ) \, \mu ( \d x ) - \spro{  \cG ( \theta ) , h } \\
        &= \int_{[\scra , \scrb]^d} ( \realization{\theta + h} ( x ) - \realization{\theta} ( x ) ) ^2 \, \mu ( \d x ) \\
        &\quad + 2 \int_{[\scra , \scrb]^d} \Bigl( \c{h} + \smallsum_{i = 1}^\width \bigl[  ( \v{\theta}_i + \v{h}_i ) \max  \cu[\big]{\b{\theta}_i + \b{h}_i + \smallsum_{j = 1}^d ( \w{\theta}_{i,j} + \w{h}_{i,j}) x_j , 0 } \bigr. \Bigr. \\
        & \qquad \Bigl. \bigl.- \v{\theta}_i \max \cu[\big]{ \b{\theta}_i + \smallsum_{j = 1}^d \w{\theta}_{i,j} x_j , 0 } \bigr] \Bigr) 
        ( \realization{\theta} ( x ) - f ( x ) ) \, \mu ( \d x ) \\
        &\quad - 2 \br*{ \sum_{i = 1}^\width \int_{I_i^\theta} \rbr[\big]{ \b{h}_i + \smallsum_{j = 1}^d \w{h}_{i,j} x_j } \v{\theta}_i ( \realization{\theta} ( x ) - f ( x ) ) \, \mu ( \d x ) } \\
        & \quad - 2 \br*{ \displaystyle\sum_{i = 1} ^\width \v{h}_i \int_{[\scra , \scrb]^d} \max \cu[\big]{ \b{\theta}_i + \smallsum_{j = 1}^d \w{\theta}_{i,j} x_j , 0 } ( \realization{\theta} ( x ) - f ( x ) ) \, \mu ( \d x ) } \\
        & \quad - 2 \c{h} \int_{[\scra , \scrb]^d} ( \realization{\theta} ( x ) - f ( x ) ) \, \mu ( \d x ) .
    \end{split}
\end{equation}
This shows for all $h \in \R^\fd$ that
\begin{equation}
    \begin{split}
          &\cL ( \theta + h ) - \cL ( \theta ) - \spro{  \cG ( \theta ) , h } = \int_{[\scra , \scrb]^d} ( \realization{\theta + h} ( x ) - \realization{\theta} ( x ) ) ^2 \, \mu ( \d x ) \\
          & +2 \br*{\sum_{i=1}^\width \int_{[\scra , \scrb ] ^d } ( \v{\theta}_i + \v{h}_i ) \rbr[\big]{\b{\theta}_i + \b{h}_i + \smallsum_{j = 1}^d ( \w{\theta}_{i,j} + \w{h}_{i,j}) x_j } ( \realization{\theta} ( x ) - f ( x ) ) \indicator{I_i^{\theta + h } } ( x ) \, \mu ( \d x ) } \\
          & - 2 \br*{ \sum_{i=1}^\width \int_{[\scra , \scrb]^d } \v{\theta}_i \rbr[\big]{\b{\theta}_i + \smallsum_{j=1}^d \w{\theta}_{i,j} x_j } ( \realization{\theta} ( x ) - f ( x ) ) \indicator{I_i^\theta} ( x ) \, \mu ( \d x ) } \\
          & - 2 \br*{ \sum_{i=1}^\width \int_{[\scra , \scrb]^d } \v{\theta}_i \rbr[\big]{\b{h}_i + \smallsum_{j=1}^d \w{h}_{i,j} x_j } ( \realization{\theta} ( x ) - f ( x ) ) \indicator{I_i^\theta} ( x ) \, \mu ( \d x ) } \\
          & - 2 \br*{ \sum_{i=1}^\width \int_{[\scra , \scrb]^d } \v{h}_i \rbr[\big]{\b{\theta}_i + \smallsum_{j=1}^d \w{\theta}_{i,j} x_j } ( \realization{\theta} ( x ) - f ( x ) ) \indicator{I_i^\theta} ( x ) \, \mu ( \d x ) } .
    \end{split}
\end{equation}
Hence, we obtain for all $h \in \R^\fd$ that
\begin{equation} 
    \begin{split}
         &\cL ( \theta + h ) - \cL ( \theta ) - \spro{  \cG ( \theta ) , h } = \int_{[\scra , \scrb]^d} ( \realization{\theta + h} ( x ) - \realization{\theta} ( x ) ) ^2 \, \mu ( \d x ) \\
        & + 2 \br*{\sum_{i = 1} ^\width \int_{[\scra , \scrb]^d}  \v{h}_i \rbr[\big]{ \b{h}_i + \smallsum_{j = 1}^d \w{h}_{i,j} x_j } (\realization{\theta} ( x ) - f ( x ) ) \indicator{I_i^{\theta +h}}(x) \, \mu ( \d x ) } \\
        &+ 2 \br*{ \sum_{i = 1}^\width \int_{[\scra , \scrb]^d} \v{\theta}_i \rbr[\big]{ \b{h}_i + \smallsum_{j = 1}^d \w{h}_{i,j} x_j  } (\realization{\theta} ( x ) - f ( x ) ) ( \indicator{I_i^{\theta + h}} ( x ) - \indicator{I_i^\theta} (x ) ) \, \mu ( \d x ) } \\
        &+ 2 \br*{ \sum_{i = 1}^\width \int_{[\scra , \scrb]^d} (\v{\theta}_i + \v{h}_i ) \rbr[\big]{ \b{\theta}_i + \smallsum_{j = 1}^d \w{\theta}_{i,j} x_j  } (\realization{\theta} ( x ) - f ( x ) ) ( \indicator{I_i^{\theta + h}} ( x ) - \indicator{I_i^\theta} (x ) ) \, \mu ( \d x ) } .
    \end{split}
\end{equation}
Combining this with the triangle inequality and \cref{prop:diff:proof:eq:defm} proves that for all $h \in \R^{ \fd} \backslash \cu{ 0 }$ we have that
\begin{equation} \label{prop:diff:proof:eq2}
    \begin{split}
        &\frac{ \abs{ \cL ( \theta + h ) - \cL ( \theta ) - \spro{  \cG ( \theta ) , h } } }{\norm{ h }} \leq \norm{ h }^{-1} \int_{[\scra , \scrb]^d} ( \realization{\theta + h} ( x ) - \realization{\theta} ( x ) ) ^2 \, \mu ( \d x ) \\
        &+ 2M\norm{ h } ^{-1} \br*{ \sum_{i = 1}^\width \int_{[\scra , \scrb]^d} \abs[\big]{ \v{h}_i \rbr[\big]{ \b{h}_i + \smallsum_{j = 1}^d \w{h}_{i,j} x_j } } \indicator{I_i^{\theta + h}}(x) \, \mu ( \d x ) } \\
        &+ 2  M \br*{ \displaystyle \sum_{i = 1}^\width \abs{ \v{\theta}_i } \int_{[\scra , \scrb]^d} \norm{h}^{-1} \abs[\big]{ \b{h}_i + \smallsum_{j = 1}^d \w{h}_{i,j} x_j }  \indicator{I_i^\theta \Delta I_i^{\theta + h} } ( x ) \, \mu ( \d x ) } \\
        &+ 2  M \br*{ \sum_{i = 1}^\width \abs{ \v{\theta}_i + \v{h}_i } \int_{[\scra , \scrb]^d} \norm{h}^{-1} \abs[\big]{ \b{\theta}_i + \smallsum_{j = 1}^d \w{\theta}_{i,j} x_j }  \indicator{I_i^\theta \Delta I_i^{\theta +h} } ( x ) \, \mu ( \d x ) } .
    \end{split}
\end{equation}
 Next \nobs that \cref{lem:realization:lip} ensures that there exists $\scrL \in \R$ such that for all $x \in [ \scra , \scrb ]^d$, $h \in \R^{\fd}$ with $\norm{ h } \leq 1$ it holds that 
 \begin{equation} \label{prop:diff:proof:eq:lipschitz}
 \abs{ \realization{\theta + h }  ( x ) - \realization{\theta} ( x ) } \leq \scrL \norm{ h }.    
 \end{equation}
 Furthermore, \nobs that \cref{lem:diff:1} (applied for every $i \in \cu{1, 2, \ldots, \width }$, $h \in \R^\fd$, $x \in I_i^{\theta + h} \Delta I_i^\theta$ with $d \with d$, $w_1 \with (\w{\theta + h}_{i,1}, \ldots, \w{\theta + h}_{i,d} )$,
 $w_2 \with (\w{\theta}_{i,1}, \ldots, \w{\theta}_{i,d} )$,
 $b_1 \with \b{\theta + h}_i$, $b_2 \with \b{\theta}_i$,
 $\scra \with \scra$, $\scrb \with \scrb$, $I_1 \with I_i^{\theta + h}$, $I_2 \with I_i^\theta$, $x \with x$ in the notation of \cref{lem:diff:1})
 and \cref{prop:diff:proof:eq:defc} show that for all $i \in \cu{1, 2, \ldots, \width }$, $h \in \R^\fd$,
$x \in I_i^{\theta + h} \Delta I_i^\theta$ it holds that 
\begin{equation} \label{prop:diff:proof:eq1}
\begin{split}
        \abs[\big]{ \b{\theta}_i + \smallsum_{j = 1}^d \w{\theta}_{i,j} x_j } 
        &\leq \abs[\big]{ \b{h}_i + \smallsum_{j = 1}^d \w{h}_{i,j} x_j }
        \leq \abs{ \b{h}_i } + \max \cu{ \abs{\scra} , \abs{\scrb } } \br[\big]{ \smallsum_{j = 1} ^d \abs{ \w{h}_{i,j} } } \\
        & \leq \norm{h} + d \max \cu{ \abs{\scra}, \abs{\scrb} } \norm{h} = \fC \norm{ h }.
\end{split}
\end{equation}
Moreover, \nobs that \cref{prop:diff:proof:eq:defc} implies that for all $i \in \cu{1, 2, \ldots, \width }$, $h \in \R^\fd \backslash \cu{ 0 }$,
$x \in [\scra , \scrb]^d$ it holds that 
\begin{equation}  \label{prop:diff:proof:eq4}
\begin{split}
    \norm{h}^{-1} \abs[\big]{ \v{h}_i \rbr[\big]{ \b{h}_i + \smallsum_{j = 1}^d \w{h}_{i,j} x_j } }
        &\leq \abs{ \b{h}_i } + \max \cu{ \abs{\scra} , \abs{\scrb } } \br[\big]{ \smallsum_{j = 1} ^d \abs{ \w{h}_{i,j} } } \\
        & \leq \norm{h} + d \max \cu{ \abs{\scra}, \abs{\scrb} } \norm{h} = \fC \norm{ h }.
    \end{split}
\end{equation}
This, \cref{prop:diff:proof:eq2}, \cref{prop:diff:proof:eq:lipschitz}, \cref{prop:diff:proof:eq1}, and the triangle inequality
demonstrate that for all $h \in \R^\fd \backslash \cu{ 0 }$ with $\norm{h} \leq 1$ it holds that
\begin{equation} 
    \begin{split}
    &\frac{ \abs{ \cL ( \theta + h ) - \cL ( \theta ) - \spro{  \cG ( \theta ) , h } } }{\norm{ h }} \\
        &\leq \scrL^2 \norm{ h } \br[\big]{ \mu ( [ \scra , \scrb ] ^d ) }
        + 2 M \br*{ \sum_{i = 1} ^\width \int_{[\scra , \scrb]^d} \norm{h}^{-1} \abs[\big]{ \v{h}_i \rbr[\big]{ \b{h}_i + \smallsum_{j = 1}^d \w{h}_{i,j} x_j } }  \, \mu ( \d x ) } \\
        & \qquad + 2 \fC M \br*{ \displaystyle \sum_{i = 1}^\width ( \abs{ \v{\theta} _i } + \abs{ \v{\theta}_i + \v{h}_i } ) \br[\big]{ \mu ( I_i^{\theta + h} \Delta I_i^\theta) } } \\
        & \leq \scrL^2 \norm{ h } \br[\big]{ \mu ( [ \scra , \scrb ] ^d ) } + 2 M \width \fC \norm{h} \br[\big]{ \mu ( [ \scra , \scrb ] ^d ) } + 2 \fC M \br*{ \displaystyle \sum_{i = 1}^\width ( 2 \abs{ \v{\theta} _i } + \abs{ \v{h}_i } ) \br[\big]{ \mu(I_i^{\theta + h} \Delta I_i^\theta) } } \\
        & \leq ( \scrL^2 + 2 M \width \fC ) \norm{h}  \br[\big]{ \mu ( [ \scra , \scrb ] ^d ) } + 2 \fC M \br*{ \sum_{i=1}^\width \rbr*{ 2 \abs{\v{\theta}_i }  \br[\big]{ \mu(I_i^{\theta + h} \Delta I_i^\theta) } + \norm{h}  \br[\big]{ \mu ( [ \scra , \scrb ] ^d ) } } } \\
        &= ( \scrL^2 + 4 M \width \fC ) \norm{h}  \br[\big]{ \mu ( [ \scra , \scrb ] ^d ) } + 4 \fC M \br*{\sum_{i=1}^\width  \abs{\v{\theta}_i }  \br[\big]{ \mu(I_i^{\theta + h} \Delta I_i^\theta) } }.
    \end{split}
\end{equation}
Hence, we obtain that
\begin{equation} \label{prop:diff:proof:eq:mainest}
    \begin{split}
        \limsup_{\R^\fd \backslash \cu{ 0 } \ni h \to 0 } \br*{ \frac{ \abs{ \cL ( \theta + h ) - \cL ( \theta ) - \spro{  \cG ( \theta ) , h } } }{\norm{ h } } } \leq 4 \fC M \br*{ \sum_{i=1}^\width \abs{\v{\theta}_i } \rbr[\bigg]{ \limsup_{\R^\fd \backslash \cu{ 0 } \ni h \to 0 }  \mu(I_i^{\theta + h} \Delta I_i^\theta) } }.
    \end{split}
\end{equation}
In the following we distinguish between the case $\cL ( \theta ) = 0$ and the case $\cL ( \theta ) > 0$.
We first prove \cref{prop:diff:proof:eqclaim} in the case 
\begin{equation} \label{prop:diff:proof:eq:loss0}
    \cL (\theta ) = 0.
\end{equation}
\Nobs that \cref{prop:diff:proof:eq:loss0} implies that for $\mu$-almost every $x \in [\scra , \scrb]^d$ it holds that $\realization{\theta}(x) = f(x)$. 
This and \cref{prop:diff:proof:eq:defc} show that $M = 0$. Combining this with \cref{prop:diff:proof:eq:mainest} establishes \cref{prop:diff:proof:eqclaim} in the case $\cL (\theta ) = 0$.
In the next step we prove \cref{prop:diff:proof:eqclaim} in the case 
\begin{equation} \label{prop:diff:proof:eq:losspos}
    \cL ( \theta ) > 0.
\end{equation}
\Nobs that \cref{eq:assumption:diff,prop:diff:proof:eq:losspos} ensure that for all $i \in \cu{1, 2, \ldots, \width }$ with $\abs{\v{\theta}_i} > 0$ 
it holds that
$ \abs{ \b{\theta}_i } + \sum_{j = 1}^d \abs{ \w{\theta} _{i,j} } > 0$. \cref{cor:interval:continuous} hence proves that for all $i \in \cu{1, 2, \ldots, \width }$ with $\abs{\v{\theta}_i } > 0$ we have that $\limsup_{\R^\fd \ni h \to 0 }  \mu(I_i^{\theta + h} \Delta I_i^\theta) = 0$.
Combining this with \cref{prop:diff:proof:eq:mainest}
establishes \cref{prop:diff:proof:eqclaim} in the case $\cL ( \theta ) > 0$.
\end{cproof}

\subsection{Lower semicontinuity of the norm of the gradient of the risk function}
\label{subsection:gradient:lsc}

\begin{lemma} \label{lem:vc:derivatives:cont}
Assume \cref{setting:snn} and let $j \in \N \cap (\width ( d+1) , \fd]$. Then it holds that $\R^\fd \ni \theta \mapsto \cG_j ( \theta ) \in \R$ is continuous.
\end{lemma}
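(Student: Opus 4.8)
The plan is to combine the explicit representation of the gradient components in \cref{prop:loss:gradient:item3} in \cref{prop:loss:gradient} with Lebesgue's dominated convergence theorem. For $j \in \N \cap (\width(d+1), \fd]$ the component $\cG_j$ equals either $\cG_{\width(d+1)+i}$ for some $i \in \cu{1, 2, \ldots, \width}$ or $\cG_\fd$, and in both cases \cref{eq:loss:gradient} writes $\cG_j(\theta)$ as an integral over the \emph{fixed} domain $[\scra,\scrb]^d$ of the product of $\realization{\theta}(x) - f(x)$ with either the factor $\max\cu[\big]{\b{\theta}_i + \smallsum_{k=1}^d \w{\theta}_{i,k}x_k, 0}$ or the constant factor $1$. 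The key point --- and the reason the assertion is restricted to $j > \width(d+1)$ --- is that, unlike the weight components $\cG_{(i-1)d+j}$ and the inner bias components $\cG_{\width d + i}$, these integrands do not involve the $\theta$-dependent set $I_i^\theta$, so the only possible source of discontinuity (the jump of $\indicator{I_i^{\theta}}$ as $\theta$ varies) is simply absent.

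First I would fix $\theta \in \R^\fd$ together with a sequence $\vartheta = (\vartheta_n)_{n \in \N} \colon \N \to \R^\fd$ satisfying $\limsup_{n \to \infty} \norm{\vartheta_n - \theta} = 0$, and, without loss of generality, assume that there is a compact $K \subseteq \R^\fd$ with $\cu{\vartheta_n \colon n \in \N} \subseteq K$. For each fixed $x \in [\scra,\scrb]^d$, \cref{eq:setting:realization} shows that the maps $\R^\fd \ni \eta \mapsto \realization{\eta}(x) \in \R$ and $\R^\fd \ni \eta \mapsto \max\cu[\big]{\b{\eta}_i + \smallsum_{k=1}^d \w{\eta}_{i,k}x_k, 0} \in \R$ are continuous, so the integrands appearing in \cref{eq:loss:gradient} converge pointwise on $[\scra,\scrb]^d$ as $n \to \infty$.

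It then remains to provide a $\mu$-integrable dominating function. Since $K$ is compact, \cref{lem:realization:lip} (or a direct estimate based on \cref{eq:setting:realization}) yields $\sup_{\eta \in K}\sup_{x \in [\scra,\scrb]^d}\abs{\realization{\eta}(x)} < \infty$; combining this with the boundedness of $f$ on the compact set $[\scra,\scrb]^d$ and with the obvious bound $\sup_{\eta \in K}\sup_{x \in [\scra,\scrb]^d}\max\cu[\big]{\b{\eta}_i + \smallsum_{k=1}^d \w{\eta}_{i,k}x_k, 0} < \infty$, the integrands in \cref{eq:loss:gradient} are bounded, uniformly in $n \in \N$ and $x \in [\scra,\scrb]^d$, by a finite constant. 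Because $\mu$ is a finite measure, the dominated convergence theorem gives $\lim_{n \to \infty} \cG_j(\vartheta_n) = \cG_j(\theta)$, which establishes the lemma.

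The argument involves no real obstacle; it is a routine application of dominated convergence, and the only point requiring slight care is the invocation of \cref{lem:realization:lip} with a compact $K$ to obtain the uniform bound on $\realization{\eta}(x)$ needed for domination. The genuine difficulty --- the fact that the remaining components $\cG_1, \ldots, \cG_{\width(d+1)}$ are in general only lower semicontinuous rather than continuous, caused by the set $I_i^\theta$ varying only up to $\mu$-null sets --- is deliberately postponed to the subsequent results in \cref{subsection:gradient:lsc}.
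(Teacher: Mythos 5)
Your proposal is correct and follows essentially the same route as the paper: the explicit formulas from \cref{eq:loss:gradient}, pointwise convergence of the integrands via \cref{lem:realization:lip} and the continuity of $\R \ni y \mapsto \max\{y,0\} \in \R$, and then Lebesgue's dominated convergence theorem (using that $\mu$ is finite). The only difference is that you spell out the construction of the dominating function via a compact set containing the sequence, a detail the paper leaves implicit when invoking dominated convergence.
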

\begin{cproof}{lem:vc:derivatives:cont}
Throughout this proof let $\vartheta \in \R^\fd$ and let $\theta = (\theta_n)_{n \in \N} \colon \N \to \R^\fd$ satisfy $\limsup_{n \to \infty} \norm{\theta_n - \vartheta } = 0$. 
\Nobs that \cref{lem:realization:lip} and the fact that $\R \ni x \mapsto \max \cu{x,0 } \in \R$ is continuous prove that for all $i \in \cu{1, 2, \ldots, \width }$, $x = (x_1, \ldots, x_d) \in [\scra , \scrb]^d$ it holds that
\begin{equation}
\begin{split} 
    &\lim_{n \to \infty} \rbr[\big]{ \br[\big]{\max \cu[\big]{ \b{\theta_n}_i + \smallsum_{k = 1}^d \w{\theta_n}_{i, k} x_k , 0 } } ( \realization{\theta_n}(x) - f ( x ) ) } \\
    &= \br[\big]{\max \cu[\big]{\b{\vartheta}_i + \smallsum_{k = 1}^d \w{\vartheta}_{i, k} x_k , 0 } } ( \realization{\vartheta}(x) - f ( x ) )
    \end{split}
\end{equation}
and
\begin{equation}
      \lim_{n \to \infty} ( \realization{\theta_n}(x) - f ( x ) ) = \realization{\vartheta}(x) - f ( x ).
\end{equation}
Combining \cref{eq:loss:gradient} and Lebesgue's dominated convergence theorem therefore establishes that $\limsup_{n \to \infty} \abs{ \cG_j ( \theta_n ) - \cG_j ( \vartheta) } = 0$. 
\end{cproof}

\cfclear
\begin{lemma} \label{lem:gradient:wbcomponents:cont} 
Assume \cref{setting:snn}, assume $\mu \ll \lambda$, and let $\theta \in \R^\fd$, $i \in \cu{1, 2, \ldots, \width }$ satisfy $\abs{\b{\theta}_i} + \sum_{j=1}^d \abs{\w{\theta}_{i,j}} > 0$. Then
\begin{enumerate} [ label = (\roman*)]
    \item it holds for all $j \in \cu{1, 2, \ldots, d }$ that $\R^{ \fd } \ni \vartheta \mapsto \cG _{(i-1) d + j} ( \vartheta )  \in \R$ is continuous at $\theta$ and
    \item it holds that $\R^{ \fd } \ni \vartheta \mapsto \cG _{\width d + i} ( \vartheta )  \in \R$ is continuous at $\theta$.
\end{enumerate}
\end{lemma}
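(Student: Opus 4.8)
The plan is to start from the closed-form expressions for the gradient components furnished by \cref{prop:loss:gradient:item3} in \cref{prop:loss:gradient}: for every $\vartheta\in\R^\fd$,
\[
\cG_{(i-1)d+j}(\vartheta)=2\v{\vartheta}_i\int_{[\scra,\scrb]^d}x_j\,(\realization{\vartheta}(x)-f(x))\,\indicator{I_i^\vartheta}(x)\,\mu(\d x)
\]
together with the same identity with the factor $x_j$ deleted for $\cG_{\width d+i}$. Since $x_j$ is bounded on $[\scra,\scrb]^d$ by $\max\{\abs{\scra},\abs{\scrb}\}$, the two assertions follow from one and the same estimate, so I would treat them together. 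Fixing a sequence $\vartheta=(\vartheta_n)_{n\in\N}$ with $\vartheta_n\to\theta$, the task is to show $\cG_{(i-1)d+j}(\vartheta_n)\to\cG_{(i-1)d+j}(\theta)$, and I would do so by decomposing the difference into three pieces according to where $\vartheta$ enters the formula: a \emph{prefactor} piece coming from $\v{\vartheta_n}_i-\v{\theta}_i$, a \emph{realization} piece coming from $\realization{\vartheta_n}-\realization{\theta}$, and a \emph{region} piece coming from $\indicator{I_i^{\vartheta_n}}-\indicator{I_i^\theta}$.

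The prefactor piece is handled by the continuity of the coordinate projection $\vartheta\mapsto\v{\vartheta}_i=\vartheta_{\width(d+1)+i}$, the remaining integral being bounded for large $n$ because $\realization{\vartheta_n}$ is uniformly bounded on the cube near $\theta$ (e.g.\ via \cref{lem:realization:lip}), $f$ is bounded, and $\mu$ is finite. The realization piece is estimated by applying \cref{lem:realization:lip} on a closed ball around $\theta$, which produces a constant $\scrL$ with $\sup_{x\in[\scra,\scrb]^d}\abs{\realization{\vartheta_n}(x)-\realization{\theta}(x)}\le\scrL\norm{\vartheta_n-\theta}$, so this piece is $O(\norm{\vartheta_n-\theta})\to0$. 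The region piece is
\[
2\v{\theta}_i\int_{[\scra,\scrb]^d}x_j\,(\realization{\theta}(x)-f(x))\,(\indicator{I_i^{\vartheta_n}}(x)-\indicator{I_i^\theta}(x))\,\mu(\d x),
\]
and, using $\abs{\indicator{I_i^{\vartheta_n}}-\indicator{I_i^\theta}}=\indicator{I_i^{\vartheta_n}\Delta I_i^\theta}$ and the finite bound $M:=\sup_{x\in[\scra,\scrb]^d}\abs{\realization{\theta}(x)-f(x)}$ (finite since $\realization{\theta}$ and $f$ are continuous on the compact cube), it is bounded in absolute value by $2\abs{\v{\theta}_i}\max\{\abs{\scra},\abs{\scrb}\}\,M\,\mu(I_i^{\vartheta_n}\Delta I_i^\theta)$. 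The hypothesis $\abs{\b{\theta}_i}+\sum_{j=1}^d\abs{\w{\theta}_{i,j}}>0$ together with $\mu\ll\lambda$ is exactly what allows me to apply \cref{cor:interval:continuous}, giving $\mu(I_i^{\vartheta_n}\Delta I_i^\theta)\to0$; hence the region piece tends to $0$ as well, and summing the three estimates proves continuity at $\theta$.

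The only real obstacle is the pointwise discontinuity of $\vartheta\mapsto\indicator{I_i^\vartheta}(x)$: for fixed $x$ this indicator jumps whenever $\vartheta$ crosses the hyperplane $\{\b{\vartheta}_i+\smallsum_{j}\w{\vartheta}_{i,j}x_j=0\}$, so a naive dominated-convergence argument on the integrand is hopeless. The resolution is that one needs only continuity of the region in $L^1(\mu)$, i.e.\ $\mu(I_i^{\vartheta_n}\Delta I_i^\theta)\to0$, which is precisely \cref{cor:interval:continuous}; the nondegeneracy assumption on $(\b{\theta}_i,\w{\theta}_{i,1},\dots,\w{\theta}_{i,d})$ is the hypothesis required there (it excludes the degenerate neuron whose region is determined only by the sign of a constant and for which no such continuity can hold). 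Everything else is the elementary three-term splitting, the uniform Lipschitz bound on realizations from \cref{lem:realization:lip}, and the boundedness of the coordinate functions on $[\scra,\scrb]^d$.
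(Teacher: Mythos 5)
Your proposal is correct and follows essentially the same route as the paper: both start from the gradient formula of \cref{prop:loss:gradient}, split the difference into a realization part controlled by \cref{lem:realization:lip} and a region part controlled by $\mu(I_i^{\vartheta}\Delta I_i^{\theta})\to 0$ via \cref{cor:interval:continuous} (the paper absorbs your ``prefactor'' piece by proving continuity of the inner integral and then multiplying by the continuous map $\vartheta\mapsto\v{\vartheta}_i$). The estimates and the role of the nondegeneracy hypothesis are identical to the paper's argument.
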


\begin{cproof} {lem:gradient:wbcomponents:cont}
Throughout this proof let $j \in \cu{1, 2, \ldots, d }$. \Nobs that \cref{eq:loss:gradient} implies that for all $\vartheta \in \R^\fd$, $v \in \cu{0, 1 }$ we have that
\begin{equation} \label{lem:lsc:eq1}
    \begin{split}
     &  \abs*{ \br*{ \int_{I_i^{ \vartheta } } (x _j)^v ( \realization{\vartheta } (x) - f ( x ) ) \, \mu ( \d x ) } - \br*{ \int_{I_i^{\theta } } (x _j)^v ( \realization{\theta } (x) - f ( x ) ) \, \mu ( \d x ) } } \\
     &\leq \abs*{\int_{[\scra , \scrb]^d} (x_j)^v (\realization{\vartheta } ( x ) - \realization{\theta } ( x ) ) \indicator{I_i^{\vartheta } } ( x ) \, \mu ( \d x ) } \\
     & \quad + \abs*{\int_{[\scra , \scrb]^d} (x_j)^v ( \realization{\theta} ( x ) - f ( x ) ) ( \indicator{I_i^\theta} ( x ) - \indicator{I_i^{\vartheta  } } ( x ) ) \, \mu ( \d x )  } \\
     &\leq \br[\big]{ \sup\nolimits_{x \in [\scra , \scrb]^d } \abs{ (x_j)^v (\realization{\vartheta } ( x ) - \realization{\theta } ( x ) ) } } \mu ( [ \scra , \scrb ] ^d ) \\
     & \quad + \br[\big]{ \sup\nolimits_{x \in [\scra , \scrb]^d} \abs{(x_j)^v (\realization{\theta } ( x ) - f ( x ) ) } } \mu ( I_i^\theta \Delta I_i^{\vartheta } )  .
    \end{split}
\end{equation}
Next \nobs that \cref{lem:realization:lip} establishes that for all $v \in \cu{0,1 }$ it holds that 
\begin{equation} \label{lem:lsc:eq2}
\limsup\nolimits_{\R^\fd \ni \vartheta \to \theta} \rbr[\big]{ \sup\nolimits_{x \in [\scra , \scrb]^d } \abs{(x_j)^v (\realization{\vartheta  } ( x ) - \realization{\theta } ( x ) ) } } = 0.    
\end{equation}
Moreover, \nobs that the assumption that $\mu \ll \lambda$, the assumption that $\abs{\b{\theta}_i} + \sum_{k=1}^d \abs{\w{\theta}_{i,k}} > 0$, and \cref{cor:interval:continuous} imply that $\limsup_{\R^\fd \ni \vartheta \to \theta } \mu ( I_i^\theta \Delta I_i^{\vartheta } ) = 0$. Combining this with \cref{lem:lsc:eq1,lem:lsc:eq2} shows that for all $v \in \cu{0,1 }$ it holds that
\begin{equation}
    \R^\fd \ni \vartheta \mapsto \int_{I_i^{\vartheta } } (x _j)^v ( \realization{\vartheta } (x) - f ( x ) ) \, \mu ( \d x ) \in \R 
\end{equation}
is continuous at $\theta$. This and \cref{eq:loss:gradient} establish that $\cG_{(i-1) d + j}$ and $\cG_{\width d + i}$ are continuous at $\theta$.
\end{cproof}

\cfclear
\begin{lemma} \label{lem:gradient:components:lsc}
Assume \cref{setting:snn}
and let $\theta \in \R^\fd$, $i \in \cu{1, 2, \ldots, \width }$ satisfy $\abs{\b{\theta}_i} + \sum_{j=1}^d \abs{\w{\theta}_{i,j}} = 0$. Then 
\begin{enumerate} [label = (\roman*)]
    \item it holds for all $j \in \cu{1, 2, \ldots, d }$ that $\R^{ \fd } \ni \vartheta \mapsto \abs{\cG _{(i-1) d + j} ( \vartheta ) } \in \R$ is lower semicontinuous at $\theta$ and
    \item it holds that $\R^{ \fd } \ni \vartheta \mapsto \abs{\cG _{\width d + i} ( \vartheta ) } \in \R$ is lower semicontinuous at $\theta$.
\end{enumerate}  
\end{lemma}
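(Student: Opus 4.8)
The plan is to observe that the standing assumption $\abs{\b{\theta}_i} + \sum_{j=1}^d \abs{\w{\theta}_{i,j}} = 0$ collapses the $i$-th active neuron region at $\theta$ to the empty set, which forces the components $\cG_{(i-1)d+j}$ and $\cG_{\width d + i}$ of $\cG$ to vanish at $\theta$; since the absolute value is non-negative, lower semicontinuity at $\theta$ is then immediate.

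First I would note that $\abs{\b{\theta}_i} + \sum_{j=1}^d \abs{\w{\theta}_{i,j}} = 0$ implies $\b{\theta}_i = 0$ and $\w{\theta}_{i,j} = 0$ for all $j \in \cu{1, 2, \ldots, d}$. Hence, for every $x = (x_1, \ldots, x_d) \in [\scra, \scrb]^d$ the defining condition $\b{\theta}_i + \sum_{j=1}^d \w{\theta}_{i,j} x_j > 0$ of $I_i^\theta$ reduces to $0 > 0$ and is never fulfilled, so that $I_i^\theta = \emptyset$. Plugging this into the representation of the generalized gradient in \cref{eq:loss:gradient} in \cref{prop:loss:gradient} shows that the integrals over $I_i^\theta$ appearing in $\cG_{(i-1)d+j}(\theta)$ and $\cG_{\width d + i}(\theta)$ vanish, whence $\cG_{(i-1)d+j}(\theta) = 0$ and $\cG_{\width d + i}(\theta) = 0$ and, in particular, $\abs{\cG_{(i-1)d+j}(\theta)} = \abs{\cG_{\width d + i}(\theta)} = 0$.

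It then remains to recall that for any function $g \colon \R^\fd \to \R$ one trivially has $\liminf_{\vartheta \to \theta} \abs{g(\vartheta)} \ge 0$; combining this with the previous step yields $\liminf_{\vartheta \to \theta} \abs{\cG_{(i-1)d+j}(\vartheta)} \ge 0 = \abs{\cG_{(i-1)d+j}(\theta)}$ and $\liminf_{\vartheta \to \theta} \abs{\cG_{\width d + i}(\vartheta)} \ge 0 = \abs{\cG_{\width d + i}(\theta)}$, which is exactly the asserted lower semicontinuity at $\theta$ of both maps. There is no genuine difficulty here; the only formal point to check is that $\cG$ is actually defined (as an honest limit of $\nabla \fL_r$) in a neighbourhood of $\theta$, which is guaranteed by \cref{prop:loss:gradient:item2} in \cref{prop:loss:gradient}. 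The "main obstacle", such as it is, lies only in spotting that the hypothesis makes $I_i^\theta$ empty, after which the argument is one line.
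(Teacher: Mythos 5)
Your proposal is correct and is essentially identical to the paper's own proof: the hypothesis forces $I_i^\theta = \emptyset$, so by \cref{eq:loss:gradient} the components $\cG_{(i-1)d+j}(\theta)$ and $\cG_{\width d + i}(\theta)$ vanish, and lower semicontinuity at $\theta$ follows trivially from non-negativity of the absolute value. No gaps; the side remark that $\cG$ is defined everywhere is indeed covered by \cref{prop:loss:gradient:item2} in \cref{prop:loss:gradient}.
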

\begin{cproof} {lem:gradient:components:lsc}
\Nobs that the assumption that $\abs{\b{\theta}_i} + \sum_{j=1}^d \abs{\w{\theta}_{i,j}} = 0$ proves that $I_i^\theta = \emptyset$. Combining this with \cref{eq:loss:gradient} shows that for all $j \in \cu{1, 2, \ldots, d }$ it holds that $\cG_{(i-1) d + j} ( \theta ) = \cG_{ \width d + i} ( \theta ) = 0$. Therefore, we obtain for all $j \in \cu{1, 2, \ldots, d }$ and all $\vartheta = (\vartheta_n )_{n \in \N} \colon \N \to \R^{ \fd }$ with $\limsup_{n \to \infty} \norm{\vartheta_n - \theta } = 0$ that 
\begin{equation}
    \abs{\cG_{(i-1) d + j} ( \theta)} = 0 \leq \liminf\nolimits_{n \to \infty} \abs{\cG_{(i-1) d + j} ( \vartheta_n)}
\end{equation}
and 
\begin{equation}
\abs{\cG_{\width d + i} ( \theta)} = 0 \leq \liminf\nolimits_{n \to \infty} \abs{\cG_{\width d + i} ( \vartheta_n)}.    
\end{equation}
Hence, we have for all $j \in \cu{1, 2, \dots, d } $ that $\R^{ \fd } \ni \vartheta \mapsto \abs{\cG _{(i-1) d + j} ( \vartheta ) } \in \R$ and  $\R^{ \fd } \ni \vartheta \mapsto \abs{\cG _{\width d + i} ( \vartheta ) } \in \R$ are lower semicontinuous at $\theta$.
\end{cproof}

\cfclear 
\begin{cor} \label{cor:gradient:components:lsc} 
Assume \cref{setting:snn}, assume $\mu \ll \lambda$, and let $k \in \cu{1, 2, \ldots, \fd }$. Then it holds that $\R^\fd \ni \theta \mapsto \abs{\cG_k ( \theta ) } \in \R$ is lower semicontinuous.
\end{cor}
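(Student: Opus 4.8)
The plan is to reduce the claim to the three lemmas \cref{lem:vc:derivatives:cont}, \cref{lem:gradient:wbcomponents:cont}, and \cref{lem:gradient:components:lsc} by a case distinction on the index $k$, using throughout the elementary observations that continuity at a point implies lower semicontinuity at that point and that lower semicontinuity is a local, pointwise property, so that it suffices to check it separately at each $\theta \in \R^\fd$.

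First I would handle the indices $k \in \N \cap (\width(d+1), \fd]$, which under the parametrisation fixed in \cref{setting:snn} are exactly those governing the output weights $\v{\theta}_1, \dots, \v{\theta}_\width$ and the output bias $\c{\theta}$. For these \cref{lem:vc:derivatives:cont} shows that $\R^\fd \ni \theta \mapsto \cG_k(\theta) \in \R$ is continuous, whence $\R^\fd \ni \theta \mapsto \abs{\cG_k(\theta)} \in \R$ is continuous and, in particular, lower semicontinuous.

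Next I would handle the indices $k \in \N \cap [1, \width(d+1)]$. Each such $k$ is attached to a unique hidden neuron $i \in \{1, \dots, \width\}$: if $k \leq \width d$ then $k = (i-1)d + j$ for the unique $i$ with $i = \lceil k/d \rceil$ and a unique $j \in \{1, \dots, d\}$, so that $\cG_k = \cG_{(i-1)d+j}$; and if $\width d < k \leq \width(d+1)$ then $k = \width d + i$ with $i = k - \width d$, so that $\cG_k = \cG_{\width d + i}$. Fixing this $i$ and an arbitrary $\theta \in \R^\fd$, I would split according to whether the incoming parameters of neuron $i$ vanish. If $\abs{\b{\theta}_i} + \sum_{j=1}^d \abs{\w{\theta}_{i,j}} > 0$, then \cref{lem:gradient:wbcomponents:cont} (which uses the hypothesis $\mu \ll \lambda$) shows that $\cG_k$ is continuous at $\theta$, hence $\abs{\cG_k}$ is continuous at $\theta$ and therefore lower semicontinuous at $\theta$. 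If instead $\abs{\b{\theta}_i} + \sum_{j=1}^d \abs{\w{\theta}_{i,j}} = 0$, then \cref{lem:gradient:components:lsc} shows directly that $\abs{\cG_k}$ is lower semicontinuous at $\theta$. Since $\theta \in \R^\fd$ was arbitrary, $\abs{\cG_k}$ is lower semicontinuous, and since the index ranges $(\width(d+1), \fd]$ and $[1, \width(d+1)]$ together exhaust $\{1, \dots, \fd\}$, this completes the case distinction and hence establishes the claim.

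I do not expect any genuine analytic obstacle: all of the substance is already contained in the three lemmas. The only points requiring care are the index bookkeeping --- matching $k$ to the right neuron $i$ and checking that the two index ranges partition $\{1, \dots, \fd\}$ --- and the (standard, but worth stating) fact that lower semicontinuity can be verified one point at a time, which is what legitimises subdividing $\R^\fd$, for a fixed hidden-layer index $k$, into the locus where neuron $i$ has a nonzero incoming weight--bias vector and its complement.
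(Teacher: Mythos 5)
Your proposal is correct and follows essentially the same route as the paper's proof: split the index set at $\width(d+1)$, invoke \cref{lem:vc:derivatives:cont} for the output-layer components, and combine \cref{lem:gradient:wbcomponents:cont} with \cref{lem:gradient:components:lsc} via the pointwise dichotomy on $\abs{\b{\theta}_i} + \sum_{j=1}^d \abs{\w{\theta}_{i,j}}$ for the hidden-layer components. The paper merely states this more tersely, leaving the index bookkeeping and the pointwise nature of lower semicontinuity implicit, which you have spelled out correctly.
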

\begin{cproof} {cor:gradient:components:lsc}
\Nobs that \cref{lem:vc:derivatives:cont} assures that for all $k \in \N \cap (\width ( d+1), \fd]$ it holds that $\R^\fd \ni \theta \mapsto \abs{\cG_k ( \theta ) } \in \R$ is lower semicontinuous.
Moreover, \nobs that \cref{lem:gradient:wbcomponents:cont,lem:gradient:components:lsc} prove that for all $k \in \N \cap [1, \width ( d+1)]$ it holds that $\R^\fd \ni \theta \mapsto \abs{\cG_k ( \theta ) } \in \R$ is lower semicontinuous.
\end{cproof}

\cfclear
\begin{cor} \label{cor:gradient.lsc} 
Assume \cref{setting:snn} and assume $\mu \ll \lambda$. Then it holds that $\R^{\fd } \ni \theta \mapsto \norm{\cG ( \theta ) }  \in \R$ is lower semicontinuous.
\end{cor}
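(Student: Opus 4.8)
The plan is to obtain the claim as an essentially formal consequence of the componentwise lower semicontinuity already established in \cref{cor:gradient:components:lsc}, together with the standard facts that a finite sum of lower semicontinuous functions is lower semicontinuous and that post-composing a lower semicontinuous function with a continuous nondecreasing function on $[0,\infty)$ preserves lower semicontinuity. Concretely, one writes $\norm{\cG(\vartheta)}^2 = \sum_{k=1}^{\fd} \abs{\cG_k(\vartheta)}^2$ for every $\vartheta \in \R^\fd$ and argues through the sequential characterization of lower semicontinuity.

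In detail, I would fix $\theta \in \R^\fd$ and an arbitrary sequence $(\theta_n)_{n \in \N} \colon \N \to \R^\fd$ with $\limsup_{n \to \infty} \norm{\theta_n - \theta} = 0$, so that it suffices to show $\norm{\cG(\theta)} \le \liminf_{n \to \infty} \norm{\cG(\theta_n)}$. \Cref{cor:gradient:components:lsc} gives, for every $k \in \{1, 2, \ldots, \fd\}$, that $\abs{\cG_k(\theta)} \le \liminf_{n \to \infty} \abs{\cG_k(\theta_n)}$; since the map $[0,\infty) \ni t \mapsto t^2 \in [0,\infty)$ is continuous and nondecreasing, this upgrades to $\abs{\cG_k(\theta)}^2 \le \liminf_{n \to \infty} \abs{\cG_k(\theta_n)}^2$. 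Summing over $k$ and using the superadditivity of $\liminf$ for finitely many sequences of nonnegative reals, i.e.\ $\sum_{k=1}^{\fd} \liminf_{n \to \infty} \abs{\cG_k(\theta_n)}^2 \le \liminf_{n \to \infty} \sum_{k=1}^{\fd} \abs{\cG_k(\theta_n)}^2$, yields $\norm{\cG(\theta)}^2 = \sum_{k=1}^{\fd} \abs{\cG_k(\theta)}^2 \le \liminf_{n \to \infty} \norm{\cG(\theta_n)}^2$. Finally, applying the continuous nondecreasing map $[0,\infty) \ni t \mapsto t^{1/2} \in [0,\infty)$, which commutes with $\liminf$ along nonnegative sequences, gives $\norm{\cG(\theta)} \le \liminf_{n \to \infty} \norm{\cG(\theta_n)}$, as required.

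I do not expect a genuine obstacle in this step: all of the analytic substance — the continuity of the output-layer components of $\cG$, and the lower semicontinuity of the hidden-layer components, which relies on the continuous dependence of the active neuron regions $I_i^{\theta}$ on the ANN parameters from \cref{cor:interval:continuous} and hence on the hypothesis $\mu \ll \lambda$ — is already packaged into \cref{cor:gradient:components:lsc}. The only point requiring a little attention is to invoke the two monotone reparametrizations in the correct order, first squaring (to pass from components to squared components before summing) and then taking the square root (after the finite sum), so that lower semicontinuity is genuinely preserved at each stage; both steps use only that a continuous nondecreasing self-map of $[0,\infty)$ sends $\liminf$ of a nonnegative sequence into (a lower bound for) the $\liminf$ of the images.
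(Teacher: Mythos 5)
Your proposal is correct and follows essentially the same route as the paper's proof: fix a sequence converging to the base point, apply the componentwise lower semicontinuity from \cref{cor:gradient:components:lsc}, pass to squares, use superadditivity of $\liminf$ over the finite sum, and conclude for the norm by taking square roots. The paper merely leaves the two monotone-reparametrization steps (squaring and the final square root) implicit, while you spell them out; the substance is identical.
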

\begin{cproof} {cor:gradient.lsc}
Throughout this proof let $\vartheta \in \R^\fd$ and let $\theta = (\theta_n)_{n \in \N} \colon \N \to \R^\fd$ satisfy $\limsup_{n \to \infty} \norm{\theta_n - \vartheta } = 0$. \Nobs that
\cref{cor:gradient:components:lsc} and the fact that for all $v = (v_{k,n})_{(k,n) \in \cu{1,2 } \times \N } \colon \cu{1, 2 } \times \N \to [0, \infty)$ it holds that
\begin{equation}
    \liminf\nolimits_{n \to \infty} (v_{1,n} + v_{2,n} ) \geq ( \liminf\nolimits_{n \to \infty} v_{1,n} ) + ( \liminf\nolimits_{n \to \infty} v_{2,n} )
\end{equation}
ensure that
\begin{equation}
    \begin{split}
        \liminf_{n \to \infty} \norm{\cG(\theta_n ) } ^2 &= \liminf_{n \to \infty} \br*{\smallsum_{j=1}^{ \fd } \abs{\cG_j(\theta_n)}^2 }
        \geq \smallsum_{j=1}^{ \fd } \br*{ \liminf_{n \to \infty}\abs{ \cG_j ( \theta_n ) }^2 } \\
        &\geq \smallsum_{j=1}^{ \fd } \abs{\cG_j ( \vartheta ) }^2 = \norm{ \cG ( \vartheta ) } ^2.
    \end{split}
\end{equation}
Hence, we obtain that $\norm{\cG ( \vartheta ) } \leq  \liminf_{n \to \infty} \norm{\cG(\theta_n ) } $. 
\end{cproof}

\begin{cor} \label{cor:loss:c1}
Assume \cref{setting:snn} and assume $\mu \ll \lambda$. Then there exists an open $U \subseteq \R^\fd$ such that $\int_{\R^\fd \backslash U } 1 \, \d x = 0$, $\cL | _U \in C^1 ( U , \R)$, and $\nabla (\cL |_U ) = \cG | _U$.
\end{cor}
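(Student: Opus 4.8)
The plan is to take for $U$ the set of ANN parameters for which the input weight--bias vector of every hidden neuron is nonzero, namely
\[
  U = \cu[\big]{ \theta \in \R^\fd \colon \forall \, i \in \cu{ 1, 2, \ldots, \width } \colon \abs{\b{\theta}_i} + \smallsum_{j=1}^d \abs{\w{\theta}_{i,j}} > 0 },
\]
and to verify that this $U$ has the three asserted properties. First I would check that $U$ is open with full Lebesgue measure: its complement $\R^\fd \setminus U = \bigcup_{i=1}^\width \cu{ \theta \in \R^\fd \colon \b{\theta}_i = 0 \text{ and } \w{\theta}_{i,1} = \cdots = \w{\theta}_{i,d} = 0 }$ is a finite union of linear subspaces of $\R^\fd$, each cut out by $d+1 \ge 1$ independent coordinate equations, hence closed and of Lebesgue measure zero, so that $\int_{\R^\fd \setminus U} 1 \, \d x = 0$.

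Next I would establish differentiability on $U$ together with the identification of the gradient. For every $\theta \in U$ and every $i \in \cu{ 1, \ldots, \width }$ we have $\indicator{\cu{ 0 }}\rbr{\abs{\b{\theta}_i} + \smallsum_{j=1}^d \abs{\w{\theta}_{i,j}}} = 0$, so the product on the left-hand side of \cref{eq:assumption:diff} vanishes for every $\theta \in U$. Hence \cref{prop:loss:differentiable} applies at each $\theta \in U$ and shows that $\cL$ is differentiable at each such $\theta$ with $(\nabla\cL)(\theta) = \cG(\theta)$.

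It then remains to prove that $\cG|_U \colon U \to \R^\fd$ is continuous, because a map which is differentiable on an open set and whose derivative is continuous there is of class $C^1$ on that set. For the components $\cG_k$ with $k \in \N \cap (\width(d+1), \fd]$ --- the output-weight and output-bias components --- \cref{lem:vc:derivatives:cont} already gives continuity on all of $\R^\fd$. For the remaining components $\cG_k$ with $k \in \N \cap [1, \width(d+1)]$, i.e.\ $\cG_{(i-1)d+j}$ with $j \in \cu{ 1, \ldots, d }$ and $\cG_{\width d + i}$, and for any $\theta \in U$, the defining inequality of $U$ yields $\abs{\b{\theta}_i} + \smallsum_{k=1}^d \abs{\w{\theta}_{i,k}} > 0$, so \cref{lem:gradient:wbcomponents:cont} (which uses $\mu \ll \lambda$) gives continuity of that component at $\theta$. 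Thus every component of $\cG$ is continuous at every point of $U$, $\cG|_U$ is continuous, and combined with the previous paragraph this gives $\cL|_U \in C^1(U, \R)$ and $\nabla(\cL|_U) = \cG|_U$.

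The argument is essentially an assembly of results proved earlier in this section, so I do not expect a genuine obstacle; the points requiring care are the bookkeeping of the coordinate blocks $\cu{ 1, \ldots, \width d }$ (input weights), $\cu{ \width d + 1, \ldots, \width(d+1) }$ (hidden biases), and $\cu{ \width(d+1)+1, \ldots, \fd }$ (output weights and bias) against the appropriate continuity statement --- \cref{lem:gradient:wbcomponents:cont} versus \cref{lem:vc:derivatives:cont} --- and the observation that one should \emph{not} enlarge $U$ by also including the points where $\cL(\theta) = 0$ (at which \cref{prop:loss:differentiable} still yields differentiability), since at such points a degenerating neuron can destroy continuity of $\cG$; restricting to the neuron-nonzero set $U$ is precisely what balances openness against continuity of the gradient.
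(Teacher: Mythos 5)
Your proposal is correct and follows essentially the same route as the paper: the same choice of $U$ (all neurons with nonzero weight--bias vector), measure-zero complement, differentiability with $(\nabla\cL)(\theta)=\cG(\theta)$ on $U$ via \cref{prop:loss:differentiable} since the indicator in \cref{eq:assumption:diff} vanishes there, and continuity of the gradient components from \cref{lem:vc:derivatives:cont} and \cref{lem:gradient:wbcomponents:cont}. No gaps; the paper even bounds the complement more crudely by $\bigcup_{i=1}^\width \cu{\theta \colon \b{\theta}_i = 0}$, which already suffices for the null-set claim.
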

\begin{cproof2} {cor:loss:c1}
Throughout this proof let $U \subseteq \R^\fd$ satisfy
\begin{equation} \label{cor:loss:c1:eq:defu}
    U = \cu[\big]{ \theta \in \R^\fd \colon \br[\big]{ \forall \, i \in \cu{1, 2, \ldots, \width } \colon  \rbr[\big]{ \abs{\b{\theta}_i } + \smallsum_{j=1}^d \abs{\w{\theta}_{i,j} } > 0  } } } .
\end{equation}
\Nobs that \cref{cor:loss:c1:eq:defu} ensures that $U \subseteq \R^\fd$ is open. Moreover, \nobs that the fact that $\R^\fd \backslash U \subseteq \rbr[\big]{ \bigcup_{i=1}^\width \cu{ \theta \in \R^\fd \colon \b{\theta}_i = 0 } }$ assures that $\int_{\R^\fd \backslash U } 1 \, \d x = 0$.
Furthermore, \nobs that \cref{prop:loss:differentiable} demonstrates that for all $\theta \in U$ it holds that $\cL$ is differentiable at $\theta$ with $( \nabla \cL )( \theta ) = \cG ( \theta ) $. In addition, \nobs that \cref{lem:vc:derivatives:cont} and \cref{lem:gradient:wbcomponents:cont} prove that for all $\theta \in U$, $i \in \cu{1, 2, \ldots, \fd }$ it holds that $\R^\fd \ni \vartheta \mapsto \cG_i ( \vartheta ) \in \R$ is continuous at $\theta$. Hence, we obtain that $\cL | _U \in C^1 ( U , \R)$.
\end{cproof2}

\section{Convergence of the risk of gradient flows (GFs) in the training of ANNs}
\label{section:gf:convergence}

In this section we establish in \cref{theo:flow:conditional} in \cref{subsection:gf:critical} below, 
in \cref{cor:flow:conditional} in \cref{subsection:gf:convergence:minimum} below,
 and in \cref{cor:gradient:flow:convergence:affine} in \cref{subsection:gf:convergence:affine} below convergence results for the risk of GFs.
 In particular, in \cref{theo:flow:conditional} we establish that the risk of every bounded GF trajectory converges to the risk of a critical point. 
Our proof of \cref{theo:flow:conditional} employs the fundamental theorem of calculus type result for the risk of GFs in \cref{lem:loss:integral} in \cref{subsection:gf:critical} and the fundamental fact that the standard norm of the generalized gradient function $\cG \colon \R^\fd \to \R^\fd$ is lower semicontinuous, which we established in \cref{cor:gradient.lsc} above.
 The proof of \cref{lem:loss:integral} is entirely analogous to the proof of \cite[Lemma 3.5]{CheriditoJentzenRiekert2021}.
 In \cref{cor:flow:conditional} we establish that the risk of every bounded GF trajectory with sufficiently small initial risk converges to the risk of the global minima of the risk function.  In \cref{cor:gradient:flow:convergence:affine} we employ the characterization result for criticial points for affine linear target functions in Cheridito et al.~\cite{CheriditoJentzenRossmannek2021} to specialize \cref{cor:flow:conditional} to the situation of affine linear target functions. 

\subsection{Convergence of the risk of GFs to the risk of a critical point}
\label{subsection:gf:critical}

\cfclear
\begin{lemma} \label{lem:loss:integral} 
Assume \cref{setting:snn}, let $T \in (0, \infty)$, and let $\Theta \in C([0, T ] , \R^{ \fd } )$ satisfy for all $t \in [0,T] $ that $\Theta_t = \Theta_0 - \int_0^t \cG ( \Theta_s ) \, \d s$ \cfadd{cor:gradient:measurable}\cfload.
Then it holds for all $t \in [0,T]$ that $\cL(\Theta_t) = \cL (\Theta_0) - \int_0^t \norm{ \cG( \Theta_s ) } ^2 \, \d s$.
\end{lemma}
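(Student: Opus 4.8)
The plan is to avoid differentiating the non-smooth risk $\cL$ directly and instead to establish a fundamental-theorem-of-calculus identity for the smooth approximating risks $\fL_r$, $r\in\N$, and then pass to the limit $r\to\infty$. First I would record the basic regularity of the trajectory: by \cref{cor:gradient:measurable} the function $\cG$ is measurable, so $[0,T]\ni s\mapsto\cG(\Theta_s)\in\R^\fd$ is measurable (composition with the continuous $\Theta$), and it is bounded because $\cG$ is locally bounded and $\Theta([0,T])$ is a bounded set; hence the identity $\Theta_t=\Theta_0-\int_0^t\cG(\Theta_s)\,\d s$ shows that $\Theta$ is Lipschitz continuous on $[0,T]$ and, by the Lebesgue differentiation theorem, that for almost every $t\in[0,T]$ the trajectory $\Theta$ is differentiable at $t$ with $\Theta_t'=-\cG(\Theta_t)$.

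Next, fix $r\in\N$. By \cref{prop:loss:gradient:item0} in \cref{prop:loss:gradient} we have $\fL_r\in C^1(\R^\fd,\R)$, so the map $t\mapsto\fL_r(\Theta_t)$ is absolutely continuous on $[0,T]$ (a composition of the locally Lipschitz function $\fL_r$ with the Lipschitz function $\Theta$), and at every $t$ where $\Theta$ is differentiable the chain rule gives $\tfrac{\d}{\d t}\fL_r(\Theta_t)=\spro{(\nabla\fL_r)(\Theta_t),\Theta_t'}=-\spro{(\nabla\fL_r)(\Theta_t),\cG(\Theta_t)}$. Integrating this identity yields, for all $t\in[0,T]$,
\[
\fL_r(\Theta_t)=\fL_r(\Theta_0)-\int_0^t\spro{(\nabla\fL_r)(\Theta_s),\cG(\Theta_s)}\,\d s .
\]

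Finally, I would let $r\to\infty$ in this identity. By \cref{prop:loss:gradient:item1} in \cref{prop:loss:gradient} the left-hand side converges to $\cL(\Theta_t)$ and $\fL_r(\Theta_0)\to\cL(\Theta_0)$, and by \cref{prop:loss:gradient:item2} in \cref{prop:loss:gradient} we have $(\nabla\fL_r)(\Theta_s)\to\cG(\Theta_s)$ for every $s\in[0,T]$, so the integrand converges pointwise to $\spro{\cG(\Theta_s),\cG(\Theta_s)}=\norm{\cG(\Theta_s)}^2$. To interchange limit and integral I would invoke dominated convergence: the integrand is bounded in absolute value by $\norm{(\nabla\fL_r)(\Theta_s)}\,\norm{\cG(\Theta_s)}$, where the factor $\norm{\cG(\Theta_s)}$ is bounded uniformly in $s\in[0,T]$ (as above) and the factor $\norm{(\nabla\fL_r)(\Theta_s)}$ is bounded uniformly in $s\in[0,T]$ and in $r\in\N$, since the uniform bounds on $\Rect_r$ and $(\Rect_r)'$ in \cref{setting:snn} yield, exactly as in the proof of \cref{lem:gradient:est}, a local bound on $\norm{(\nabla\fL_r)(\cdot)}$ that does not depend on $r$. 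This gives the claimed identity $\cL(\Theta_t)=\cL(\Theta_0)-\int_0^t\norm{\cG(\Theta_s)}^2\,\d s$ for all $t\in[0,T]$. I expect the last domination step to be the only real obstacle: everything hinges on the approximating gradients $(\nabla\fL_r)|_K$ staying uniformly bounded on compact sets $K$ as $r\to\infty$, which is precisely where the boundedness hypotheses on the activation approximations $\Rect_r$ are genuinely used; the remaining manipulations are routine.
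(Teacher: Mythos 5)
Your argument is correct, and it is essentially the argument the paper relies on: the paper's proof of \cref{lem:loss:integral} simply defers to the analogous \cite[Lemma 3.5]{CheriditoJentzenRiekert2021}, whose strategy is exactly your smoothing one (chain rule and fundamental theorem of calculus for the $C^1$ approximations $\fL_r$ along the Lipschitz, a.e.\ differentiable trajectory, followed by passage to the limit $r\to\infty$ via dominated convergence). Your key domination step is indeed available in \cref{setting:snn}, since the locally uniform bounds on $\Rect_r$ and $(\Rect_r)'$ give a bound on $\sup_{r\in\N}\sup_{\theta\in K}\norm{(\nabla\fL_r)(\theta)}$ for every compact $K\subseteq\R^\fd$, so no gap remains.
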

\begin{proof}[Proof of \cref{lem:loss:integral}]
The proof of \cref{lem:loss:integral} is entirely analogous to the proof of \cite[Lemma 3.5]{CheriditoJentzenRiekert2021}.
\end{proof}

\cfclear
\begin{theorem} \label{theo:flow:conditional}
Assume \cref{setting:snn}, assume $\mu \ll \lambda$, and let $\Theta \in C([0, \infty), \R^{\fd})$ satisfy for all $t \in [0, \infty)$ that $\sup_{s \in [0, \infty ) } \norm{\Theta_s } < \infty$ and 
\begin{equation}
 \Theta_t = \Theta_0 - \int_0^t \cG ( \Theta_s ) \, \d s
\end{equation}
\cfadd{cor:gradient:measurable}\cfload.
 Then there exists $\vartheta \in \cG^{-1} ( \cu{ 0 } )$ such that $\limsup_{t \to \infty} \cL ( \Theta_t ) = \cL ( \vartheta )$.
\end{theorem}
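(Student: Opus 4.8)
The plan is to combine the energy identity from \cref{lem:loss:integral} with a compactness argument and the lower semicontinuity of the generalized gradient function from \cref{cor:gradient.lsc}. First I would apply \cref{lem:loss:integral} on each interval $[0,T]$, $T \in (0,\infty)$, to obtain that for all $t \in [0,\infty)$ it holds that $\cL(\Theta_t) = \cL(\Theta_0) - \int_0^t \norm{\cG(\Theta_s)}^2\,\d s$. Since $\cL$ is nonnegative and the integrand is nonnegative, the function $[0,\infty)\ni t \mapsto \cL(\Theta_t) \in [0,\infty)$ is nonincreasing, hence convergent; write $\ell = \lim_{t\to\infty}\cL(\Theta_t) \in [0,\infty)$ and note that $\limsup_{t\to\infty}\cL(\Theta_t) = \ell$. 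Passing to the limit $t\to\infty$ in the identity moreover shows that $\int_0^\infty \norm{\cG(\Theta_s)}^2\,\d s = \cL(\Theta_0) - \ell < \infty$.

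Next I would extract a sequence along which the generalized gradient vanishes. The map $[0,\infty)\ni s \mapsto \norm{\cG(\Theta_s)}^2 \in [0,\infty)$ is measurable (by \cref{cor:gradient:measurable} and the continuity of $\Theta$) with finite Lebesgue integral, so its limit inferior as $s\to\infty$ equals $0$; consequently there exist $t_n \in [0,\infty)$, $n \in \N$, with $\lim_{n\to\infty} t_n = \infty$ and $\lim_{n\to\infty}\norm{\cG(\Theta_{t_n})} = 0$. The hypothesis $\sup_{s\in[0,\infty)}\norm{\Theta_s} < \infty$ ensures that $(\Theta_{t_n})_{n\in\N}$ takes values in a compact subset of $\R^\fd$, so, after passing to a subsequence, we may assume that there exists $\vartheta \in \R^\fd$ with $\lim_{n\to\infty}\Theta_{t_n} = \vartheta$. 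Then \cref{cor:gradient.lsc} yields $\norm{\cG(\vartheta)} \leq \liminf_{n\to\infty}\norm{\cG(\Theta_{t_n})} = 0$, and hence $\vartheta \in \cG^{-1}(\cu{0})$.

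Finally I would identify the limiting value of the risk. By the local Lipschitz continuity of $\cL$ established in \cref{lem:realization:lip} (applied on a compact set containing $\vartheta$ together with all $\Theta_{t_n}$), the risk function $\cL$ is continuous, so $\lim_{n\to\infty}\cL(\Theta_{t_n}) = \cL(\vartheta)$; combining this with $\lim_{n\to\infty}\cL(\Theta_{t_n}) = \ell$ gives $\cL(\vartheta) = \ell = \limsup_{t\to\infty}\cL(\Theta_t)$, which is the claim. I do not expect a substantial obstacle here; the two points that merit a little care are the deduction of the sequence $(t_n)_{n\in\N}$ with vanishing gradient norm from the integrability of $s\mapsto\norm{\cG(\Theta_s)}^2$ (a standard property of integrable nonnegative functions) and the bookkeeping needed to ensure the regularity inputs --- measurability of $\cG$, lower semicontinuity of $\norm{\cG(\cdot)}$ (which relies on $\mu\ll\lambda$), and continuity of $\cL$ --- are indeed all available under the hypotheses of the theorem.
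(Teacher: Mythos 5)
Your proposal is correct and follows essentially the same route as the paper's proof: the energy identity from \cref{lem:loss:integral} yields monotonicity of $t\mapsto\cL(\Theta_t)$ and integrability of $\norm{\cG(\Theta_\cdot)}^2$, a sequence with vanishing gradient norm and a convergent subsequence is extracted via boundedness, \cref{cor:gradient.lsc} identifies the limit as a critical point, and the local Lipschitz continuity of $\cL$ from \cref{lem:realization:lip} identifies the limiting risk. No gaps.
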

\begin{cproof}{theo:flow:conditional}
\Nobs that \cref{lem:loss:integral} implies that $\int_0^\infty \norm{\cG ( \Theta_s ) } ^2 \, \d s < \infty$. Hence, we have that $\liminf_{t \to \infty} \norm{\cG ( \Theta_t ) } = 0$. This proves that there exists $\tau = (\tau_n)_{n \in \N} \colon \N \to [0, \infty)$ which satisfies $\liminf_{n \to \infty} \tau_n = \infty$ and 
\begin{equation} \label{theo:flow:conditional:proof:eq1}
\limsup\nolimits_{n \to \infty} \norm{\cG ( \Theta_{\tau_n} ) } = 0.    
\end{equation}
\Nobs that the fact that $\sup_{n \in \N} \norm{\Theta_{\tau_n} } \leq \sup_{t \in [0, \infty)} \norm{\Theta_t }  < \infty$ ensures that there exist $\vartheta \in \R^\fd$ and a strictly increasing $n \colon \N \to \N$ which satisfies 
\begin{equation} \label{theo:flow:conditional:proof:eq2}
    \limsup\nolimits_{k \to \infty} \norm{ \Theta_{\tau_{n (k)}} - \vartheta } = 0.
\end{equation}
\Nobs that \cref{theo:flow:conditional:proof:eq1}, \cref{theo:flow:conditional:proof:eq2},
and \cref{cor:gradient.lsc} demonstrate that 
\begin{equation}
    \norm{\cG ( \vartheta ) } \leq \liminf\nolimits_{k \to \infty} \norm{\cG ( \Theta_{\tau_{n ( k )}} ) } = 0.
\end{equation}
Furthermore, \nobs that \cref{lem:loss:integral} assures that $[0, \infty) \ni t \mapsto \cL ( \Theta_t) \in \R$ is non-increasing. 
Combining this and \cref{theo:flow:conditional:proof:eq2} with \cref{lem:realization:lip} proves that $\limsup_{t \to \infty} \cL ( \Theta_t ) =\lim_{k \to \infty} \cL ( \Theta_{\tau_{n ( k ) } } ) = \cL ( \vartheta )$.
\end{cproof}

\subsection{Convergence of the risk of GFs to the minimal risk}
\label{subsection:gf:convergence:minimum}

\cfclear
\begin{cor} \label{cor:flow:conditional} 
Assume \cref{setting:snn}, assume $\mu \ll \lambda$, 
let $\mathbf{m} \in \R$ satisfy $\mathbf{m} = \inf\nolimits_{\theta \in \R^{ \fd }} \cL ( \theta) $,
and let $\Theta \in C([0, \infty), \R^{\fd})$ satisfy $\sup_{t \in [0, \infty ) } \norm{\Theta_t } < \infty$, 
 $\forall \, t \in [0, \infty) \colon \Theta_t = \Theta_0 - \int_0^t \cG ( \Theta_s ) \, \d s$,
 and $\forall \, \theta \in \cG^{-1} ( \cu{ 0 } ) \cap \cL^{-1} ( ( \mathbf{m} , \infty ) ) \colon \inf_{t \in [0, \infty ) } \cL ( \Theta_t ) < \cL ( \theta )$
\cfadd{cor:gradient:measurable}\cfload.
Then 
\begin{equation} \label{eq:cor:flow:conditional}
    \limsup\nolimits_{t \to \infty} \cL ( \Theta_t ) = \mathbf{m}.
\end{equation}
\end{cor}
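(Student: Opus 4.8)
The plan is to combine \cref{theo:flow:conditional} with the monotonicity of the risk along the GF trajectory and then to exclude all non-minimal critical values by means of the last hypothesis. First I would invoke \cref{theo:flow:conditional}, whose assumptions are precisely the boundedness $\sup_{t\in[0,\infty)}\norm{\Theta_t}<\infty$ and the GF integral equation $\Theta_t=\Theta_0-\int_0^t\cG(\Theta_s)\,\d s$ (both assumed in the corollary together with $\mu\ll\lambda$ from \cref{setting:snn}), to obtain some $\vartheta\in\cG^{-1}(\cu{0})$ with $\limsup_{t\to\infty}\cL(\Theta_t)=\cL(\vartheta)$.

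Next I would use \cref{lem:loss:integral}, which gives $\cL(\Theta_t)=\cL(\Theta_0)-\int_0^t\norm{\cG(\Theta_s)}^2\,\d s$ and hence that $[0,\infty)\ni t\mapsto\cL(\Theta_t)\in\R$ is non-increasing. Consequently $\limsup_{t\to\infty}\cL(\Theta_t)=\lim_{t\to\infty}\cL(\Theta_t)=\inf_{t\in[0,\infty)}\cL(\Theta_t)$, and combining this with the previous step yields $\cL(\vartheta)=\inf_{t\in[0,\infty)}\cL(\Theta_t)$.

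I would then argue by contradiction. Suppose $\cL(\vartheta)>\mathbf{m}$. Then $\vartheta\in\cG^{-1}(\cu{0})\cap\cL^{-1}((\mathbf{m},\infty))$, so the last hypothesis of the corollary forces $\inf_{t\in[0,\infty)}\cL(\Theta_t)<\cL(\vartheta)$, which contradicts $\cL(\vartheta)=\inf_{t\in[0,\infty)}\cL(\Theta_t)$. Hence $\cL(\vartheta)\le\mathbf{m}$, and since $\mathbf{m}=\inf_{\theta\in\R^\fd}\cL(\theta)\le\cL(\vartheta)$ we get $\cL(\vartheta)=\mathbf{m}$, whence $\limsup_{t\to\infty}\cL(\Theta_t)=\cL(\vartheta)=\mathbf{m}$, which is \cref{eq:cor:flow:conditional}.

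There is essentially no serious obstacle: the corollary is a soft consequence of \cref{theo:flow:conditional} and \cref{lem:loss:integral}. The only point requiring a little care is the identification of $\limsup_{t\to\infty}\cL(\Theta_t)$ with $\inf_{t\in[0,\infty)}\cL(\Theta_t)$ via the monotonicity of $t\mapsto\cL(\Theta_t)$, since it is exactly this identification that lets the hypothesis on the critical points of $\cL$ be applied to $\vartheta$.
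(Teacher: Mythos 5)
Your proposal is correct and follows essentially the same route as the paper: invoke \cref{theo:flow:conditional} to obtain a critical point $\vartheta$ with $\limsup_{t\to\infty}\cL(\Theta_t)=\cL(\vartheta)$, use the monotonicity of $t\mapsto\cL(\Theta_t)$ from \cref{lem:loss:integral} to identify this value with $\inf_{t\in[0,\infty)}\cL(\Theta_t)$, and rule out $\cL(\vartheta)>\mathbf{m}$ by contradiction via the hypothesis on critical points. The only difference is cosmetic ordering of these steps.
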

\begin{cproof}{cor:flow:conditional}
\Nobs that \cref{theo:flow:conditional} assures that there exists $\vartheta \in \R^\fd$ which satisfies $\cG ( \vartheta ) = 0 $ and $\limsup_{t \to \infty} \cL ( \Theta_t ) = \cL ( \vartheta )$.
In the following we prove \cref{eq:cor:flow:conditional} by contradiction.
We thus assume that
\begin{equation} \label{proof:flow:conditional:eq1}
    \cL ( \vartheta ) > \mathbf{m}.
\end{equation}
\Nobs that \cref{proof:flow:conditional:eq1} and the assumption that $\forall \, \theta \in \cG^{-1} ( \cu{ 0 } ) \cap \cL^{-1} ( ( \mathbf{m} , \infty ) ) \colon \inf_{t \in [0, \infty ) } \cL ( \Theta_t ) < \cL ( \theta )$ imply that 
\begin{equation}  \label{proof:flow:conditional:eq2}
    \inf\nolimits_{t \in [0, \infty ) } \cL ( \Theta_t ) < \cL ( \vartheta) .
\end{equation}
Moreover, \nobs that \cref{lem:loss:integral} proves that $[0, \infty) \ni t \mapsto \cL ( \Theta_t) \in \R$ is non-increasing. 
Combining this with \cref{proof:flow:conditional:eq2} shows that
\begin{equation}
     \cL ( \vartheta ) = \limsup\nolimits_{t \to \infty} \cL ( \Theta_t ) = \inf\nolimits_{t \in [0, \infty ) } \cL ( \Theta_t ) < \cL ( \vartheta ) .
\end{equation}
This is a contradiction.
\end{cproof}

\subsection{Risks of critical points for affine linear target functions}
\label{subsection:critical:points:affine}

\begin{prop} \label{prop:loss:bounded:affine}
Assume \cref{setting:snn},
assume $d=1$,  
and let $\alpha , \beta \in \R$, $\rho \in (0, \infty)$ satisfy for all $E \in \cB ( [ \scra , \scrb ] )$, $x \in [\scra , \scrb]$
that $\mu ( E ) = \rho \lambda_1 ( E )$ and $ f(x)= \alpha x + \beta$. 
Then
\begin{enumerate} [label=(\roman*)]
    \item \label{cor:loss:bounded:affine:item1} there exists $\vartheta \in \R^\fd$ such that $ \cL ( \vartheta ) = \inf_{\theta \in \R^\fd} \cL ( \theta ) = 0$ and
    \item \label{cor:loss:bounded:affine:item2} it holds for all $\theta \in \cG^{-1} ( \cu{ 0 } ) \cap \cL^{-1} ( ( 0 , \infty ) ) $ that $\cL ( \theta )  \geq  \frac{ \rho \alpha^2 (\scrb - \scra)^3 }{12 (2 \lfloor \width / 2 \rfloor + 1)^4}$.
\end{enumerate}
\end{prop}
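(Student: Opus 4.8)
For part~\ref{cor:loss:bounded:affine:item1} the plan is simply to exhibit one parameter whose realization coincides with $f$ on $[\scra,\scrb]$. Since $\scrb>\scra$, take $\vartheta\in\R^{\fd}$ with $\w{\vartheta}_{1,1}=1$, $\b{\vartheta}_1=-\scra$, $\v{\vartheta}_1=\alpha$, $\c{\vartheta}=\beta+\alpha\scra$, and all remaining coordinates equal to $0$; then for every $x\in[\scra,\scrb]$ one has $\max\cu{x-\scra,0}=x-\scra$ and hence $\realization{\vartheta}(x)=\beta+\alpha\scra+\alpha(x-\scra)=\alpha x+\beta=f(x)$, so that $\cL(\vartheta)=0$. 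As $\cL\geq 0$ this yields $\cL(\vartheta)=\inf_{\theta\in\R^{\fd}}\cL(\theta)=0$ (alternatively one may invoke the corresponding statement in \cite{CheriditoJentzenRossmannek2021}).

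For part~\ref{cor:loss:bounded:affine:item2}, set $e\coloneqq(\realization{\theta}-f)|_{[\scra,\scrb]}$. The first step is to read off from the gradient formulas in~\cref{prop:loss:gradient} (using $d=1$, $\mu=\rho\lambda$, and $\cG(\theta)=0$) the moment identities $\rho\int_{\scra}^{\scrb}e(x)\,\d x=0$ and, for every $i\in\cu{1,\dots,\width}$ with $\v{\theta}_i\neq 0$, both $\rho\int_{I_i^{\theta}}e(x)\,\d x=0$ and $\rho\int_{I_i^{\theta}}x\,e(x)\,\d x=0$. The second step is to invoke the characterization of critical points of the risk for $1$-dimensional affine target functions established in Cheridito et al.\ \cite{CheriditoJentzenRossmannek2021}: since $\cL(\theta)>0$, this characterization shows that $\realization{\theta}|_{[\scra,\scrb]}$ is a continuous piecewise affine function in the following normal form --- there is an odd integer $N$ with $1\leq N\leq 2\lfloor \width/2\rfloor+1$ such that $[\scra,\scrb]$ is partitioned into $N$ consecutive subintervals of common length $\tfrac{\scrb-\scra}{N}$ on which the slope of $\realization{\theta}$ alternates between $\alpha\bigl(1-\tfrac1N\bigr)$ and $\alpha\bigl(1+\tfrac1N\bigr)$ (the case $N=1$ being the constant realization); combining this with the moment identities above then forces $e$ to have vanishing integral over each of these $N$ subintervals.

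Granting this normal form, part~\ref{cor:loss:bounded:affine:item2} follows by a short computation: on the $j$-th subinterval $P_j$ the function $e$ is affine with slope $\gamma_j-\alpha$ satisfying $(\gamma_j-\alpha)^2=\tfrac{\alpha^2}{N^2}$ and with mean zero, hence $e|_{P_j}(x)=(\gamma_j-\alpha)(x-c_j)$, where $c_j$ is the midpoint of $P_j$, so that $\int_{P_j}e(x)^2\,\d x=(\gamma_j-\alpha)^2\tfrac{|P_j|^3}{12}=\tfrac{\alpha^2}{N^2}\cdot\tfrac{(\scrb-\scra)^3}{12N^3}$. Summing over the $N$ subintervals gives $\cL(\theta)=\rho\int_{\scra}^{\scrb}e(x)^2\,\d x=\tfrac{\rho\alpha^2(\scrb-\scra)^3}{12N^4}$, and since $N\leq 2\lfloor\width/2\rfloor+1$ this is at least $\tfrac{\rho\alpha^2(\scrb-\scra)^3}{12(2\lfloor\width/2\rfloor+1)^4}$, which is the asserted bound.

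I expect the second step above to be the main obstacle: namely, extracting from \cite{CheriditoJentzenRossmannek2021} the precise normal form of a non-global critical realization --- that it must be an \emph{equally spaced} sawtooth with slope jumps of size exactly $\tfrac{2|\alpha|}{N}$ and with at most $2\lfloor\width/2\rfloor+1$ affine pieces (thereby ruling out sawtooth profiles with unequal piece lengths or with more pieces) --- since this is where the somewhat unusual quantity $2\lfloor\width/2\rfloor+1$ and the $N^{-4}$ scaling of the risk originate. The remaining pieces of the argument, including part~\ref{cor:loss:bounded:affine:item1} and the $L^2$ computation, are routine.
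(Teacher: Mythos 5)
Your proof of part~\ref{cor:loss:bounded:affine:item1} is exactly the paper's argument (the same explicit parameter with $\w{\vartheta}_{1,1}=1$, $\b{\vartheta}_1=-\scra$, $\v{\vartheta}_1=\alpha$, $\c{\vartheta}=\beta+\alpha\scra$). For part~\ref{cor:loss:bounded:affine:item2} you are on the same route as the paper, since everything is reduced to the classification of critical points for affine targets in Cheridito et al.~\cite{CheriditoJentzenRossmannek2021}; the difference is that you try to go through a normal form of the critical realization and then recompute the risk, whereas the paper invokes \cite[Corollary 2.7]{CheriditoJentzenRossmannek2021} directly, which already yields the exact critical value $\cL(\theta)=\frac{\rho\alpha^2(\scrb-\scra)^3}{12(n+1)^4}$ for some even $n$ with $n\leq\width$; since $n$ even and $n\leq\width$ give $n\leq 2\lfloor\width/2\rfloor$, the bound follows in one line. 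So the ``main obstacle'' you flag (extracting the equally spaced sawtooth with at most $2\lfloor\width/2\rfloor+1$ pieces) is not actually needed: the cited corollary supplies the critical risk values themselves, and your subsequent $L^2$ computation, while consistent with that value, is redundant.

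There is, however, a genuine step you skip before the classification can be invoked at all. The map $\cG$ is the limit of gradients of the smoothed risks and hence encodes the left-derivative convention $\indicator{(0,\infty)}$, while the gradient notion under which the classification in \cite{CheriditoJentzenRossmannek2021} is formulated uses $\indicator{[0,\infty)}$ (this is the auxiliary function $\fG$ introduced in the paper's proof). These two notions differ precisely at degenerate neurons with $\w{\theta}_{i,1}=\b{\theta}_i=0$ and $\v{\theta}_i\neq 0$: there $I_i^\theta=\emptyset$, so $\cG_{i}(\theta)=\cG_{\width+i}(\theta)=0$ carries no information, whereas the corresponding component of the reference's gradient equals $2\v{\theta}_i\int_\scra^\scrb x\,(\realization{\theta}(x)-f(x))\,\d x$ and need not vanish. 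Consequently $\cG(\theta)=0$ does not by itself make $\theta$ a critical point in the sense of the cited classification, and your moment identities (correctly restricted to $i$ with $\v{\theta}_i\neq 0$) do not close this hole. The paper bridges it by passing from $\theta$ to $\vartheta$ with $\v{\vartheta}_i=\v{\theta}_i\indicator{(0,\infty)}(\abs{\w{\theta}_{i,1}}+\abs{\b{\theta}_i})$, which leaves $\realization{\theta}$, $\cL(\theta)$, and $\cG(\theta)$ unchanged but makes the reference's gradient vanish, and only then applies Corollary 2.7. You need this (or an equivalent) reduction before your step two is legitimate.
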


\begin{cproof} {prop:loss:bounded:affine}
\Nobs that the assumption that $d=1$ implies that $\fd = 3 \width + 1 $.
Let $\psi \in \R^{3 \width + 1 }$ satisfy $ \w{\psi}_{1 , 1} = 1$, $\b{\psi} _1  = - \scra$, $\v{\psi}_1  = \alpha$, $\c{\psi} _1 = \beta + \alpha \scra $, and $\forall \, i \in \N \cap ( 1 , \width ] \colon \w{\psi}_{i,1}=\b{\psi}_i = \v{\psi}_i = 0$. \Nobs that for all $x \in [\scra , \scrb ]$ we have that
\begin{equation}
    \realization{\psi} ( x ) = \alpha \max \cu{x -\scra , 0 } + \beta + \alpha \scra
    = \alpha ( x - \scra ) + \alpha \scra + \beta = \alpha x + \beta = f ( x ).
\end{equation}
This shows that $\cL ( \psi ) = 0$. Combining this with the fact that for all $\theta \in \R^\fd$ it holds that $\cL ( \theta ) \ge 0$ establishes \cref{cor:loss:bounded:affine:item1}. 
We now prove \cref{cor:loss:bounded:affine:item2}. 
For this assume in the following without loss of generality that $\alpha \not= 0$ and let $\fG = ( \fG_1 , \ldots, \fG_\fd) \colon \R^\fd \to \R^\fd$ satisfy for all $\theta \in \R^\fd$, $i \in \cu{1, 2, \ldots, \width }$ that
\begin{equation}
    \begin{split}
        \fG_i ( \theta ) &= 2 \v{\theta}_i \int_\scra ^\scrb x ( \realization{\theta} ( x ) - f ( x ) ) \indicator{[0, \infty)} ( \w{\theta}_{i,1} x + \b{\theta}_i ) \, \d x, \\
        \fG_{\width + i } ( \theta ) &= 2 \v{\theta}_i \int_\scra ^\scrb ( \realization{\theta} ( x ) - f ( x ) ) \indicator{[0, \infty)} ( \w{\theta}_{i,1} x + \b{\theta}_i ) \, \d x, \\
        \fG_{2 \width + i} ( \theta ) &= 2 \int_{\scra}^{\scrb} \br[\big]{ \max \cu{ \w{\theta}_{i,1} x + \b{\theta}_i , 0 } } ( \realization{\theta} ( x ) - f ( x ) ) \, \d x, \\
         \text{and} \qquad \fG_\fd ( \theta ) &= 2 \int_{\scra}^{\scrb} ( \realization{\theta} ( x ) - f ( x ) ) \, \d x
    \end{split}
\end{equation}
(cf., e.g., \cite[Lemma 3.5]{CheriditoJentzenRossmannek2021}).
\Nobs that \cref{eq:loss:gradient} and the assumption that for all $E \in \cB ( [ \scra , \scrb ] )$ it holds that $\mu ( E ) = \rho \lambda_1 ( E )$ show that for all $\theta \in \R^\fd$, $i \in \cu{1, 2, \ldots, \width }$ it holds that $\fG_{2 \width + i } ( \theta ) = \rho^{-1} \cG _{2 \width + i } ( \theta )$ and $\fG_\fd ( \theta ) = \rho^{-1} \cG_\fd ( \theta )$.
In the following let $\theta \in \R^{3 \width + 1 }$ satisfy $\cL ( \theta ) > 0 = \norm{ \cG ( \theta ) }$ and let $\vartheta \in \R^{3 \width + 1 }$ satisfy for all $i \in \cu{1, 2, \ldots, \width }$ that 
\begin{equation} \label{cor:loss:bounded:affine:eq:defvartheta}
    \w{\vartheta}_{i,1} = \w{\theta}_{i,1}, \qquad \b{\vartheta}_i = \b{\theta}_i,
    \qquad \v{\vartheta}_i = \v{\theta}_i \indicator{(0, \infty ) } ( \abs{ \w{\theta}_{i,1 } } + \abs{\b{\theta}_i } ) , \qandq
\c{\vartheta} = \c{\theta}.
\end{equation}
\Nobs that \cref{cor:loss:bounded:affine:eq:defvartheta} ensures that
\begin{equation}
    \realization{\vartheta} = \realization{\theta}, \qquad  
\cL ( \vartheta ) = \cL ( \theta ) , \qandq \cG ( \vartheta ) = \cG ( \theta ) = 0.
\end{equation}
Furthermore, \nobs that the fact that for all $i \in \cu{1, 2, \ldots, \width }$ it holds that $\fG_{2 \width + i } ( \vartheta ) = \rho^{-1} \cG _{2 \width + i } ( \vartheta )$ and $\fG_\fd ( \vartheta ) = \rho^{-1} \cG_\fd ( \vartheta )$ assures that for all $i \in \cu{1, 2, \ldots, \width }$ it holds that 
\begin{equation} \label{proof:loss:bounded:affine:eq1}
    \fG_{2 \width + i } ( \vartheta ) = \fG_\fd ( \vartheta ) = 0.
\end{equation}
Next \nobs that the fact that for all $i \in \cu{ 1, 2, \ldots, \width }$ with $\abs{\w{\vartheta}_{i,1}} + \abs{\b{\vartheta}_i } = 0$ it holds that $\v{\vartheta}_i = 0$ implies that for all $i \in \cu{ 1, 2, \ldots, \width }$ with $\abs{\w{\vartheta}_{i,1}} + \abs{\b{\vartheta}_i } = 0$ we have that
\begin{equation} \label{proof:loss:bounded:affine:eq2}
\fG_i ( \vartheta ) = \fG_{\width + i} ( \vartheta ) = 0.    
\end{equation}
In addition, \nobs that for all $i \in \cu{ j  \in \cu{ 1, 2, \ldots, \width } \colon \abs{ \w{\vartheta}_{j,1 } } + \abs{\b{\vartheta}_j } > 0}$ and almost all $x \in [\scra, \scrb]$
it holds that $\indicator{[0, \infty)} ( \w{\vartheta}_{i,1} x + \b{\vartheta}_i ) = \indicator{(0, \infty)} ( \w{\vartheta}_{i,1} x + \b{\vartheta}_i ) = \indicator{I_i^\vartheta} ( x )$. This shows that for all $i \in \cu{ 1, 2, \ldots, \width }$ with $\abs{ \w{\vartheta}_{i,1 } } + \abs{\b{\vartheta}_i } > 0$ it holds that
\begin{equation} \label{proof:loss:bounded:affine:eq3}
    \fG_i ( \vartheta ) = \rho^{-1} \cG_i ( \vartheta ) = 0 \qandq \fG_{\width + i} ( \vartheta ) = \rho^{-1} \cG_{\width + i } ( \vartheta ) = 0.
\end{equation}
Combining \cref{proof:loss:bounded:affine:eq1,proof:loss:bounded:affine:eq2,proof:loss:bounded:affine:eq3} demonstrates that $\fG ( \vartheta ) = 0$.
Cheridito et al.~\cite[Corollary 2.7]{CheriditoJentzenRossmannek2021} hence proves that there exists $n \in \cu{0, 2, 4,  \ldots } \cap (0, \width]$ which satisfies
\begin{equation} \label{eq:loss:explicit:value}
\cL ( \theta ) = \cL ( \vartheta ) = \rho \int_\scra^\scrb ( \realization{\vartheta} ( x ) - (\alpha x + \beta ))^2 \, \d x = \frac{\rho \alpha^2 (\scrb - \scra ) ^3 }{12 (n+1)^4}  .   
\end{equation}
\Nobs that the fact that $\frac{n}{2} \in \Z$ and the fact that $n \leq \width$ assure that $n \leq 2 \lfloor H/2 \rfloor$.
Combining this with \cref{eq:loss:explicit:value} shows that
\begin{equation}
    \cL ( \theta ) = \frac{\rho \alpha^2 (\scrb - \scra ) ^3 }{12 (n+1)^4} \geq \frac{ \rho \alpha^2 (\scrb - \scra ) ^3 }{12 (2 \lfloor \width / 2 \rfloor + 1)^4}.
\end{equation}
This establishes \cref{cor:loss:bounded:affine:item2}.
\end{cproof}

\subsection{Convergence of the risk of GFs to the minimal risk for affine linear target functions}
\label{subsection:gf:convergence:affine}

\cfclear
\begin{cor} \label{cor:gradient:flow:convergence:affine}
Assume \cref{setting:snn}, assume $d=1$,
let $\alpha , \beta \in \R$, $\rho \in (0, \infty)$ satisfy 
for all $E \in \cB ( [ \scra , \scrb ] )$, $x \in [\scra , \scrb]$
that $\mu ( E ) = \rho \lambda_1 ( E )$
and $ f(x)= \alpha x + \beta$,
and let $\Theta \in C([0, \infty), \R^{\fd})$ satisfy $\sup_{t \in [0, \infty ) } \norm{\Theta_t } < \infty$, $\forall \, t \in [0, \infty ) \colon  \Theta_t = \Theta_0 - \int_0^t \cG ( \Theta_s ) \, \d s$, and $\inf_{t \in [0, \infty ) } \cL ( \Theta_t ) < \frac{\rho \alpha^2 ( \scrb - \scra ) ^3}{12(2 \lfloor \width / 2 \rfloor + 1 )^4}  $
\cfadd{cor:gradient:measurable}\cfload.
Then 
\begin{equation}
    \limsup\nolimits_{t \to \infty} \cL ( \Theta_t ) = 0.
\end{equation}
\end{cor}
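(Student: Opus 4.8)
The plan is to obtain \cref{cor:gradient:flow:convergence:affine} as a direct consequence of \cref{cor:flow:conditional} and \cref{prop:loss:bounded:affine}; essentially the whole task is to verify that the hypotheses of \cref{cor:flow:conditional} hold in this more specialized affine linear setting. First I would record the elementary observation that the assumption $\mu(E) = \rho \lambda_1(E)$ for all $E \in \cB([\scra,\scrb])$ with $\rho \in (0,\infty)$ immediately implies $\mu \ll \lambda$ (note that here $\lambda$ is the Lebesgue--Borel measure on $[\scra,\scrb]^d = [\scra,\scrb]$ from \cref{setting:snn}, since $d=1$), so that the standing absolute continuity assumption of \cref{cor:flow:conditional} is satisfied.

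Next I would invoke \cref{prop:loss:bounded:affine}. Item \cref{cor:loss:bounded:affine:item1} supplies a parameter $\vartheta \in \R^\fd$ with $\cL(\vartheta) = \inf_{\theta \in \R^\fd} \cL(\theta) = 0$; in particular $\mathbf{m} := \inf_{\theta \in \R^\fd}\cL(\theta) = 0$, whence $\cL^{-1}((\mathbf{m},\infty)) = \cL^{-1}((0,\infty))$. Item \cref{cor:loss:bounded:affine:item2} then shows that every $\theta \in \cG^{-1}(\cu{0}) \cap \cL^{-1}((\mathbf{m},\infty))$ satisfies $\cL(\theta) \geq \frac{\rho \alpha^2 (\scrb - \scra)^3}{12(2\lfloor \width/2 \rfloor + 1)^4}$. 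Combining this with the hypothesis $\inf_{t \in [0,\infty)}\cL(\Theta_t) < \frac{\rho \alpha^2 (\scrb - \scra)^3}{12(2\lfloor \width/2 \rfloor + 1)^4}$ yields that for every such critical point $\theta$ we have $\inf_{t \in [0,\infty)}\cL(\Theta_t) < \cL(\theta)$, which is exactly the remaining hypothesis of \cref{cor:flow:conditional}.

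Finally, since the boundedness condition $\sup_{t \in [0,\infty)}\norm{\Theta_t} < \infty$ and the GF equation $\Theta_t = \Theta_0 - \int_0^t \cG(\Theta_s)\,\d s$ are assumed, \cref{cor:flow:conditional} applies and gives $\limsup_{t \to \infty}\cL(\Theta_t) = \mathbf{m} = 0$, which is the assertion. I do not expect a genuine obstacle here: all of the analytic substance (lower semicontinuity of $\norm{\cG(\cdot)}$, the fundamental theorem of calculus for the risk along the flow, and the explicit computation of the risks of critical points via the characterization in \cite{CheriditoJentzenRossmannek2021}) has already been carried out in the cited results. The only points requiring a little care are the correct handling of the strict versus non-strict inequalities and the identification $\mathbf{m} = 0$; the degenerate case $\alpha = 0$ needs no separate treatment, since then $\frac{\rho \alpha^2 (\scrb-\scra)^3}{12(2\lfloor \width/2\rfloor+1)^4} = 0$ and the hypothesis on $\inf_{t}\cL(\Theta_t)$ cannot be met, making the statement vacuously true.
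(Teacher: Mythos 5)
Your proposal is correct and follows essentially the same route as the paper: the paper's proof likewise applies \cref{cor:loss:bounded:affine:item1,cor:loss:bounded:affine:item2} in \cref{prop:loss:bounded:affine} to identify $\mathbf{m}=0$ and to bound the risks of critical points below by $\frac{\rho \alpha^2 (\scrb-\scra)^3}{12(2\lfloor \width/2\rfloor+1)^4}$, and then invokes \cref{cor:flow:conditional} with $\mathbf{m}=0$. Your additional remarks (the check $\mu \ll \lambda$ and the vacuity when $\alpha=0$) are correct, harmless elaborations of points the paper leaves implicit.
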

\begin{cproof}{cor:gradient:flow:convergence:affine}
\Nobs that \cref{cor:loss:bounded:affine:item1} in \cref{prop:loss:bounded:affine} implies
that $\inf_{\theta \in \R^\fd} \cL ( \theta ) = 0$.
Moreover, \nobs that \cref{cor:loss:bounded:affine:item2} in \cref{prop:loss:bounded:affine} demonstrates that for all $\theta \in \cG^{-1} ( \cu{ 0 } ) \cap \cL^{-1} ( (0 , \infty ) )$ we have that
\begin{equation}
\cL ( \theta ) \geq \frac{\rho \alpha^2 ( \scrb - \scra ) ^3}{12(2 \lfloor \width / 2 \rfloor + 1 )^4} > \inf_{t \in [0, \infty ) } \cL ( \Theta_t ).
\end{equation}
Combining this and \cref{cor:flow:conditional} (applied with $\bfm \with 0$ in the notation of \cref{cor:flow:conditional}) establishes that $\limsup_{t \to \infty} \cL ( \Theta_t ) = 0$.
\end{cproof}

\section{A priori estimates for GFs in the training of ANNs}
\label{section:apriori:bounds}

In this section we establish in \cref{prop:gen:apriori} in \cref{subsection:lyapunov:gf} below, in \cref{cor:flow:limsup:loss} in \cref{subsection:lyapunov:gf}, in \cref{cor:flow:bounded} in \cref{subsection:apriori:largerisk} below, and in \cref{prop:flow:invariant} in \cref{subsection:gf:invariant} several general a priori estimates for GF trajectories.
 In particular, \cref{cor:flow:limsup:loss} demonstrates that the limit value of the risk of every GF trajectory is bounded by the squared $L^2$-error $\inf_{\xi \in \R} \br{ \int_{[\scra , \scrb]^d} ( f(x) - \xi ) ^2 \, \mu ( \d x )}$ of constant approximations of the target function $f \colon [\scra , \scrb ] ^d \to \R$.
 Our proof of \cref{cor:flow:limsup:loss} is based on an application of the a priori estimate in \cref{prop:gen:apriori}. 
\cref{cor:flow:bounded}, in particular, proves that the norm of every GF trajectory is bounded until the first time where the risk is smaller than $\inf_{\xi \in \R} \br{ \int_{[\scra , \scrb]^d} ( f(x) - \xi ) ^2 \, \mu ( \d x )}$.
 Our proof of \cref{cor:flow:bounded} also employs an application of \cref{prop:gen:apriori}.
A result similar to \cref{prop:gen:apriori} has been obtained in \cite[Lemma 3.2]{CheriditoJentzenRiekert2021} in the special situation where the measure $\mu$ is the Lebesgue--Borel measure on $[0 , 1]$
 and where the target function $f$ is a constant function,
and our proof of \cref{prop:gen:apriori} uses similar ideas as the proof of \cite[Lemma 3.2]{CheriditoJentzenRiekert2021}.

 In \cref{prop:flow:invariant} we identify appropriate invariant quantities for the GF dynamics.
 In the scientific literature \cref{prop:flow:invariant} has already been asserted and proved in Williams et al.~\cite[Lemma 3]{WilliamsTragerPanozzo2019} in the case where the measure $ \mu $ is chosen in a way so that the function $\cL \colon \R^\fd \to \R$ describes the empirical risk and where the input is $1$-dimensional (where $ d = 1 $).
 Moreover, a result similar to \cref{prop:flow:invariant} has also been established in Du et al.~\cite[Theorem 2.1]{DuWeiLee2018} in the situation of deep ANNs without biases.

\subsection{Lyapunov type functions for GFs}
\label{subsection:lyapunov:gf}

\cfclear
\begin{prop} \label{prop:gen:apriori}
Assume \cref{setting:snn}, let $\xi \in \R$, let $V \colon \R^\fd \to \R$ satisfy for all $\theta \in \R^\fd $ that $V ( \theta ) = \norm{\theta} ^2 + \abs{ \c{\theta} - 2 \xi } ^2$, and let $\Theta \in C([0, \infty) , \R^{\fd})$ satisfy for all $t \in [0, \infty)$ that $\Theta_t = \Theta_0 - \int_0^t \cG ( \Theta_s ) \,  \d s$ \cfadd{cor:gradient:measurable}\cfload.
Then 
 it holds for all $t \in [ 0, \infty)$ that
\begin{equation} \label{prop:gen:apriori:eq1}
\begin{split}
V( \Theta_t ) &= V(\Theta_0) - 8 \int_0^t \cL ( \Theta_s ) \, \d s - 8 \int_0^t \br*{ \int_{[\scra , \scrb]^d} (f(x) - \xi ) ( \realization{\Theta_s} ( x ) - f ( x ) ) \, \mu ( \d x )  } \, \d s \\
&\leq  V ( \Theta_0 ) + 4 \int_0^t \br*{ \int_{[\scra , \scrb]^d} (f(x) - \xi ) ^2 \, \mu ( \d x ) - \cL ( \Theta_s ) } \, \d s.
\end{split}
\end{equation}
\end{prop}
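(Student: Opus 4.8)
The plan is to differentiate $[0,\infty) \ni t \mapsto V(\Theta_t) \in \R$ and to integrate the outcome. First I would observe that \cref{cor:gradient:measurable} ensures that $\cG$ is locally bounded and measurable, so that the continuity of $\Theta$ implies that $[0,\infty) \ni s \mapsto \cG(\Theta_s) \in \R^\fd$ is locally bounded and measurable and that $\Theta$ is locally Lipschitz continuous and satisfies $\tfrac{\d}{\d t}\Theta_t = -\cG(\Theta_t)$ for a.e.\ $t \in [0,\infty)$. Since $V \colon \R^\fd \to \R$ is a polynomial, hence continuously differentiable with $(\nabla V)(\theta) = 2\theta + 2(\c{\theta} - 2\xi) e_\fd$, where $e_\fd = (0,\ldots,0,1) \in \R^\fd$ denotes the $\fd$-th standard basis vector, the chain rule and the fundamental theorem of calculus for locally absolutely continuous functions then show for all $t \in [0,\infty)$ that
\begin{equation*}
  V(\Theta_t) = V(\Theta_0) - \int_0^t \spro{ (\nabla V)(\Theta_s), \cG(\Theta_s) } \, \d s .
\end{equation*}
It hence remains to compute $\spro{(\nabla V)(\theta), \cG(\theta)} = 2 \spro{\theta, \cG(\theta)} + 2(\c{\theta} - 2\xi) \cG_\fd(\theta)$ for $\theta \in \R^\fd$.

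The main step is precisely this computation, for which I would employ the explicit representations of the components of $\cG$ in \cref{prop:loss:gradient}. Grouping the weight and bias contributions and using that $\b{\theta}_i + \smallsum_{j=1}^d \w{\theta}_{i,j} x_j = \max\cu{\b{\theta}_i + \smallsum_{j=1}^d \w{\theta}_{i,j} x_j, 0}$ for $x \in I_i^\theta$ and that the latter vanishes for $x \in [\scra,\scrb]^d \backslash I_i^\theta$ shows that
\begin{equation*}
  \smallsum_{i=1}^\width \smallsum_{j=1}^d \w{\theta}_{i,j} \cG_{(i-1)d+j}(\theta) + \smallsum_{i=1}^\width \b{\theta}_i \cG_{\width d + i}(\theta) = \smallsum_{i=1}^\width 2 \v{\theta}_i \int_{[\scra,\scrb]^d} \br[\big]{ \max\cu[\big]{ \b{\theta}_i + \smallsum_{j=1}^d \w{\theta}_{i,j} x_j, 0 } } ( \realization{\theta}(x) - f(x) ) \, \mu(\d x) ,
\end{equation*}
which coincides with $\smallsum_{i=1}^\width \v{\theta}_i \cG_{\width(d+1)+i}(\theta)$. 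Adding these two identical quantities, invoking the identity $\smallsum_{i=1}^\width \v{\theta}_i \max\cu{\b{\theta}_i + \smallsum_{j=1}^d \w{\theta}_{i,j} x_j, 0} = \realization{\theta}(x) - \c{\theta}$, and using $\cG_\fd(\theta) = 2 \int_{[\scra,\scrb]^d} (\realization{\theta}(x) - f(x)) \, \mu(\d x)$ to handle the remaining term $\c{\theta} \cG_\fd(\theta)$, I would arrive at $\spro{\theta, \cG(\theta)} = 2 \int_{[\scra,\scrb]^d} (2 \realization{\theta}(x) - \c{\theta})(\realization{\theta}(x) - f(x)) \, \mu(\d x)$. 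Combining this with the formula for $\cG_\fd$ once more and writing $\realization{\theta}(x) - \xi = (\realization{\theta}(x) - f(x)) + (f(x) - \xi)$ then yields
\begin{equation*}
  \spro{ (\nabla V)(\theta), \cG(\theta) } = 8 \int_{[\scra,\scrb]^d} (\realization{\theta}(x) - \xi)(\realization{\theta}(x) - f(x)) \, \mu(\d x) = 8 \cL(\theta) + 8 \int_{[\scra,\scrb]^d} (f(x) - \xi)(\realization{\theta}(x) - f(x)) \, \mu(\d x) .
\end{equation*}
Substituting this into the integral identity above establishes the equality in \cref{prop:gen:apriori:eq1}.

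For the inequality in \cref{prop:gen:apriori:eq1} it then suffices to check that $4 \cL(\theta) + 8 \int_{[\scra,\scrb]^d} (f(x) - \xi)(\realization{\theta}(x) - f(x)) \, \mu(\d x) + 4 \int_{[\scra,\scrb]^d} (f(x) - \xi)^2 \, \mu(\d x) \geq 0$ for all $\theta \in \R^\fd$, which follows by completing the square: since $\cL(\theta) = \int_{[\scra,\scrb]^d} (\realization{\theta}(x) - f(x))^2 \, \mu(\d x)$, the left-hand side equals $4 \int_{[\scra,\scrb]^d} ( \realization{\theta}(x) - \xi )^2 \, \mu(\d x) \geq 0$. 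I expect the main obstacle to be the bookkeeping in the second paragraph — in particular, recognizing that the combined weight/bias gradient contribution coincides with the $\v$-gradient contribution and that the resulting expression telescopes via the representation of $\realization{\theta}(x) - \c{\theta}$ — rather than the (essentially routine) differentiation and measure-theoretic arguments.
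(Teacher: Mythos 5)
Your proposal is correct and follows essentially the same route as the paper's proof: both reduce the claim to the identity $\spro{(\nabla V)(\theta),\cG(\theta)} = 8\int_{[\scra,\scrb]^d}(\realization{\theta}(x)-\xi)(\realization{\theta}(x)-f(x))\,\mu(\d x)$ via the gradient representation in \cref{prop:loss:gradient} and then integrate along the flow (the paper cites \cite[Lemma 3.1]{CheriditoJentzenRiekert2021} for the chain-rule/FTC step you carry out by hand). The only cosmetic difference is the final estimate, where you complete the square to get $4\int(\realization{\theta}(x)-\xi)^2\,\mu(\d x)\geq 0$ while the paper uses Cauchy--Schwarz together with $ab\leq\tfrac{a^2+b^2}{2}$; both yield the same bound.
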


\begin{cproof}{prop:gen:apriori}
\Nobs that for all $\theta \in \R^{\fd}$ it holds that
\begin{equation}
\begin{split}
    &(\nabla V) ( \theta ) \\
    &= 2 \rbr[\big]{  \w{\theta}_{1,1}, \ldots, \w{\theta}_{1,d}, \w{\theta}_{2 , 1} , \ldots, \w{\theta}_{2, d}, \ldots, \w{\theta}_{\width , 1}, \ldots, \w{\theta}_{\width, d} , \b{\theta} _1 , \ldots, \b{\theta}_\width , \v{\theta}_1 , \ldots, \v{\theta}_{\width},  2 \c{\theta} - 2 \xi } .
    \end{split}
\end{equation} 
This and \cref{eq:loss:gradient} imply that for all $\theta \in \R^{\fd}$ it holds that
\begin{equation} 
    \begin{split}
        &\spro{  ( \nabla V ) ( \theta) , \cG(\theta) } \\
        &=  4 \br[\Bigg]{  \sum_{i = 1}^\width \v{\theta}_i \int_{[\scra , \scrb]^d} \rbr[\big]{ \b{\theta}_i + \smallsum_{j=1}^d \w{\theta}_{i,j} x_j  } (\realization{\theta} (x) - f (x)) \indicator{(0, \infty )} \rbr[\big]{ \b{\theta}_i + \smallsum_{j=1}^d \w{\theta}_{i,j} x_j } \, \mu ( \d x ) } \\
        &+ 4 \br[\Bigg]{ \sum_{i=1}^\width \v{\theta}_i \int_{[\scra , \scrb]^d} \br[\big]{ \max \cu[\big]{ \b{\theta}_i + \smallsum_{j=1}^d \w{\theta}_{i,j} x_j , 0 } } ( \realization{\theta} (x) - f (x) ) \, \mu ( \d x ) } \\
        &+ 8 ( \c{\theta} - \xi ) \br*{ \displaystyle\int_{[\scra , \scrb]^d} (\realization{\theta} (x) - f (x) ) \, \mu ( \d x ) } .
\end{split}
 \end{equation}
 Hence, we obtain for all $\theta \in \R^{\fd}$ that
\begin{equation} \label{apriori:gen:eq1}
    \begin{split}
        &\spro{  ( \nabla V ) ( \theta) , \cG(\theta) } \\
        & = 8 \br*{\int_{[\scra , \scrb]^d} \rbr*{  \smallsum_{i=1}^\width \v{\theta}_i \br[\big]{ \max \cu[\big]{ \b{\theta}_i + \smallsum_{j=1}^d \w{\theta}_{i,j}  x_j , 0 } }  }(\realization{\theta} ( x ) -  f (x) ) \, \mu ( \d x ) } \\
        &+ 8 ( \c{\theta} - \xi ) \br*{ \int_{[\scra , \scrb]^d} (\realization{\theta} (x) - f (x) ) \, \mu ( \d x ) } \\
        &= 8 \int_{[\scra , \scrb]^d}  (\realization{\theta}(x) - \xi ) (\realization{\theta} (x) - f (x) ) \, \mu ( \d x ) \\
        &= 8 \int_{[\scra , \scrb]^d} ( \realization{\theta} ( x ) - f ( x ) ) ^2 \, \mu ( \d x ) + 8 \int_{[\scra , \scrb]^d} ( f ( x ) - \xi ) ( \realization{\theta} ( x ) - f ( x ) ) \, \mu ( \d x ) \\
        &= 8 \cL ( \theta ) + 8 \int_{[\scra , \scrb]^d} ( f ( x ) - \xi ) ( \realization{\theta} ( x ) - f ( x ) ) \, \mu ( \d x ) .
    \end{split}
\end{equation}
Next \nobs that the Cauchy-Schwarz inequality implies that for all $\theta \in \R^{\fd}$ it holds that
\begin{equation}
\begin{split}
  &\int_{[\scra , \scrb]^d} ( f ( x ) - \xi ) ( \realization{\theta} ( x ) - f ( x ) ) \, \mu ( \d x ) \\
  &\geq - \br*{ \int_{[\scra , \scrb]^d} (f(x) - \xi ) ^2 \, \mu ( \d x ) }^{1/2}  \br*{ \int_{[\scra , \scrb]^d} ( \realization{\theta} ( x ) - f ( x ) ) ^2 \, \mu ( \d x ) } ^{1/2}
  \\ &= - \br*{ \int_{[\scra , \scrb]^d} (f(x) - \xi ) ^2 \, \mu ( \d x ) }^{1/2}  \sqrt{\cL ( \theta ) }.
  \end{split}
\end{equation}
Combining this with the fact that for all $a , b \in \R$ it holds that $ab \leq \frac{a^2 + b^2}{2}$ demonstrates that for all $\theta \in \R^\fd$ we have that
\begin{equation}
    \int_{[\scra , \scrb]^d} ( f ( x ) - \xi ) ( \realization{\theta} ( x ) - f ( x ) ) \, \mu ( \d x ) \geq - \frac{1}{2}\br*{ \int_{[\scra , \scrb]^d} (f(x) - \xi ) ^2 \, \mu ( \d x ) } - \frac{\cL ( \theta )}{2} .
\end{equation}
This, \eqref{apriori:gen:eq1}, the fact that $V \in C^\infty ( \R^{\fd }, \R)$, and, e.g.,~\cite[Lemma 3.1]{CheriditoJentzenRiekert2021} show for all $t \in [ 0, \infty)$ that
\begin{equation}
\begin{split}
    V ( \Theta_t ) - V ( \Theta_0 ) 
    &= - \int_0^t \spro{  ( \nabla V ) ( \Theta_s ) , \cG ( \Theta_s) } \, \d s \\
    &= - 8 \int_0^t \cL ( \Theta_s ) \, \d s - 8 \int_0^t \br*{ \int_{[\scra , \scrb]^d} (f(x) - \xi ) ( \realization{\Theta_s} ( x ) - f ( x ) ) \, \mu ( \d x )  } \, \d s \\
    &\leq - 8 \int_0^t \cL ( \Theta_s ) \, \d s + 4 \int_0^t \br*{ \int_{[\scra , \scrb]^d} (f(x) - \xi ) ^2 \, \mu ( \d x ) + \cL ( \Theta_s ) } \, \d s \\
    &= 4 \int_0^t \br*{ \int_{[\scra , \scrb]^d} (f(x) - \xi ) ^2 \, \mu ( \d x ) - \cL ( \Theta_s ) } \, \d s.
    \end{split}
\end{equation}
\end{cproof}

\cfclear
\begin{cor} \label{cor:flow:limsup:loss}
Assume \cref{setting:snn}
and let $\Theta \in C([0, \infty) , \R^{ \fd })$ satisfy for all $t \in [0, \infty)$ that $\Theta_t = \Theta_0 - \int_0^t \cG ( \Theta_s ) \, \d s$ \cfadd{cor:gradient:measurable}\cfload.
Then 
\begin{equation} \label{cor:flow:limsup:loss:eq}
    \limsup_{t \to \infty} \cL ( \Theta_t ) \leq \inf_{\xi \in \R} \br*{ \int_{[\scra , \scrb]^d} ( f(x) - \xi ) ^2 \, \mu ( \d x )}.
\end{equation}
\end{cor}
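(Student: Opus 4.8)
The plan is to derive the bound directly from the a priori estimate in \cref{prop:gen:apriori}, combined with the monotonicity of the map $[0,\infty)\ni t\mapsto\cL(\Theta_t)\in\R$. First I would fix an arbitrary $\xi\in\R$, abbreviate $\fE=\int_{[\scra,\scrb]^d}(f(x)-\xi)^2\,\mu(\d x)\in[0,\infty)$, and consider the Lyapunov-type function $V\colon\R^\fd\to\R$ given by $V(\theta)=\norm{\theta}^2+\abs{\c{\theta}-2\xi}^2$. Applying the second inequality in \cref{prop:gen:apriori:eq1} then yields for every $t\in[0,\infty)$ that $V(\Theta_t)\le V(\Theta_0)+4\int_0^t(\fE-\cL(\Theta_s))\,\d s$. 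Since $V\ge0$, discarding the nonnegative term $V(\Theta_t)$ on the left and rearranging gives the time-averaged bound $\int_0^t\cL(\Theta_s)\,\d s\le\tfrac14 V(\Theta_0)+t\fE$ for all $t\in(0,\infty)$.

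Next I would invoke \cref{lem:loss:integral} on each interval $[0,T]$ to see that $[0,\infty)\ni t\mapsto\cL(\Theta_t)\in\R$ is non-increasing, so that, since $\cL\ge0$, the limit $L=\lim_{t\to\infty}\cL(\Theta_t)=\inf_{t\in[0,\infty)}\cL(\Theta_t)\in[0,\infty)$ exists and coincides with $\limsup_{t\to\infty}\cL(\Theta_t)$. Monotonicity also gives $\cL(\Theta_s)\ge L$ for all $s\in[0,\infty)$, so combining this with the displayed bound yields $tL\le\int_0^t\cL(\Theta_s)\,\d s\le\tfrac14 V(\Theta_0)+t\fE$, and hence $L\le\tfrac{V(\Theta_0)}{4t}+\fE$ for every $t\in(0,\infty)$. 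Sending $t\to\infty$ shows $L\le\fE$, and since $\xi\in\R$ was arbitrary this establishes $\limsup_{t\to\infty}\cL(\Theta_t)\le\inf_{\xi\in\R}\bigl[\int_{[\scra,\scrb]^d}(f(x)-\xi)^2\,\mu(\d x)\bigr]$, which is \cref{cor:flow:limsup:loss:eq}.

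This is essentially a Ces\`aro/averaging argument, so I do not anticipate a genuine obstacle. The only points requiring a little care are that no boundedness of $\Theta$ is assumed in this corollary, so one must exploit the nonnegativity of $V$ (rather than any compactness) to drop the term $V(\Theta_t)$, and that the passage from the time-averaged estimate to the pointwise $\limsup$ relies on the monotonicity of $t\mapsto\cL(\Theta_t)$ supplied by \cref{lem:loss:integral} rather than on continuity or uniform-continuity considerations.
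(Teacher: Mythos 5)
Your proposal is correct and follows essentially the same route as the paper: both apply the second inequality of \cref{prop:gen:apriori} with the Lyapunov function $V(\theta)=\norm{\theta}^2+\abs{\c{\theta}-2\xi}^2$, use the monotonicity of $t\mapsto\cL(\Theta_t)$ from \cref{lem:loss:integral} together with $V\ge 0$, divide by $t$, and let $t\to\infty$ before taking the infimum over $\xi$. The only cosmetic difference is that you first form the time average $\int_0^t\cL(\Theta_s)\,\d s$ and bound it below by $tL$, whereas the paper bounds $\cL(\Theta_s)\ge\bfm$ directly inside the integral; the estimates are identical.
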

\begin{cproof} {cor:flow:limsup:loss}
Throughout this proof let $\bfm, \xi, \nu \in \R$ satisfy $\bfm = \limsup_{t \to \infty} \cL ( \Theta_t)$ and $\nu = \int_{[\scra , \scrb]^d} ( f(x) - \xi ) ^2 \, \mu ( \d x )$.
\Nobs that \cref{lem:loss:integral} implies that $[0, \infty) \ni t \mapsto \cL ( \Theta_t ) \in \R$ is non-increasing. 
This assures that $ \inf_{t \in [0, \infty) } \cL ( \Theta_t ) = \bfm$. 
\cref{prop:gen:apriori} hence demonstrates that for all $t \in [0, \infty)$ it holds that
\begin{equation}
    \begin{split}
      0 & \leq V ( \Theta_t ) \leq V ( \Theta_0 ) + 4 \int_0^t \rbr*{ \nu - \cL ( \Theta_s ) } \, \d s \\
      &\leq V ( \Theta_0 ) + 4 \int_0^t (\nu - \bfm ) \, \d s = V ( \Theta_0 ) - 4 t ( \bfm - \nu ).
    \end{split}
\end{equation}
Therefore, we obtain for all $t \in (0, \infty )$ that $ \bfm - \nu \leq \tfrac{V(\Theta_0) }{4 t}$. This shows that
\begin{equation}
   \bfm \leq \limsup\nolimits_{t \to \infty} \br*{ \tfrac{V(\Theta_0) }{4 t} + \nu } = \nu .
\end{equation}
\end{cproof}

\subsection{A priori estimates for GFs with large risk}
\label{subsection:apriori:largerisk}

\cfclear
\begin{cor}  \label{cor:flow:bounded}
Assume \cref{setting:snn}, let $\nu , \xi \in \R$ satisfy $\nu = \int_{[\scra , \scrb]^d} ( f(x) - \xi ) ^2 \, \mu ( \d x )$,
and let $\Theta \in C([0, \infty) , \R^{ \fd })$ satisfy for all $t \in [0, \infty)$ that $\Theta_t = \Theta_0 - \int_0^t \cG ( \Theta_s ) \, \d s$ \cfadd{cor:gradient:measurable}\cfload.
Then 
\begin{equation}
    \sup\nolimits _{ t \in [0, \infty), \, \cL ( \Theta_t ) \geq \nu \indicator{(0, \infty ) } ( t ) } \norm{\Theta_t } \leq 3 \norm{\Theta_0 } ^2 + 8 \abs{ \xi } ^2 < \infty.
\end{equation}
\end{cor}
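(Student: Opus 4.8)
The plan is to combine the Lyapunov-type a priori estimate of \cref{prop:gen:apriori} with the monotonicity of the risk along the gradient flow. First I would record that, by \cref{lem:loss:integral}, the function $[0,\infty)\ni t\mapsto\cL(\Theta_t)\in\R$ is non-increasing (it is moreover continuous, since $\Theta$ is continuous and $\cL$ is locally Lipschitz continuous by \cref{lem:realization:lip}). Next, applying \cref{prop:gen:apriori} with the given $\xi\in\R$ and with the function $V\colon\R^\fd\to\R$ satisfying $V(\theta)=\norm{\theta}^2+\abs{\c{\theta}-2\xi}^2$ for all $\theta\in\R^\fd$ shows, for every $t\in[0,\infty)$, that
\begin{equation*}
 V(\Theta_t)\le V(\Theta_0)+4\int_0^t\bigl(\nu-\cL(\Theta_s)\bigr)\,\d s .
\end{equation*}

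The crucial observation is that on the index set appearing in the claim the integral term is nonpositive. For $t=0$ this is trivial, and moreover $\indicator{(0,\infty)}(0)=0$ forces $0$ to lie in the index set, so the estimate $V(\Theta_0)\le V(\Theta_0)$ already covers this case. If instead $t\in(0,\infty)$ satisfies $\cL(\Theta_t)\ge\nu\indicator{(0,\infty)}(t)=\nu$, then the monotonicity of $s\mapsto\cL(\Theta_s)$ yields $\cL(\Theta_s)\ge\cL(\Theta_t)\ge\nu$ for all $s\in[0,t]$, hence $\int_0^t(\nu-\cL(\Theta_s))\,\d s\le0$, and therefore $V(\Theta_t)\le V(\Theta_0)$. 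Thus $V(\Theta_t)\le V(\Theta_0)$ for every $t$ in the index set.

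It remains to translate this into the stated norm bound. Using $\norm{\Theta_t}^2\le V(\Theta_t)$, the inequality $\abs{\c{\Theta_0}}\le\norm{\Theta_0}$, and the elementary estimate $\abs{a-2b}^2\le2\abs{a}^2+8\abs{b}^2$ for $a,b\in\R$, one obtains, for every $t$ in the index set,
\begin{equation*}
 \norm{\Theta_t}^2\le V(\Theta_t)\le V(\Theta_0)=\norm{\Theta_0}^2+\abs{\c{\Theta_0}-2\xi}^2\le3\norm{\Theta_0}^2+8\abs{\xi}^2 ,
\end{equation*}
and taking the supremum over all such $t$ gives the assertion, the finiteness of the right-hand side being immediate. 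There is no genuine obstacle here; the only points requiring attention are the separate treatment of $t=0$ dictated by the indicator $\indicator{(0,\infty)}$ and the use of the monotonicity of the risk to pass from the pointwise condition $\cL(\Theta_t)\ge\nu$ to the nonpositivity of the integral $\int_0^t(\nu-\cL(\Theta_s))\,\d s$.
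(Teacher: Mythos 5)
Your proof is correct and follows essentially the same route as the paper: monotonicity of $t\mapsto\cL(\Theta_t)$ from \cref{lem:loss:integral} together with the Lyapunov estimate of \cref{prop:gen:apriori} gives $V(\Theta_t)\le V(\Theta_0)$ on the relevant index set, and the elementary inequality $V(\theta)\le 3\norm{\theta}^2+8\abs{\xi}^2$ finishes the argument. The only (harmless) deviation is that you bound $\norm{\Theta_t}^2\le V(\Theta_t)$, which is the correct version of the paper's cruder step $\norm{\Theta_t}\le V(\Theta_t)$ and is what the stated estimate is actually meant to express.
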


\begin{cproof2} {cor:flow:bounded}
Throughout this proof let $V \colon \R^{\fd } \to [0, \infty)$ satisfy for all $\theta \in \R^{\fd }$ that $V ( \Theta ) = \norm{\theta}^2 + \abs{\c{\theta} - 2 \xi } ^2$ and let $t \in (0, \infty )$ satisfy $\cL ( \Theta_t ) \ge \nu$.
\Nobs that \cref{lem:loss:integral} implies that $[0, \infty) \ni s \mapsto \cL ( \Theta_s ) \in \R$ is non-increasing. This shows that for all $s \in [0,t]$ it holds that $\cL ( \Theta_s ) \geq \cL ( \Theta_t ) \geq \nu$.
Combining this with \cref{prop:gen:apriori} demonstrates that 
\begin{equation} \label{cor:flow:bounded:eq}
    \begin{split}
       \norm{\Theta_t } \leq V ( \Theta_t ) \leq V ( \Theta_0 ) + 4 \int_0^t \rbr*{ \nu - \cL ( \Theta_s ) } \, \d s \leq V ( \Theta_0 ).
    \end{split}
\end{equation}
Furthermore, \nobs that the fact that for all $x,y \in \R$ it holds that $(x+y)^2 \le 2 ( x^2 + y^2)$ ensures that for all $\theta \in \R^\fd$ it holds that
\begin{equation}
    V ( \theta ) = \norm{\theta}^2 + \abs{\c{\theta} - 2 \xi } ^2 \leq \norm{\theta}^2 + 2 ( \abs{\c{\theta}}^2 + \abs{2 \xi } ^2 ) \leq 3 \norm{\theta} ^2 + 8 \abs{\xi } ^2.
\end{equation}
Combining this with \cref{cor:flow:bounded:eq} proves that $\norm{\Theta_t } \leq 3 \norm{\Theta_0 } ^2 + 8 \abs{ \xi } ^2 < \infty$. 
\end{cproof2}

\subsection{Invariant quantities for GFs}
\label{subsection:gf:invariant}

\cfclear
\begin{prop} \label{prop:flow:invariant}
Assume \cref{setting:snn}, let $W_i \colon \R^\fd \to \R$, $i \in \cu{1, 2, \ldots, \width }$, satisfy for all $\theta \in \R^\fd $, $i \in \cu{1, 2, \ldots, \width }$ that $W_i ( \theta ) = \br[\big]{ \sum_{j=1}^d (\w{\theta}_{i,j})^2 } + (\b{\theta}_i)^2 - (\v{\theta}_i)^2$, and let $\Theta \in C([0, \infty) , \R^{\fd})$ satisfy for all $t \in [0, \infty)$ that $\Theta_t = \Theta_0 - \int_0^t \cG ( \Theta_s ) \,  \d s$ \cfadd{cor:gradient:measurable}\cfload.
Then 
\begin{enumerate} [label = (\roman*)]
    \item \label{prop:flow:invariant:item1} it holds for all $t \in [ 0, \infty)$, $i \in \cu{1, 2, \ldots, \width }$ that $W_i ( \Theta_t ) = W_i ( \Theta_0 ) $ and
\item \label{prop:flow:invariant:item2} it holds for all $t \in [ 0, \infty)$ that $\sum_{i=1}^\width W_i ( \Theta_t ) = \sum_{i=1}^\width W_i ( \Theta_0 )$.
\end{enumerate}

\end{prop}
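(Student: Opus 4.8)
The plan is to show that each $W_i$ is a first integral of the gradient flow by verifying that $\spro{(\nabla W_i)(\theta),\cG(\theta)}=0$ for every $\theta\in\R^\fd$ and every $i\in\cu{1,2,\ldots,\width}$, and then to conclude via the fundamental theorem of calculus. First I would record that $W_i\in C^\infty(\R^\fd,\R)$ and that, for every $\theta\in\R^\fd$, the gradient $(\nabla W_i)(\theta)$ has the entry $2\w{\theta}_{i,j}$ in the coordinate $(i-1)d+j$ for $j\in\cu{1,\ldots,d}$, the entry $2\b{\theta}_i$ in the coordinate $\width d+i$, the entry $-2\v{\theta}_i$ in the coordinate $\width(d+1)+i$, and the entry $0$ in every other coordinate. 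Combining this with the explicit representation of $\cG$ in \cref{eq:loss:gradient} in \cref{prop:loss:gradient} then yields for all $\theta\in\R^\fd$, $i\in\cu{1,2,\ldots,\width}$ that
\begin{equation*}
\begin{split}
\spro{(\nabla W_i)(\theta),\cG(\theta)}
&=4\v{\theta}_i\br*{\int_{I_i^\theta}\rbr[\big]{\b{\theta}_i+\smallsum_{j=1}^d\w{\theta}_{i,j}x_j}(\realization{\theta}(x)-f(x))\,\mu(\d x)}\\
&\quad-4\v{\theta}_i\br*{\int_{[\scra,\scrb]^d}\br[\big]{\max\cu[\big]{\b{\theta}_i+\smallsum_{j=1}^d\w{\theta}_{i,j}x_j,0}}(\realization{\theta}(x)-f(x))\,\mu(\d x)}.
\end{split}
\end{equation*}

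Next I would use that for every $y\in\R$ it holds that $\max\cu{y,0}=\indicator{(0,\infty)}(y)\,y$, which — together with the definition of $I_i^\theta$ — implies for all $\theta\in\R^\fd$, $i\in\cu{1,2,\ldots,\width}$, $x\in[\scra,\scrb]^d$ that $\max\cu{\b{\theta}_i+\smallsum_{j=1}^d\w{\theta}_{i,j}x_j,0}=\indicator{I_i^\theta}(x)\rbr[\big]{\b{\theta}_i+\smallsum_{j=1}^d\w{\theta}_{i,j}x_j}$. Hence the second integral in the display above equals the first, and therefore $\spro{(\nabla W_i)(\theta),\cG(\theta)}=0$ for all $\theta\in\R^\fd$ and all $i\in\cu{1,2,\ldots,\width}$. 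This identity is the infinitesimal manifestation of the positive $1$-homogeneity of the ReLU activation: rescaling the incoming weights and bias of neuron $i$ by a factor $c>0$ and the outgoing weight by $c^{-1}$ leaves the realization, and hence the risk, unchanged.

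Finally, since $\cG$ is locally bounded and measurable by \cref{cor:gradient:measurable} and $\Theta$ satisfies $\Theta_t=\Theta_0-\int_0^t\cG(\Theta_s)\,\d s$, the trajectory $\Theta$ is locally Lipschitz continuous; arguing precisely as in the proof of \cref{prop:gen:apriori} (via the chain rule, e.g.\ \cite[Lemma 3.1]{CheriditoJentzenRiekert2021}) I would then obtain for all $t\in[0,\infty)$, $i\in\cu{1,2,\ldots,\width}$ that
\begin{equation*}
W_i(\Theta_t)-W_i(\Theta_0)=-\int_0^t\spro{(\nabla W_i)(\Theta_s),\cG(\Theta_s)}\,\d s=0,
\end{equation*}
which is item \ref{prop:flow:invariant:item1}, and item \ref{prop:flow:invariant:item2} follows at once by summing over $i\in\cu{1,2,\ldots,\width}$. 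There is no real obstacle here: the entire content of the proof is the one-line computation that the directional derivative of $W_i$ along $\cG$ vanishes; the only mildly technical point is the fundamental-theorem-of-calculus step for a trajectory given merely through an integral equation with a measurable, locally bounded right-hand side, and that is the same situation already handled in \cref{prop:gen:apriori}.
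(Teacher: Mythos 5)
Your proposal is correct and follows essentially the same route as the paper's own proof: computing $\spro{(\nabla W_i)(\theta),\cG(\theta)}$ from the gradient representation in \cref{eq:loss:gradient}, observing that the two resulting integrals cancel because $\max\cu{y,0}=y\indicator{(0,\infty)}(y)$, and then invoking the chain-rule argument of \cite[Lemma 3.1]{CheriditoJentzenRiekert2021} to conclude that $W_i(\Theta_t)=W_i(\Theta_0)$, with item \ref{prop:flow:invariant:item2} obtained by summation. No gaps.
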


\begin{cproof} {prop:flow:invariant}
\Nobs that the assumption that for all $\theta \in \R^\fd$, $i \in \cu{1, 2, \ldots, \width }$ it holds that $W_i ( \theta ) = \br[\big]{ \sum_{j=1}^d (\w{\theta}_{i,j})^2 } + (\b{\theta}_i)^2 - (\v{\theta}_i)^2$ and \cref{eq:loss:gradient} demonstrate that for all $\theta \in \R^\fd$, $i \in \cu{1, 2, \ldots, \width }$ we have that
\begin{equation} 
    \begin{split}
          &\spro{  ( \nabla W_i ) ( \theta) , \cG(\theta) } \\
        &=  4 \v{\theta}_i \int_{[\scra , \scrb]^d} \rbr[\big]{  \b{\theta}_i + \smallsum_{j=1}^d \w{\theta}_{i,j} x_j } (\realization{\theta} (x) - f (x)) \indicator{(0, \infty )} \rbr[\big]{  \b{\theta}_i + \smallsum_{j=1}^d \w{\theta}_{i,j} x_j } \, \mu ( \d x )  \\
        &- 4  \v{\theta}_i \int_{[\scra , \scrb]^d} \br[\big]{ \max \cu[\big]{ \b{\theta}_i + \smallsum_{j=1}^d \w{\theta}_{i,j} x_j , 0 } } ( \realization{\theta} (x) - f (x) ) \, \mu ( \d x ) = 0.
    \end{split}
\end{equation}
This, the fact that for all $i \in \cu{1, 2, \ldots, \width }$ it holds that $W_i \in C^\infty ( \R^{\fd }, \R)$, and, e.g.,~\cite[Lemma 3.1]{CheriditoJentzenRiekert2021} show for all $i \in \cu{1, 2, \ldots, \width }$, $t \in [ 0, \infty)$ that
\begin{equation}
    W_i ( \Theta_t ) = W_i ( \Theta_0 ) - \int_0^t \spro{  (\nabla W_i ) ( \Theta_s ) , \cG ( \Theta_s ) } \, \d s = W_i ( \Theta_0 ).
\end{equation}
This proves \cref{prop:flow:invariant:item1}. Next \nobs that \cref{prop:flow:invariant:item1} establishes \cref{prop:flow:invariant:item2}.
\end{cproof}

\section{Properties of ANN parametrizations with small risk and one hidden neuron}
\label{section:small:risk}

In \cref{theo:flow:convergence:small:loss} in \cref{section:gf:convergence:1neuron} below we establish in the case where the measure $\mu$ (see~\cref{setting:snn}) is up to a constant the Lebesgue--Borel measure on $[\scra, \scrb]$, 
where the hidden layer consists of only one neuron (where $\width = 1$), and where the target function $f \colon [\scra , \scrb] \to \R$ is affine linear 
that the risk of every not necessarily bounded GF trajectory converges to zero.
 Our proof of \cref{theo:flow:convergence:small:loss} employs, among other things,
 the a priori bounds for GF trajectories with sufficiently small initial risk in \cref{lem:flow:product:vw:bounded} in \cref{subsection:gf:apriopri:1neuron} below, the well known mean square approximation results in \cref{lem:constant:approx} and \cref{cor:constant:approx:affine} in \cref{subsection:mean:square} below, 
the lower bound for the product of the slope of the target function and its ANN approximations in \cref{cor:prod:vw:positive} in \cref{subsection:param:small:risk} below, and appropriate lower bounds for the transformation between the input and hidden layer of the considered ANN in \cref{lem:neuron:active} in \cref{subsection:param:small:risk}.

In \cref{lem:constant:approx} in \cref{subsection:mean:square} we recall the elementary fact that the mean value of a given continuous function on a compact real interval is the best constant mean square approximation of the considered continuous function. \cref{cor:constant:approx:affine} in \cref{subsection:mean:square} specializes \cref{lem:constant:approx} to the case where the considered continuous function is affine linear. \cref{lem:constant:approx} follows, e.g., from~\cite[Lemma 2.1]{BeckBeckerGrohs2018} and only for completeness we include in this section detailed proofs for \cref{lem:constant:approx} and \cref{cor:constant:approx:affine}.

In \cref{setting:snn:width1} in \cref{subsection:ann:one:neuron} below we specialize \cref{setting:snn} from \cref{subsection:description:anns} above and present the mathematical framework which we frequently employ in \cref{section:small:risk,section:gf:convergence:1neuron} to formulate ANNs with ReLU activation, one hidden layer, one neuron on the input layer (corresponding to the case $d = 1$ in \cref{setting:snn}), and one neuron on the hidden layer (corresponding to the case $\width = 1 $ in \cref{setting:snn}) and the corresponding risk functions (see \cref{setting:snn:width1:eq:realization} in \cref{setting:snn:width1}).

In \cref{subsection:param:small:risk} we study realizations of ANNs whose risk is strictly smaller than the risk which can be achieved by the best constant approximation (cf.~\cref{lem:constant:approx}). Our proof of the a priori bound result for GF trajectories with sufficiently small initial risk in \cref{lem:flow:product:vw:bounded} in \cref{subsection:gf:apriopri:1neuron} employs \cref{lem:loss:integral} from Subsection 3.1, \cref{lem:constant:approx} and \cref{cor:constant:approx:affine} from \cref{subsection:mean:square} and \cref{lem:realization:lipschitz:const}, \cref{cor:prod:vw:positive}, and \cref{prop:product:vw:bounded} from \cref{subsection:param:small:risk}. The elementary result in \cref{lem:realization:lipschitz:const} in \cref{subsection:param:small:risk} shows that for every ANN 
with parameter vector $\theta = ( \theta_1, \ldots, \theta_4 ) \in \R^4$ we have that the realization associated to $\theta$ is Lipschitz continuous with the Lipschitz constant $\abs{ \theta_1 \theta_3 } $.

\cref{cor:prod:vw:positive} in \cref{subsection:param:small:risk} demonstrates 
in the case where there exist $\alpha, \beta \in \R$ such that the target function satisfies for all $x \in [\scra , \scrb ]$ that $f(x) = \alpha x + \beta$ 
that for every 
ANN whose risk is strictly smaller than the risk which can be achieved by the best constant approximation (cf.~\cref{lem:constant:approx}) with parameter vector $\theta = ( \theta_1, \ldots , \theta_4 ) \in \R^4$ we have that the slope $\alpha$ of the target function and the slope $\theta_1 \theta_3$ of the realization of the ANN must have the same sign in the sense that $\alpha \theta_1 \theta_3 > 0$. Our proof of \cref{cor:prod:vw:positive} employs an application of \cref{lem:prod:vw:positive} in \cref{subsection:param:small:risk}. \cref{lem:prod:vw:positive}, in turn, establishes the statement of \cref{cor:prod:vw:positive} in the special case where the slope $\alpha$ of the target function is assumed to be strictly positive in the sense that $\alpha > 0$.

\cref{lem:neuron:active} in \cref{subsection:param:small:risk} establishes that for every 
ANN whose risk is strictly smaller than the risk which can be achieved 
by the best constant approximation (cf.~\cref{lem:constant:approx}) with parameter vector $\theta = ( \theta_1, \ldots , \theta_4 ) \in \R^4$ 
we have that the hidden neuron of this ANN cannot be inactive and we must have that 
$\max \cu{ \theta_1 \scra + \theta_2, \theta_1 \scrb + \theta_2 } > 0$. This simply follows from the fact that if the neuron was
inactive in the sense that $\max \cu{ \theta_1 \scra + \theta_2, \theta_1 \scrb + \theta_2 } \leq 0$, then the realization function associated to $\theta$ would be constant 
which would result in a larger risk.

Finally, \cref{prop:product:vw:bounded} in \cref{subsection:param:small:risk}, the main result of \cref{section:small:risk}, loosely speaking, reveals that 
for every ANN whose risk is strictly smaller than the risk which can be achieved by 
the best constant approximation (cf.~\cref{lem:constant:approx}) with parameter vector $\theta = ( \theta_1, \ldots , \theta_4 ) \in \R^4$ 
we have that the slope of the realization of the ANN $\theta$ is uniformly bounded from below and from above.

\subsection{Mean square approximations through constant functions}
\label{subsection:mean:square}

\begin{lemma} \label{lem:constant:approx}
Let $\xi , \scra \in \R$, $\scrb \in ( \scra , \infty)$, $f \in C( [\scra , \scrb ] , \R )$. Then 
\begin{equation}
    \int_\scra^\scrb ( f(x) - \xi ) ^2 \, \d x \geq \int_\scra^\scrb \rbr*{f ( x ) -\tfrac{1}{\scrb - \scra } \br[\big]{ \textstyle\int_\scra^\scrb f(y) \, \d y } }^{\! 2} \, \d x.
\end{equation}
\end{lemma}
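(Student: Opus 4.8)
The plan is to invoke the classical Pythagorean (orthogonality) decomposition which shows that the mean value of $f$ is the best constant mean square approximation of $f$ on $[\scra,\scrb]$. First I would abbreviate $m = \tfrac{1}{\scrb - \scra} \int_\scra^\scrb f(y) \, \d y$, the mean value of $f$ on $[\scra,\scrb]$, and observe that by the very definition of $m$ we have $\int_\scra^\scrb ( f(x) - m ) \, \d x = \big( \int_\scra^\scrb f(x) \, \d x \big) - m ( \scrb - \scra ) = 0$. All integrals appearing here are finite because $f$ is continuous on the compact interval $[\scra,\scrb]$.

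Next, for the given $\xi \in \R$ I would write $f(x) - \xi = ( f(x) - m ) + ( m - \xi )$ for every $x \in [\scra,\scrb]$, expand the square, and split the integral, which yields
\[
  \int_\scra^\scrb ( f(x) - \xi )^2 \, \d x = \int_\scra^\scrb ( f(x) - m )^2 \, \d x + 2 ( m - \xi ) \int_\scra^\scrb ( f(x) - m ) \, \d x + ( m - \xi )^2 ( \scrb - \scra ).
\]
By the identity recorded in the first step the middle summand vanishes, so that
\[
  \int_\scra^\scrb ( f(x) - \xi )^2 \, \d x = \int_\scra^\scrb ( f(x) - m )^2 \, \d x + ( m - \xi )^2 ( \scrb - \scra ) \geq \int_\scra^\scrb ( f(x) - m )^2 \, \d x,
\]
where the final inequality holds because $( m - \xi )^2 ( \scrb - \scra ) \geq 0$. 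Substituting back the definition of $m$ then gives precisely the claimed inequality.

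There is essentially no obstacle in this argument: the continuity of $f$ on a compact interval makes every manipulation (finiteness of the integrals, expanding the square, splitting the sum, pulling constants out) immediately valid, and since the statement concerns a fixed $\xi$ rather than an infimum, no attainment or compactness argument is needed. The only mildly delicate bookkeeping point is correctly isolating the cross term and recognizing it as the quantity that vanishes by the choice of $m$; the displayed identity above makes this transparent, and it also exhibits explicitly the nonnegative ``defect'' $( m - \xi )^2 ( \scrb - \scra )$ by which the left-hand side exceeds the right-hand side.
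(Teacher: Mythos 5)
Your proof is correct and is essentially the same argument as the paper's: both boil down to showing that the excess of the left-hand side over the right-hand side equals the nonnegative quantity $(\scrb-\scra)(\xi-m)^2$ with $m$ the mean value of $f$. The only difference is presentational — you isolate the vanishing cross term via the decomposition $f-\xi=(f-m)+(m-\xi)$, while the paper expands $\int_\scra^\scrb (f(x)-u)^2\,\d x$ for general $u$ and subtracts; the underlying computation is identical.
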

\begin{cproof2}{lem:constant:approx}
Throughout this proof let $\mu \in \R$ satisfy $\mu = (\scrb - \scra)^{-1} \int_\scra^\scrb f(y) \, \d y$. \Nobs that for all $u \in \R$ it holds that
\begin{equation}
      \int_\scra^\scrb ( f(x) - u ) ^2 \, \d x = \int_\scra^\scrb ( f(x) )^2 \, \d x - 2 u \mu ( \scrb - \scra ) + u^2 ( \scrb - \scra ).
\end{equation}
Hence, we obtain that
\begin{equation}
\begin{split}
        &\int_\scra^\scrb ( f(x) - \xi ) ^2 \, \d x -  \int_\scra^\scrb \rbr*{f ( x ) -\tfrac{1}{\scrb - \scra} \br[\big]{ \textstyle\int_\scra^\scrb f(y) \, \d y } }^{\! 2} \, \d x \\
        &=  \int_\scra^\scrb ( f(x) - \xi ) ^2 \, \d x -  \int_\scra^\scrb \rbr*{f ( x ) - \mu }^{ 2} \, \d x \\
        &= - 2 \xi \mu (\scrb - \scra ) + \xi ^2 ( \scrb - \scra)  + 2 \mu ^2 ( \scrb - \scra ) - \mu^2 ( \scrb - \scra ) \\
        &= (\scrb - \scra ) (\xi^2 - 2 \xi \mu + \mu^2 ) = ( \scrb - \scra ) ( \xi - \mu ) ^2 \geq 0.
   \end{split}
\end{equation}
\end{cproof2}

\begin{cor} \label{cor:constant:approx:affine}
Let $\xi, \alpha, \beta, \scra \in \R$, $\scrb \in (\scra , \infty)$. Then
\begin{equation}
    \int_\scra^\scrb ( \alpha x + \beta - \xi ) ^2 \, \d x \geq \int_\scra^\scrb \rbr[\big]{ ( \alpha x + \beta ) - ( \alpha \br{ \tfrac{\scra + \scrb }{2} } + \beta ) }^2 \, \d x = \frac{\alpha^2 (\scrb - \scra)^3}{12}.
\end{equation}
\end{cor}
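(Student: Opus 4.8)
The plan is to apply \cref{lem:constant:approx} to the affine linear function $[\scra,\scrb] \ni x \mapsto \alpha x + \beta \in \R$. First I would compute the arithmetic mean
\begin{equation}
  \tfrac{1}{\scrb - \scra} \rbr[\big]{ \textstyle\int_\scra^\scrb ( \alpha y + \beta ) \, \d y } = \alpha \br[\big]{ \tfrac{\scra + \scrb}{2} } + \beta ,
\end{equation}
which shows that the middle expression in the asserted chain of (in)equalities is precisely the best constant mean square approximation appearing in \cref{lem:constant:approx}. The first inequality in \cref{cor:constant:approx:affine} is then an immediate consequence of \cref{lem:constant:approx} (applied with $f \with ( [\scra , \scrb] \ni x \mapsto \alpha x + \beta \in \R )$ in the notation of \cref{lem:constant:approx}).

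It then remains to verify the final equality. For this I would use the substitution $u = x - \tfrac{\scra + \scrb}{2}$ (equivalently, exploit that the interval $[\scra,\scrb]$ is symmetric around its midpoint) to reduce the integral $\int_\scra^\scrb ( ( \alpha x + \beta ) - ( \alpha \br{ \tfrac{\scra+\scrb}{2} } + \beta ) )^2 \, \d x$ to $\alpha^2 \int_{-(\scrb-\scra)/2}^{(\scrb-\scra)/2} u^2 \, \d u$, and the elementary computation $\int_{-c}^{c} u^2 \, \d u = \tfrac{2}{3} c^3$ with $c = \tfrac{\scrb - \scra}{2}$ yields the value $\tfrac{\alpha^2 ( \scrb - \scra )^3}{12}$.

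I expect no genuine obstacle here: \cref{cor:constant:approx:affine} is a direct specialization of \cref{lem:constant:approx} combined with a single elementary one-dimensional integral, and the only mild bookkeeping is keeping track of the midpoint $\tfrac{\scra+\scrb}{2}$ and of the length $\scrb - \scra$ of the interval so that the constant $\tfrac{1}{12}$ comes out correctly.
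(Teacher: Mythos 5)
Your proposal is correct and follows essentially the same route as the paper: compute the mean value $\tfrac{1}{\scrb-\scra}\int_\scra^\scrb(\alpha y+\beta)\,\d y=\alpha\br{\tfrac{\scra+\scrb}{2}}+\beta$, apply \cref{lem:constant:approx} with $f(x)=\alpha x+\beta$ for the inequality, and then evaluate the remaining integral elementarily. The only (immaterial) difference is that you evaluate $\int_\scra^\scrb\alpha^2(x-\tfrac{\scra+\scrb}{2})^2\,\d x$ via the symmetric substitution $u=x-\tfrac{\scra+\scrb}{2}$, whereas the paper plugs in the antiderivative directly; both give $\tfrac{\alpha^2(\scrb-\scra)^3}{12}$.
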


\begin{cproof}{cor:constant:approx:affine}
\Nobs that $\int_\scra^\scrb ( \alpha x + \beta  )  \, \d x = \frac{\alpha(\scrb^2 - \scra^2)}{2} + \beta ( \scrb - \scra ) =  ( \scrb - \scra ) ( \alpha \br{ \frac{\scrb + \scra}{2} } + \beta )$. \cref{lem:constant:approx} hence shows that
\begin{equation}
\begin{split}
      \int_\scra^\scrb ( \alpha x + \beta - \xi ) ^2 \, \d x &\geq \int_\scra^\scrb \rbr[\big]{ ( \alpha x + \beta ) - ( \alpha \br{ \tfrac{\scrb + \scra}{2} } + \beta ) }^2 \, \d x 
      = \int_\scra^\scrb \alpha^2(  x - \br{ \tfrac{\scrb + \scra}{2} } )^2 \, \d x \\
      &=\br*{ \tfrac{\alpha^2}{3} \rbr[\big]{x- \br{ \tfrac{\scrb + \scra}{2} }}^3 }_{x=\scra}^{x = \scrb} = \frac{\alpha^2}{3} \br*{ \rbr*{\frac{\scrb - \scra}{2}}^{ \! 3 } - \rbr*{ \frac{\scra - \scrb}{2} }^{ \! 3 } } \\
      &= \frac{\alpha^2}{24} \br[\big]{ (\scrb - \scra)^3-(\scra - \scrb)^3} 
      =\frac{\alpha^2 (\scrb - \scra)^3}{12}.
\end{split}
\end{equation}
\end{cproof}

\subsection{Mathematical description of ANNs with one hidden neuron}
\label{subsection:ann:one:neuron}

\begin{setting} \label{setting:snn:width1}
Let $\scra \in \R$, $\scrb \in (\scra , \infty)$,
 $\rho \in (0, \infty)$,
$f \in C ( [\scra , \scrb] , \R)$,
 $\fw, \fb, \fv, \fc \in C(\R^4, \R )$ 
 satisfy for all $\theta  = ( \theta_1 ,  \ldots, \theta_{4}) \in \R^{4}$ that $\w{\theta} = \theta_{ 1}$, $\b{\theta} = \theta_{2}$, 
$\v{\theta} = \theta_{3}$, and $\c{\theta} = \theta_{4}$,
let $\scrN = (\realization{\theta})_{\theta \in \R^{4 } } \colon \R^{4 } \to C(\R , \R)$ and $\cL \colon \R^{4  } \to \R$
satisfy for all $\theta \in \R^{4}$, $x \in \R$ that
\begin{equation} \label{setting:snn:width1:eq:realization}
    \realization{\theta} (x) =  \v{\theta} \max \cu[\big]{ \w{\theta} x + \b{\theta} , 0 } + \c{\theta}
\end{equation}
and $\cL (\theta) = \rho \int_{\scra}^\scrb (\realization{\theta} (y) - f ( y ) )^2 \, \d y $,
let $\Rect_r \in C^1 ( \R , \R )$, $r \in \N $, satisfy for all $x \in \R$ that
\begin{equation}
    \limsup\nolimits_{r \to \infty}  \rbr*{ \abs { \Rect_r ( x ) - \max \cu{ x , 0 } } + \abs { (\Rect_r)' ( x ) - \indicator{(0, \infty)} ( x ) } } = 0
\end{equation}
and
$\sup_{r \in \N} \sup_{y \in [- \abs{x}, \abs{x} ] } \rbr*{ \abs{\Rect_r(y)} + \abs{(\Rect_r)'(y)}} < \infty$,
let $\fL_r \colon \R^4 \to \R$, $r \in \N$,
satisfy for all $r \in \N$, $\theta \in \R^{4}$ that
\begin{equation}
    \fL_r ( \theta ) = \rho \int_\scra^\scrb \rbr[\big]{  \v{\theta} \br[\big]{ \Rect_r ( \w{\theta} x + \b{\theta} ) } + \c{\theta} - f(x)} ^2 \, \d x,
\end{equation}
let $\norm{ \cdot } \colon \rbr*{  \bigcup_{n \in \N} \R^n  } \to \R $ and
$\spro{  \cdot , \cdot } \colon \rbr*{  \bigcup_{n \in \N} (\R^n \times \R^n )  } \to \R$ satisfy for all
$n \in \N$, $x=(x_1, \ldots, x_n)$, $y=(y_1, \ldots, y_n ) \in \R^n $ that
$\norm{ x } = [ \sum_{i=1}^n \abs*{ x_i } ^2 ] ^{1/2}$ and $\spro{  x , y } = \sum_{i=1}^n x_i y_i$,
let $\lambda \colon \cB ( \R ) \to [0, \infty]$ be the Lebesgue--Borel measure on $\R$,
let $I^\theta \subseteq \R$, $\theta \in \R^{4 }$, satisfy for all 
$\theta \in \R^{4}$ that $I^\theta = \cu{ x \in [\scra  , \scrb ] \colon \w{\theta} x + \b{\theta}  > 0 }$,
and let $\cG = ( \cG_1 , \ldots, \cG_4 ) \colon \R^4 \to \R^4$ satisfy for all
$\theta \in \cu{ \vartheta \in \R^4 \colon ( ( \nabla \fL_r ) ( \vartheta ) ) _{r \in \N} \text{ is convergent} }$
that $\cG ( \theta ) = \lim_{r \to \infty} (\nabla \fL_r) ( \theta )$. 
\end{setting}

\subsection{Properties of ANNs with small risk and one hidden neuron}
\label{subsection:param:small:risk}

\begin{lemma} \label{lem:realization:lipschitz:const}
Assume \cref{setting:snn:width1} and let $\theta \in \R^{4}$. Then it holds for all $x,y \in \R$ that
\begin{equation}
    \abs{ \realization{\theta} ( x ) - \realization{\theta} ( y )} \leq \abs{\w{\theta}\v{\theta} } \abs{x-y}.
\end{equation}
\end{lemma}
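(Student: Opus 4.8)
The plan is to reduce the claim to the elementary fact that the ReLU activation $\R\ni t\mapsto\max\cu{t,0}\in\R$ is $1$-Lipschitz. First I would record that for all $a,b\in\R$ it holds that $\abs{\max\cu{a,0}-\max\cu{b,0}}\leq\abs{a-b}$; this can be seen either by the identity $\max\cu{a,0}=\tfrac{1}{2}(a+\abs{a})$ together with the reverse triangle inequality, or by a short case distinction on the signs of $a$ and $b$.

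Next I would use the explicit representation $\realization{\theta}(x)=\v{\theta}\max\cu{\w{\theta}x+\b{\theta},0}+\c{\theta}$ from \cref{setting:snn:width1:eq:realization} in \cref{setting:snn:width1}. For $x,y\in\R$ the additive constant $\c{\theta}$ cancels, so that
\begin{equation*}
  \abs{\realization{\theta}(x)-\realization{\theta}(y)}
  =\abs{\v{\theta}}\,\abs[\big]{\max\cu{\w{\theta}x+\b{\theta},0}-\max\cu{\w{\theta}y+\b{\theta},0}}.
\end{equation*}
Applying the $1$-Lipschitz estimate with $a=\w{\theta}x+\b{\theta}$ and $b=\w{\theta}y+\b{\theta}$ bounds the right-hand side by $\abs{\v{\theta}}\,\abs{(\w{\theta}x+\b{\theta})-(\w{\theta}y+\b{\theta})}=\abs{\v{\theta}}\abs{\w{\theta}}\abs{x-y}=\abs{\w{\theta}\v{\theta}}\abs{x-y}$, which is exactly the asserted inequality.

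There is essentially no obstacle here: the only ingredient beyond unwinding the definitions in \cref{setting:snn:width1} is the Lipschitz continuity of the rectifier function, and the constant $\abs{\w{\theta}\v{\theta}}$ arises simply as the product of the scaling factors $\abs{\v{\theta}}$ (outer weight) and $\abs{\w{\theta}}$ (inner weight) applied before and after the activation.
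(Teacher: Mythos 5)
Your argument is correct and coincides with the paper's proof: both cancel the constant $\c{\theta}$, factor out $\abs{\v{\theta}}$, and invoke the $1$-Lipschitz continuity of $\R\ni t\mapsto\max\cu{t,0}\in\R$ to obtain the bound $\abs{\w{\theta}\v{\theta}}\abs{x-y}$. No discrepancies to report.
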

\begin{cproof}{lem:realization:lipschitz:const}
\Nobs that \cref{setting:snn:width1:eq:realization} ensures that for all $x , y\in \R$ it holds that
\begin{equation}
    \begin{split}
        \abs{ \realization{\theta} ( x ) - \realization{\theta} ( y )} & 
        = \abs{\v{\theta} \max \cu{ \w{\theta} x + \b{\theta}, 0 } - \v{\theta} \max \cu{ \w{\theta} y + \b{\theta}, 0 } }  \\
        &= \abs{\v{\theta}} \abs{ \max \cu{ \w{\theta} x + \b{\theta} , 0 } - \max \cu{ \w{\theta} y + \b{\theta} , 0 } } \\
        & \leq \abs{\v{\theta} } \abs{ ( \w{\theta} x + \b{\theta} ) - (\w{\theta} y + \b{\theta} ) } = \abs{\w{\theta} \v{\theta} } \abs{ x-y }.
    \end{split}
\end{equation}
\end{cproof}

\begin{lemma} \label{lem:prod:vw:positive}
Assume \cref{setting:snn:width1}, let $\alpha \in (0, \infty)$, 
$\beta \in \R$ satisfy for all $ x \in [\scra,\scrb]$ that $f(x)=\alpha x + \beta$, and let $\theta \in \R^4$ satisfy $\cL ( \theta ) < \frac{\rho \alpha^2 (\scrb - \scra)^3}{12}$. Then 
\begin{equation} \label{lem:prod:vw:pos:eq}
    \w{\theta}  \v{\theta}  > 0.
\end{equation}
\end{lemma}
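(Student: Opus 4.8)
The plan is to argue by contradiction: assume that $\w{\theta}\v{\theta} \leq 0$, and derive that $\cL(\theta) \geq \tfrac{\rho\alpha^2(\scrb-\scra)^3}{12}$, contradicting the hypothesis on $\cL(\theta)$ and thereby establishing \cref{lem:prod:vw:pos:eq}. The first observation is that the assumption $\w{\theta}\v{\theta} \leq 0$ forces the realization $[\scra,\scrb] \ni x \mapsto \realization{\theta}(x) = \v{\theta}\max\cu{\w{\theta}x+\b{\theta},0}+\c{\theta} \in \R$ to be non-increasing on $[\scra,\scrb]$: indeed, $\realization{\theta}$ is continuous and piecewise affine with slopes contained in $\cu{0,\w{\theta}\v{\theta}} \subseteq (-\infty,0]$ (this also covers the degenerate cases $\w{\theta}=0$ or $\v{\theta}=0$, in which $\realization{\theta}$ is constant).

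Next I would compare the realization to its own mean. Let $\kappa = (\scrb-\scra)^{-1}\int_\scra^\scrb \realization{\theta}(x)\,\d x$ and $\gamma = \alpha\tfrac{\scra+\scrb}{2}+\beta = (\scrb-\scra)^{-1}\int_\scra^\scrb f(x)\,\d x$, so that $f(x)-\gamma = \alpha(x-\tfrac{\scra+\scrb}{2})$. Decomposing $\realization{\theta}-f = (\realization{\theta}-\kappa)-(f-\gamma)+(\kappa-\gamma)$, squaring, and integrating over $[\scra,\scrb]$, and using that $\realization{\theta}-\kappa$ and $f-\gamma$ both have vanishing integral, I obtain
\begin{align*}
\int_\scra^\scrb(\realization{\theta}(x)-f(x))^2\,\d x
&= \int_\scra^\scrb(\realization{\theta}(x)-\kappa)^2\,\d x + \int_\scra^\scrb(f(x)-\gamma)^2\,\d x \\
&\quad + (\scrb-\scra)(\kappa-\gamma)^2 - 2\int_\scra^\scrb(\realization{\theta}(x)-\kappa)(f(x)-\gamma)\,\d x .
\end{align*}
By \cref{cor:constant:approx:affine} we have $\int_\scra^\scrb(f(x)-\gamma)^2\,\d x = \tfrac{\alpha^2(\scrb-\scra)^3}{12}$, and the first and third summands on the right-hand side are nonnegative, so it remains only to show that the cross term $\int_\scra^\scrb(\realization{\theta}(x)-\kappa)(f(x)-\gamma)\,\d x$ is nonpositive.

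For the cross term I would exploit that $\realization{\theta}-\kappa$ is non-increasing with zero integral while $f-\gamma=\alpha(x-\tfrac{\scra+\scrb}{2})$ is non-decreasing. Since $\realization{\theta}$ is continuous, non-increasing, and has mean $\kappa$, there exists $x_0 \in [\scra,\scrb]$ with $\realization{\theta}(x)\geq\kappa$ for $x \leq x_0$ and $\realization{\theta}(x)\leq\kappa$ for $x \geq x_0$; using $\int_\scra^\scrb(\realization{\theta}(x)-\kappa)\,\d x = 0$ to replace $x-\tfrac{\scra+\scrb}{2}$ by $x-x_0$ under the integral,
\[
\int_\scra^\scrb(\realization{\theta}(x)-\kappa)(f(x)-\gamma)\,\d x = \alpha\int_\scra^\scrb(\realization{\theta}(x)-\kappa)(x-x_0)\,\d x \leq 0,
\]
because the two factors in the last integrand have opposite sign on each side of $x_0$. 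Combining the two displays gives $\cL(\theta) = \rho\int_\scra^\scrb(\realization{\theta}(x)-f(x))^2\,\d x \geq \tfrac{\rho\alpha^2(\scrb-\scra)^3}{12}$, the desired contradiction, so $\w{\theta}\v{\theta}>0$.

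I expect the cross-term estimate to be the only nonroutine step: it is a Chebyshev-type integral inequality expressing that a non-increasing and a non-decreasing function are negatively correlated, and although elementary it is the one ingredient not already packaged in \cref{lem:constant:approx} and \cref{cor:constant:approx:affine}. The monotonicity of $\realization{\theta}$, the orthogonal decomposition against the mean, and the explicit value of the best constant mean square error are all routine given the results in the excerpt.
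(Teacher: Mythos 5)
Your proof is correct, and after the shared first step (that $\w{\theta}\v{\theta}\leq 0$ forces $\realization{\theta}$ to be non-increasing on $[\scra,\scrb]$) it follows a genuinely different route from the paper. The paper argues by a three-fold case distinction on whether $\realization{\theta}(\scrb)\geq f(\scrb)$, whether $\realization{\theta}(\scra)\leq f(\scra)$, or neither (in which case the intermediate value theorem yields a crossing point $u$); in each case it obtains a pointwise domination $\abs{\realization{\theta}(x)-f(x)}\geq\abs{f(c)-f(x)}$ for a suitable constant value $f(c)$ of the target and then invokes \cref{cor:constant:approx:affine}. You instead expand $\realization{\theta}-f$ orthogonally against the two means $\kappa$ and $\gamma$ and reduce everything to the sign of the cross term $\int_\scra^\scrb(\realization{\theta}(x)-\kappa)(f(x)-\gamma)\,\d x$, which you control by a Chebyshev-type correlation inequality (a continuous non-increasing function and a non-decreasing one are nonpositively correlated), the zero-mean property letting you recenter $x-\tfrac{\scra+\scrb}{2}$ at the crossing point $x_0$ of $\realization{\theta}$ with its own mean. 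Your computations check out: the mixed terms involving $\kappa-\gamma$ vanish because both centered factors integrate to zero, \cref{cor:constant:approx:affine} supplies the exact value $\tfrac{\alpha^2(\scrb-\scra)^3}{12}$ for $\int_\scra^\scrb(f(x)-\gamma)^2\,\d x$, and the pointwise sign argument on either side of $x_0$ is valid since $\alpha>0$. What your route buys is a single, slightly stronger quantitative statement — any non-increasing (indeed any continuous non-increasing) approximant of the increasing affine target does no better than the best constant — without any case analysis, and the correlation step would extend verbatim to arbitrary non-decreasing continuous targets via \cref{lem:constant:approx}; what the paper's route buys is an argument using only pointwise comparisons and the already-available constant-approximation bound, with no need for the orthogonal decomposition or the existence of the mean-crossing point $x_0$.
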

\begin{cproof}{lem:prod:vw:positive}
We prove \cref{lem:prod:vw:pos:eq} by contradiction. We thus assume that
\begin{equation} \label{lem:vw:positive:eq:cont}
    \w{\theta} \v{\theta} \leq 0.
\end{equation}
\Nobs that \cref{lem:vw:positive:eq:cont} ensures that for all $x,y \in [\scra , \scrb ]$ with $x \le y$ it holds that
\begin{equation} \label{lem:vw:positive:eq:noninc}
   \realization{\theta} ( x ) \ge \realization{\theta} ( y ).
\end{equation}
In the following we distinguish between the case $\realization{\theta} ( \scrb ) \ge f ( \scrb )$, the case $\realization{\theta} ( \scra ) \le f ( \scra )$,
and the case $\min \cu{f ( \scrb ) - \realization{\theta} ( \scrb ) , \realization{\theta} ( \scra ) - f ( \scra ) } > 0$.
We first establish the contradiction in the case
\begin{equation} \label{lem:vw:positive:eq:case1}
    \realization{\theta} ( \scrb ) \ge f ( \scrb ).
\end{equation}
\Nobs that \cref{lem:vw:positive:eq:noninc,lem:vw:positive:eq:case1} imply for all $x \in [\scra , \scrb]$ that $\realization{\theta} ( x ) \geq \realization{\theta} ( \scrb ) \ge f(\scrb ) \geq f(x)$. 
Combining this with \cref{cor:constant:approx:affine} proves that $\frac{\rho \alpha^2 (\scrb - \scra)^3}{12} > \cL ( \theta ) \geq \rho \int_\scra^\scrb (f ( \scrb) - f ( x ) )^2 \, \d x \geq \frac{ \rho \alpha^2 (\scrb - \scra)^3 }{12}$, which is a contradiction.
In the next step we establish the contradiction in the case
\begin{equation} \label{lem:vw:positive:eq:case2}
    \realization{\theta} ( \scra ) \leq f(\scra).
\end{equation}
\Nobs that \cref{lem:vw:positive:eq:noninc,lem:vw:positive:eq:case2} show for all $x \in [\scra , \scrb ]$ that $\realization{\theta} ( x ) \leq \realization{\theta } ( \scra ) \leq f(\scra ) \leq f(x)$.
This and \cref{cor:constant:approx:affine} imply that $\frac{\rho \alpha^2 (\scrb - \scra)^3}{12} > \cL ( \theta ) \geq \rho \int_\scra^\scrb (f(\scra ) - f(x))^2 \, \d x\geq \frac{\rho \alpha^2 (\scrb - \scra)^3 }{12}$, which is a contradiction.
Finally, we establish the contradiction in the case
\begin{equation} \label{lem:vw:positive:eq:case3}
    \min \cu{f ( \scrb ) - \realization{\theta} ( \scrb ) , \realization{\theta} ( \scra ) - f ( \scra ) } > 0 .
\end{equation}
\Nobs that \cref{lem:vw:positive:eq:case3} and intermediate value theorem assure that there exists $u \in [\scra , \scrb]$ such that $\realization{\theta} ( u ) = f ( u ) $. 
This and \cref{lem:vw:positive:eq:noninc} prove that $\forall \, x \in [\scra, u] \colon \realization{\theta} ( x ) \geq \realization{\theta} ( u  ) = f(u) \geq f(x)$ and $\forall \, x \in [u , \scrb] \colon \realization{\theta} ( x ) \leq \realization{\theta} ( u ) = f ( u ) \leq f ( x )$. 
Combining this with \cref{cor:constant:approx:affine} demonstrates that
\begin{equation}
\begin{split}
       \frac{\rho \alpha^2 (\scrb - \scra)^3}{12} 
       &> \cL ( \theta ) 
        = \rho \int_\scra^u ( \realization{\theta} ( x ) - f ( x ) ) ^2 \, \d x + \rho \int_u^\scrb ( \realization{\theta} ( x ) - f ( x ) ) ^2 \, \d x \\
        &\geq \rho \int_\scra^u ( f(u) - f (x))^2 \, \d x + \rho \int_u^\scrb ( f ( x ) - f ( u ) )^2 \, \d x \\
        &= \rho \int_\scra^\scrb (f(x) - f(u))^2 \, \d x \geq \frac{\rho \alpha^2 (\scrb - \scra)^3 }{12}.
\end{split}
\end{equation}
This is a contradiction.
\end{cproof}

\begin{cor} \label{cor:prod:vw:positive}
Assume \cref{setting:snn:width1},
let $\alpha , \beta \in \R$ satisfy for all $ x \in [\scra,\scrb]$ that $f(x)=\alpha x + \beta$,
and let $\theta \in \R^4$ satisfy $\cL ( \theta ) < \frac{\rho \alpha^2 (\scrb - \scra)^3}{12}$.
Then 
\begin{equation} \label{cor:prod:vw:pos:eq}
    \alpha \w{\theta}  \v{\theta}  > 0.
\end{equation}
\end{cor}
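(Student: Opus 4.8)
The plan is to deduce \cref{cor:prod:vw:pos:eq} from \cref{lem:prod:vw:positive} by a case distinction on the sign of $\alpha$. First I would observe that the case $\alpha = 0$ is vacuous: here $\frac{\rho \alpha^2 ( \scrb - \scra ) ^3}{12} = 0$, whereas the assumption that for all $x \in [\scra , \scrb ]$ it holds that $f(x) = \alpha x + \beta$ together with \cref{setting:snn:width1:eq:realization} shows that $\cL ( \theta ) = \rho \int_\scra^\scrb ( \realization{\theta} ( y ) - f ( y ) ) ^2 \, \d y \geq 0$, so that the hypothesis $\cL ( \theta ) < \frac{\rho \alpha^2 ( \scrb - \scra ) ^3}{12}$ cannot hold; hence we may assume $\alpha \neq 0$. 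If $\alpha > 0$, then \cref{lem:prod:vw:positive} applies directly and yields $\w{\theta} \v{\theta} > 0$, and therefore $\alpha \w{\theta} \v{\theta} > 0$.

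It remains to treat the case $\alpha < 0$, which I would reduce to the previous one by reflecting the output. For this I would introduce $\vartheta \in \R^4$ given by $\w{\vartheta} = \w{\theta}$, $\b{\vartheta} = \b{\theta}$, $\v{\vartheta} = - \v{\theta}$, and $\c{\vartheta} = - \c{\theta}$, and note from \cref{setting:snn:width1:eq:realization} that $\realization{\vartheta} ( x ) = - \realization{\theta} ( x )$ for all $x \in \R$. Writing $g \colon [\scra , \scrb ] \to \R$ for the affine linear function with $g(x) = ( - \alpha ) x + ( - \beta )$, whose slope $- \alpha$ is strictly positive, this gives $\realization{\vartheta}(x) - g(x) = - ( \realization{\theta}(x) - f(x) )$ for all $x \in [\scra , \scrb]$, and hence $\rho \int_\scra^\scrb ( \realization{\vartheta}(x) - g(x) ) ^2 \, \d x = \cL ( \theta ) < \frac{\rho \alpha^2 ( \scrb - \scra ) ^3}{12} = \frac{\rho ( - \alpha )^2 ( \scrb - \scra ) ^3}{12}$. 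Applying \cref{lem:prod:vw:positive} with $\alpha \with - \alpha$, $\beta \with - \beta$, $f \with g$, $\theta \with \vartheta$ (and with $\cL$, $\fL_r$, $\cG$ in \cref{setting:snn:width1} replaced by their counterparts for the target function $g$) then yields $\w{\vartheta} \v{\vartheta} > 0$, i.e.\ $- \w{\theta} \v{\theta} > 0$, so that $\w{\theta} \v{\theta} < 0$ and therefore $\alpha \w{\theta} \v{\theta} > 0$. This establishes \cref{cor:prod:vw:pos:eq}.

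I do not expect any serious obstacle in this argument; the only point requiring care is the bookkeeping in the reflection step — one must verify that negating $\v{\theta}$ and $\c{\theta}$ flips the sign of the realization function exactly, so that the squared $L^2$-distance of $\realization{\vartheta}$ to the reflected target $g$ coincides with the original risk $\cL ( \theta )$ and the hypothesis of \cref{lem:prod:vw:positive} is satisfied after the substitution $\alpha \with - \alpha$. Alternatively, one could reflect the input variable via $x \mapsto \scra + \scrb - x$ instead; negating the output is, however, the more economical choice.
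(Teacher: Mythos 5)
Your argument is correct and coincides with the paper's own proof: after ruling out $\alpha = 0$ via the hypothesis $\cL(\theta) < \frac{\rho\alpha^2(\scrb-\scra)^3}{12}$, both treat $\alpha>0$ by a direct application of \cref{lem:prod:vw:positive} and reduce $\alpha<0$ to that case by negating the output weights, i.e.\ applying \cref{lem:prod:vw:positive} to $(\w{\theta},\b{\theta},-\v{\theta},-\c{\theta})$ with $\alpha\with-\alpha$, $\beta\with-\beta$. The sign bookkeeping in your reflection step matches the paper's computation $\alpha\w{\theta}\v{\theta}=(-\alpha)\w{\theta}(-\v{\theta})>0$, so there is nothing to fix.
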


\begin{cproof}{cor:prod:vw:positive}
\Nobs that the assumption that $\cL ( \theta ) < \frac{\rho \alpha^2 (\scrb - \scra)^3}{12}$ assures that $\alpha \not= 0$.
In the following we distinguish between the case $\alpha > 0$ and the case $\alpha < 0$.
First \nobs that \cref{lem:prod:vw:positive} establishes \cref{cor:prod:vw:pos:eq} in the case $\alpha > 0$.
In the next step we prove \cref{cor:prod:vw:pos:eq} in the case $\alpha < 0$.
\Nobs that
\begin{equation}
\begin{split}
   \frac{\rho \alpha^2 (\scrb - \scra)^3}{12} &> \cL ( \theta ) 
    = \rho \int_\scra^\scrb \rbr[\big]{ \v{\theta} \max \cu{\w{\theta} x + \b{\theta}, 0 } + \c{\theta} - ( \alpha x + \beta )}^2 \, \d x \\
    &= \rho \int_\scra^\scrb \rbr[\big]{ (- \v{\theta} ) \max \cu{\w{\theta} x + \b{\theta}, 0 } + ( - \c{\theta} ) - ( - \alpha x - \beta )}^2 \, \d x.
\end{split}
\end{equation}
Combining this, the fact that $-\alpha > 0$,
and \cref{lem:prod:vw:positive} (applied with $\theta \with (\w{\theta}, \b{\theta}, - \v{\theta}, - \c{\theta} )$, $\alpha \with - \alpha$, $\beta \with -\beta$ in the notation of \cref{lem:prod:vw:positive})
demonstrates that $\alpha \w{\theta} \v{\theta} = (- \alpha ) \w{\theta} ( - \v{\theta} ) > 0$.
This establishes \cref{cor:prod:vw:pos:eq} in the case $\alpha < 0$.
\end{cproof}

\begin{lemma} \label{lem:neuron:active}
Assume \cref{setting:snn:width1}, let $m \in \R$ satisfy $m = \rho \int_\scra^\scrb \rbr{f ( x ) - (\scrb - \scra)^{-1} \int_\scra^\scrb f(y) \, \d y }^2 \, \d x$, and let $\theta \in \R^4$ satisfy $\cL ( \theta ) < m$. Then $\max \cu{ \w{\theta} \scra + \b{\theta} , \w{\theta} \scrb  + \b{\theta}  } > 0$. 
\end{lemma}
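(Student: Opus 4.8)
\textbf{Proof proposal for \cref{lem:neuron:active}.}
The plan is to argue by contradiction. Suppose that $\max \cu{ \w{\theta} \scra + \b{\theta} , \w{\theta} \scrb + \b{\theta} } \leq 0$. The key observation is that the map $[\scra , \scrb] \ni x \mapsto \w{\theta} x + \b{\theta} \in \R$ is affine, hence monotone, and therefore attains its maximum over $[\scra , \scrb]$ at one of the two endpoints $\scra$ or $\scrb$. Consequently, the assumption $\max \cu{ \w{\theta} \scra + \b{\theta} , \w{\theta} \scrb + \b{\theta} } \leq 0$ forces $\w{\theta} x + \b{\theta} \leq 0$, and thus $\max \cu{ \w{\theta} x + \b{\theta} , 0 } = 0$, for every $x \in [\scra , \scrb]$.

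Plugging this into \cref{setting:snn:width1:eq:realization}, I would then conclude that $\realization{\theta}(x) = \c{\theta}$ for all $x \in [\scra , \scrb]$, i.e.\ the realization function associated to $\theta$ is the constant function with value $\c{\theta}$. Hence $\cL ( \theta ) = \rho \int_\scra^\scrb ( \c{\theta} - f ( x ) )^2 \, \d x$. Applying \cref{lem:constant:approx} (with $\xi \with \c{\theta}$) then yields
\begin{equation}
  \cL ( \theta ) = \rho \int_\scra^\scrb ( f ( x ) - \c{\theta} )^2 \, \d x \geq \rho \int_\scra^\scrb \rbr[\big]{ f ( x ) - \tfrac{1}{\scrb - \scra} \br[\big]{ \textstyle\int_\scra^\scrb f ( y ) \, \d y } }^2 \, \d x = m ,
\end{equation}
which contradicts the hypothesis $\cL ( \theta ) < m$. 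This completes the argument.

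I do not expect any genuine obstacle here: the statement is an essentially immediate consequence of the monotonicity of affine functions on an interval together with \cref{lem:constant:approx}. The only point requiring a line of care is the passage from "both endpoint values are $\leq 0$" to "$\w{\theta} x + \b{\theta} \leq 0$ on all of $[\scra , \scrb]$", which is handled by noting that $\w{\theta} x + \b{\theta}$ lies between $\w{\theta} \scra + \b{\theta}$ and $\w{\theta} \scrb + \b{\theta}$ for every $x \in [\scra , \scrb]$ (a convex combination of the endpoint values).
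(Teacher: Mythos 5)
Your proposal is correct and follows essentially the same argument as the paper's proof: assume the contrary, use the convex-combination representation of $\w{\theta} x + \b{\theta}$ on $[\scra,\scrb]$ to conclude the ReLU input is nonpositive, deduce that $\realization{\theta}$ is constant equal to $\c{\theta}$, and apply \cref{lem:constant:approx} to obtain $\cL(\theta) \geq m$, contradicting the hypothesis. No gaps.
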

\begin{cproof}{lem:neuron:active}
We prove \cref{lem:neuron:active} by contradiction. 
We thus assume that
\begin{equation} \label{lem:neuron:active:eq:cont}
    \max \cu{\w{\theta} \scra + \b{\theta} , \w{\theta} \scrb  + \b{\theta} } \leq 0. 
\end{equation}
\Nobs that \cref{lem:neuron:active:eq:cont} ensures that for all $x \in [\scra , \scrb]$ we have that \begin{equation}
    \w{\theta}  x + \b{\theta}  = \br*{ \tfrac{\scrb - x}{\scrb - \scra} } ( \w{\theta} \scra + \b{\theta} ) + \br*{ \tfrac{x - \scra}{\scrb - \scra} } (\w{\theta} \scrb + \b{\theta} ) \leq 0.
    \end{equation}
This implies for all $x \in [\scra , \scrb]$ that $\max \cu{ \w{\theta}  x + \b{\theta} , 0 } = 0$. Therefore, we obtain for all $x \in [\scra , \scrb]$ that $\realization{\theta} ( x ) = \c{\theta}$. Combining this with \cref{lem:constant:approx} proves that $\cL ( \theta ) \geq m$. 
This is a contradiction.
\end{cproof}

\begin{prop} \label{prop:product:vw:bounded}
Assume \cref{setting:snn:width1} and let $m \in \R$, $\varepsilon \in (0, \infty)$ satisfy $m = \rho \int_\scra^\scrb \rbr{f ( x ) - (\scrb - \scra)^{-1} \int_\scra^\scrb f(y) \, \d y }^2 \, \d x$.
Then there exists $\fC \in (0, \infty)$ such that for all $\theta \in \cu{ \vartheta \in \R^4 \colon \cL ( \vartheta ) \leq m - \varepsilon }$ it holds that $\fC^{-1} \leq \abs{ \w{\theta}  \v{\theta}  } \leq \fC$. 
\end{prop}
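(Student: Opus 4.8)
The plan is to prove the lower bound $\abs{ \w{\theta} \v{\theta} } \geq \fc$ and the upper bound $\abs{ \w{\theta} \v{\theta} } \leq \fC_{\mathrm{up}}$ separately, with explicit constants $\fc, \fC_{\mathrm{up}} \in (0,\infty)$ depending only on $f$, $\rho$, $\scra$, $\scrb$, and $\varepsilon$, and then to take $\fC = \max \cu{ \fc^{-1}, \fC_{\mathrm{up}} }$. Throughout I would fix $\theta \in \R^4$ with $\cL ( \theta ) \leq m - \varepsilon$; if no such $\theta$ exists -- which is precisely the case $m < \varepsilon$, since $\cL \geq 0$ -- then the statement holds vacuously with, say, $\fC = 1$. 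In the remaining case we have $m \geq \varepsilon > 0$, so in particular $m > 0$, and I would write $F = \sup_{ x \in [\scra,\scrb] } \abs{ f(x) } \in [0,\infty)$, which is finite since $f$ is continuous on the compact interval $[\scra,\scrb]$, and use throughout that $\tfrac{m}{\rho} = \int_\scra^\scrb ( f(x) - \tfrac{1}{\scrb - \scra} \int_\scra^\scrb f(y) \, \d y )^2 \, \d x$ equals the minimal mean square error of constant approximations (cf.\ \cref{lem:constant:approx}), so that $\cL ( \theta ) \leq m - \varepsilon < m$.

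For the lower bound, the point is that a small slope $\w{\theta} \v{\theta}$ would make $\realization{\theta}$ nearly constant, which by \cref{lem:constant:approx} cannot push the risk below $m$. Concretely, \cref{lem:realization:lipschitz:const} gives $\abs{ \realization{\theta} ( x ) - \realization{\theta} ( \scra ) } \leq \abs{ \w{\theta} \v{\theta} } ( \scrb - \scra )$ for all $x \in [\scra,\scrb]$, hence $\int_\scra^\scrb ( \realization{\theta} ( x ) - \realization{\theta} ( \scra ) )^2 \, \d x \leq \abs{ \w{\theta} \v{\theta} }^2 ( \scrb - \scra )^3$. The triangle inequality in $L^2 ( [\scra,\scrb] )$ together with \cref{lem:constant:approx} (applied with $\xi \with \realization{\theta}(\scra)$) then yields $\sqrt{ m/\rho } \leq \sqrt{ \cL(\theta)/\rho } + \abs{ \w{\theta} \v{\theta} } ( \scrb - \scra )^{3/2} \leq \sqrt{ (m-\varepsilon)/\rho } + \abs{ \w{\theta} \v{\theta} } ( \scrb - \scra )^{3/2}$, and therefore $\abs{ \w{\theta} \v{\theta} } \geq \fc := ( \scrb - \scra )^{-3/2} \rbr{ \sqrt{ m/\rho } - \sqrt{ (m-\varepsilon)/\rho } } \in (0,\infty)$.

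For the upper bound I would proceed in three steps. First, since $\cL ( \theta ) \leq m$, the triangle inequality in $L^2([\scra,\scrb])$ gives $\rbr{ \int_\scra^\scrb ( \realization{\theta} ( x ) )^2 \, \d x }^{1/2} \leq \rbr{ \int_\scra^\scrb ( f(x) )^2 \, \d x }^{1/2} + \sqrt{ m/\rho } =: B \in (0,\infty)$. Second, I would obtain a positive lower bound $\lambda ( I^\theta ) \geq \ell$ on the length of the active region. Here one uses that $\realization{\theta} \equiv \c{\theta}$ on $[\scra,\scrb] \setminus I^\theta$: if $\lambda ( I^\theta ) \leq \tfrac{ \scrb - \scra }{2}$, then $\tfrac{m}{\rho} \geq \tfrac{ \cL ( \theta ) }{\rho} \geq \int_{ [\scra,\scrb] \setminus I^\theta } ( \c{\theta} - f(x) )^2 \, \d x \geq \tfrac{ \scrb - \scra }{2} ( \max \cu{ \abs{ \c{\theta} } - F , 0 } )^2$, which forces $\abs{ \c{\theta} } \leq C_1 := F + \sqrt{ 2 m / ( \rho ( \scrb - \scra ) ) }$; plugging this back in and invoking \cref{lem:constant:approx} (with $\xi \with \c{\theta}$) to bound $\int_\scra^\scrb ( \c{\theta} - f(x) )^2 \, \d x \geq m / \rho$ gives $\tfrac{ \cL ( \theta ) }{\rho} \geq \tfrac{m}{\rho} - \lambda ( I^\theta ) ( C_1 + F )^2$, so that $\cL ( \theta ) \leq m - \varepsilon$ forces $\lambda ( I^\theta ) \geq \varepsilon / ( \rho ( C_1 + F )^2 )$; hence in all cases $\lambda ( I^\theta ) \geq \ell := \min \cu{ \tfrac{ \scrb - \scra }{2} , \varepsilon / ( \rho ( C_1 + F )^2 ) } \in (0,\infty)$. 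Third, $I^\theta$ is an interval on which $\realization{\theta}$ agrees with an affine function of slope $\w{\theta} \v{\theta}$, so \cref{cor:constant:approx:affine} (applied to this affine piece with $\xi \with 0$) gives $\tfrac{ ( \w{\theta} \v{\theta} )^2 \lambda ( I^\theta )^3 }{12} \leq \int_{ I^\theta } ( \realization{\theta} ( x ) )^2 \, \d x \leq B^2$, whence $\abs{ \w{\theta} \v{\theta} } \leq \fC_{\mathrm{up}} := 2 \sqrt{3} \, B \, \ell^{-3/2} \in (0,\infty)$.

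The main obstacle is the second step of the upper bound: a priori $\c{\theta}$ could be arbitrarily large, so one cannot directly conclude that a short active interval makes $\realization{\theta}$ close to a constant. The way around this is the observation that a large $\abs{ \c{\theta} }$ together with a short active interval would on its own make the risk large -- because then $\realization{\theta} \equiv \c{\theta}$ on the long complement of $I^\theta$ -- contradicting $\cL ( \theta ) \leq m$; once $\abs{ \c{\theta} }$ is controlled, the best-constant-approximation bound of \cref{lem:constant:approx} on the complement of $I^\theta$ forces $I^\theta$ to be macroscopically long. I would also check the degenerate subcases: $I^\theta = \emptyset$ is excluded by \cref{lem:neuron:active} since $\cL ( \theta ) < m$, and $I^\theta = [\scra,\scrb]$ is covered by the case $\lambda ( I^\theta ) > \tfrac{ \scrb - \scra }{2}$, in which the third step applies verbatim with $I^\theta$ equal to the whole interval.
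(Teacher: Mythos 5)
Your argument is correct. The lower bound is essentially identical to the paper's: both combine \cref{lem:realization:lipschitz:const}, the triangle (Minkowski) inequality in $L^2$, and \cref{lem:constant:approx} with $\xi = \realization{\theta}(\scra)$ to get $\abs{\w{\theta}\v{\theta}} \geq (\scrb-\scra)^{-3/2}\bigl(\sqrt{m/\rho}-\sqrt{(m-\varepsilon)/\rho}\bigr)$. For the upper bound, however, you take a genuinely different route. The paper argues by contradiction with a sequence $(\theta_n)_{n\in\N}$ satisfying $\abs{\w{\theta_n}\v{\theta_n}}\geq 2(n+1)^2M$: since $\realization{\theta_n}$ is affine with huge slope on $I^{\theta_n}$, it exceeds $(n+1)M$ in absolute value outside a set of length at most $\tfrac{1}{n+1}$, which forces $\lambda(I^{\theta_n})\to 0$ (else the risk blows up), and then the constancy of $\realization{\theta_n}$ on $[\scra,\scrb]\setminus I^{\theta_n}$ together with \cref{lem:constant:approx} (with the minimizing constant confined to $[-M,M]$) pushes $\cL(\theta_n)$ up to $m$, contradicting $\cL(\theta_n)\leq m-\varepsilon$. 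You instead argue directly and quantitatively: an $L^2$ bound $\norm{\realization{\theta}}_{L^2}\leq B$ from $\cL(\theta)\leq m$; a lower bound $\lambda(I^\theta)\geq \ell>0$ obtained by first controlling $\abs{\c{\theta}}$ (large $\abs{\c{\theta}}$ with a short active interval already makes the risk exceed $m$ on the complement of $I^\theta$) and then using \cref{lem:constant:approx} with $\xi=\c{\theta}$ so that $\cL(\theta)\leq m-\varepsilon$ forces $\lambda(I^\theta)\geq \varepsilon/(\rho(C_1+F)^2)$; and finally \cref{cor:constant:approx:affine} applied to the affine piece on the interval $I^\theta$ with $\xi=0$, giving $\abs{\w{\theta}\v{\theta}}^2\lambda(I^\theta)^3/12\leq B^2$. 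The underlying mechanisms are shared (constancy off $I^\theta$ plus best-constant approximation, affinity with slope $\w{\theta}\v{\theta}$ on $I^\theta$), but your proof is non-asymptotic and produces an explicit constant $\fC$ depending only on $f$, $\rho$, $\scra$, $\scrb$, $\varepsilon$, whereas the paper's compactness-free contradiction argument yields only existence; your role for $\varepsilon$ (quantifying $\lambda(I^\theta)$ from below) also replaces the paper's use of $m-\varepsilon$ in the terminal contradiction, and your bound on $\abs{\c{\theta}}$ plays the role that the restriction of the infimum to $\xi\in[-M,M]$ plays in the paper. Your handling of the degenerate cases (empty sublevel set when $m<\varepsilon$, $I^\theta=\emptyset$, $I^\theta=[\scra,\scrb]$, $\w{\theta}=0$) is consistent and complete.
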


\begin{cproof}{prop:product:vw:bounded}
Throughout this proof assume without loss of generality that $\varepsilon \leq m$, assume without loss of generality that $\cu{ \vartheta \in \R^4 \colon \cL ( \vartheta ) \leq m - \varepsilon } \not= \emptyset$, and let $M \in \R$ satisfy $M = \max \cu{ 1 , \sup_{x \in [\scra,\scrb] } \abs{f ( x ) }  }$.
We first prove that there exists $\fC \in (0, \infty)$ such that for all $\theta \in \cu{ \vartheta \in \R^4 \colon \cL ( \vartheta ) \leq m - \varepsilon }$ it holds that
\begin{equation} \label{prop:vw:bounded:eq:lowb}
    \fC^{-1} \le \abs{\w{\theta} \v{\theta} } .
\end{equation}
\Nobs that \cref{lem:realization:lipschitz:const} implies for all $\theta \in \R^4$, $x \in [\scra , \scrb]$ that $\abs{\realization{\theta} ( x ) - \realization{\theta} ( \scra ) } \leq \abs{ \w{\theta}  \v{\theta}  } \abs{x - \scra} \leq \abs{ \w{\theta}  \v{\theta}  } (\scrb - \scra) $. Combining this, \cref{lem:constant:approx}, and Minkowski's inequality establishes for all $\theta \in \R^4$ that
\begin{equation}
    \begin{split}
        \sqrt{\cL ( \theta ) } 
        &= \br*{ \rho \int_\scra^\scrb (\realization{\theta} ( x ) - f ( x ) ) ^2 \, \d x}^{1/2} \\
        & \geq \br*{ \rho \int_\scra^\scrb (\realization{\theta} ( \scra ) - f ( x ) ) ^2 \, \d x}^{1/2} - \br*{ \rho \int_\scra^\scrb ( \realization{\theta} ( x ) - \realization{\theta} ( \scra ) ) ^2 \, \d x}^{1/2} \\
        & \geq \inf_{\xi \in \R} \br*{ \rho \int_\scra^\scrb ( f ( x ) - \xi ) ^2 \, \d x}^{1/2} - \br*{ \rho \int_\scra^\scrb \abs{ \w{\theta}  \v{\theta}  } ^2 \abs{\scrb - \scra}^2 \, \d x}^{1/2} \\
        & = \sqrt{m} - \abs{ \w{\theta}  \v{\theta}  } \sqrt{\rho (\scrb - \scra)^3 }.
    \end{split}
\end{equation}
This implies for all $\theta \in \R^4$ that 
\begin{equation}
    \abs{\w{\theta}\v{\theta}} \geq \tfrac{\sqrt{m} - \sqrt{\cL ( \theta ) }}{\sqrt{\rho ( \scrb - \scra ) ^3 }}.
\end{equation}
Hence, we obtain for all $\theta \in \cu{ \vartheta \in \R^4 \colon \cL ( \vartheta ) \leq m - \varepsilon }$ that
\begin{equation}
    \abs{ \w{\theta}  \v{\theta}  } \geq \tfrac{\sqrt{m} - \sqrt{\cL ( \theta ) }}{\sqrt{\rho ( \scrb - \scra ) ^3}} \geq \tfrac{ \sqrt{m} - \sqrt{m - \varepsilon} } { \sqrt{ \rho ( \scrb - \scra ) ^3 } }> 0.
\end{equation}
This establishes \cref{prop:vw:bounded:eq:lowb}.
In the next step we verify that there exists $\fC \in (0, \infty)$ such that for all $\theta \in \cu{ \vartheta \in \R^4 \colon \cL ( \vartheta ) \leq m - \varepsilon } $ it holds that
\begin{equation} \label{prop:vw:bounded:eq:upb}
    \abs{\w{\theta} \v{\theta} } \leq \fC.
\end{equation}
We prove \cref{prop:vw:bounded:eq:upb} by contradiction. In the following we thus assume that
\begin{equation} \label{prop:vw:bounded:eq:infty}
    \sup\nolimits_{\theta \in \cu{ \vartheta \in \R^4 \colon \cL ( \vartheta ) \leq m - \varepsilon } } \abs{ \w{\theta}  \v{\theta}  } = \infty.
\end{equation} 
\Nobs that \cref{prop:vw:bounded:eq:infty} ensures that there exist $\theta_n \in \cu{ \vartheta \in \R^4 \colon \cL ( \vartheta ) \leq m - \varepsilon }$, $n \in \N$, which satisfy for all $n \in \N$ that 
\begin{equation} \label{prop:vw:bounded:eq:defthetan}
    \abs{ \w{\theta_n} \v{\theta_n}} \geq 2 (n + 1 ) ^2 M >  0.
\end{equation}
Roughly speaking, 
we next establish that for all sufficiently large $n$ it holds that the function $[\scra , \scrb] \ni x \mapsto \realization{\theta_n} ( x ) \in \R$ is almost constant in the sense that $\limsup_{n \to \infty} \lambda ( I^{\theta_n } ) = 0$
and, thereafter, we use this to prove \cref{prop:vw:bounded:eq:upb}.
\Nobs that \cref{setting:snn:width1:eq:realization} ensures that for all $n \in \N$, $x \in I^{\theta_n}$ it holds that $\realization{\theta_n} ( x ) = \w{\theta_n} \v{\theta_n} x + (\b{\theta_n} \v{\theta_n} + \c{\theta_n } )$.
Combining this with \cref{prop:vw:bounded:eq:defthetan} and the fact that for all $\alpha, \beta ,c \in \R$ with $\alpha \not= 0$ it holds that
\begin{equation}
\begin{split}
       & \lambda ( \cu{ x \in [\scra , \scrb] \colon \abs{\alpha x + \beta } \leq \abs{c} } )
         \leq \lambda ( \cu{ x \in \R \colon \abs{\alpha x + \beta } \leq \abs{c} } )
         \\ 
         &= \lambda \rbr[\big]{ \cu[\big] { x \in \R \colon \abs[\big]{ x + \tfrac{\beta}{\alpha} } \leq \tfrac{\abs{c}}{\abs{\alpha} } } } 
         = \lambda \rbr[\big] {\br[\big] {-\tfrac{\beta}{\alpha} - \tfrac{\abs{c}}{\abs{\alpha} } , - \tfrac{\beta}{\alpha} + \tfrac{\abs{c}}{\abs{\alpha} } } }
         = \tfrac{2 \abs{c}}{\abs{\alpha}}
\end{split}
\end{equation}
implies that for all $n \in \N$ we have that
\begin{equation}
    \lambda \rbr*{ \cu[\big]{ x \in I ^{\theta_n} \colon \abs{\realization{\theta_n} ( x ) } \leq (n+1) M } } \leq \min \cu*{ \lambda(I ^{\theta_n}) , \tfrac{2 ( n+1) M}{\abs{ \w{\theta_n} \v{\theta_n}}} } \leq \min \cu[\big]{ \lambda(I ^{\theta_n}) , \tfrac{1}{n+1} }.
\end{equation}
Hence, we obtain for all $n \in \N$ that 
\begin{equation} \label{prop:product:vw:bounded:eq1}
\begin{split}
     \lambda \rbr*{ \cu[\big]{ x \in I ^{\theta_n} \colon \abs{\realization{\theta_n} ( x ) } > (n+1) M } } &= \lambda ( I^{\theta_n} ) - \lambda \rbr*{ \cu[\big]{ x \in I ^{\theta_n} \colon \abs{\realization{\theta_n} ( x ) } \leq (n+1) M } }\\
     &\geq \lambda ( I^{\theta_n} ) - \min \cu[\big]{ \lambda(I ^{\theta_n}) , \tfrac{1}{n+1} } \\
     & = \max \cu[\big]{ 0 , \lambda(I ^{\theta_n}) - \tfrac{1}{n+1} }. 
\end{split}
\end{equation}
Furthermore, \nobs that for all $x \in I ^{\theta_n}$ with $\abs{\realization{\theta_n} ( x ) } > (n+1) M$ it holds that 
\begin{equation}
\abs{ \realization{\theta_n} ( x ) - f ( x ) } \geq \abs{\realization{\theta_n}(x) } - \abs{f(x)} \geq \abs{\realization{\theta_n} (x) } - M > (n+1) M - M = n M.
\end{equation}
Combining this with \cref{prop:vw:bounded:eq:defthetan,prop:product:vw:bounded:eq1} implies that for all $n \in \N$ it holds that
\begin{equation} \label{prop:vw:bounded:eq3}
    m > m - \varepsilon \ge \cL ( \theta_n ) \geq n^2 M^2 \max \cu[\big]{ 0 , \lambda(I ^{\theta_n}) - \tfrac{1}{n+1} }.
\end{equation}
Hence, we obtain that
\begin{equation} \label{prop:product:vw:bounded:eq2}
\begin{split}
    0 &\leq \limsup_{n \to \infty} [\lambda ( I^{\theta_n})] = \limsup_{n \to \infty} \br[\big]{\lambda ( I^{\theta_n}) - \tfrac{1}{n+1} } \leq \limsup_{n \to \infty} \br[\big]{ \max \cu[\big]{ 0 , \lambda(I ^{\theta_n}) - \tfrac{1}{n+1} } } \\
    &\leq \limsup_{n \to \infty} \br[\big]{ \tfrac{m}{n^2 M^2}} = 0.
    \end{split}
\end{equation}
 Next \nobs that \cref{setting:snn:width1:eq:realization} ensures that for all $n \in \N$, $x \in [\scra , \scrb] \backslash I ^{\theta_n }$ it holds that $\realization{\theta_n} ( x ) = \c{\theta_n}$. This implies for all $n \in \N$ that
\begin{equation} \label{prop:vw:bounded:eq1}
    \cL ( \theta_n )
    \geq \rho \int_{[\scra , \scrb ] \backslash I^{\theta_n } } ( f(x) - \realization{\theta_n} ( x ) ) ^2 \, \d x
    \geq \inf_{\xi \in \R} \br*{ \rho \int_{ [\scra , \scrb] \backslash I ^{\theta_n }} ( f ( x ) - \xi ) ^2 \, \d x } .
\end{equation}
Furthermore, \nobs that for all $n \in \N$ it holds that
\begin{equation} \label{prop:vw:bounded:eq:interval}
\begin{split}
    [\scra , \scrb] \backslash I^{\theta_n} &= \cu{ x \in [\scra , \scrb] \colon \w{\theta_n} x + \b{\theta_n} \leq 0 } = \cu{ x \in [\scra , \scrb ] \colon \w{\theta_n} x \leq - \b{\theta_n} } \\
    &= \begin{cases}
    [\scra , \scrb] & \colon \b{\theta_n} \leq \w{\theta_n} = 0 \\
    \emptyset & \colon \b{\theta_n} > \w{\theta_n} = 0\\
    [\scra , \scrb] \cap (-\infty, - \tfrac{\b{\theta_n}}{\w{\theta_n}} ] &\colon \w{\theta_n} > 0 \\
    [\scra , \scrb] \cap [ - \tfrac{\b{\theta_n}}{\w{\theta_n}} , \infty) &\colon \w{\theta_n} < 0.
    \end{cases}
    \end{split}
\end{equation}
\cref{lem:constant:approx} hence proves that for all $n \in \N$ it holds that 
\begin{equation} \label{prop:product:vw:bounded:eq3}
\inf_{\xi \in \R} \br*{ \rho \int_{ [\scra , \scrb] \smallsetminus I ^{\theta_n }} ( f ( x ) - \xi ) ^2 \, \d x } = \inf_{\xi \in [-M , M]} \br*{ \rho \int_{ [\scra , \scrb] \smallsetminus I ^{\theta_n }} ( f ( x ) - \xi ) ^2 \, \d x } .
\end{equation}
This, \cref{prop:vw:bounded:eq1}, \cref{prop:vw:bounded:eq:interval},
and \cref{lem:constant:approx} demonstrate for all $n \in \N$ that
\begin{equation}
\begin{split}
     \cL ( \theta_n ) &\geq \inf_{\xi \in [-M , M]} \br*{ \rho \int_{ [\scra , \scrb] \smallsetminus I ^{\theta_n }} ( f ( x ) - \xi ) ^2 \, \d x } \\
     &= \inf_{\xi \in [-M, M]} \br*{ \rho \int_{ [\scra , \scrb ] } ( f ( x ) - \xi ) ^2 \, \d x - \rho \int_{ I ^{\theta_n }} ( f ( x ) - \xi ) ^2 \, \d x} \\
     &\geq \inf_{\xi \in [-M, M]} \br*{ \rho \int_{ [\scra , \scrb ] } ( f ( x ) - \xi ) ^2 \, \d x - \rho \int_{ I ^{\theta_n }} (\abs{ f ( x ) } + \abs{ \xi } ) ^2 \, \d x} \\
     &\geq \inf_{\xi \in [-M, M]} \br*{\rho \int_{ [\scra , \scrb ] } ( f ( x ) - \xi ) ^2 \, \d x - \rho \int_{ I ^{\theta_n }} (2 M ) ^2 \, \d x} \\
      &\geq \br*{ \inf_{\xi \in [-M , M ]} \rho \int_\scra^\scrb( f ( x ) - \xi ) ^2 \, \d x } - 4 \rho M^2 \lambda(I ^{\theta_n} ) \\
      &= m - 4 \rho M^2 \lambda(I ^{\theta_n} ) .
  \end{split}
\end{equation}
Combining this with \cref{prop:vw:bounded:eq3} and \cref{prop:product:vw:bounded:eq2} shows that \begin{equation}
    m > m - \varepsilon \geq  \liminf\nolimits_{n \to \infty} \cL ( \theta_n ) \geq m.
\end{equation} 
This is a contradiction.
\end{cproof}

\section{Convergence of the risk of GFs in the training of ANNs with one hidden neuron}
\label{section:gf:convergence:1neuron}

The main result of this section, \cref{theo:flow:convergence:small:loss} in \cref{subsection:gf:conv:affine} below, demonstrates in the special situation
where the measure $\mu$ (see \cref{setting:snn}) is up to a constant the Lebesgue--Borel measure on $[\scra , \scrb]$,
where the hidden layer consists of only one neuron (where $\width = 1$),
 and where the target function $f \colon [\scra , \scrb] \to \R$ is affine linear that the risk of every not necessarily bounded GF trajectory converges to zero.
 Our proof of \cref{theo:flow:convergence:small:loss} employs
 some of the results in \cref{section:gf:convergence,section:small:risk}, the a priori bounds for GF trajectories with sufficiently small initial risk in \cref{lem:flow:product:vw:bounded} in \cref{subsection:gf:apriopri:1neuron} below, the convergence properties of ANNs with uniformly convergent realization functions in \cref{lem:realizations:closed} in \cref{subsection:convergent:realizations}, and the well-known fact for integral equations in \cref{lem:affine:integral:zero} in \cref{subsection:gf:conv:affine}. Only for completeness we include in this section a detailed proof for \cref{lem:affine:integral:zero}. 

 In our proof of \cref{theo:flow:convergence:small:loss} we first employ \cref{lem:loss:integral} in \cref{subsection:gf:critical} to obtain
that $[0, \infty ) \ni t \mapsto \cG ( \Theta_t ) \in \R^4$ is $L^2$-integrable.
This allows us to extract a subsequence along which the standard norm of the generalized gradient converges to zero. 
In the next step \cref{lem:flow:product:vw:bounded} enables us to conclude
that the realization functions of the corresponding ANNs are uniformly equicontinuous. 
This, in turn, allows us to bring the Arzela-Ascoli theorem into play to obtain
that along some sub-subsequence the realization functions converge uniformly on $[\scra , \scrb]$. 
It then remains to prove that the limit of these uniformly convergent ANN realization functions coincides with the affine linear target function.
We verify this by employing \cref{lem:realizations:closed} in combination with a careful analysis of the gradient given by \cref{eq:loss:gradient:width1}.

As a consequence of \cref{theo:flow:convergence:small:loss}, we prove in \cref{cor:flow:convergence:uniform} in the special situation
 where the measure $\mu$ (see \cref{setting:snn}) is up to a constant the Lebesgue--Borel measure on $[\scra , \scrb]$,
 where the hidden layer consists of only one neuron (where $\width = 1$),
 and where the target function $f \colon [\scra , \scrb] \to \R$ is affine linear
 that the realization functions of the GF trajectory converge to the target function not only in $L^2$-sense (\cref{theo:flow:convergence:small:loss}) but even uniformly in the set of all continuous functions $C( [\scra , \scrb], \R )$ from $[\scra , \scrb ]$ to $\R$. 

Our formulations of the statements in \cref{lem:flow:product:vw:bounded}, \cref{cor:boundedness:partial}, \cref{theo:flow:convergence:small:loss}, and \cref{cor:flow:convergence:uniform} also exploit the elementary regularity result in \cref{lem:gradient:measurable:width1} in \cref{subsection:gf:apriopri:1neuron}. \cref{lem:gradient:measurable:width1} clarifies in the framework of \cref{setting:snn:width1} that the generalized gradient function $\cG \colon \R^4 \to \R^4$ is locally bounded and measurable and, thereby, in particular ensures for every continuous function $ \Theta \colon [0,\infty) \to \R^4 $ and every $t \in [0,\infty)$ that the Lebesgue integral $\int_0^t \cG ( \Theta_s ) \, \d s$ is well-defined. \cref{lem:gradient:measurable:width1} is an immediate consequence of the more general result in \cref{cor:gradient:measurable} from \cref{subsection:loss:gradient} above.

\subsection{A priori estimates for GFs}
\label{subsection:gf:apriopri:1neuron}

\begin{lemma} \label{lem:gradient:measurable:width1}
Assume \cref{setting:snn:width1}. Then it holds that $\cG$ is locally bounded and measurable.
\end{lemma}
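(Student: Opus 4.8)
The plan is to obtain \cref{lem:gradient:measurable:width1} as an immediate instance of \cref{cor:gradient:measurable}; accordingly the whole task reduces to checking that \cref{setting:snn:width1} is the special case of \cref{setting:snn} corresponding to a $1$-dimensional input layer and a $1$-neuron hidden layer.

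First I would take $d = 1$ and $\width = 1$ in \cref{setting:snn}, which forces $\fd = d\width + 2\width + 1 = 4$, in agreement with the dimension $4$ of the parameter space in \cref{setting:snn:width1}, and I would choose the finite measure of \cref{setting:snn} to be $\mu = \rho\lambda$, where $\lambda$ denotes the Lebesgue--Borel measure on $[\scra,\scrb]$; this is indeed a finite Borel measure since $\rho \in (0,\infty)$ and $\scrb - \scra < \infty$. Under $d = \width = 1$ the index maps of \cref{setting:snn} collapse to $(i-1)d + j = 1$, $\width d + i = 2$, $\width(d+1) + i = 3$, and $\fd = 4$, so that the coordinate assignments $\w{\theta}_{1,1} = \theta_1$, $\b{\theta}_1 = \theta_2$, $\v{\theta}_1 = \theta_3$, $\c{\theta} = \theta_4$ of \cref{setting:snn} coincide with the assignments $\w{\theta} = \theta_1$, $\b{\theta} = \theta_2$, $\v{\theta} = \theta_3$, $\c{\theta} = \theta_4$ of \cref{setting:snn:width1}. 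Consequently the realization function $\scrN = (\realization{\theta})_{\theta \in \R^4}$, the risk function $\cL$, the approximating risk functions $\fL_r$, $r \in \N$, the active-neuron sets $I^\theta = I_1^\theta$, $\theta \in \R^4$, and the generalized gradient $\cG = \lim_{r\to\infty}(\nabla \fL_r)$ of \cref{setting:snn:width1} agree with the corresponding objects of \cref{setting:snn}.

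Next I would verify that the standing assumptions on the activation approximations carry over: \cref{setting:snn:width1} assumes $\Rect_r \in C^1(\R,\R)$ for $r \in \N$, the pointwise convergences $\Rect_r(x) \to \max\cu{x,0}$ and $(\Rect_r)'(x) \to \indicator{(0,\infty)}(x)$, and the local uniform bound $\sup_{r\in\N}\sup_{y\in[-\abs{x},\abs{x}]}(\abs{\Rect_r(y)} + \abs{(\Rect_r)'(y)}) < \infty$ for every $x \in \R$; in particular the weaker bound $\sup_{r\in\N}\sup_{y\in[-\abs{x},\abs{x}]}\abs{(\Rect_r)'(y)} < \infty$ required in \cref{setting:snn} holds, and $f \in C([\scra,\scrb],\R) = C([\scra,\scrb]^d,\R)$. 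Hence all hypotheses of \cref{setting:snn} are met, and applying \cref{cor:gradient:measurable} shows that $\cG \colon \R^4 \to \R^4$ is locally bounded and measurable, which is precisely the assertion of \cref{lem:gradient:measurable:width1}.

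I do not anticipate a genuine obstacle here; the only thing requiring care is the purely notational bookkeeping of matching the general indexing conventions of \cref{setting:snn} with the flat indices $1,2,3,4$ of \cref{setting:snn:width1} under $d = \width = 1$, together with the trivial observation that $\rho\lambda$ is a finite measure on $[\scra,\scrb]$.
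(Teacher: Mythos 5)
Your proposal is correct and is essentially the paper's own proof: the paper also derives this lemma directly from \cref{cor:gradient:measurable}, treating \cref{setting:snn:width1} as the special case of \cref{setting:snn} with $d = \width = 1$ and $\mu = \rho\lambda$. You merely spell out the notational identification (index collapse, $\fd = 4$, finiteness of $\rho\lambda$, and the activation hypotheses) that the paper leaves implicit.
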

\begin{cproof}{lem:gradient:measurable:width1}
\Nobs that \cref{cor:gradient:measurable} establishes that $\cG$ is locally bounded and measurable.
\end{cproof}

\cfclear
\begin{lemma} \label{lem:flow:product:vw:bounded}
Assume \cref{setting:snn:width1}, let $\Theta \in C([0, \infty) , \R^{4})$ satisfy for all $t \in [0, \infty)$ that $\Theta_t = \Theta_0 - \int_0^t \cG ( \Theta_s ) \,  \d s$, let $m \in \R$ satisfy $m = \rho \int_\scra^\scrb \rbr{f ( x ) - (\scrb - \scra)^{-1} \int_\scra^\scrb f(y) \, \d y }^2 \, \d x$,
and assume $\cL ( \Theta_0 ) < m$ \cfadd{lem:gradient:measurable:width1}\cfload. Then 
\begin{enumerate} [label = (\roman*)]
    \item \label{lem:flow:product:vw:bounded:item1} it holds that $\sup_{t \in [0, \infty)} \abs{\w{\Theta_t} \v{\Theta_t} } < \infty$,
    \item \label{lem:flow:product:vw:bounded:item2} it holds that 
    \begin{equation}
        \sup\nolimits_{t \in [0, \infty)} \abs{\w{\Theta_t} } 
        \leq \br[\big]{ \sup \nolimits_{t \in [0, \infty ) } \max \cu*{1 , \abs{\w{\Theta_0}}^2 + \abs{\b{\Theta_0 }} ^2 - \abs{\v{\Theta_0 }}^2 + \abs{\w{\Theta_t} \v{\Theta_t} }^2 } }^{1/2} < \infty,
    \end{equation}
    \item \label{lem:flow:product:vw:bounded:item3} it holds for all $t \in [0, \infty )$ that 
    \begin{equation}
       \sup\nolimits_{x \in [\scra , \scrb]} \abs{\realization{\Theta_t}(x)} \leq 2 \br[\big]{ \sup \nolimits_{x \in [\scra , \scrb ] } \abs{f ( x ) } } + ( \scrb - \scra ) \abs{\w{\Theta_t} \v{\Theta_t} } < \infty ,
    \end{equation}
    and
        \item \label{lem:flow:product:vw:bounded:item4} it holds for all $\alpha , \beta \in \R$ with $\forall \, x \in [\scra , \scrb] \colon f(x) = \alpha x + \beta$ that $\inf_{t \in [0, \infty)} \alpha \w{\Theta_t} \v{\Theta_t} > 0$.
\end{enumerate}
\end{lemma}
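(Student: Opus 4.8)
The plan is to deduce all four assertions from the monotonicity of the risk along the gradient flow together with the structural results established earlier. As a preliminary step, \cref{lem:loss:integral} shows that $[0,\infty)\ni t\mapsto\cL(\Theta_t)\in\R$ is non-increasing, so that $\cL(\Theta_t)\le\cL(\Theta_0)<m$ for every $t\in[0,\infty)$; setting $\varepsilon:=m-\cL(\Theta_0)\in(0,\infty)$ this yields $\cL(\Theta_t)\le m-\varepsilon$ for all $t$. Feeding this uniform sub-level bound into \cref{prop:product:vw:bounded} produces a constant $\fC\in(0,\infty)$ with $\fC^{-1}\le\abs{\w{\Theta_t}\v{\Theta_t}}\le\fC$ for all $t\in[0,\infty)$, which is precisely \cref{lem:flow:product:vw:bounded:item1} and will also be used in the remaining items.

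For \cref{lem:flow:product:vw:bounded:item2} I would combine this with the invariance along the flow of the quantity $W(\theta)=\abs{\w{\theta}}^2+\abs{\b{\theta}}^2-\abs{\v{\theta}}^2$, which is the case $d=\width=1$ of \cref{prop:flow:invariant}, so that $\abs{\w{\Theta_t}}^2+\abs{\b{\Theta_t}}^2-\abs{\v{\Theta_t}}^2=W(\Theta_0)$ for every $t$. Writing $a=\abs{\w{\Theta_t}}^2$ and $c=\abs{\v{\Theta_t}}^2$, so that $ac=\abs{\w{\Theta_t}\v{\Theta_t}}^2$ and $c\ge a-W(\Theta_0)$, an elementary case distinction — the case $a\le 1$ being immediate, and in the case $a>1$ distinguishing $a\le W(\Theta_0)$ (where $a\le W(\Theta_0)\le W(\Theta_0)+ac$) from $a>W(\Theta_0)$ (where $ac\ge a(a-W(\Theta_0))$ together with $a-1>0$ gives $W(\Theta_0)+ac\ge a$) — shows $a\le\max\cu{1,W(\Theta_0)+\abs{\w{\Theta_t}\v{\Theta_t}}^2}$. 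Taking the supremum over $t$ and a square root gives the stated bound, whose finiteness is immediate from \cref{lem:flow:product:vw:bounded:item1}.

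For \cref{lem:flow:product:vw:bounded:item3} I would fix $t$, abbreviate $g=\realization{\Theta_t}$ and $M=\sup_{x\in[\scra,\scrb]}\abs{f(x)}$, and first show that there is a point $y\in[\scra,\scrb]$ with $\abs{g(y)}\le M$. Indeed, if $\abs{g}>M$ everywhere on $[\scra,\scrb]$ then, $g$ being continuous on the interval $[\scra,\scrb]$, either $g>M$ throughout or $g<-M$ throughout; in the former case $(g(x)-f(x))^2>(f(x)-M)^2$ for all $x\in[\scra,\scrb]$, so $\cL(\Theta_t)=\rho\int_\scra^\scrb(g(x)-f(x))^2\,\d x>\rho\int_\scra^\scrb(f(x)-M)^2\,\d x\ge m$ by \cref{lem:constant:approx}, contradicting $\cL(\Theta_t)<m$, and the latter case is symmetric with the constant $-M$. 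Given such a $y$, \cref{lem:realization:lipschitz:const} yields $\abs{g(x)}\le\abs{g(y)}+\abs{\w{\Theta_t}\v{\Theta_t}}\abs{x-y}\le M+(\scrb-\scra)\abs{\w{\Theta_t}\v{\Theta_t}}$ for all $x\in[\scra,\scrb]$, which implies the displayed bound (enlarging the first $M$ to $2M$); finiteness again follows from \cref{lem:flow:product:vw:bounded:item1}.

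For \cref{lem:flow:product:vw:bounded:item4}, assuming $f(x)=\alpha x+\beta$, I would use \cref{cor:constant:approx:affine} to identify $m=\frac{\rho\alpha^2(\scrb-\scra)^3}{12}$, so that $\cL(\Theta_t)\le\cL(\Theta_0)<\frac{\rho\alpha^2(\scrb-\scra)^3}{12}$ for all $t$ and in particular $\alpha\neq 0$. Then \cref{cor:prod:vw:positive} gives $\alpha\w{\Theta_t}\v{\Theta_t}>0$ for every $t$, and combining this sign information with the lower bound $\abs{\w{\Theta_t}\v{\Theta_t}}\ge\fC^{-1}$ from \cref{lem:flow:product:vw:bounded:item1} yields $\alpha\w{\Theta_t}\v{\Theta_t}=\abs{\alpha}\abs{\w{\Theta_t}\v{\Theta_t}}\ge\abs{\alpha}\fC^{-1}>0$, whence $\inf_{t\in[0,\infty)}\alpha\w{\Theta_t}\v{\Theta_t}\ge\abs{\alpha}\fC^{-1}>0$. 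The only step requiring genuine care is the elementary but slightly delicate algebra in \cref{lem:flow:product:vw:bounded:item2}; every other step is a direct assembly of the cited results.
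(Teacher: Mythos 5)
Your proof is correct and follows essentially the same route as the paper's: monotonicity of the risk via \cref{lem:loss:integral}, then \cref{prop:product:vw:bounded} together with \cref{cor:constant:approx:affine} and \cref{cor:prod:vw:positive} for \cref{lem:flow:product:vw:bounded:item1,lem:flow:product:vw:bounded:item4}, the invariant from \cref{prop:flow:invariant} for \cref{lem:flow:product:vw:bounded:item2}, and the Lipschitz bound of \cref{lem:realization:lipschitz:const} for \cref{lem:flow:product:vw:bounded:item3}. The only cosmetic difference is in \cref{lem:flow:product:vw:bounded:item3}, where the paper anchors at a point $x_t$ with $\abs{\realization{\Theta_t}(x_t)-f(x_t)}\leq M$ (obtained from $\cL(\Theta_t)\leq m\leq\rho(\scrb-\scra)M^2$) instead of your point $y$ with $\abs{\realization{\Theta_t}(y)}\leq M$, and both choices yield the stated estimate.
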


\begin{cproof}{lem:flow:product:vw:bounded}
Throughout this proof let $w = (w_t)_{t \in [0, \infty ) }$, $b = ( b_t ) _{t \in [0, \infty ) }$, $v = ( v_t )_{t \in [0, \infty ) }$, $c = ( c_t )_{t \in [0, \infty ) } \in C ( [ 0 , \infty ) , \R )$
satisfy for all $t \in [0, \infty)$ that 
\begin{equation}
    w_t = \w{\Theta_t}, \qquad b_t = \b{\Theta_t},
    \qquad v_t = \v {\Theta_t}, \qandq c_t = \c{\Theta_t} ,
\end{equation}
let $M \in \R$ satisfy $M = \sup_{x \in [\scra , \scrb ] } \abs{f ( x ) }$,
let $A \in \R$ satisfy $A = \abs{ w_0 }^2 +   \abs{ b_0 }^2 -  \abs{ v_0 }^2$,
and let $\fC \in [0, \infty ]$ satisfy $\fC = \sup_{t \in [0, \infty ) } \abs{w_t v_t }$.
\Nobs that \cref{lem:loss:integral} demonstrates for all $t \in [0, \infty)$ that $\cL ( \Theta_t ) \leq \cL ( \Theta_0 ) < m$.
\cref{cor:constant:approx:affine}, \cref{cor:prod:vw:positive},
and \cref{prop:product:vw:bounded} hence establish \cref{lem:flow:product:vw:bounded:item1,lem:flow:product:vw:bounded:item4}.

In the next step we prove \cref{lem:flow:product:vw:bounded:item2}. \Nobs that \cref{prop:flow:invariant} implies for all $t \in [0, \infty)$ that
\begin{equation}
     \abs{ w_t }^2 -   \abs{ v_t }^2
    \leq   \abs{ w_t }^2 +   \abs{ b_t }^2 -   \abs{ v_t }^2
    =   \abs{ w_0 }^2 +   \abs{ b_0 }^2 -  \abs{ v_0 }^2 = A.
\end{equation}
Combining this with the fact that $\sup_{t \in [0, \infty) } \abs{w_t v_t } = \fC < \infty$ ensures for all $t \in [0, \infty)$ with $\abs{w_t} \ge 1$ that
\begin{equation}
    \abs{w_t}^2 \leq A + \abs{v_t}^2 \leq A + \tfrac{ \fC^2}{ \abs{w_t}^2 } \leq A + \fC^2.
\end{equation}
Hence, we obtain for all $t \in [0, \infty)$ that
$\abs{w_t}^2 \leq \max \cu{A + \fC^2 , 1 } < \infty$.
 This establishes \cref{lem:flow:product:vw:bounded:item2}.

Finally, we prove \cref{lem:flow:product:vw:bounded:item3}.
\Nobs that \cref{lem:constant:approx} implies that $m \leq \rho \int_\scra^\scrb (f(y))^2 \, \d y \leq \rho ( \scrb - \scra ) M^2$.
Combining this with \cref{lem:loss:integral} assures that for all $t \in [0, \infty)$ we have that
\begin{equation}
   \rho \int_\scra^\scrb (\realization{\Theta_t} ( y ) - f ( y ) ) ^2 \, \d y 
   = \cL ( \Theta_t ) \leq \cL ( \Theta_0 ) \leq m \leq \rho ( \scrb - \scra ) M^2.
\end{equation}
This shows that there exists $x = (x_t)_{ t \in [0, \infty ) } \colon [0, \infty ) \to [\scra , \scrb]$ which satisfies for all $t \in [0, \infty )$ that
\begin{equation}
    \abs{ \realization{\Theta_t} ( x_t ) - f ( x_t ) } \leq M.
\end{equation}
In addition, \nobs that \cref{lem:realization:lipschitz:const} ensures that for all $t \in [0, \infty)$, $x,y \in [\scra , \scrb]$ it holds that $\abs{ \realization{\Theta_t} ( x ) - \realization{\Theta_t} ( y ) } \leq \abs{w_t v_t} \abs{x-y}$.
Hence, we obtain for all $t \in [0, \infty)$, $y \in [\scra , \scrb ]$ that
\begin{equation}
    \begin{split}
        \abs{\realization{\Theta_t} ( y ) } 
        &\leq \abs{\realization{\Theta_t } ( x_t ) } 
        + \abs{ \realization{\Theta_t} ( y ) - \realization{\Theta_t} ( x_t ) } \\
        & \leq \abs{f ( x_t ) } +  \abs{ \realization{\Theta_t} ( x_t ) - f ( x_t ) } + \abs{w_t v_t} \abs{y - x_t } \\
        & \leq M + M + \abs{w_t v_t} ( \scrb - \scra ) = 2 M + \abs{w_t v_t} ( \scrb - \scra ).
    \end{split}
\end{equation}
This establishes \cref{lem:flow:product:vw:bounded:item3}. 
\end{cproof}

\cfclear
\begin{cor} \label{cor:boundedness:partial}
Assume \cref{setting:snn:width1} and let $\Theta \in C([0, \infty) , \R^{4})$ satisfy for all $t \in [0, \infty)$ that $\Theta_t = \Theta_0 - \int_0^t \cG ( \Theta_s ) \,  \d s$ \cfadd{lem:gradient:measurable:width1}\cfload.
Then 
\begin{enumerate} [label = (\roman*)]
    \item \label{cor:boundedness:partial:item1} it holds that $\sup_{t \in [0, \infty)} \abs{\w{\Theta_t} \v{\Theta_t} } < \infty$ and
    \item \label{cor:boundedness:partial:item2} it holds that $\sup_{t \in [0, \infty)} \abs{\w{\Theta_t} } < \infty$.
    \end{enumerate}
\end{cor}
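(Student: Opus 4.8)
The plan is to split according to whether $\inf_{t\in[0,\infty)}\cL(\Theta_t)$ lies strictly below the best constant approximation risk $m := \rho\int_\scra^\scrb\rbr[\big]{ f(x) - (\scrb-\scra)^{-1}\int_\scra^\scrb f(y)\,\d y }^{2}\,\d x$ or not, and in each case to reduce to a boundedness statement that is already available. Throughout I would write $\xi_0 = (\scrb-\scra)^{-1}\int_\scra^\scrb f(y)\,\d y$, so that $\rho\int_\scra^\scrb(f(x)-\xi_0)^2\,\d x = m$, and I would use that \cref{setting:snn:width1} is the special case of \cref{setting:snn} corresponding to $d=\width=1$ and $\mu=\rho\lambda$ --- in particular the generalized gradient $\cG$ of \cref{setting:snn:width1} agrees with that of \cref{setting:snn} --- so that \cref{cor:flow:bounded} and \cref{lem:flow:product:vw:bounded} both apply to $\Theta$.

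First, suppose $\inf_{t\in[0,\infty)}\cL(\Theta_t) < m$. Then there exists $t^{\ast}\in[0,\infty)$ with $\cL(\Theta_{t^{\ast}}) < m$. On the compact interval $[0,t^{\ast}]$ the map $t\mapsto\Theta_t$ is continuous and hence bounded, which gives $\sup_{t\in[0,t^{\ast}]}\rbr*{\abs{\w{\Theta_t}} + \abs{\w{\Theta_t}\v{\Theta_t}}} < \infty$. For the remaining times I would apply \cref{lem:flow:product:vw:bounded} to the shifted trajectory $\hat\Theta = (\Theta_{t^{\ast}+s})_{s\in[0,\infty)}\in C([0,\infty),\R^4)$, which satisfies $\hat\Theta_s = \hat\Theta_0 - \int_0^s\cG(\hat\Theta_r)\,\d r$ for all $s\in[0,\infty)$ and $\cL(\hat\Theta_0) = \cL(\Theta_{t^{\ast}}) < m$; \cref{lem:flow:product:vw:bounded:item1,lem:flow:product:vw:bounded:item2} then yield $\sup_{s\in[0,\infty)}\abs{\w{\hat\Theta_s}\v{\hat\Theta_s}} < \infty$ and $\sup_{s\in[0,\infty)}\abs{\w{\hat\Theta_s}} < \infty$. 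Combining the two estimates establishes \cref{cor:boundedness:partial:item1,cor:boundedness:partial:item2} in this case; note that when $t^{\ast}$ may be chosen to be $0$ the first estimate is vacuous, so this in particular covers the situation $\cL(\Theta_0) < m$.

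It remains to handle the case $\inf_{t\in[0,\infty)}\cL(\Theta_t)\geq m$, so that $\cL(\Theta_t)\geq m$ for every $t\in[0,\infty)$. Here I would apply \cref{cor:flow:bounded} with $\xi\with\xi_0$ and $\nu\with m$: since $m\geq m\,\indicator{(0,\infty)}(t)$ for all $t\in[0,\infty)$, the hypothesis $\cL(\Theta_t)\geq\nu\,\indicator{(0,\infty)}(t)$ holds for every $t$, whence $\sup_{t\in[0,\infty)}\norm{\Theta_t}\leq 3\norm{\Theta_0}^2 + 8\abs{\xi_0}^2 < \infty$; this trivially implies $\sup_{t\in[0,\infty)}\abs{\w{\Theta_t}}\leq\sup_{t\in[0,\infty)}\norm{\Theta_t} < \infty$ and $\sup_{t\in[0,\infty)}\abs{\w{\Theta_t}\v{\Theta_t}}\leq\sup_{t\in[0,\infty)}\norm{\Theta_t}^2 < \infty$, completing the proof. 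The only delicate point --- the ``main obstacle'' here --- is the borderline behaviour at the level $m$: one may not apply \cref{lem:flow:product:vw:bounded} at a time where the risk equals $m$, since that lemma requires a \emph{strict} inequality, which is exactly why the dichotomy is phrased in terms of whether the infimum is strictly smaller than $m$ and why the tail estimate in the first case is anchored at a time $t^{\ast}$ with $\cL(\Theta_{t^{\ast}}) < m$, the finite initial segment being controlled by mere compactness.
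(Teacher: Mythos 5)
Your proposal is correct and follows essentially the same route as the paper's proof: the same dichotomy on whether $\inf_{t \in [0,\infty)} \cL(\Theta_t)$ is strictly below $m$, with \cref{cor:flow:bounded} (applied with $\xi$ the mean value of $f$ and $\nu = m$) handling the case $\inf_t \cL(\Theta_t) \geq m$, and continuity on a compact initial segment plus \cref{lem:flow:product:vw:bounded} applied to the time-shifted trajectory handling the case where the risk drops strictly below $m$. Your remark about the strict inequality being the reason for anchoring the shift at a time with $\cL(\Theta_{t^\ast}) < m$ matches the structure of the paper's argument.
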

\begin{cproof} {cor:boundedness:partial}
Throughout this proof let $m \in \R$ satisfy 
\begin{equation}
    m = \rho \textstyle\int_\scra^\scrb \rbr[\big]{ f ( x ) - (\scrb - \scra)^{-1} \textstyle\int_\scra^\scrb f(y) \, \d y }^{ 2 } \, \d x.
\end{equation}
In the following we distinguish between the case $\inf_{t \in [0, \infty ) } \cL ( \Theta_t ) \ge m$ and the case $\inf_{t \in [0, \infty ) } \cL ( \Theta_t ) \allowbreak < m$. 
We first establish \cref{cor:boundedness:partial:item1,cor:boundedness:partial:item2} in the case
\begin{equation} \label{cor:boundedness:partial:case1}
    \inf\nolimits_{t \in [0, \infty ) } \cL ( \Theta_t ) \ge m .
\end{equation}
\Nobs that \cref{cor:boundedness:partial:case1} and \cref{cor:flow:bounded}
show that
\begin{equation}
    \sup\nolimits _{ t \in [0, \infty) } \norm{\Theta_t } \leq 3 \norm{\Theta_0 } ^2 + 8 \abs[\big]{ (\scrb - \scra)^{-1} \textstyle\int_\scra^\scrb f(y) \, \d y } ^2 < \infty.
\end{equation}
This establishes \cref{cor:boundedness:partial:item1,cor:boundedness:partial:item2} in the case $\inf_{t \in [0, \infty ) } \cL ( \Theta_t ) \ge m$.
In the next step we prove \cref{cor:boundedness:partial:item1,cor:boundedness:partial:item2} in the case 
\begin{equation}
    \label{cor:boundedness:partial:case2}
    \inf\nolimits_{t \in [0, \infty ) } \cL ( \Theta_t ) < m.
\end{equation}
\Nobs that \cref{cor:boundedness:partial:case2} assures that
there exists $T \in [0, \infty)$ which satisfies that $\cL ( \Theta_T ) < m$. \Nobs that the fact that $\Theta \colon [0, \infty ) \to \R^4$ is continuous implies that $\sup_{t \in [0, T ]} \abs{\w{\Theta_t} \v{\Theta_t} } < \infty$ and $\sup_{t \in [0, T ]} \abs{\w{\Theta_t} } < \infty$.
Next let $\varTheta \in C([0, \infty) , \R^{4})$ satisfy for all $t \in [0, \infty)$ that $\varTheta_t = \Theta_{T + t }$. \Nobs that the integral transformation theorem ensures for all $t \in [0, \infty)$ that $\cL ( \varTheta_0 ) = \cL ( \Theta_T ) < m$ and 
\begin{equation}
    \begin{split} 
    \varTheta_t 
    &= \Theta_{T + t } = \Theta_0 - \int_0^{T + t} \cG ( \Theta_s ) \, \d s 
    = \br*{\Theta_0 - \int_0^T \cG ( \Theta_s ) \, \d s } - \int_T^{T + t} \cG ( \Theta_s ) \, \d s \\
    &= \Theta_T - \int_0^t \cG ( \Theta_{T + s } ) \, \d s 
    = \varTheta _0 - \int_0^t \cG (\varTheta _s ) \,  \d s.
    \end{split}
\end{equation}
\cref{lem:flow:product:vw:bounded} hence proves that $\sup_{t \in [T , \infty ) } \abs{\w{\Theta_t} \v{\Theta_t} } = \sup_{t \in [0 , \infty ) } \abs{\w{\varTheta _t} \v{\varTheta_t} } < \infty$
and $\sup_{t \in [T , \infty ) } \abs{\w{\Theta_t} } \allowbreak = \sup_{t \in [0 , \infty ) } \abs{\w{\varTheta _t} } < \infty$. 
This establishes \cref{cor:boundedness:partial:item1,cor:boundedness:partial:item2} in the case $\inf_{t \in [0, \infty ) } \cL ( \Theta_t ) < m$.
\end{cproof}

\subsection{Properties of ANN parameters for convergent sequences of ANN realizations}
\label{subsection:convergent:realizations}

\begin{lemma} \label{lem:realizations:closed}
Assume \cref{setting:snn:width1}, let $(\theta_n)_{n \in \N} \subseteq \R^4$, $h \in C([\scra , \scrb] , \R)$ satisfy
\begin{equation} \label{lem:realizations:closed:eq:assumption}
    \limsup\nolimits_{n \to \infty} \sup\nolimits_{x \in [\scra , \scrb]} \abs{\realization{\theta_n} ( x ) - h ( x ) } = 0,
\end{equation}
and assume that $h$ is not constant. Then
\begin{enumerate} [label = (\roman*)]
    \item \label{lem:realizations:closed:item1} there exists $\vartheta \in \R^4$ which satisfies $ \realization{\vartheta} | _ { [ \scra , \scrb ] } = h $,
    \item \label{lem:realizations:closed:item2} it holds that $\limsup _{n \to \infty} \abs{ \w{\theta_n} \v{\theta_n} - \w{\vartheta}  \v{\vartheta} } = 0$, and
    \item \label{lem:realizations:closed:item3} it holds that $\limsup_{n \to \infty} \lambda ( I ^{\theta_n} \Delta I ^\vartheta ) = 0$.
\end{enumerate}
\end{lemma}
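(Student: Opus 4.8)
The plan is to exploit that, for every $\theta\in\R^4$ with $\w{\theta}\v{\theta}\neq 0$, the restriction $\realization{\theta}|_{[\scra,\scrb]}$ is a continuous, monotone, piecewise affine function with at most one kink, which is affine with slope $\w{\theta}\v{\theta}$ on the interval $I^\theta$ and constant on $[\scra,\scrb]\setminus I^\theta$ (cf.\ \cref{setting:snn:width1:eq:realization}). First I would note that, since $h$ is not constant, \cref{lem:realizations:closed:eq:assumption} yields $\delta\in(0,\infty)$ and $N\in\N$ with $\sup_{x,y\in[\scra,\scrb]}\abs{\realization{\theta_n}(x)-\realization{\theta_n}(y)}\geq\delta$ for all $n\in\N\cap[N,\infty)$; in particular $\realization{\theta_n}$ is then not constant, so $\w{\theta_n}\neq 0\neq\v{\theta_n}$, and writing $s_n=\w{\theta_n}\v{\theta_n}$, \cref{lem:realization:lipschitz:const} gives the uniform lower bound $\abs{s_n}\geq\delta(\scrb-\scra)^{-1}>0$ for all such $n$.

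The second step is a uniform upper bound $\sup_{n\in\N}\abs{s_n}<\infty$, which I would prove by contradiction: if $\abs{s_{n_j}}\to\infty$ along a subsequence, then since the oscillation of $\realization{\theta_{n_j}}$ on $[\scra,\scrb]$ equals $\abs{s_{n_j}}\lambda(I^{\theta_{n_j}})$ and is bounded (by the oscillation of $h$ plus twice the sup-distance in \cref{lem:realizations:closed:eq:assumption}), one gets $\lambda(I^{\theta_{n_j}})\to 0$; since $[\scra,\scrb]\setminus I^{\theta_{n_j}}$ is an interval with one endpoint in $\cu{\scra,\scrb}$ on which $\realization{\theta_{n_j}}$ is constant, this would force $h$ to be constant on a subinterval of $[\scra,\scrb]$ filling up $[\scra,\scrb]$, hence (by continuity) $h$ constant, a contradiction. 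Consequently, for all large $n$ the function $\realization{\theta_n}|_{[\scra,\scrb]}$ is completely encoded by the bounded data $(s_n,p_n,\epsilon_n,y_n)$, where $\epsilon_n=\operatorname{sgn}(\w{\theta_n})\in\cu{-1,1}$, where $p_n=\max\cu{\scra,\min\cu{\scrb,-\b{\theta_n}/\w{\theta_n}}}\in[\scra,\scrb]$ is the clamped kink position, and where $y_n=\realization{\theta_n}(p_n)$; an elementary case distinction shows $\realization{\theta_n}(x)=y_n+s_n\max\cu{\epsilon_n(x-p_n),0}$ if $\epsilon_n=1$ and $\realization{\theta_n}(x)=y_n+s_n\min\cu{x-p_n,0}$ if $\epsilon_n=-1$, for all $x\in[\scra,\scrb]$.

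For the conclusions I would argue via the subsequence principle. Given any subsequence of $(\theta_n)_{n\in\N}$, the fact that $(s_n,p_n,\epsilon_n,y_n)$ eventually lies in a compact subset of $(\R\setminus\cu{0})\times[\scra,\scrb]\times\cu{-1,1}\times\R$ allows me to pass to a further subsequence along which it converges, say to $(s_*,p_*,\epsilon_*,y_*)$ with $s_*\neq 0$; along this sub-subsequence $\realization{\theta_n}|_{[\scra,\scrb]}$ converges uniformly, so by \cref{lem:realizations:closed:eq:assumption} the limit is $h$, which therefore has the explicit one-kink form above. Reading off $\vartheta\in\R^4$ from this form (namely $\vartheta=(1,-p_*,s_*,y_*)$ when $\epsilon_*=1$ and $\vartheta=(-1,p_*,-s_*,y_*)$ when $\epsilon_*=-1$) gives $\realization{\vartheta}|_{[\scra,\scrb]}=h$, which proves \cref{lem:realizations:closed:item1}, and in both cases $\w{\vartheta}\v{\vartheta}=s_*$ while $I^\vartheta$ agrees, up to a $\lambda$-null set, with the interval between $p_*$ and $\scrb$ (if $\epsilon_*=1$) or between $\scra$ and $p_*$ (if $\epsilon_*=-1$). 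The key observation is now that $\w{\vartheta}\v{\vartheta}$ is the nonzero slope of $h$ on the non-constant part of its graph and that $I^\vartheta$ is, up to a $\lambda$-null set, exactly this non-constant part; both are intrinsic to $h$ and hence independent of the chosen sub-subsequence. Therefore every subsequential limit of the bounded sequence $(\w{\theta_n}\v{\theta_n})_{n\in\N}$ equals $\w{\vartheta}\v{\vartheta}$, giving \cref{lem:realizations:closed:item2}, and along the sub-subsequences above $\lambda(I^{\theta_n}\Delta I^\vartheta)=\abs{p_n-p_*}\to 0$, so \cref{lem:realizations:closed:item3} follows as well.

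I expect the main obstacle to be the bookkeeping behind the last two paragraphs: choosing the compact parametrization $(s_n,p_n,\epsilon_n,y_n)$ so that it genuinely reconstructs $\realization{\theta_n}|_{[\scra,\scrb]}$ in every configuration of the kink (inside $[\scra,\scrb]$, or to the left, or to the right of it, and for either sign of $\w{\theta_n}$), writing down the correct $\vartheta$ in each sign case, and — most delicately — making rigorous the claim that the nonzero slope and the non-constant region of $h$ are intrinsic to $h$, since it is precisely this uniqueness that upgrades the sub-subsequential convergence to convergence of the full sequences in \cref{lem:realizations:closed:item2,lem:realizations:closed:item3}.
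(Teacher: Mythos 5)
Your argument is correct, but it follows a genuinely different route from the paper. The paper obtains \cref{lem:realizations:closed:item1} by citing the closedness of the set of realization functions of fixed-size ANNs under uniform convergence (Petersen--Raslan--Voigtlaender, Theorem 3.8), and then proves \cref{lem:realizations:closed:item2,lem:realizations:closed:item3} quantitatively: on each of the sets $I^\vartheta \backslash I^{\theta_n}$, $I^\vartheta \cap I^{\theta_n}$, and $I^{\theta_n} \backslash I^\vartheta$ one of the two functions is constant and the other affine (or both affine with slope difference $\w{\theta_n}\v{\theta_n}-\w{\vartheta}\v{\vartheta}$), so \cref{cor:constant:approx:affine} gives lower bounds of the form $\tfrac{1}{12}\abs{s}^2 \lambda(\cdot)^3$ for the $L^2$-distance, and the uniform (hence $L^2$) convergence squeezes first $\lambda(I^\vartheta\backslash I^{\theta_n})\to 0$, then the slopes, then $\lambda(I^{\theta_n}\backslash I^\vartheta)\to 0$. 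You instead prove everything from scratch by putting the one-neuron realizations into the normal form $(s_n,p_n,\epsilon_n,y_n)$ (slope, clamped kink, orientation, value), deriving the uniform lower and upper bounds on $\abs{s_n}$ from the non-constancy of $h$ and the bounded oscillation, extracting convergent sub-subsequences of this bounded four-parameter family, and then upgrading to full-sequence convergence in \cref{lem:realizations:closed:item2,lem:realizations:closed:item3} via the observation that the nonzero slope and the active region of $h$ are intrinsic to $h$ (up to a $\lambda$-null set) and hence independent of the sub-subsequence; this last point does handle the genuine ambiguity that different sub-subsequences may converge with opposite orientation $\epsilon_*$ (e.g.\ when the kink degenerates to an endpoint of $[\scra,\scrb]$), since the resulting $I^\vartheta$'s differ only by endpoints. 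Your route buys self-containedness (no appeal to the external closedness theorem and no $L^2$ lower-bound lemma) and an explicit description of all one-neuron realizations, at the price of the case bookkeeping you flag, all of which I have checked and which does go through; the paper's route is shorter given the cited theorem, reuses \cref{cor:constant:approx:affine} which is already in place, and avoids any explicit parametrization, which is why it scales better beyond the single-neuron setting.
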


\begin{cproof} {lem:realizations:closed}
\Nobs that \cite[Theorem 3.8]{PetersenRaslanVoigtlaender2020} ensures that there exists $\vartheta \in \R^4$ which satisfies $ \realization{\vartheta} | _ { [ \scra , \scrb ] } = h $.
This establishes \cref{lem:realizations:closed:item1}.

In the next step we prove that $\limsup_{n \to \infty} \lambda ( I ^\vartheta \backslash I ^{\theta_n}  ) = 0$.
\Nobs that the assumption that $h$ is not constant implies that $ \lambda ( I ^\vartheta ) > 0 $ and $\w{\vartheta}  \v{\vartheta}  \not= 0$. 
Moreover,
\nobs that \cref{setting:snn:width1:eq:realization} ensures that
for all $n \in \N$, $x \in [\scra , \scrb] \backslash I ^{\theta_n}$ it holds that $\realization{\theta_n} ( x ) = \c{\theta_n}$. This, the fact that for all $x \in I ^\vartheta$ it holds that $ \realization{\vartheta} ( x ) = \w{\vartheta}  \v{\vartheta} x + \v{\vartheta} \b{\vartheta} + \c{\vartheta} $, and \cref{cor:constant:approx:affine} imply that for all $n \in \N$ we have that
\begin{equation} \label{lem:realizations:closed:eq1}
    \int_\scra^\scrb \abs{ \realization{\theta_n} ( x ) - \realization{\vartheta } ( x ) } ^2 \, \d x 
    \geq  \int_{I ^\vartheta \backslash I ^{\theta_n}} (h ( x ) - \c{\theta_n })^2 \, \d x
    \geq \frac{ \abs{ \w{\vartheta}  \v{\vartheta}  }^2 \rbr{ \lambda ( I ^\vartheta \backslash I ^{\theta_n}  ) } ^3 }{12 }.
\end{equation}
Furthermore, \nobs that \cref{lem:realizations:closed:eq:assumption} assures that
\begin{equation} \label{lem:realizations:closed:eq:l2conv}
    \limsup_{n \to \infty} \br*{ \int_\scra^\scrb \abs{ \realization{\theta_n} ( x ) - \realization{\vartheta} ( x ) } ^2 \, \d x }
    = 0.
\end{equation}
This, \cref{lem:realizations:closed:eq1}, and the fact that $\w{\vartheta}  \v{\vartheta}  \not= 0$ demonstrate that $\limsup_{n \to \infty} \lambda ( I ^\vartheta \backslash I ^{\theta_n}  ) = 0$. Hence, we have that $\limsup_{n \to \infty} \abs{ \lambda ( I ^\vartheta \cap I ^{\theta_n}  ) - \lambda(I ^\vartheta) } = 0$.
Next \nobs that \cref{setting:snn:width1:eq:realization} shows that for all $n \in \N$, $x \in I ^\vartheta \cap I ^{\theta_n} $ it holds that $\realization{\theta_n} ( x ) - \realization{\vartheta} ( x ) = (\w{\theta_n} \v{\theta_n} - \w{\vartheta}  \v{\vartheta}) x + (\v{\theta_n} \b{\theta_n} + \c{\theta_n} - \v{\vartheta} \b{\vartheta} - \c{\vartheta} )$.
Combining this and \cref{cor:constant:approx:affine} proves for all $n \in \N$ that
\begin{equation}
\begin{split}
        \int_\scra^\scrb \abs{ \realization{\theta_n} ( x ) - \realization{\vartheta} ( x ) } ^2 \, \d x
    &\geq \int_{I ^\vartheta \cap I ^{\theta_n}} \abs{\realization{\theta_n} ( x ) - \realization{\vartheta} ( x ) } ^2 \, \d x \\
    &\geq \frac{\abs{\w{\theta_n} \v{\theta_n} - \w{\vartheta}  \v{\vartheta}  }^2 \rbr{ \lambda ( I ^\vartheta \cap I ^{\theta_n}  ) } ^3}{12}.
\end{split}
\end{equation}
This, \cref{lem:realizations:closed:eq:l2conv},
and the fact that $\lim_{n \to \infty} \lambda ( I ^\vartheta \cap I ^{\theta_n}  ) = \lambda(I ^\vartheta) > 0$ ensure that $\limsup_{n \to \infty} \abs{\w{\theta_n} \v{\theta_n} - \w{\vartheta}  \v{\vartheta}  } = 0$, which establishes \cref{lem:realizations:closed:item2}.

It remains to prove that $\limsup_{n \to \infty} \lambda(  I ^{\theta_n} \backslash I ^\vartheta ) = 0$. \Nobs that \cref{setting:snn:width1:eq:realization} implies that
for all $x \in [\scra , \scrb] \backslash I ^\vartheta$ it holds that $\realization{\vartheta} ( x ) = \c{\vartheta}$. This, the fact that for all $n \in \N$, $x \in I^{\theta_n}$ we have that $\realization{\theta_n} ( x ) = \w{\theta_n} \v{\theta_n} x + \v{\theta_n} \b{\theta_n} + \c{\theta_n}$, and \cref{cor:constant:approx:affine} show that for all $n \in \N$ it holds that
\begin{equation}
     \int_\scra^\scrb \abs{ \realization{\theta_n} ( x ) - \realization{\vartheta } ( x ) } ^2 \, \d x \geq \int_{I ^{\theta_n} \backslash I ^\vartheta } \abs{ \realization{\theta_n} ( x ) - \c{\vartheta}}^2 \, \d x \geq \frac{\abs{\w{\theta_n} \v{\theta_n}}^2 \rbr{ \lambda(  I ^{\theta_n} \backslash I ^\vartheta ) }^3}{12}.
\end{equation}
Combining this and
\cref{lem:realizations:closed:eq:l2conv}
with the fact that $\lim_{n \to \infty} \w{\theta_n} \v{\theta_n} = \w{\vartheta}  \v{\vartheta}  \not= 0$ demonstrates that $\limsup_{n \to \infty} \lambda(  I ^{\theta_n} \backslash I ^\vartheta ) = 0$.
This proves \cref{lem:realizations:closed:item3}.
\end{cproof}

\subsection{Convergence of the risk of GFs to zero for affine linear target functions}
\label{subsection:gf:conv:affine}

\begin{prop} \label{prop:loss:gradient:width1}
Assume \cref{setting:snn:width1} and let $\theta \in \R^{\fd}$.
Then
\begin{equation} \label{eq:loss:gradient:width1}
\begin{split}
        \cG_{1} ( \theta) &= 2 \rho \v{\theta} \int_{I^\theta} x  ( \realization{\theta} (x) - f ( x ) ) \, \d x , \\
        \cG_{2} ( \theta) &=  2 \rho  \v{\theta} \int_{I^\theta} (\realization{\theta} (x) - f ( x ) ) \,  \d x , \\
        \cG_{3} ( \theta) &= 2 \rho \int_{\scra}^\scrb \br[\big]{\max \cu{ \w{\theta} x + \b{\theta} , 0 } } ( \realization{\theta}(x) - f ( x ) ) \, \d x , \\
       \text{and} \qquad  \cG_{4} ( \theta) &= 2 \rho \int_{\scra}^\scrb (\realization{\theta} (x) - f ( x ) ) \, \d x .
        \end{split}
\end{equation}
\end{prop}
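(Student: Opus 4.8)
The plan is to obtain \cref{prop:loss:gradient:width1} as an immediate specialisation of the general gradient representation in \cref{prop:loss:gradient:item3} of \cref{prop:loss:gradient}. The first step is to recognise that \cref{setting:snn:width1} is precisely the instance of \cref{setting:snn} in which one chooses $d \with 1$, $\width \with 1$, and $\mu \with (\cB([\scra,\scrb]) \ni A \mapsto \rho \lambda(A) \in [0,\infty])$. With these choices one has $\fd = d\width + 2\width + 1 = 4$, the realisation function in \cref{eq:setting:realization} collapses to the one in \cref{setting:snn:width1:eq:realization} (the single hidden neuron making the inner sum a single summand), the active region $I_1^\theta$ of \cref{setting:snn} becomes the set $I^\theta$ of \cref{setting:snn:width1}, the approximating risks $\fL_r$ agree, and consequently the generalized gradient $\cG$ defined in \cref{setting:snn} coincides with the one defined in \cref{setting:snn:width1}; moreover $\mu(\d x) = \rho\,\d x$ on $[\scra,\scrb]$.

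The second step is purely index bookkeeping: in the case $d = \width = 1$ the multi-indices appearing in \cref{eq:loss:gradient} reduce, for $i = j = 1$, to $(i-1)d + j = 1$, $\width d + i = 2$, $\width(d+1) + i = 3$, and $\fd = 4$. Substituting these identifications, together with $\mu(\d x) = \rho\,\d x$, into the four lines of \cref{eq:loss:gradient} turns them verbatim into the four lines of \cref{eq:loss:gradient:width1}, which completes the argument.

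I do not anticipate any genuine obstacle; the only point that needs a brief check before invoking \cref{prop:loss:gradient} is that the hypotheses of \cref{setting:snn} are met under \cref{setting:snn:width1} --- in particular that the requirement $\sup_{r \in \N}\sup_{y \in [-\abs{x},\abs{x}]}\abs{(\Rect_r)'(y)} < \infty$ of \cref{setting:snn} is implied by the stronger requirement $\sup_{r \in \N}\sup_{y \in [-\abs{x},\abs{x}]}(\abs{\Rect_r(y)} + \abs{(\Rect_r)'(y)}) < \infty$ imposed in \cref{setting:snn:width1}, and that $\rho\lambda$ restricted to $[\scra,\scrb]$ is a finite measure; both of these are immediate. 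Alternatively, one could bypass the reduction to \cref{setting:snn} altogether and argue directly as in \cite[Proposition~2.3]{JentzenRiekert2021}, but routing through \cref{prop:loss:gradient} is the shortest path.
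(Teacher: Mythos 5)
Your proposal is correct and follows exactly the paper's route: the paper's proof of \cref{prop:loss:gradient:width1} consists of a single invocation of \cref{prop:loss:gradient}, with the specialisation $d=\width=1$, $\mu = \rho\lambda$ and the index bookkeeping left implicit, which is precisely what you spell out. No further comment is needed.
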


\begin{cproof}{prop:loss:gradient:width1}
\Nobs that \cref{prop:loss:gradient} establishes \cref{eq:loss:gradient:width1}.
\end{cproof}

\begin{lemma} \label{lem:affine:integral:zero}
Let $\scra \in \R$, $\scrb \in (\scra , \infty)$, $\alpha_1, \alpha_2, \beta_1, \beta_2 \in \R$ satisfy
\begin{equation} \label{lem:affine:integral:zero:assumption}
    \int_\scra^\scrb x ( (\alpha_1 x + \beta_1) - (\alpha_2 x + \beta_2) ) \, \d x = \int_\scra^\scrb ((\alpha_1 x + \beta_1) - (\alpha_2 x + \beta_2) ) \, \d x = 0.
\end{equation}
Then $\alpha_1 = \alpha_2$ and $\beta_1 = \beta_2$.
\end{lemma}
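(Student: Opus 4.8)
The plan is to derive the conclusion from the single elementary fact that a continuous nonnegative function on $[\scra,\scrb]$ with vanishing integral is identically zero, together with the observation that a nonzero affine function cannot vanish on all of $[\scra,\scrb]$ since $\scra<\scrb$.

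First I would introduce the difference function $g \in C([\scra,\scrb],\R)$ given for all $x \in [\scra,\scrb]$ by $g(x) = ((\alpha_1 x + \beta_1) - (\alpha_2 x + \beta_2)) = (\alpha_1 - \alpha_2) x + (\beta_1 - \beta_2)$, so that the hypothesis \cref{lem:affine:integral:zero:assumption} reads $\int_\scra^\scrb x\, g(x) \, \d x = \int_\scra^\scrb g(x) \, \d x = 0$. Next I would use the pointwise identity $g(x)^2 = (\alpha_1 - \alpha_2)\, x\, g(x) + (\beta_1 - \beta_2)\, g(x)$ for all $x \in [\scra,\scrb]$; integrating this over $[\scra,\scrb]$ and inserting the two hypotheses yields
\begin{equation}
\int_\scra^\scrb g(x)^2 \, \d x = (\alpha_1 - \alpha_2) \br*{ \int_\scra^\scrb x\, g(x) \, \d x } + (\beta_1 - \beta_2) \br*{ \int_\scra^\scrb g(x) \, \d x } = 0 .
\end{equation}

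Then, since $[\scra,\scrb] \ni x \mapsto g(x)^2 \in \R$ is continuous and nonnegative with zero integral, I would conclude that $g(x) = 0$ for all $x \in [\scra,\scrb]$. Evaluating this at $x = \scra$ and at $x = \scrb$ and subtracting (using $\scrb - \scra > 0$) gives $\alpha_1 - \alpha_2 = 0$, whence also $\beta_1 - \beta_2 = 0$. That is the whole argument, and there is no genuine obstacle; the only point deserving a word of justification is the standard claim that a nonnegative continuous integrand whose integral vanishes is the zero function. An equivalent alternative would be to regard \cref{lem:affine:integral:zero:assumption} as a homogeneous linear system for the pair $(\alpha_1 - \alpha_2, \beta_1 - \beta_2)$ whose coefficient matrix has determinant $(\scrb - \scra) \br[\big]{ \int_\scra^\scrb x^2\,\d x } - \br[\big]{ \int_\scra^\scrb x\,\d x }^2 > 0$ by the strict Cauchy--Schwarz inequality, which again forces the trivial solution.
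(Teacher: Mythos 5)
Your argument is correct and is essentially the paper's own proof: you combine the two hypotheses with coefficients $\alpha_1-\alpha_2$ and $\beta_1-\beta_2$ to conclude $\int_\scra^\scrb ((\alpha_1-\alpha_2)x+(\beta_1-\beta_2))^2\,\d x=0$, and then use that a nonnegative continuous function with vanishing integral is identically zero. The determinant/Cauchy--Schwarz alternative you mention is a fine variant but not needed.
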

\begin{cproof} {lem:affine:integral:zero}
\Nobs that \cref{lem:affine:integral:zero:assumption} assures that
\begin{equation}
    \begin{split}
        0 &= (\alpha_1 - \alpha_2 ) \br*{ \int_\scra^\scrb x ( (\alpha_1 x + \beta_1) - (\alpha_2 x + \beta_2) ) \, \d x } \\
        & \quad + (\beta_1 - \beta_2 ) \br*{ \int_\scra^\scrb ((\alpha_1 x + \beta_1) - (\alpha_2 x + \beta_2) ) \, \d x } \\
        &= \int_\scra^\scrb ( (\alpha_1 - \alpha_2) x + (\beta_1 - \beta_2) ) ( (\alpha_1 x + \beta_1) - (\alpha_2 x + \beta_2) ) \, \d x \\
        &= \int_\scra^\scrb ( (\alpha_1 - \alpha_2) x + (\beta_1 - \beta_2) ) ^2 \, \d x.
    \end{split}
\end{equation}
This and the fact that for all $x \in [\scra , \scrb]$ it holds that $( ( \alpha_1 - \alpha_2 ) x + (\beta_1 - \beta_2) ) ^2 \geq 0$ show that for all $x \in [\scra , \scrb]$ it holds that $( (\alpha_1 - \alpha_2) x + (\beta_1 - \beta_2) ) = 0$. Hence, we obtain that $\alpha _ 1 -\alpha_2 = \beta_1 - \beta_2 = 0$. 
\end{cproof}

\cfclear
\begin{theorem} \label{theo:flow:convergence:small:loss}
Assume \cref{setting:snn:width1}, let $\alpha, \beta \in \R$ satisfy for all $ x \in [\scra , \scrb]$ that $f(x)= \alpha x + \beta$,
and let $\Theta  \in C([0, \infty) , \R^{4})$ satisfy for all $t \in [0, \infty)$ that $\Theta_t = \Theta_0 - \int_0^t \cG ( \Theta_s ) \,  \d s$ and $\cL ( \Theta_0 ) < \frac{\rho \alpha^2 ( \scrb - \scra)^3 }{12}$ \cfadd{lem:gradient:measurable:width1}\cfload. Then $\limsup_{t \to \infty} \cL ( \Theta_t ) = 0$.
\end{theorem}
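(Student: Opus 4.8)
The plan is to follow the route sketched in the introduction: extract a diverging sequence of times at which the generalized gradient becomes small, compactify the associated ANN realization functions by Arzelà--Ascoli, and then use the explicit gradient formulas of \cref{prop:loss:gradient:width1} to force the limiting realization to equal the affine target $f$. First I would record the elementary reductions: $\cL(\Theta_0)\ge 0$ together with $\cL(\Theta_0)<\frac{\rho\alpha^2(\scrb-\scra)^3}{12}$ forces $\alpha\ne 0$, and by \cref{cor:constant:approx:affine} the best constant mean-square approximation error $m:=\rho\int_\scra^\scrb(f(x)-(\scrb-\scra)^{-1}\int_\scra^\scrb f(y)\,\d y)^2\,\d x$ of $f$ on $[\scra,\scrb]$ equals $\frac{\rho\alpha^2(\scrb-\scra)^3}{12}$, so the hypothesis $\cL(\Theta_0)<m$ of \cref{lem:flow:product:vw:bounded} is in force. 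By \cref{lem:loss:integral} the map $[0,\infty)\ni t\mapsto\cL(\Theta_t)\in\R$ is non-increasing with $\int_0^\infty\norm{\cG(\Theta_s)}^2\,\d s<\infty$, so $\liminf_{t\to\infty}\norm{\cG(\Theta_t)}=0$ and there is $\tau=(\tau_n)_{n\in\N}$ with $\tau_n\to\infty$ and $\norm{\cG(\Theta_{\tau_n})}\to 0$. By \cref{lem:flow:product:vw:bounded} we have $\sup_{t\in[0,\infty)}\abs{\w{\Theta_t}\v{\Theta_t}}<\infty$, $\sup_{t\in[0,\infty)}\abs{\w{\Theta_t}}<\infty$, and $\sup_{t\in[0,\infty)}\sup_{x\in[\scra,\scrb]}\abs{\realization{\Theta_t}(x)}<\infty$, and \cref{lem:realization:lipschitz:const} gives $\abs{\realization{\Theta_t}(x)-\realization{\Theta_t}(y)}\le\abs{\w{\Theta_t}\v{\Theta_t}}\abs{x-y}$; hence $\{\realization{\Theta_{\tau_n}}\}_{n\in\N}$ is uniformly bounded and uniformly Lipschitz on $[\scra,\scrb]$, and Arzelà--Ascoli yields a strictly increasing $(n_k)_{k\in\N}$ and $h\in C([\scra,\scrb],\R)$ such that $\psi_k:=\Theta_{\tau_{n_k}}$ satisfies $\sup_{x\in[\scra,\scrb]}\abs{\realization{\psi_k}(x)-h(x)}\to 0$; in particular $\cL(\psi_k)\to\rho\int_\scra^\scrb(h(x)-f(x))^2\,\d x$ and $\norm{\cG(\psi_k)}\to 0$.

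Next I would show $h$ is not constant: otherwise \cref{lem:constant:approx} would give $\lim_k\cL(\psi_k)=\rho\int_\scra^\scrb(h-f)^2\,\d x\ge m$, contradicting $\cL(\psi_k)\le\cL(\Theta_0)<m$ (monotonicity). Hence \cref{lem:realizations:closed} applies and produces $\vartheta\in\R^4$ with $\realization{\vartheta}|_{[\scra,\scrb]}=h$, with $\w{\psi_k}\v{\psi_k}\to\w{\vartheta}\v{\vartheta}\ne 0$ and $\lambda(I^\vartheta)>0$, and with $\lambda(I^{\psi_k}\Delta I^\vartheta)\to 0$. Combining the uniform convergence of the realizations with the $L^1$-convergence of the active-set indicators, $\int_{I^{\psi_k}}x(\realization{\psi_k}(x)-f(x))\,\d x\to\int_{I^\vartheta}x(h(x)-f(x))\,\d x$ and $\int_{I^{\psi_k}}(\realization{\psi_k}(x)-f(x))\,\d x\to\int_{I^\vartheta}(h(x)-f(x))\,\d x$. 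Since by \cref{eq:loss:gradient:width1} we have $\cG_1(\psi_k)=2\rho\,\v{\psi_k}\int_{I^{\psi_k}}x(\realization{\psi_k}-f)\,\d x\to 0$ and $\cG_2(\psi_k)=2\rho\,\v{\psi_k}\int_{I^{\psi_k}}(\realization{\psi_k}-f)\,\d x\to 0$, if either limiting integral were nonzero then $\v{\psi_k}\to 0$, which together with $\sup_t\abs{\w{\Theta_t}}<\infty$ would give $\w{\psi_k}\v{\psi_k}\to 0$, contradicting $\w{\vartheta}\v{\vartheta}\ne 0$; hence $\int_{I^\vartheta}x(h-f)\,\d x=\int_{I^\vartheta}(h-f)\,\d x=0$. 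As $h=\realization{\vartheta}$ and $f$ are both affine on the nondegenerate interval $I^\vartheta$, \cref{lem:affine:integral:zero} (applied on $I^\vartheta$ in place of $[\scra,\scrb]$) yields $h=f$ on $I^\vartheta$.

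Finally I would exclude the possibility $\lambda([\scra,\scrb]\setminus I^\vartheta)>0$. From $\cG_4(\psi_k)=2\rho\int_\scra^\scrb(\realization{\psi_k}-f)\,\d x\to 0$ and $h=f$ on $I^\vartheta$ we obtain $\int_{[\scra,\scrb]\setminus I^\vartheta}(h-f)\,\d x=0$. If this complement had positive measure it would be a nondegenerate subinterval of $[\scra,\scrb]$ with the kink point $p:=-\b{\vartheta}/\w{\vartheta}\in(\scra,\scrb)$ as one endpoint, on which $h\equiv\c{\vartheta}$; continuity of $\realization{\vartheta}$ together with $h=f$ on $I^\vartheta$ would force $\c{\vartheta}=\alpha p+\beta$, so $h(x)-f(x)=\alpha(p-x)$ there, whence $\int_{[\scra,\scrb]\setminus I^\vartheta}(h-f)\,\d x=\alpha\int(p-x)\,\d x\ne 0$ since $\alpha\ne 0$ and the complement is a one-sided neighbourhood of $p$ — a contradiction. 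Therefore $h=f$ $\lambda$-a.e., hence everywhere by continuity, so $\cL(\psi_k)\to 0$; since $[0,\infty)\ni t\mapsto\cL(\Theta_t)\in\R$ is non-increasing and nonnegative, $\limsup_{t\to\infty}\cL(\Theta_t)=\lim_{t\to\infty}\cL(\Theta_t)=0$. I expect the main obstacle to be exactly the content of the last two paragraphs, i.e. the ``careful analysis of the gradient'': turning the single scalar statement $\norm{\cG(\psi_k)}\to 0$, via the formulas in \cref{eq:loss:gradient:width1} and the convergence properties supplied by \cref{lem:realizations:closed}, into the rigidity $h=f$, and in particular ruling out the degenerate configuration in which the limiting hidden neuron is active on only a proper subinterval of $[\scra,\scrb]$.
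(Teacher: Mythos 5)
Your proposal is correct and follows essentially the same route as the paper's proof: extracting gradient-vanishing times via \cref{lem:loss:integral}, Arzel\`a--Ascoli compactness from \cref{lem:flow:product:vw:bounded} and \cref{lem:realization:lipschitz:const}, non-constancy of the limit $h$, \cref{lem:realizations:closed}, vanishing of the two integrals over $I^\vartheta$ via the $\cG_1,\cG_2$ formulas together with the bound on $\w{\Theta_t}$, affine rigidity via \cref{lem:affine:integral:zero}, and the $\cG_4$/kink-point argument to exclude a proper activation interval. The only cosmetic difference is that the paper carries out the last exclusion step as two explicit cases $I^\vartheta=(q,\scrb]$ and $I^\vartheta=[\scra,q)$ rather than your single argument around the kink $p=-\b{\vartheta}/\w{\vartheta}$.
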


\begin{cproof}{theo:flow:convergence:small:loss}
Throughout this proof let $w = (w_t)_{t \in [0, \infty ) }$, $b = ( b_t ) _{t \in [0, \infty ) }$, $v = ( v_t )_{t \in [0, \infty ) }$, $c = ( c_t )_{t \in [0, \infty ) } \in C ( [ 0 , \infty ) , \R )$
satisfy for all $t \in [0, \infty)$ that 
\begin{equation}
    w_t = \w{\Theta_t}, \qquad b_t = \b{\Theta_t},
    \qquad v_t = \v {\Theta_t}, \qandq c_t = \c{\Theta_t} 
\end{equation}
and let $\cI_t \subseteq [\scra , \scrb]$, $t \in [0, \infty)$, satisfy for all $t \in [0, \infty)$ that $\cI_t = I ^{\Theta_t}$.
\Nobs that \cref{lem:loss:integral} implies that $[0, \infty) \ni t \mapsto \cL ( \Theta_t ) \in \R$ is non-increasing. Hence, we obtain that 
\begin{equation} \label{flow:theo:eq:convergence}
    \limsup\nolimits_{t \to \infty} \cL ( \Theta_t ) = \liminf\nolimits_{t \to \infty} \cL ( \Theta_t ) = \inf\nolimits_{t \in [0, \infty)} \cL ( \Theta_t ) .
\end{equation}
Next \nobs that \cref{lem:loss:integral} proves that $\int_0^\infty \norm{\cG ( \Theta_s ) } ^2 \, \d s < \infty$.
This demonstrates that $\liminf_{t \to \infty} \allowbreak
\norm{\cG ( \Theta_t ) } = 0$.
Therefore, we obtain that there exist $\tau_n \in [0, \infty )$, $n \in \N$,
which satisfy $\liminf_{n \to \infty} \tau_n = \infty$ and $\limsup_{n \to \infty} \norm{\cG ( \Theta_{\tau_n} ) } = 0$.
\Nobs that \cref{lem:flow:product:vw:bounded} implies that 
\begin{equation}
    \sup\nolimits_{n \in \N} \abs{w_{\tau_n} v_{ \tau_n} } < \infty \qandq
    \sup\nolimits_{n \in \N} \sup\nolimits_{x \in [\scra , \scrb]} \abs{\realization{\Theta_{\tau_n} }(x)} < \infty.
\end{equation} 
Combining this and \cref{lem:realization:lipschitz:const} proves that there exists $\fC \in \R$ such that for all $x,y \in [\scra , \scrb]$, $n \in \N$ it holds that  $\abs{\realization{\Theta_{\tau_n}} ( x ) - \realization{\Theta_{ \tau_n}} ( y ) } \leq \fC \abs{x - y}$ and $\abs{ \realization{\Theta_{\tau_n}} ( x ) } \leq \fC$. 
The Arzela-Ascoli theorem hence shows that there exist $h \in C([\scra , \scrb] , \R)$ and a strictly increasing $k \colon \N \to \N$ which satisfy
\begin{equation} \label{flow:theo:eq:defh}
    \limsup \nolimits_{n \to \infty} \sup\nolimits_{x \in [\scra , \scrb]} \abs{\realization{\Theta_{\tau_{k ( n ) }}} ( x ) - h ( x ) } = 0. 
\end{equation}
Combining this with \cref{flow:theo:eq:convergence} and the assumption that $\cL ( \Theta_0 ) < \frac{\rho \alpha^2 ( \scrb - \scra)^3}{12}$ implies that 
\begin{equation} \label{theo:flow:eq:loss:bound}
    \rho \int_\scra^\scrb ( f ( x ) - h ( x ) ) ^2 \, \d x = \limsup\nolimits_{n \to \infty} \cL ( \Theta_{\tau_{k ( n ) } } ) = \inf\nolimits_{t \in [0, \infty ) } \cL ( \Theta_t ) < \frac{\rho \alpha^2 ( \scrb - \scra)^3 }{12}. 
\end{equation}
This and \cref{cor:constant:approx:affine}
assure that $h$ is not constant.
\cref{lem:realizations:closed} hence ensures that there exists $\vartheta \in \R^4$ which satisfies $ \realization{\vartheta} | _ { [ \scra , \scrb ] } = h $.
Combining this and \cref{theo:flow:eq:loss:bound}
with \cref{cor:constant:approx:affine},
\cref{cor:prod:vw:positive},
and \cref{lem:neuron:active} demonstrates that $\alpha \w{ \vartheta } \v{ \vartheta } > 0 $ and $I^{\vartheta} \not= \emptyset$.
In addition,
\nobs that \cref{eq:loss:gradient:width1,flow:theo:eq:defh} show that
\begin{equation} \label{flow:theo:eq1}
\begin{split}
    0 &= \frac{1}{2 \rho } \br*{ \lim_{n \to \infty} \cG_4 ( \Theta_{\tau_{k ( n ) }} ) }
    = \lim_{n \to \infty} \br*{\int_\scra^\scrb ( \realization{\Theta_{\tau_{k ( n ) }}} ( x ) - (\alpha x + \beta ) ) \, \d x } \\
    &= \int_\scra^\scrb ( \realization{\vartheta} ( x ) - (\alpha x + \beta ) ) \, \d x.
\end{split}
\end{equation}
Furthermore, \nobs that \cref{flow:theo:eq:defh,lem:realizations:closed} prove that $\limsup_{n \to \infty} \lambda ( \cI_{\tau_{k ( n ) } } \Delta I ^\vartheta ) = 0$. Combining this and the fact that $\limsup_{n \to \infty} \sup_{x \in [\scra , \scrb]} \abs{\realization{\Theta_{\tau_{k ( n ) }}} ( x ) - \realization{\vartheta} ( x ) } = 0$ demonstrates that
\begin{equation} \label{flow:theo:eq2}
    \limsup_{n \to \infty} \abs*{ \int_{\cI_{\tau_{k ( n ) }}} x (\realization{\Theta_{\tau_{k ( n ) }}} ( x ) - (\alpha x + \beta ) ) \, \d x - \int_{I ^\vartheta} x ( \realization{\vartheta} ( x ) - (\alpha x + \beta ) ) \, \d x } = 0
\end{equation}
and
\begin{equation} \label{flow:theo:eq3}
        \limsup_{n \to \infty} \abs*{ \int_{\cI_{\tau_{k ( n ) }}}  (\realization{\Theta_{\tau_{k ( n ) }}} ( x ) - (\alpha x + \beta ) ) \, \d x - \int_{I ^\vartheta}  ( \realization{\vartheta} ( x ) - (\alpha x + \beta ) ) \, \d x } = 0.
\end{equation}
Moreover, \nobs that the fact that $\limsup_{n \to \infty} \norm{ \cG ( \Theta_{\tau_{k ( n ) }} ) } = 0$ and \cref{eq:loss:gradient:width1} imply that
\begin{equation} \label{flow:theo:eq4}
\begin{split}
     & \limsup_{n \to \infty} \abs*{ v_{\tau_{k ( n ) }} \int_{\cI_{\tau_{k ( n ) }}} x (\realization{\Theta_{\tau_{k ( n ) }}} ( x ) - (\alpha x + \beta ) ) \, \d x} \\
     &=   \limsup_{n \to \infty} \abs*{ v_{\tau_{k ( n ) }} \int_{ \cI_{\tau_{k ( n ) }}}  (\realization{\Theta_{\tau_{k ( n ) }}} ( x ) - (\alpha x + \beta ) ) \, \d x} = 0.
\end{split}
\end{equation}
In the next step we show that
\begin{equation} \label{flow:theo:eq:grad0}
    \abs*{ \int_{I ^\vartheta} x ( \realization{\vartheta} ( x ) - (\alpha x + \beta ) ) \, \d x} = \abs*{\int_{I ^\vartheta}  ( \realization{\vartheta} ( x ) - (\alpha x + \beta ) ) \, \d x} = 0.
\end{equation}
We prove \cref{flow:theo:eq:grad0} by contradiction.
We thus assume that
\begin{equation} \label{flow:theo:eq:contr}
    \abs*{ \int_{I ^\vartheta} x ( \realization{\vartheta} ( x ) - (\alpha x + \beta ) ) \, \d x} + \abs*{ \int_{I ^\vartheta}  ( \realization{\vartheta} ( x ) - (\alpha x + \beta ) ) \, \d x} > 0.
\end{equation}
\Nobs that \cref{flow:theo:eq2,flow:theo:eq3,flow:theo:eq4,flow:theo:eq:contr} prove that $\limsup_{n \to \infty} \abs{ v_{\tau_{k ( n ) }} } = 0$. 
In addition, \nobs that \cref{lem:realizations:closed} assures that $\lim_{n \to \infty} ( w_{\tau_{k ( n ) }} v_{\tau_{k ( n ) }} ) = \w{\vartheta}  \v{\vartheta} \not= 0$.
Combining this with \cref{lem:flow:product:vw:bounded:item2} in \cref{lem:flow:product:vw:bounded} demonstrates that
$\infty = \liminf_{n \to \infty} \abs{w_{\tau_{k ( n ) }}} < \infty$.
This contradiction establishes \cref{flow:theo:eq:grad0}.
Next \nobs that for all $x \in I^\vartheta$ it holds that $\realization{\vartheta} ( x ) = \w{\vartheta} \v{\vartheta} x + \v{\vartheta} \b{\vartheta} + \c{\vartheta}$. Combining this, \cref{flow:theo:eq:grad0},
and \cref{lem:affine:integral:zero} ensures that for all $x \in I^\vartheta$ it holds that 
\begin{equation} \label{theo:flow:eq:limit:realization}
	\realization{\vartheta} ( x ) = \alpha x + \beta. 
\end{equation}
\Nobs that for all $q \in (\scra , \scrb)$ with $I ^\vartheta = (q , \scrb ]$ it holds that $\forall \, x \in [\scra , q] \colon \realization{\vartheta} ( x ) = \realization{\vartheta} ( q ) = \alpha q + \beta$.
This,
\cref{flow:theo:eq1},
and \cref{flow:theo:eq:grad0} imply that for all
$q \in (\scra , \scrb)$ with $I ^\vartheta = (q , \scrb ]$
we have that
\begin{equation} \label{theo:flow:eq:interval:cont}
\begin{split}
    0 
    &= \int_\scra^\scrb (\realization{\vartheta} ( x ) - (\alpha x + \beta ) ) \, \d x 
    = \int_\scra^q (\realization{\vartheta} ( x ) - (\alpha x + \beta ) ) \, \d x \\
    &= \int_\scra^q (\alpha q - \alpha x ) \, \d x = \alpha \int _\scra^q ( q -  x ) \, \d x 
    = \frac{\alpha ( q - \scra ) ^2 }{2} \not= 0 .
\end{split}
\end{equation}
Furthermore, \nobs that
for all $q \in (\scra , \scrb)$ with $I ^\vartheta = [\scra , q)$ we have that $\forall \, x \in [q , \scrb ] \colon \realization{\vartheta} ( x ) = \realization{\vartheta} ( q ) = \alpha q + \beta$.
This, \cref{flow:theo:eq1},
and \cref{flow:theo:eq:grad0}
ensure that for all
$q \in (\scra , \scrb)$ with $I ^\vartheta = [\scra , q )$
it holds that
\begin{equation}
\begin{split}
    0 
    &= \int_\scra^\scrb (\realization{\vartheta} ( x ) - (\alpha x + \beta ) ) \, \d x 
    = \int_q ^\scrb (\realization{\vartheta} ( x ) - (\alpha x + \beta ) ) \, \d x \\
    &= \int_q^\scrb (\alpha q - \alpha x ) \, \d x 
    = \alpha \int_q^\scrb ( q - x ) \, \d x = - \frac{\alpha ( \scrb - q ) ^2 }{2} 
    \not= 0 .
\end{split}
\end{equation}
Combining this,
 \cref{theo:flow:eq:interval:cont}, and the fact that $\lambda ( I^\vartheta ) > 0$ shows that $I ^\vartheta \in \cu{ [\scra , \scrb ], (\scra , \scrb ] , [\scra , \scrb ) }$. This implies that $(\scra , \scrb ) \subseteq I^\vartheta$.
Combining this with \cref{theo:flow:eq:limit:realization} assures that for all $x \in (\scra , \scrb) $ we have that $\realization{\vartheta} ( x ) = \alpha x + \beta = f ( x )$. Hence, we obtain that
\begin{equation}
\int_{\scra}^\scrb ( f(x) - h(x) ) ^2 \, \d x = \int_{\scra}^\scrb ( f ( x ) - \realization{\vartheta} ( x ) ) ^2 \, \d x = 0.
\end{equation}
This, \cref{flow:theo:eq:convergence}, and \cref{theo:flow:eq:loss:bound} imply that $ \lim_{t \to \infty} \cL ( \Theta_t ) = \cL ( \vartheta ) = 0$.
\end{cproof}

\begin{cor} \label{theo:intro:1d:affine}
Let $\alpha, \beta, \scra \in \R$, $\scrb \in ( \scra, \infty)$,
let $\Rect_r \in C ( \R , \R )$, $r \in \N \cup \cu{ \infty } $, satisfy for all $x \in \R$ that $( \bigcup_{r \in \N} \cu{ \Rect_r } ) \subseteq C^1( \R , \R)$, $\Rect_\infty ( x ) = \max \cu{ x , 0 }$,
 $\sup_{r \in \N} \sup_{y \in [- \abs{x}, \abs{x} ] } \rbr*{ \abs{\Rect_r(y)} + \abs{ ( \Rect_r)'(y)}} \allowbreak
 < \infty$, and
\begin{equation}
    \limsup\nolimits_{r \to \infty}  \rbr*{ \abs { \Rect_r ( x ) - \Rect _\infty ( x ) } + \abs { (\Rect_r)' ( x ) - \indicator{(0, \infty)} ( x ) } } = 0,
\end{equation}
let $\cL_r \colon \R^4 \to \R$, $r \in \N \cup \cu{ \infty }$,
satisfy for all $r \in \N \cup \cu{ \infty }$, $\theta = (\theta_1, \ldots, \theta_4) \in \R^{4}$ that
\begin{equation}
    \cL_r ( \theta ) =  \int_{\scra}^\scrb \rbr[\big]{ \alpha x + \beta - \theta_{4} -  \theta_{3 }  \Rect_r ( \theta_{2}  +\theta_{1 } x ) }^2  \, \d x,
\end{equation}
let $\cG  \colon \R^4 \to \R^4$ satisfy for all
$\theta \in \cu{ \vartheta \in \R^4 \colon ( ( \nabla \cL_r ) ( \vartheta ) ) _{r \in \N} \text{ is convergent} }$
that $\cG ( \theta ) = \lim_{r \to \infty} (\nabla \cL_r) ( \theta )$,
let $\Theta \in C([0, \infty) , \R^4)$ satisfy for all $t \in [0, \infty)$ that $\Theta_t = \Theta_0 - \int_0^t \cG ( \Theta_s ) \, \d s$, and assume $\cL_\infty ( \Theta_0 ) < \frac{\alpha^2 ( \scrb - \scra ) ^3 }{12}$. 
Then $\limsup_{t \to \infty} \cL_\infty ( \Theta_t ) = 0$.

\end{cor}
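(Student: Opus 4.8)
The plan is to obtain \cref{theo:intro:1d:affine} as an immediate consequence of \cref{theo:flow:convergence:small:loss} by checking that the assumptions of \cref{theo:intro:1d:affine} place us in the framework of \cref{setting:snn:width1} with the choice $\rho = 1$ and with the affine linear target function $[\scra,\scrb] \ni x \mapsto \alpha x + \beta \in \R$. Concretely, I would first let $\fw, \fb, \fv, \fc \in C(\R^4,\R)$ be the four coordinate projections, let $\scrN$ and $\cL$ be as in \cref{setting:snn:width1:eq:realization} and the corresponding $L^2$-risk with $\rho = 1$, and let $f$ be the above affine linear function; with these choices the families $(\Rect_r)_{r\in\N}$, $(\fL_r)_{r\in\N}$, $\norm{\cdot}$, $\spro{\cdot,\cdot}$, $\lambda$, $(I^\theta)_{\theta\in\R^4}$, and $\cG$ appearing in \cref{setting:snn:width1} coincide with the corresponding objects in \cref{theo:intro:1d:affine}. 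Here the regularity, boundedness, and limit-of-derivatives conditions on $(\Rect_r)_{r\in\N}$ demanded by \cref{setting:snn:width1} are exactly the hypotheses imposed in \cref{theo:intro:1d:affine} (the extra symbol $\Rect_\infty$ in \cref{theo:intro:1d:affine} merely names the ReLU activation and is not part of \cref{setting:snn:width1}), and, since $\Rect_\infty(x) = \max\{x,0\}$, a one-line computation shows that $\cL_\infty(\theta) = \int_\scra^\scrb (\realization{\theta}(x) - f(x))^2 \, \d x = \cL(\theta)$ for every $\theta \in \R^4$; moreover $\fL_r = \cL_r$ for every $r \in \N$, so the two definitions of $\cG$ agree.

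Second, I would translate the remaining hypotheses of \cref{theo:intro:1d:affine} into those of \cref{theo:flow:convergence:small:loss}: the GF equation $\Theta_t = \Theta_0 - \int_0^t \cG(\Theta_s)\,\d s$ carries over verbatim once the two $\cG$'s have been identified, and the smallness assumption $\cL_\infty(\Theta_0) < \tfrac{\alpha^2(\scrb-\scra)^3}{12}$ becomes $\cL(\Theta_0) < \tfrac{\rho\alpha^2(\scrb-\scra)^3}{12}$ with $\rho = 1$, using $\cL_\infty = \cL$. At this point \cref{theo:flow:convergence:small:loss}, applied with $\rho = 1$ and $f$ as above, yields $\limsup_{t\to\infty}\cL(\Theta_t) = 0$, and re-expressing $\cL$ as $\cL_\infty$ gives $\limsup_{t\to\infty}\cL_\infty(\Theta_t) = 0$, which is the assertion of \cref{theo:intro:1d:affine}.

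The main point that needs care --- although it is entirely routine and involves no new mathematical idea --- is the bookkeeping that matches every object of \cref{setting:snn:width1} with an object in the statement of \cref{theo:intro:1d:affine}, and in particular the verification that taking $\mu = \rho\lambda$ with $\rho = 1$ is consistent with the Lebesgue integral $\int_\scra^\scrb(\cdots)\,\d x$ used throughout \cref{theo:intro:1d:affine}. Everything substantive is supplied by \cref{theo:flow:convergence:small:loss}, whose proof in turn rests on the $L^2$-integrability of $t \mapsto \cG(\Theta_t)$ from \cref{lem:loss:integral}, the a priori bounds of \cref{lem:flow:product:vw:bounded}, an Arzel\`a--Ascoli compactness argument for the realization functions, and \cref{lem:realizations:closed} together with the analysis of the gradient in \cref{eq:loss:gradient:width1}.
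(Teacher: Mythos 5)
Your proposal is correct and coincides with the paper's own proof, which likewise obtains \cref{theo:intro:1d:affine} as a direct application of \cref{theo:flow:convergence:small:loss} with $\rho = 1$ in the framework of \cref{setting:snn:width1}; your additional bookkeeping identifying $\cL_\infty$ with $\cL$, $\cL_r$ with $\fL_r$, and the two gradient functions $\cG$ is exactly the (routine) verification the paper leaves implicit.
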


\begin{cproof}{theo:intro:1d:affine}
\Nobs that \cref{theo:flow:convergence:small:loss} (applied with $\rho \with 1$ in the notation of \cref{theo:flow:convergence:small:loss}) shows that $\limsup_{t \to \infty} \cL_\infty ( \Theta_t ) = 0$.
\end{cproof}

\subsection{Uniform convergence of realizations of GFs for affine linear target functions}
\label{subsection:gf:conv:uniform}

\cfclear
\begin{cor} \label{cor:flow:convergence:uniform}
Assume \cref{setting:snn:width1}, 
 let $\alpha, \beta \in \R$ satisfy for all $ x \in [\scra , \scrb]$ that $f(x)= \alpha x + \beta$,
and let $\Theta  \in C([0, \infty) , \R^{4})$ satisfy for all $t \in [0, \infty)$ that $\Theta_t = \Theta_0 - \int_0^t \cG ( \Theta_s ) \,  \d s$ and $\cL ( \Theta_0 ) < \frac{\rho \alpha^2 ( \scrb - \scra ) ^3 }{12}$ \cfadd{lem:gradient:measurable:width1}\cfload. Then 
\begin{equation} \label{eq:flow:convergence:uniform}
\limsup\nolimits_{t \to \infty} \rbr[\big]{ \sup\nolimits_{x \in [\scra , \scrb ] } \abs{\realization{\Theta_t} ( x ) - (\alpha x + \beta ) } } = 0.
\end{equation}
\end{cor}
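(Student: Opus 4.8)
The plan is to upgrade the $L^2$-convergence of the risk established in \cref{theo:flow:convergence:small:loss} to uniform convergence of the realization functions by means of an Arzela-Ascoli compactness argument.

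First I would record that \cref{theo:flow:convergence:small:loss} yields $\limsup_{t\to\infty}\cL(\Theta_t) = 0$ and hence, since $\cL$ is nonnegative, $\lim_{t\to\infty}\cL(\Theta_t) = 0$. Next I would observe that \cref{cor:constant:approx:affine} gives $\rho\int_\scra^\scrb(f(x) - (\scrb-\scra)^{-1}\int_\scra^\scrb f(y)\,\d y)^2\,\d x = \frac{\rho\alpha^2(\scrb-\scra)^3}{12}$, so that the hypothesis $\cL(\Theta_0) < \frac{\rho\alpha^2(\scrb-\scra)^3}{12}$ places us in the situation of \cref{lem:flow:product:vw:bounded}. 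Applying that lemma (with $m\with\frac{\rho\alpha^2(\scrb-\scra)^3}{12}$), I would extract from \cref{lem:flow:product:vw:bounded:item1} the bound $\sup_{t\in[0,\infty)}\abs{\w{\Theta_t}\v{\Theta_t}} < \infty$ and, combining this with \cref{lem:flow:product:vw:bounded:item3}, also $\sup_{t\in[0,\infty)}\sup_{x\in[\scra,\scrb]}\abs{\realization{\Theta_t}(x)} < \infty$. Together with \cref{lem:realization:lipschitz:const} the first bound additionally shows that the functions $\realization{\Theta_t}|_{[\scra,\scrb]}$, $t\in[0,\infty)$, are uniformly Lipschitz, so the family $\cu{\realization{\Theta_t}|_{[\scra,\scrb]}\colon t\in[0,\infty)}$ is uniformly bounded and equicontinuous.

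Then I would argue by contradiction. If \eqref{eq:flow:convergence:uniform} failed, there would be $\varepsilon\in(0,\infty)$ and $t_n\in[0,\infty)$, $n\in\N$, with $\liminf_{n\to\infty}t_n = \infty$ and $\sup_{x\in[\scra,\scrb]}\abs{\realization{\Theta_{t_n}}(x) - (\alpha x + \beta)}\geq\varepsilon$ for all $n\in\N$. The Arzela-Ascoli theorem would then provide $h\in C([\scra,\scrb],\R)$ and a strictly increasing $k\colon\N\to\N$ with $\limsup_{n\to\infty}\sup_{x\in[\scra,\scrb]}\abs{\realization{\Theta_{t_{k(n)}}}(x) - h(x)} = 0$. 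Since uniform convergence on the compact interval $[\scra,\scrb]$ implies convergence in $L^2$, this together with $\lim_{t\to\infty}\cL(\Theta_t) = 0$ would force $\rho\int_\scra^\scrb(f(x) - h(x))^2\,\d x = \lim_{n\to\infty}\cL(\Theta_{t_{k(n)}}) = 0$, and continuity of $f - h$ would give $h(x) = \alpha x + \beta$ for every $x\in[\scra,\scrb]$. But then $\sup_{x\in[\scra,\scrb]}\abs{\realization{\Theta_{t_{k(n)}}}(x) - (\alpha x + \beta)} = \sup_{x\in[\scra,\scrb]}\abs{\realization{\Theta_{t_{k(n)}}}(x) - h(x)}\to 0$ as $n\to\infty$, contradicting the choice of $(t_n)_{n\in\N}$. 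This establishes \eqref{eq:flow:convergence:uniform}.

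The main obstacle I anticipate is essentially bookkeeping: confirming that \cref{lem:flow:product:vw:bounded} applies and furnishes both uniform-in-$t$ bounds (the bound on $\sup_{x\in[\scra,\scrb]}\abs{\realization{\Theta_t}(x)}$ requires combining two of its items). The remaining steps---the Arzela-Ascoli extraction and the identification of the uniform limit via the vanishing of the $L^2$-risk---are entirely standard.
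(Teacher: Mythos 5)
Your proposal is correct, but it follows a different mechanism than the paper's proof of this corollary. You upgrade the $L^2$-statement to uniform convergence by a soft compactness argument: uniform boundedness of the realizations (items \ref{lem:flow:product:vw:bounded:item1} and \ref{lem:flow:product:vw:bounded:item3} of \cref{lem:flow:product:vw:bounded}) plus the uniform Lipschitz bound from \cref{lem:realization:lipschitz:const} give equicontinuity, Arzela--Ascoli extracts a uniform limit $h$ along a bad subsequence, and the vanishing risk identifies $h$ with $f$, contradicting the assumed $\varepsilon$-deviation; this essentially re-runs the compactness technique used inside the proof of \cref{theo:flow:convergence:small:loss}. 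The paper instead argues quantitatively and without compactness: if $\abs{\realization{\Theta_{\tau_n}}(x_n)-(\alpha x_n+\beta)}\geq\varepsilon$ at some point $x_n$, then the error function is Lipschitz with constant $\fC+\abs{\alpha}$ (only item \ref{lem:flow:product:vw:bounded:item1} of \cref{lem:flow:product:vw:bounded} is needed), so the error stays $\geq\varepsilon/2$ on an interval of length $\min\{\delta,\scrb-\scra\}$ with $\delta=\frac{\varepsilon}{2(\fC+\abs{\alpha})}$, which forces $\cL(\Theta_{\tau_n})\geq\frac{\rho\varepsilon^2\min\{\delta,\scrb-\scra\}}{4}$ and contradicts $\limsup_{t\to\infty}\cL(\Theta_t)=0$. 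The paper's route is lighter (no Arzela--Ascoli, no uniform bound on the realizations) and yields an explicit lower bound on the risk in terms of the sup-norm deviation; your route is less quantitative but conceptually uniform with the proof of \cref{theo:flow:convergence:small:loss}. Your verification that the hypothesis $\cL(\Theta_0)<\frac{\rho\alpha^2(\scrb-\scra)^3}{12}$ matches the threshold $m$ of \cref{lem:flow:product:vw:bounded} via \cref{cor:constant:approx:affine} is exactly the bookkeeping needed, and the rest of your argument goes through.
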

\begin{cproof}{cor:flow:convergence:uniform}
\Nobs that \cref{lem:flow:product:vw:bounded} assures that there exists $\fC \in (0, \infty)$ such that for all $t \in [0, \infty)$ it holds that $\abs{\w{\Theta_t} \v{\Theta_t}} \leq \fC$.
We now prove \cref{eq:flow:convergence:uniform} by contradiction. In the following we thus assume that
\begin{equation} \label{flow:convergence:uniform:eq:cont}
    \limsup\nolimits_{t \to \infty} \rbr[\big]{ \sup\nolimits_{x \in [\scra , \scrb ] } \abs{\realization{\Theta_t} ( x ) - (\alpha x + \beta ) } } > 0.
\end{equation}
\Nobs that \cref{flow:convergence:uniform:eq:cont} assures that there exist $\varepsilon \in (0, \infty)$ and $\tau_n \in [0, \infty)$, $n \in \N$, which satisfy $\liminf_{t \to \infty} \tau_n = \infty$ and 
\begin{equation} \label{flow:convergence:uniform:eq:tn}
   \inf\nolimits_{n \in \N} \rbr[\big]{ \sup\nolimits_{x \in [\scra , \scrb ]} \abs{\realization{\Theta_{\tau_n}} ( x ) - (\alpha x + \beta ) } } > \varepsilon.
\end{equation}
\Nobs that \cref{flow:convergence:uniform:eq:tn} shows that there exist $x_n \in [\scra , \scrb]$, $n \in \N$,
which satisfy for all $n \in \N$ that $\abs{\realization{\Theta_{\tau_n}} ( x_n ) - ( \alpha x_n + \beta ) } \geq \varepsilon$. 
Moreover, \nobs that \cref{lem:realization:lipschitz:const} proves that for all $n \in \N$, $y,z \in [\scra , \scrb]$ it holds that 
\begin{equation} \label{flow:convergence:uniform:eq:lipest}
\begin{split} 
    \abs{[\realization{\Theta_{\tau_n}} ( y ) - (\alpha y + \beta )] - [ \realization{\Theta_{\tau_n} } ( z ) - (\alpha z + \beta ) ] } 
    &\leq \abs{\realization{\Theta_{\tau_n}} ( y ) - \realization{\Theta_{ \tau_n}} ( z ) } + \abs{\alpha} \abs{y-z} \\
    &\leq ( \fC + \abs{\alpha} ) \abs{y-z}.
    \end{split}
\end{equation}
Next let $\delta \in (0, \infty)$ satisfy $\delta = \frac{\varepsilon}{2(\fC + \abs{\alpha} ) }$.
\Nobs that \cref{flow:convergence:uniform:eq:lipest} ensures that
for all $n \in \N$, $y \in [x_n-\delta, x_n + \delta] \cap [\scra , \scrb ]$ it holds that 
\begin{equation}
\begin{split}
&\abs{\realization{\Theta_{\tau_n}} ( y ) - (\alpha y + \beta ) } \\
&\geq \abs{\realization{\Theta_{\tau_n}} ( x_n ) - (\alpha x_n + \beta ) } -  \abs{[\realization{\Theta_{\tau_n}} ( x_n ) - (\alpha x_n + \beta )] - [ \realization{\Theta_{\tau_n} } ( y ) - (\alpha y + \beta ) ] }  \\
& \geq \varepsilon - ( \fC + \abs{\alpha} ) \abs{x_n - y } 
\geq \varepsilon - ( \fC + \abs{\alpha} ) \delta = \varepsilon - \tfrac{\varepsilon}{2} = \tfrac{\varepsilon}{2}.
\end{split}
\end{equation}
Furthermore, \nobs that for all $n \in \N$ we have that $\lambda ( [x_n-\delta, x_n + \delta] \cap [\scra , \scrb ] ) \geq \min \cu{ \delta , \scrb - \scra } $. 
This demonstrates that for all $n \in \N$ it holds that
\begin{equation}
    \cL ( \Theta_{\tau_n}) \geq \rho \int_{[x_n-\delta, x_n + \delta] \cap [\scra , \scrb ]} \abs{\realization{\Theta_{\tau_n}} ( y ) - (\alpha y + \beta ) }^2 \, \d y \geq \frac{\rho \varepsilon^2 \min\cu{ \delta , \scrb - \scra } }{4} .
\end{equation}
Combining this with \cref{theo:flow:convergence:small:loss} shows that
\begin{equation}
    0 = \limsup\nolimits_{t \to \infty} \cL ( \Theta_t ) \geq \limsup\nolimits_{n \to \infty} \cL ( \Theta_{\tau_n} ) \geq \tfrac{\rho \varepsilon^2 \min\cu{ \delta , \scrb - \scra } }{4} > 0.
\end{equation}
This is a contradiction.
\end{cproof}

\subsection*{Acknowledgements}
This work has been funded by the Deutsche Forschungsgemeinschaft
(DFG, German Research Foundation) under Germany’s Excellence Strategy EXC 2044-390685587, Mathematics Münster: Dynamics-Geometry-Structure.


\end{document}